\newcommand{\iid}{\text{i.i.d.}\ }
\newcommand{\cX}{\mathcal{X}}
\newcommand{\va}{\varphi}
\newcommand{\Poi}{\mathrm{Poi}}
\newcommand{\Paren}[1]{\left(#1\right)}
\newcommand{\Br}[1]{\left[#1\right]}
\newcommand{\Brace}[1]{\left\{#1\right\}}
\newcommand{\Abs}[1]{\left|#1\right|}
\newcommand{\D}[1]{\left\lceil #1 \right\rfloor}
\newcommand{\EE}{\mathbb{E}}
\DeclareMathOperator*{\E}{\EE}
\DeclareMathOperator*{\ve}{\varepsilon}
\newcommand{\cP}{\mathcal{P}}
\newcommand{\C}[1]{
\boldsymbol\lceil\hspace{-0.075em} 
#1
\hspace{-0.075em} \boldsymbol\rceil}
\newtheorem{Theorem}{Theorem}
\newtheorem{Corollary}{Corollary}
\newtheorem{Lemma}{Lemma}
\newcommand{\bphi}{{\boldsymbol\phi}}
\DeclareMathOperator*{\argmin}{arg\,min}
\newcommand{\Ve}{\text{\large$\ve$}}
\newcommand{\indic}{\mathds{1}}
\newcommand{\Var}{\mathrm{Var}}
\newcommand{\Bern}{\mathrm{Bern}}
\newcommand{\bin}{\mathrm{bin}}
\DeclarePairedDelimiterX{\infdivx}[2]{(}{)}{
  #1\;\delimsize\|\;#2}
\newcommand{\infdiv}{D\infdivx}
\newcommand{\norm}[1]{\left\lVert#1\right\rVert}
\newcommand{\ignore}[1]{}
\title{Profile Entropy:
A Fundamental 
Measure for 
the Learnability
and Compressibility of\\ Discrete Distributions }
\author{
  Yi~Hao\\
  Dept. of Electrical and Computer Engineering \\
 University of California, San Diego\\
  \texttt{yih179@ucsd.edu}
 \And
Alon~Orlitsky \\
 Dept. of  Electrical and Computer Engineering\\
 University of California, San Diego\\
  \texttt{alon@ucsd.edu}
}
\begin{document}

\maketitle

\begin{abstract}

The profile of a sample is the multiset of its symbol frequencies. We show that for samples of discrete distributions, profile entropy is a fundamental measure unifying the concepts of estimation, inference, and compression. Specifically, profile entropy  a) determines the speed of estimating the distribution relative to the best natural estimator; b) characterizes the rate of inferring all symmetric properties compared with the best estimator over any label-invariant distribution collection; c) serves as the limit of profile compression, for which we derive optimal near-linear-time block and sequential algorithms. To further our understanding of profile entropy, we investigate its attributes, provide algorithms for approximating its value, and determine its magnitude for numerous structural distribution families. 

\end{abstract}
\vspace{1em}
\tableofcontents

\section{Introduction}
Recent research in statistical machine learning,
ranging from neural-network training and online learning, to 
density estimation and property testing,
has advanced evaluation criteria beyond worst-case analysis.
New performance measures apply more refined metrics
relating the algorithm's accuracy and efficiency
to the problem's inherent structure.

Consider for example learning an unknown discrete distribution
from its \iid samples. 
Classical worst-case analysis 
states that in the worst case, 
the number of samples required to estimate a distribution 
to a given KL-divergence grows linearly in the alphabet size. 

However, this formulation is pessimistic.
Distributions are rarely the worst possible, and many
practical distributions can be estimated with significantly smaller samples.
Furthermore, once the sample is drawn, it reveals the distribution's
complexity and hence the hardness of the learning task.

Going beyond worst-case analysis, 
we design an \emph{adaptive} learning algorithm whose theoretical
guarantees vary according to the problem's simplicity. 
For example,~\cite{orlitsky2015competitive} recently proposed an 
estimator that instance-by-instance achieves
nearly the same performance as  
a genie algorithm designed with prior 
knowledge of the underlying distribution.

We introduce \emph{profile entropy}, a fundamental
measure for the complexity of discrete distributions,
and show that it connects three vital scientific tasks:
estimation, inference, and compression.
The resulting algorithms have guarantees  
directly relating to the data profile entropy, 
hence also adapt to the intrinsic simplicity of the tasks at hand. 

The next subsections, formalize the relevant concepts 
and present relevant prior works. 

\subsection{Sample Profiles and Their Entropy}  
Consider an arbitrary sequence $x^n$ over 
a finite or countably infinite alphabet $\cX$. 
The \emph{multiplicity} $\mu_y( x^{n})$ of a symbol $y\in\cX$ 
is the number of times $y$ appears in $x^n$.
The \emph{prevalence} of an integer $\mu$ 
is the number $\va_\mu(x^n)$ of symbols in $x^n$ with multiplicity $\mu$. 
The \emph{profile} of $x^n$ is the multiset $\va(x^n)$ 
of multiplicities of the symbols in $x^n$.
We refer to it as a profile of \emph{length} $n$.

The number $\mathcal D(S)$ of distinct elements in a multiset $S$ is 
its \emph{dimension}.
For convenience, 
we also write 
$\mathcal D(x^n)$ for profile dimension. 
Note that the dimension of a length-$n$ profile is 
at most $\min\{\sqrt{2n}, |\cX|\}$.

Let $\Delta_\cX$ be the collection of distributions over $\cX$, 
and $p$ be an arbitrary distribution in $\Delta_\cX$.
The profile $\Phi^n$ of an \iid sample $X^n\sim p$ is 
a random variable whose distribution depends on
only $p$ and $n$. We therefore write $\Phi^n\sim~p$, 
and call $H(\Phi^n)$ the \emph{profile entropy} with
respect to $(p, n)$.
Analogously, we call $\mathcal D_n:=\mathcal D(\Phi^n)$,
the \emph{profile dimension} associated with $(p, n)$
and write $\mathcal D_n\sim\nobreak p$.

Due to the dependence among multiplicities, 
the distributions of $\Phi^n$ and $\mathcal D_n$ are rather complex in general. 
To obtain clean expressions, we can adopt the standard \emph{Poisson sampling} technique 
and make the sample size a Poisson variable $N\sim \Poi(n)$, independent of the sample. 
As an example, 
\[
\E\Br{\mathcal D_N\sim p} = \sum_{i=1}^\infty \Paren{1-\prod_{x\in \cX} \Paren{1- e^{-np_x} \frac{(np_x)^i}{i!}}},
\]
where $p_x$ denotes the probability of symbol $x$ assigned by $p$. 
Note that sometimes we also write $p(x)$ instead of $p_x$ for notational convenience.
Despite the complex landscape of statistical dependency, in Theorem~\ref{thm:entro_eql_dim}, we show that $\mathcal D_n\sim p$ and $H(\Phi^n\!\sim\! p)$ are of the same order, with high probability and for every $p\in \Delta_\cX$. 
In Theorem~\ref{thm:exp_conc} and~\ref{thm:var_conc}, 
we show that $\mathcal D_n\sim p$ highly 
concentrates around a variant of its expectation. 
In Section~\ref{sec:att_pro}, we provide a much simpler quantity ${H^\mathcal{S}_n(p)}$ 
that well approximates the expectation variant of $\mathcal D_n\sim p$. 
Combined, these three results provide a precise characterization of both 
profile entropy and dimension. 
Leveraging this in Section~\ref{sec:pro_struct}, we derive nearly-tight bounds on the 
magnitude of profile entropy for several important structural 
distribution families, including log-concave and power-law.

\subsection{Applications 
and Prior Works}

In this section, we present several learning and compression
applications in which the 
profile entropy would play an important role. 
We also review related prior works with an emphasize on adaptive algorithms, 
and suppress the discussions on the worst-case analysis for brevity.

\subsubsection*{Basics and Significance}

The profile of a sample corresponds to the empirical distribution of symbols, 
and reflects the magnitudes of the actual symbol probabilities. 
Hence, the profile dimension, the number of distinct symbol frequencies, 
characterizes the variability of ranges the probabilities spread over. 
The sample profile's entropy, which by Theorem~\ref{thm:entro_eql_dim} 
is of the same order as its dimension,
admits the same interpretation.

Intuitively, samples from simple distributions tend to have low profile entropy, 
such as those from a $m$-piecewise distribution with small $m$, 
or one whose probability masses concentrate over some sparse set. 
The profile entropy is also likely to decrease as one reduces the 
sample size, since the sample contains less information regarding 
the variability of distribution probabilities. 
See Theorem~\ref{thm:pro_monot},~\ref{thm:mix_logcon}, 
and~\ref{thm:histgram} for a formal justification of these arguments.

From a statistical perspective, the profile of a sample is a sufficient 
statistic for estimating the probability multiset and 
any symmetric functional of the underlying distribution, 
such as entropy and support size. 
When we express a profile as a collection of multiplicity-prevalence pairs, 
the profile dimension is the size of this collection. 
Being of the same magnitude, the profile entropy is thus the effective size 
of a natural sufficient statistic for label-invariant inference. 

Profile entropy also directly connects
to adaptive testing and classification. 
Such connection arises from computing the \emph{profile probability}
~\cite{acharya2011competitive, acharya2012competitive}, 
the probability of observing a sample with the given profile.  
If the profile has entropy of $\mathcal H$, we can show that 
this computation problem 
has a time complexity of $\mathcal{O}(\exp(\tilde\Theta(\mathcal H) \log |\cX|))$. 
The result follows by the equivalence 
of the problem and computing 
the permanent of a rank-$\tilde\Theta(\mathcal H)$ matrix~\cite{barvinok1996two, vontobel2012bethe, vontobel2014bethe,
barvinok2016computing}.

Below, we introduce two important applications that are more involved, 
in which the profile entropy has essential connection to the 
statistical efficacy of adaptive learning.

\subsubsection*{Distribution Estimation}
Estimating unknown distributions from their samples is a statistical-inference cornerstone, 
and has numerous applications, ranging from biological studies~\cite{armananzas2008review} to language modeling~\cite{chen1999empirical}.

A learning algorithm in this setting is often referred to as a \emph{distribution estimator}, 
which is a functional $\hat{p}$ associating with every sequence 
$x^n$ over $\cX$ a distribution $\hat{p}_{x^n}\!\in \Delta_\cX$. Given a sample $X^n\sim p$, 
we measure the performance of $\hat p$ in estimating the (unknown) distribution $p$ 
with a loss function $\ell(p, \hat p_{\!_{X^n}}\!)$,
e.g., the $\ell_1$ distance and KL divergence.

A classical worst-case type result shows 
that any estimator that achieves a small 
$\ell_1$ loss of $\ve>0$ 
over $\Delta_\cX$ in expectation 
requires a sample size of $\Omega(|\cX|/\ve^2)$. 
Recent research further shows that 
the naive empirical-distribution estimator 
attains the optimal sample efficiency, to the right constants. 

The desire to design more efficient estimators for practical
distributions such as Poisson mixtures leads to 
two adaptive estimation frameworks: structural and competitive.

\emph{Structural} estimation focuses on distributions
possessing a natural structure, such as 
monotonicity, $m$-modality, and log-concavity. 
In many cases including the mentioned, structural assumptions lead to effective
estimators that provably perform better on the corresponding distribution 
classes. See~\cite{buhlmann2016learning} for a review of recent literature.

\emph{Competitive} estimation aims to design estimators that are universally near-optimal. 
Without strong structural knowledge, a reasonable estimator should 
\emph{naturally} assign the same probability to symbols appearing equal number of times.
The objective here is to find an estimator that
learns \emph{every} distribution as well as the best natural estimator
designed with knowledge of the true distribution. 
Discussion continues in Section~\ref{sec:comp_est} with a review of relevant works.

\subsubsection*{Property (Functional) Inference}

Instead of recovering the underlying distribution, 
numerous practical applications require only inferring a particular \emph{property value},
such as entropy for graphical modeling~\cite{koller2009probabilistic}, and support size for species richness estimation~\cite{magurran2013measuring}. 

Formally, a \emph{distribution property} over
a distribution collection $\cP\subseteq \Delta_\cX$ is a functional 
$f: \cP \to \mathbb R$
that associates with each distribution in $\cP$ a real value. 
Given a sample $X^n$ from an unknown distribution $p\in \cP$, 
the problem of interest is to infer the value of $f(p)$. 
To do this, we employee another functional $\hat f: \cX^*\to \mathbb R$, 
a \emph{property estimator} that maps every sample to a real value.

The statistical efficiency of $\hat f$ in estimating $f$ with respect to 
the distribution collection $\cP$ is 
measured by its \emph{sample complexity}. 
Specifically, for an accuracy $\ve>0$ and error tolerance $\delta\in(0,1)$, 
the $(\ve,\delta)$-sample complexity of $\hat f$ with respect to $(f, \cP)$ 
is the minimal sample size $n$ for which 
$\Pr_{X^n\sim p}(|\hat f(X^n)-f(p)|> \ve)\le \delta$ 
for all $p\in \cP$. Note that for the special case of $\cP=\{p\}$,
the sample complexity directly characterizes the ability of $\hat f$ 
in estimating $f(p)$. 

Recent years have shown interests in determining the sample complexities 
of inferring distribution properties. 
Built upon worst-case analysis, the major contribution of these works 
is establishing the sufficiency of sample sizes 
sub-linear in $|\cX|$. 
As an example, in the vital sample-sparse regime and over $\Delta_\cX$, the $(\ve, 1/10)$-sample 
complexity of learning entropy is $\Theta(|\cX|/(\ve\log|\cX|))$. 
We refer the readers to~\cite{verdu2019empirical} for a thorough survey of related works.

As the problem involves two components, the property and distribution, 
adaptive analysis also advances in two veins. 

The first vein concerns constructing a universal plug-in estimator 
for all \emph{symmetric properties}. 
A symmetric property is invariant under symbol permutations, 
hence it suffices to obtain an accurate estimate of the probability multiset. 
Recently, following the works of~\cite{das2012competitive, acharya2017unified}, 
\cite{hao2019broad} show that 
for any symmetric property that is additively separable and 
appropriately Lipschitz, the profile maximum likelihood 
(Section~\ref{sec:ada_prop_est}) achieves 
the optimal sample complexity up to small constant factors. 
Other major works include~\cite{valiant2011estimating, valiant2013estimating, 
valiant2016instance, han2018local, charikar2019efficient}

The second vein is an analogy to the competitive distribution estimation framework, 
and aims to compete with the \emph{instance-by-instance} performance
of a genie having access to more information, but reasonably restricted. 
A natural choice for the genie is the best-known and most-used -- the \emph{empirical estimator} 
that evaluates the property at the sample empirical distribution. 
To empower the genie, we grant it access to a
sample whose size is logarithmically larger than that available to the learner. 
One can show that this enables the genie to universally achieve the optimal sample complexities 
for numerous properties and hypothesis classes $\cP$. 
 Under this formulation, \cite{hao2018data, hao2019data} provide a unified learning algorithm
 that achieves the optimal competitiveness guarantees in near-linear time.

In this work, we further both veins of works and show~that: 
1) the PML plug-in estimator possesses the 
amazing ability of adapting to the simplicity of data 
distributions in inferring all symmetric properties,
over any label-invariant classes;
2) when plugged into entropy, the estimator in~\cite{hao2019doubly}
approximates the property as well as the 
plug-in estimator whose distribution component is the best natural, 
for every distribution. 
See Theorem~\ref{thm:adp_pml} and~\ref{thm:comp_entro} 
for the formal statements.

\section{New Results}

We establish essential connections between 
profile entropy and the estimation of distributions,  
inference of their properties, 
and compression of profiles. 
To further our understanding of profile entropy, 
we then investigate its attributes, 
provide algorithms for approximating its value, 
and determine its magnitude for numerous structural distribution families. 

For space considerations, we relegate most 
technical proofs to the appendices. 

\paragraph{Permutation invariance} 
By definition, both the profile of a sequence and its dimension 
are invariant to domain-symbol permutations. 
Since entropy is a symmetric property, 
the profile entropy of an \iid sample is also permutation invariant. 
Consequently, a result in this section that holds for a distribution
will also hold for \emph{any distributions sharing the same 
probability multiset}. 

This is desirable for practical applications, since samples often come as categorical data, 
while the symbol ordering under which the underlying distribution would exhibit certain structure is unknown to the learner. 
For example, in natural language processing, we observe words and punctuation marks. 
Given that the data comes from a power-law distribution~\cite{mitzenmacher2004brief}, 
we often don't know how to order the alphabet to 
realize such a condition. 

Surprisingly, with a few exceptions such as~\cite{hao2019doubly}, most previous works on learning 
structured discrete distributions do not address this crucial matter in their learning algorithms. 
The existing results are rather artificial and more like learning discretized continuous distributions.  
See Section~\ref{sec:pro_struct} for our discussion on distribution discretization. 
\vfill
\pagebreak

\subsection{Profile Dimension and Entropy}
Denote by $\C{x}$ the smallest integer larger than $x$. Then,
\begin{Theorem}\label{thm:entro_eql_dim}
For any distribution $p\in \Delta_\cX$ and $\Phi^n\sim p$,
with probability at least $1-\mathcal{O}(1/\sqrt n)$, 
\[
\C{H\!(\Phi^n)} = \tilde{\Theta}(\mathcal D(\Phi^n)), 
\]
where the notation $\tilde{\Theta}(\boldsymbol \cdot)$ hides logarithmic factors of $n$.
\end{Theorem}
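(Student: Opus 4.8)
The plan is to prove the two matching bounds $\C{H(\Phi^n)} = \tilde{O}(\mathcal D(\Phi^n))$ and $\C{H(\Phi^n)} = \tilde{\Omega}(\mathcal D(\Phi^n))$ separately. In each direction I would first squeeze the deterministic quantity $H(\Phi^n)$ between $\tilde{\Omega}(\E[\mathcal D_n])$ and $\tilde{O}(\E[\mathcal D_n])$, and then use the concentration of $\mathcal D_n$ about (a variant of) its expectation from Theorems~\ref{thm:exp_conc} and~\ref{thm:var_conc}, which holds with probability $1 - \mathcal O(1/\sqrt n)$, to replace $\E[\mathcal D_n]$ by the realized value $\mathcal D(\Phi^n)$. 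The ceiling and the bound $\E[\mathcal D_n] \ge 1$ take care of the degenerate regime in which $\mathcal D_n$ is only of polylogarithmic size and $H(\Phi^n)$ may vanish, so in what follows I may assume $\E[\mathcal D_n]$ is large.

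For the upper bound, since $\mathcal D_n$ is a deterministic function of $\Phi^n$, write $H(\Phi^n) = H(\mathcal D_n) + H(\Phi^n \mid \mathcal D_n)$. The first term is $\mathcal O(\log n)$ because $\mathcal D_n \le \sqrt{2n}$. For the second, conditioned on $\mathcal D_n = D$ the profile is a partition of $n$ with exactly $D$ distinct part sizes, and there are at most $\binom{n}{D}^2$ of those (choose the $D$ distinct multiplicities, then their prevalences, which are positive integers summing to at most $n$), so $H(\Phi^n \mid \mathcal D_n) \le \mathcal O(\log n) \cdot \E[\mathcal D_n]$. Hence $H(\Phi^n) = \mathcal O(\log n)\cdot\E[\mathcal D_n]$, and the lower-tail estimate $\mathcal D_n \ge \tilde{\Omega}(\E[\mathcal D_n])$ completes this direction.

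The lower bound $H(\Phi^n) = \tilde{\Omega}(\E[\mathcal D_n])$ is the heart of the matter. Partition $\cX$ into $L = \mathcal O(\log n)$ probability levels $\cX_\ell = \{x : 2^{\ell-1} \le np_x < 2^\ell\}$ (together with one level for $np_x < 1$), and let $V^{(\ell)}$ denote the profile of the sub-sample of symbols that land in $\cX_\ell$. Two observations do the work. First, every active multiplicity of $\Phi^n$ is active for some $V^{(\ell)}$, so $\mathcal D_n \le \sum_\ell \mathcal D(V^{(\ell)})$, and therefore the level $\ell^\star$ maximizing $\E[\mathcal D(V^{(\ell)})]$ satisfies $\E[\mathcal D(V^{(\ell^\star)})] \ge \E[\mathcal D_n]/L$. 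Second, conditioning on the entire sub-sample of symbols \emph{outside} $\cX_{\ell^\star}$ can only decrease entropy, and given that sub-sample the profile $\Phi^n$ is a fixed multiset merged with $V^{(\ell^\star)}$, from which $V^{(\ell^\star)}$ is recoverable; moreover, conditioned only on the number of out-of-$\cX_{\ell^\star}$ samples, $V^{(\ell^\star)}$ is precisely the profile of a multinomial sample of that size over $\cX_{\ell^\star}$, whose symbol probabilities all lie within a factor of two. Thus $H(\Phi^n)$ is at least, up to a constant, the entropy of this near-uniform-alphabet profile at a typical sample size, and the lower bound follows once we establish a \emph{single-level estimate}: for a distribution whose symbol probabilities lie within a constant factor, $H(\Phi^n) \ge \tilde{\Omega}(\E[\mathcal D_n])$.

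I expect this single-level estimate to be the main obstacle. I would prove it via the chain rule $H(\Phi^n) = \sum_i H(\varphi_i \mid \varphi_1, \dots, \varphi_{i-1})$ after conditioning on the pattern of which symbols appear, which reduces $(\varphi_i)_{i \ge 1}$ to the occupancy vector of a fixed number of balls thrown into the multiplicity values with near-Binomial conditional increments and with a landing law that is roughly uniform over a window of width about the square root of the typical multiplicity. For multiplicities $i$ in the left half of that window, conditioning on $\varphi_{<i}$ neither pins down $\varphi_i$ nor drains the balls, so $\varphi_i \mid \varphi_{<i}$ keeps $\tilde{\Omega}(1)$ entropy on a conditioning event of probability at least a half; since a deterministic $\tilde{\Omega}(\E[\mathcal D_n])$ of the multiplicities contribute $\tilde{\Omega}(1)$ each, summing the chain rule yields the estimate. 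A short case analysis handles the sparse regime — fewer samples than the square root of the typical multiplicity, where the window is hit sparsely and the entropy comes instead from the random positions of the (then typically distinct) multiplicities within the window — versus the dense regime just described. Feeding the single-level estimate into the level decomposition gives $H(\Phi^n) = \tilde{\Omega}(\E[\mathcal D_n])$, and the upper-tail estimate $\mathcal D_n \le \tilde{O}(\E[\mathcal D_n])$ then gives $\C{H(\Phi^n)} = \tilde{\Omega}(\mathcal D_n)$ with probability $1 - \mathcal O(1/\sqrt n)$, as desired.
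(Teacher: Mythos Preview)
Your upper bound is essentially the paper's: both count profiles with a given dimension and invoke Shannon's source coding theorem, then pass from $E_n(p)$ (or $\E[\mathcal D_n]$) to the realized $\mathcal D(\Phi^n)$ via the concentration Corollary.

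Your lower bound, however, takes a genuinely different route. The paper partitions probabilities into the quadratically spaced intervals $I_j=((j-1)^2,j^2]\tfrac{\log n}{n}$, keeps every $c$-th interval so that the corresponding truncated profiles $(\varphi_i)_{i\in nI_j}$ become \emph{conditionally independent} on a high-probability event $A$ (Poisson tails plus the mod-$c$ spacing ensure the multiplicity windows are disjoint), and then \emph{sums} the per-interval entropy bounds $H((\varphi_i)_{i\in nI_j})\ge \tilde\Omega(\min\{p_{I_j},\,j\log n\})$ over all surviving $j$ to recover $\tilde\Omega(H^{\mathcal S}_n(p))$. You instead take dyadic levels, keep a \emph{single} dominant level $\ell^\star$, and peel it off by conditioning on the out-of-level sub-sample and using the multiset identity $\Phi^n=W\cup V^{(\ell^\star)}$ to recover $V^{(\ell^\star)}$; this trades the paper's conditional-independence lemma for a $1/L=\Omega(1/\log n)$ loss and a reduction to the near-uniform ``single-level'' case. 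Both are valid and end at the same destination: the hard technical work in either approach is the chain-rule lower bound on the entropy of a profile whose symbol probabilities lie within a constant factor, which is exactly what the paper carries out inside each $I_j$ (the $i\le j_s+L_j/5$ restriction, the event $K_i^s<|\cX_s|/2$ with probability $\ge 1/2$, the binomial-entropy lemmas, the ``compression'' trick for moderate $p_{I_j}$, and the separate treatment of $p_{I_j}=\mathcal O(\sqrt{\log n})$). Your dense/sparse case split mirrors this, so be aware that fleshing it out will cost you essentially the same page count; your reduction is cleaner conceptually, but it does not shorten the core argument.
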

The theorem shows that for every distribution 
and sampling parameter $n$, the induced profile 
entropy and profile dimension are of the same order, with high probability. 

Taking expectation and noting that $\mathcal D(\Phi^n)\!\in\![1, \sqrt{2n}]$ yield
\[
\C{H\!(\Phi^n\sim p)} 
= \tilde{\Theta}(\E_{\Phi^n\sim p}[\mathcal D(\Phi^n)]),
\ \forall p\in \Delta_\cX.
\]

\subsection{Adaptive Property Estimation}\label{sec:ada_prop_est}

\paragraph{Definitions} 
A profile $\boldsymbol\phi$ is said to have length $n$ if there exists
$x^n\in \cX^n$ satisfying $\bphi = \va(x^n)$. 
For every profile $\bphi$ of length $n$ and distribution collection $\cP\subseteq\Delta_\cX$, 
the \emph{profile maximum likelihood} (PML) estimator~\cite{orlitsky2004modeling} 
over $\cP$ maps $\bphi$ to a distribution 
\[
\cP_\bphi:=\argmin_{p\in \cP} \Pr_{X^n\sim p}\!\Paren{\va(X^n)=\bphi}, 
\]
that maximizes the probability of observing the profile~$\bphi$.
For any property $f$, let $\Ve_f(n, \delta, \cP)$ denote 
the smallest error that can be achieved by any estimator 
with a sample size $n$ and tolerance $\delta$ on the error probability.
This definition is equivalent to that of the sample complexity. 
Below, we assume that $\cP$ is \emph{label invariant}, i.e., for any $p\in \cP$, collection $\cP$ contains all its symbol-permuted versions.

We first show that profile-based estimators 
are sufficient for estimating symmetric properties. 
\begin{Theorem}[Sufficiency of profiles]\label{thm1}
Let $f$ be a symmetric property over $\cP$. For any accuracy $\ve>0$ and tolerance 
$\delta\in(0,1)$, if there exists an estimator $\hat{f}$ such that 
\[
\Pr_{X^n\sim p}
\Paren{\Abs{\hat f(X^n)-f(p)}> \ve}< \delta, \ \forall p\in \cP,
\]
there is an estimator $\hat f_\va$ over length-$n$ profiles satisfying 
\[
\Pr_{X^n\sim p}
\Paren{\Abs{\hat f_\va(\va(X^n))-f(p)}> \ve}< \delta, \ \forall p\in \cP.
\]
\end{Theorem}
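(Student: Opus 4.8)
The plan is a symmetrization (Rao--Blackwell-type) reduction, followed by a derandomization. For a permutation $\pi$ of $\cX$, write $p^\pi$ for the symbol-permuted distribution; since $\cP$ is label invariant and $f$ symmetric, $p^\pi\in\cP$ and $f(p^\pi)=f(p)$ for every $p\in\cP$, and the orbit of a sequence $x^n$ under such permutations is exactly the set $O_{\va(x^n)}$ of all sequences with the same profile. The crucial observation is that if $X^n\sim p$ and $\pi$ is an independent uniform permutation of $\cX$ (for countably infinite $\cX$, a uniform permutation of a sufficiently large finite symbol set suffices, since a length-$n$ sample touches at most $n$ symbols), then conditioned on $\va(X^n)=\bphi$ the sequence $\pi X^n$ is uniform over $O_\bphi$ irrespective of $p$; hence $\pi X^n$ may be generated from the profile $\va(X^n)$ together with fresh internal randomness alone.

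Accordingly, let $\tilde f_\va$ be the \emph{randomized} profile estimator that on input $\bphi$ returns $\hat f(Y^n)$ for $Y^n$ drawn uniformly from $O_\bphi$. For $X^n\sim p$, the quantity $\tilde f_\va(\va(X^n))$ then has the same law as $\hat f(\pi X^n)$, so
\[
\Pr\!\Paren{\Abs{\tilde f_\va(\va(X^n))-f(p)}>\ve}
=\E_\pi\Pr_{X^n\sim p}\!\Paren{\Abs{\hat f(\pi X^n)-f(p)}>\ve}
=\E_\pi\Pr_{X^n\sim p^\pi}\!\Paren{\Abs{\hat f(X^n)-f(p^\pi)}>\ve}<\delta ,
\]
where the middle equality rewrites $\pi X^n$ as a sample from $p^\pi$ and uses $f(p^\pi)=f(p)$, and the bound applies the hypothesis on $\hat f$ to each $p^\pi\in\cP$. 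So a profile-based estimator meeting the target for every $p\in\cP$ exists, at the price of being randomized.

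The remaining step --- and the one I expect to require the most care --- is to derandomize $\tilde f_\va$ into a deterministic profile estimator $\hat f_\va$ without degrading $(\ve,\delta)$. The key fact is that randomization is useless here: for a fixed $n$ there are only finitely many length-$n$ profiles, so the zero-sum game in which the statistician picks a (mixed) map $g$ from length-$n$ profiles into the range of $f$, the adversary picks $p\in\cP$, and the payoff is $\Pr_{X^n\sim p}(\Abs{g(\va(X^n))-f(p)}>\ve)$, is bilinear over compact convex mixed-strategy sets. By the minimax theorem its value equals the supremum over priors of the Bayes risk, and against any prior a Bayes-optimal rule can be taken deterministic --- for each observed profile $\bphi$ one outputs a value $a$ minimizing the posterior probability of $\Abs{a-f(p)}>\ve$. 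Hence the best \emph{deterministic} profile estimator attains the game value, which $\tilde f_\va$ has just certified to be below $\delta$; that estimator is the desired $\hat f_\va$. (Note that the naive derandomizations --- taking $\hat f_\va(\bphi)$ to be a median, or the midpoint of a densest width-$2\ve$ window, of the law of $\tilde f_\va(\bphi)$ --- only give error $O(\delta)$, so the minimax/no-gap argument is what secures the exact constant.)
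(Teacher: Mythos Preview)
Your symmetrization step is correct and is essentially the paper's own argument: the paper also constructs a randomized profile-based estimator by first composing $\hat f$ with a uniformly random symbol permutation and then observing that, under i.i.d.\ sampling, the result depends on $X^n$ only through its profile (the paper routes this through the intermediate notion of a sequence's \emph{type}, whereas you go directly to ``uniform over $O_\bphi$,'' but the content is identical). Crucially, the paper explicitly allows $\hat f_\va$ to be randomized---it remarks right after the statement that ``both estimators can have independent randomness''---so your $\tilde f_\va$ already proves the theorem and no derandomization is called for.

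Your derandomization paragraph, besides being unnecessary, has a genuine gap. From the minimax identity (granting the compactness you would need, which is itself not automatic here), the fact that a Bayes rule for each prior can be taken deterministic does \emph{not} yield a deterministic rule achieving the game value: a deterministic Bayes rule for the least favorable prior minimizes the \emph{average} risk over that prior, not the worst-case risk over $\cP$, and for $0$--$1$ type loss these can differ. Concretely, with a single profile and two distributions $p_1,p_2$ satisfying $f(p_1)=0$, $f(p_2)=3\ve$, the randomized rule outputting $0$ or $3\ve$ with equal probability has risk $1/2$ on both, yet every deterministic rule has risk $1$ on one of them; the minimax value is $1/2$ but no deterministic rule attains it. This toy example ignores the symmetry constraint on $f$, but it already shows the inference ``the best deterministic profile estimator attains the game value, which $\tilde f_\va$ has just certified to be below $\delta$'' is invalid as written.
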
 
Note that both estimators can have independent randomness. 

The second result shows that the PML estimator is adaptive to the 
simplicity of underlying distributions in inferring all symmetric properties,
over any label-invariant $\cP$.   
For clarity, we set $\delta=1/10$ and 
suppress both $\delta$ and $\cP$ in $\Ve_f(n, \delta, \cP)$.  

\begin{Theorem}[Adaptiveness of PML]\label{thm:adp_pml}
Let $f$ be a symmetric property. 
For any $p\in \cP$ and $\Phi^n\sim p$, 
with probability at least $1-\mathcal{O}(1/\sqrt n)$, 
\[
\Abs{f(p)-f(\cP_{\Phi^n}\!)}\le 
2\Ve\!_f\!\Paren{\!
\frac{\tilde\Omega\!\Paren{n}}{\C{H\!(\Phi^n)}} 
\!}. 
\]
\end{Theorem}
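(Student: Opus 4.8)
The plan is to run the classical profile maximum likelihood (PML) competitiveness argument but \emph{localized} to the dimension of the observed profile, so that the usual loss factor $e^{\Theta(\sqrt n)}$ (the total number of length-$n$ profiles) gets replaced by the much smaller $e^{\tilde{\mathcal O}(\mathcal D(\Phi^n))}$. Two elementary ingredients are needed. \emph{(i) Profile count:} a length-$n$ profile of dimension at most $d$ is specified by at most $d$ distinct multiplicities in $\{1,\dots,n\}$ together with their prevalences, so a crude count gives at most $e^{\mathcal O(d\log n)}$ of them. \emph{(ii) PML maximality:} since $\cP$ is label invariant and $p\in\cP$, for every length-$n$ profile $\bphi$ we have $\Pr_{X^n\sim p}(\va(X^n)=\bphi)\le\Pr_{X^n\sim\cP_\bphi}(\va(X^n)=\bphi)$, directly from the definition of $\cP_\bphi$. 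I will also use the standard ``median trick'': running an estimator on $\Theta(\log(1/\delta'))$ independent batches and outputting the median drops the failure probability from $1/10$ to $\delta'$ at a $\Theta(\log(1/\delta'))$ multiplicative cost in sample size. Composing this with the trivial remark that an estimator using $m\le n$ samples is a fortiori an $n$-sample estimator, and then with Theorem~\ref{thm1}, we get: for any $m_0$ and $\delta'$ with $m_0\cdot\Theta(\log(1/\delta'))\le n$ there is a length-$n$-profile estimator achieving error $\Ve_f(m_0,1/10,\cP)$ with failure probability $<\delta'$ for every $q\in\cP$.

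Fix a dyadic level $d$ (a power of two, over the $\mathcal O(\log n)$ values up to $\lceil\sqrt{2n}\rceil$) and set $m_0=\tilde\Omega(n)/d$ and $\delta'=e^{-\Theta(d\log^2 n)}$, with the hidden polylogarithm in $\tilde\Omega$ chosen large enough that $m_0\cdot\Theta(\log(1/\delta'))\le n$; this is exactly the inequality $\log(\#\{\text{length-}n\text{ profiles of dimension}\le d\})=\tilde{\mathcal O}(d)\ll\Theta(d\log^2 n)=\log(1/\delta')$, i.e.\ the counting slack is dominated by the confidence budget we can afford. Write $\ve:=\Ve_f(m_0,1/10,\cP)$ and let $\hat f_\va$ be the estimator from the previous paragraph. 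If a length-$n$ profile $\bphi$ with $\mathcal D(\bphi)\le d$ satisfies $|f(\cP_\bphi)-f(p)|>2\ve$, then $\hat f_\va(\bphi)$ cannot be within $\ve$ of both $f(p)$ and $f(\cP_\bphi)$, so $\bphi\in B_1\cup B_2$, where $B_1=\{\bphi:\Pr_{\hat f_\va}(|\hat f_\va(\bphi)-f(p)|>\ve)\ge\tfrac12\}$ and $B_2=\{\bphi:\Pr_{\hat f_\va}(|\hat f_\va(\bphi)-f(\cP_\bphi)|>\ve)\ge\tfrac12\}$. Markov's inequality gives $\Pr_{\Phi^n\sim p}(B_1)<2\delta'$; for $\bphi\in B_2$, ingredient (ii) together with $\Pr_{\hat f_\va}(\cdot)\ge\tfrac12$ yields $\Pr_{\Phi^n\sim p}(\Phi^n=\bphi)\le\Pr_{\Phi^n\sim\cP_\bphi}(\Phi^n=\bphi)<2\delta'$, while ingredient (i) gives $|B_2\cap\{\mathcal D\le d\}|\le e^{\mathcal O(d\log n)}$. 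Therefore
\[
\Pr_{\Phi^n\sim p}\Paren{\mathcal D(\Phi^n)\le d,\ \Abs{f(p)-f(\cP_{\Phi^n})}>2\Ve_f\!\Paren{\tilde\Omega(n)/d,\,1/10,\,\cP}}\;<\;2\delta'\Paren{1+e^{\mathcal O(d\log n)}}\;=\;e^{-\Theta(d\log^2 n)}.
\]

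To conclude, intersect with the event of Theorem~\ref{thm:entro_eql_dim}, on which $\C{H(\Phi^n)}=\tilde\Theta(\mathcal D(\Phi^n))$: there, $\mathcal D(\Phi^n)\in(d/2,d]$ forces $\Ve_f(\tilde\Omega(n)/d,1/10,\cP)=\Ve_f(\tilde\Omega(n)/\C{H(\Phi^n)})$ after adjusting the hidden polylogarithms (using that $\Ve_f$ is non-increasing in the sample size), so the display bounds by $e^{-\Theta(d\log^2 n)}$ the probability that $\mathcal D(\Phi^n)\in(d/2,d]$ and the PML error exceeds $2\Ve_f(\tilde\Omega(n)/\C{H(\Phi^n)})$. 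Summing over the $\mathcal O(\log n)$ dyadic levels $d$ and adding the $\mathcal O(1/\sqrt n)$ failure probability of Theorem~\ref{thm:entro_eql_dim} gives the advertised $1-\mathcal O(1/\sqrt n)$. The one genuine obstacle is the exponent accounting in the middle step: the whole gain comes from replacing the $e^{\Theta(\sqrt n)}$ loss of the non-adaptive PML bound by $e^{\tilde{\mathcal O}(d)}$ with $d$ the \emph{observed} profile dimension, and one must check this is still beatable by an estimator whose sample size sits only a polylogarithmic factor above $\tilde\Omega(n)/d$ — which is precisely what the choice of $\delta'$ verifies. The remaining pieces — the profile count, the median trick, the PML inequality, and Theorems~\ref{thm1} and~\ref{thm:entro_eql_dim} — are used off the shelf.
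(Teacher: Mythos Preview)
Your proof is correct and follows essentially the same route as the paper: profile counting to bound the number of dimension-$\le d$ profiles by $e^{\mathcal O(d\log n)}$, the median trick to manufacture a high-confidence profile estimator, and the standard PML maximality argument applied per profile. The only cosmetic difference is that you stratify by dyadic dimension levels and union-bound, whereas the paper uses the concentration of $\mathcal D(\Phi^n)$ around $E_n(p)$ (Theorem~\ref{thm:exp_conc} and its corollary) to work at a single level and then re-expresses the bound via the observed $\mathcal D(\Phi^n)$; both packagings are equivalent once Theorem~\ref{thm:entro_eql_dim} is invoked at the end.
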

Some comments: 
1) The theorem holds for any symmetric properties, 
while nearly all previous works require the property to possess certain forms and be smooth;
2) The theorem trivially implies a weaker result in~\cite{acharya2017unified} 
where $\C{H\!(\Phi^n)}$ is replaced by $\sqrt n$; 
3) There is a polynomial-time approximation~\cite{charikar2019bethe} 
achieving the same guarantee; 
4) We provide a stronger result in Section~\ref{sup:1.5} of the appendices 
for general $\delta$. 

Besides this theorem, we establish in Section~\ref{sec:pml_sort} 
and~\ref{sec:pml_uniform} two additional results on PML.
The first result addresses sorted distribution estimation, and improves over 
that established in~\cite{hao2019broad} (Theorem 5) 
in terms of the lower bound on the accuracy parameter $\varepsilon$.
Let $\Sigma_\cX$ denote the collection of symbol permutations over $\cX$. 
For any $p\in \Delta_\cX$, denote by $p_\sigma\in \Delta_\cX$ the permuted 
distribution satisfying $p_\sigma(x)=p({\sigma(x)})$ for all symbols $x\in \cX$. 
Let $\lambda>0$ be a positive absolute constant 
that can be made arbitrarily small, e.g., $\lambda=0.001$. 
\begin{Lemma}
For any $\varepsilon\in(0,1)$, $p\in \cP=\Delta_\cX$, and $\Phi^n\sim p$, 
if we have $n\ge\Omega(|\cX|/(\varepsilon^2\log |\cX|))$
and $\varepsilon\ge 1/n^{1/8-\lambda}$,
with probability at least $1-\mathcal{O}(\exp(-\sqrt n))$, 
\[
\min_{\sigma\in \Sigma_\cX}\norm{p_\sigma-\cP_{\Phi^n}}_1\le\mathcal{O}(\epsilon).
\]
\end{Lemma}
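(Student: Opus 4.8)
The plan is to transfer the known worst-case sorted-estimation guarantee for PML into the regime $\varepsilon \ge 1/n^{1/8-\lambda}$ by combining a Poissonization step, the competitive/"genie" guarantee for PML against the empirical sorted estimator, and a tail bound on the profile probability. The starting point is the result of \cite{hao2019broad} (their Theorem 5), which gives $\min_{\sigma}\norm{p_\sigma-\cP_{\Phi^n}}_1 \le \mathcal{O}(\varepsilon)$ with high probability once $n \ge \Omega(|\cX|/(\varepsilon^2\log|\cX|))$ but only for $\varepsilon$ bounded below by a larger polynomial in $1/n$ (something like $1/n^{c}$ with $c<1/8-\lambda$, so a weaker range). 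So the task is genuinely about \emph{widening the admissible $\varepsilon$ window}, not re-proving the estimate from scratch.

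First I would set up the standard reduction: the PML distribution $\cP_{\Phi^n}$ maximizes $\Pr_{X^n\sim q}(\va(X^n)=\bphi)$ over $q\in\Delta_\cX$, so for any fixed $q$ (in particular the truth $p$) we have $\Pr(\va(X^n)=\bphi \mid \cP_{\Phi^n}) \ge \Pr(\va(X^n)=\bphi \mid p)$. The classical PML-to-property argument then says: if a profile $\bphi$ is ``typical'' for $p$ in the sense that its probability under $p$ exceeds $\exp(-\tilde{O}(\sqrt n))$ (which holds with probability $1-\mathcal{O}(\exp(-\sqrt n))$ because the number of length-$n$ profiles is $\exp(\mathcal{O}(\sqrt n))$), then the same lower bound holds under $\cP_{\Phi^n}$, hence $\cP_{\Phi^n}$ cannot be ``far'' from the set of distributions consistent with $\bphi$ at confidence $1-\exp(-\tilde\Omega(\sqrt n))$. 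One then invokes the existence of \emph{some} sorted-$\ell_1$ estimator — e.g. the empirical-distribution estimator after sorting, or the estimator of \cite{hao2019broad} / \cite{valiant2016instance} — that achieves error $\mathcal{O}(\varepsilon)$ from a profile at sample size $n=\Omega(|\cX|/(\varepsilon^2\log|\cX|))$ with failure probability $\le \exp(-\tilde\Omega(\sqrt n))$; since that estimator's output and $\cP_{\Phi^n}$ are both close (in sorted $\ell_1$) to the typical set of distributions generating $\bphi$, the triangle inequality gives the claim.

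The delicate point — and the main obstacle — is the quantitative matching between three exponents: the $\exp(-\tilde\Omega(\sqrt n))$ deficiency coming from the profile-count union bound, the accuracy $\mathcal{O}(\varepsilon)$ we want, and the failure probability $\exp(-\sqrt n)$ we are allowed. The reference estimator must fail with probability at most $\exp(-\tilde\Omega(\sqrt n))$ \emph{uniformly over the $\varepsilon$-range in question}, and the usual concentration arguments for sorted-$\ell_1$ estimators degrade as $\varepsilon$ shrinks; the constraint $\varepsilon \ge 1/n^{1/8-\lambda}$ is precisely what is needed for the McDiarmid/bounded-differences bound on the sorted error to retain an $\exp(-\tilde\Omega(\sqrt n))$ tail rather than something weaker like $\exp(-n^{1/4})$. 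Concretely, I expect the chain ``estimator deviation $\gtrsim \varepsilon \Rightarrow$ probability $\lesssim \exp(-n\varepsilon^{?})$'' combined with ``need this $\le \exp(-\sqrt n)$'' to force $\varepsilon^{?} \gtrsim n^{-1/2}$, and balancing the exponents against the bias term (which also contributes a $\sqrt n$-type slack through the effective sample size $n/\C{H(\Phi^n)}$, as in Theorem~\ref{thm:adp_pml}) is what pins down $1/8-\lambda$. I would carry out that balancing carefully, tracking where the $\log|\cX|$ factors and the $\lambda$ slack enter, and verify that Poissonization (replacing $n$ by $N\sim\Poi(n)$ and back) costs only a $\mathcal{O}(\exp(-\sqrt n))$-type correction so it can be absorbed into the stated failure probability.

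Finally, I would handle the permutation minimum: since both the PML output and the reference sorted estimator are only identified up to relabeling, all the $\ell_1$ comparisons should be made on sorted probability vectors (equivalently, $\min_{\sigma\in\Sigma_\cX}\norm{\cdot - \cdot}_1$), and because the profile is permutation-invariant there is no loss — the typical-set argument and the triangle inequality all go through verbatim in the sorted metric. The only care needed is that $|\cX|$ may be countably infinite; there one truncates to the symbols with $p_x \gtrsim \varepsilon^2/(|\cX|\log|\cX|)$ — but under the hypothesis $n\ge\Omega(|\cX|/(\varepsilon^2\log|\cX|))$ one is effectively in the finite-support regime anyway, so this is a routine truncation that I would dispatch at the end.
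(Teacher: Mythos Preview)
Your high-level framing --- ``typical profile $\Rightarrow$ PML inherits the guarantee of some reference estimator, then union-bound over $\exp(\mathcal{O}(\sqrt n))$ profiles'' --- is correct and is indeed one layer of the argument. But the proposal has a real gap at the next level down: you invoke an off-the-shelf sorted-$\ell_1$ estimator with accuracy $\mathcal{O}(\varepsilon)$ and failure probability $\exp(-\tilde\Omega(\sqrt n))$ \emph{for $\varepsilon$ as small as $n^{-1/8+\lambda}$}, and then speculate that the $1/8$ threshold arises from a McDiarmid-type concentration on that estimator together with the $n/\C{H(\Phi^n)}$ mechanism of Theorem~\ref{thm:adp_pml}. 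Neither is how the paper proceeds, and the latter is not used at all.

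The paper does not treat sorted-$\ell_1$ estimation as a black box. It uses the dual representation $\min_\sigma\norm{p_\sigma-q}_1=\sup_{f\in\mathcal{L}_1}|f(p)-f(q)|$ and splits every Lipschitz $f$ at a threshold $\eta$ into a ``small-probability'' piece $f_\eta$ and a ``large-probability'' piece $\bar f_\eta$. The $f_\eta$ piece is handled by (i) discretizing $\mathcal{L}_1$ on $[0,\eta]$ to a net of size $\exp(\tilde O(\sqrt n))$, (ii) estimating each discretized additive functional with the polynomial-approximation estimator of~\cite{hao2019unified}, whose failure probability is $\exp(-\Theta(\varepsilon^2/(n^\lambda\eta)))$, and (iii) transferring to PML and union-bounding over the net. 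The $\bar f_\eta$ piece is handled by a separate PML comparison (their Lemma~\ref{lem:pmlcon}) giving error $\Theta(\sqrt{1/(\eta n)})$. The exponent $1/8$ is exactly the balance point: forcing $\exp(-\varepsilon^2/(n^\lambda\eta))\le\exp(-4\sqrt n)$ requires $\eta\lesssim\varepsilon^2/n^{1/2+\lambda}$, while the large-probability error $\sqrt{1/(\eta n)}$ must be $\le\varepsilon$; these meet at $\eta\asymp n^{-3/4}$ and $\varepsilon\gtrsim n^{-1/8+\lambda'}$. So the missing idea in your plan is the Lipschitz-dual truncation at $\eta$ and the two-regime analysis; without it you have no concrete reference estimator with the required tail, and no route to the $1/8$.
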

A few comments in order: 1) The polynomial-time computable variant of PML in~\cite{charikar2019bethe} satisfies the same guarantee, and the proof
for this is also similar to that in Section~\ref{sec:pml_sort}; 
2) Using the existing efficiently computable 
PML-type methods~\cite{charikar2019bethe,charikar2019efficient}, 
the best possible lower bound on $\varepsilon$ is $\Theta(1/n^{1/4})$;
3) Below the $\Theta(1/n^{1/3})$ threshold, the empirical 
distribution estimator is sample optimal 
up to constant factors~\cite{han2018local}.  

The second result shows an intriguing connection 
between the PML method and the task of uniformity testing~\cite{goldreich2011testing}.
See Section~\ref{sec:pml_uniform} for details.

Our last result in this section addresses entropy estimation. 
We show that when plugged into entropy, 
the estimator in~\cite{hao2019doubly}
approximates the property as well as the 
plug-in estimator whose distribution component is the best natural, 
for every distribution.

Recall that a distribution estimator is natural if it assigns 
the same probability to symbols of equal multiplicity, 
and a property estimator is plug-in if it first finds an estimate 
of the distribution and then evaluates the property at this estimate. 
As an off-the-shelf method, the plug-in approach is widely used 
in estimating distribution properties. 

If further the property is symmetric,
then it suffices to obtain an accurate estimate of the probability multiset,
which is intuitively more statistically
efficient than recovering the actual distribution. 
For example, \cite{hao2019broad} recently show that 
for any symmetric property that is additively separable and 
appropriately Lipschitz, the PML multiset estimator 
~\cite{orlitsky2004modeling} achieves 
the optimal sample complexity up to small constant factors. 

However, the analysis and computation (though efficient) 
of such multiset-based estimation methods are often involved~\cite{valiant2011estimating, valiant2013estimating, 
valiant2016instance, han2018local, charikar2019efficient, hao2019broad}. 
For this reason, distribution-based plug-in estimators are still popular in practice, and often, the distribution components are natural.

As an example, for entropy estimation, 
several widely used distribution-based estimators are \emph{natural  plug-in}, 
such as the empirical estimator plugging in the empirical distribution,    
James-Stein shrinkage~\cite{hausser2009entropy} that shrinks 
the distribution estimate towards uniform, 
and Dirichlet-smoothed~\cite{schurmann1996entropy} 
that imposes a Dirichlet prior over $\Delta_\cX$.

The logic behind these estimators is simple: if two distributions are close, then the same is expected to hold for their entropy values. The next theorm shows that for \emph{every} distribution and among all plug-in entropy estimators,
the distribution estimator in~\cite{hao2019doubly}
is as good as the one that performs best in estimating the actual distribution.

Denote by $\mathcal{N}$ the collection of all natural estimators. 
Write $|H(p)-H(q)|$ as $\ell_H(p,q)$ for compactness 
and the KL-divergence between $p,q\in\Delta_\cX$ 
as $\ell_{\text{\tiny KL}}(p,q)$.
\setcounter{Theorem}{3}
\begin{Theorem}[Competitive entropy estimation]\label{thm:comp_entro}
For any distribution $p$, sample $X^n\sim p$ with profile $\Phi^n\!:=\va(X^n)$,
and $\hat p_{\!_{X^n}}^{_{\mathcal{N}}}:=\argmin_{\hat p\in \mathcal{N}}\ell_{\text{\tiny KL}}(p,\hat p_{\!_{X^n}})$, 
we have
\[
\ell_H(p,\hat p^\star_{\!_{X^n}})\!-\!\ell_{H}(p,\hat p_{\!_{X^n}}^{_{\mathcal{N}}})
\le\tilde{\mathcal{O}}\Paren{\!\sqrt{\frac{\C{H\!(\Phi^n)}}{n}}}\!. 
\]
with probability at least $1-\mathcal{O}(1/n)$.
\end{Theorem}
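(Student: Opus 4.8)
The plan is to bound the \emph{gap} between the two entropy errors by the discrepancy between the two estimators' outputs. First I would apply the elementary inequality $\Abs{a-b}-\Abs{a-c}\le\Abs{b-c}$ with $a=H(p)$, $b=H(\hat p^\star_{X^n})$, $c=H(\hat p^{\mathcal N}_{X^n})$, which reduces the statement to bounding $\Abs{H(\hat p^\star_{X^n})-H(\hat p^{\mathcal N}_{X^n})}$ by $\tilde{\mathcal O}(\sqrt{\C{H(\Phi^n)}/n})$ with probability $1-\mathcal O(1/n)$. Then I would pass to the ``natural form'': grouping symbols by multiplicity, a natural estimator is pinned down by the total mass it places on each multiplicity class (spread evenly inside the class), so $H(\hat p)=-\sum_\mu\va_\mu\, m(\mu)\log m(\mu)$, where $\va_\mu m(\mu)$ is that mass. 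Writing $M_\mu:=\sum_{x:\mu_x(X^n)=\mu}p_x$ for the true class mass, a one-line Lagrange computation shows that $\hat p^{\mathcal N}_{X^n}$ places exactly mass $M_\mu$ on class $\mu$, so $H(\hat p^{\mathcal N}_{X^n})=\sum_\mu M_\mu\log(\va_\mu/M_\mu)$; and since the estimator of \cite{hao2019doubly} is itself natural, if $\hat M_\mu$ denotes the mass it places on class $\mu$ then $H(\hat p^\star_{X^n})=\sum_\mu\hat M_\mu\log(\va_\mu/\hat M_\mu)$.

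The key quantitative input I would extract from \cite{hao2019doubly} --- re-deriving it, if need be, via Poissonization and Bernstein's inequality, using the Good--Turing/smoothed estimator for $\mu=\mathcal O(\log n)$ and the empirical count $\mu\va_\mu/n$ for larger $\mu$ --- is that, uniformly over the $\mathcal O(\sqrt n)$ relevant classes, $\Abs{\hat M_\mu-M_\mu}\le\tilde{\mathcal O}(\sqrt{M_\mu/n})$ with probability $1-\mathcal O(1/n)$. Granting this, I would expand $H(\hat p^\star_{X^n})-H(\hat p^{\mathcal N}_{X^n})=\sum_\mu(\hat M_\mu-M_\mu)\log\va_\mu-\sum_\mu\big(\hat M_\mu\log\hat M_\mu-M_\mu\log M_\mu\big)$ and bound each summand, by the mean value theorem, by $\Abs{\hat M_\mu-M_\mu}\cdot\mathcal O\big(1+\log\va_\mu+\Abs{\log\min(M_\mu,\hat M_\mu)}\big)$. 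On the high-probability event that no symbol of probability below $1/\mathrm{poly}(n)$ is observed (so every occupied class has $\va_\mu\le\mathrm{poly}(n)$ and $M_\mu,\hat M_\mu\ge1/\mathrm{poly}(n)$, the last using the mass bound whenever $M_\mu\ge1/n$), all of these logarithms are $\mathcal O(\log n)$. Hence $\Abs{H(\hat p^\star_{X^n})-H(\hat p^{\mathcal N}_{X^n})}\le\tilde{\mathcal O}(1)\sum_\mu\Abs{\hat M_\mu-M_\mu}\le\tilde{\mathcal O}(1)\sum_\mu\sqrt{M_\mu/n}$, and Cauchy--Schwarz over the $\mathcal D(\Phi^n)+1$ occupied classes together with $\sum_\mu M_\mu=1$ gives $\sum_\mu\sqrt{M_\mu/n}\le\sqrt{(\mathcal D(\Phi^n)+1)/n}$.

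Finally I would replace the realized dimension by profile entropy: the concentration of $\mathcal D_n\sim p$ (Theorems~\ref{thm:exp_conc} and~\ref{thm:var_conc}) gives $\mathcal D(\Phi^n)=\tilde{\mathcal O}(\E[\mathcal D_n])$ with probability $1-\mathcal O(1/n)$, and Theorem~\ref{thm:entro_eql_dim} gives $\E[\mathcal D_n]=\tilde\Theta(\C{H(\Phi^n)})$; a union bound over the constantly many good events then yields the claim. The hard part will be the uniform per-class mass estimate $\Abs{\hat M_\mu-M_\mu}\le\tilde{\mathcal O}(\sqrt{M_\mu/n})$ together with the logarithmic bookkeeping: one must establish it simultaneously for all $\mu$ with failure probability only $\mathcal O(1/n)$, and one must neutralize the unobserved class $\mu=0$ and, more generally, symbols of vanishingly small probability --- this is exactly where the effectively-finite-alphabet setting and the ``no atypical symbol'' event are used, and where the sharper concentration of $\mathcal D_n$, rather than the pointwise form of Theorem~\ref{thm:entro_eql_dim}, is needed to reach the stated $1-\mathcal O(1/n)$.
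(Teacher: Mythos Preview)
Your reduction to class masses $M_\mu,\hat M_\mu$ and the identification $\hat P^{\mathcal N}_\mu=M_\mu$ match the paper, but the rest of the route differs. The paper does \emph{not} bound $|H(\hat p^\star)-H(\hat p^{\mathcal N})|$ directly via per-class estimates. Instead it starts from the identity, valid for any natural $q$,
\[
H(q)-H(p)=\ell_{\text{\tiny KL}}(p,q)+\sum_\mu(P_\mu-Q_\mu)\log q_\mu,
\]
notes that the second sum vanishes for $q=\hat p^{\mathcal N}$ (since $\hat P^{\mathcal N}_\mu=P_\mu$), and then bounds the $\ell_H$-gap by $|\ell_{\text{\tiny KL}}(p,\hat p^\star)-\ell_{\text{\tiny KL}}(p,\hat p^{\mathcal N})|+\big|\sum_\mu(P_\mu-\hat P^\star_\mu)\log\hat p^\star_\mu\big|$. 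The first term equals $\ell_{\text{\tiny KL}}(P,\hat P^\star)\le\tilde{\mathcal{O}}(\mathcal D_n/n)$, which is exactly the excess-loss guarantee of \cite{hao2019doubly}; the second is at most $|\log\hat p^\star_{\min}|\cdot\sum_\mu|P_\mu-\hat P^\star_\mu|$, and Pinsker turns the \emph{same} aggregate KL bound into $\sum_\mu|P_\mu-\hat P^\star_\mu|\le\tilde{\mathcal{O}}(\sqrt{\mathcal D_n/n})$. Two things this buys over your route: (i) only the aggregate excess-loss bound from \cite{hao2019doubly} is used --- no per-class estimate $|\hat M_\mu-M_\mu|\le\tilde{\mathcal{O}}(\sqrt{M_\mu/n})$ needs to be extracted or re-derived, and no Cauchy--Schwarz over classes is required; (ii) the logarithmic weight is $|\log\hat p^\star_\mu|\le|\log\hat p^\star_{\min}|$, controlled \emph{deterministically} by the construction $\hat p^\star_{\min}\ge 1/n^4$, so no probabilistic control of $|\log M_\mu|$ enters.

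Point (ii) is where your argument has a real gap. The event ``no symbol of probability below $1/\mathrm{poly}(n)$ is observed'' does not in general have probability $1-\mathcal{O}(1/n)$: for the uniform distribution on $n^3$ symbols, \emph{every} observed symbol has probability $1/n^3$, so your event has probability zero. What you actually need --- that every occupied class $\mu\ge 1$ has $M_\mu\ge 1/\mathrm{poly}(n)$ --- is a different and more delicate statement (and still leaves the $\mu=0$ class, where $\va_0$ may be infinite, to be handled separately). The paper's decomposition sidesteps all of this by placing the logarithm on the estimator rather than on $p$.
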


\subsection{Competitive Distribution Estimation}\label{sec:comp_est}

\paragraph{Prior works} 
Competitive estimation calls for an estimator that competes 
with the instance-by-instance performance
of a genie knowing more information, but reasonably restricted. 
Denote by $\ell_{\text{\tiny KL}}(p,q)$ the KL divergence. 
Introduced in~\cite{orlitsky2015competitive}, 
the formulation considers the collection $\mathcal{N}$ 
of all natural estimators, and 
shows that a simple variant $\hat p^{\text{\tiny GT}}$ of the Good-Turing 
estimator achieves 
\[
\ell_{\text{\tiny KL}}(p,\hat p^{\text{\tiny GT}}_{\!_{X^n}})\!
-\!\min_{\hat p\in \mathcal{N}}\ell_{\text{\tiny KL}}(p,\hat p_{\!_{X^n}})
\le\frac{3+o(1)}{n^{1/3}}, 
\]
for every distribution $p$ and with high probability. 
We refer to the left-hand side as the \emph{excess loss} of estimator 
$\hat p_{\text{\tiny GT}}$ with respect to the best natural estimator, 
and note that it vanishes at a rate independent of $p$. 
For a more involved estimator in~\cite{acharya2013optimal}, the excess loss vanishes 
at a faster rate of $\tilde{\mathcal{O}}(\min\{1/\sqrt n, |\cX|/n\})$, 
optimal up to logarithmic factors for every estimator and 
the respective worst-case distribution. 
For the $\ell_1$ distance, \cite{valiant2016instance} 
derive a similar result.

These estimators track the loss of the best natural estimator for each distribution.
Yet an equally important component, the excess loss bound, is still of the worst-case nature. 
For a fully adaptive guarantee, \cite{hao2019doubly} design an estimator $\hat p^\star$ 
that achieves a $\mathcal D_n/n$ excess loss, i.e., 
\[
\ell_{\text{\tiny KL}}(p,\hat p^\star_{\!_{X^n}})\!
-\!\min_{\hat p\in \mathcal{N}}\ell_{\text{\tiny KL}}(p,\hat p_{\!_{X^n}})
\le\tilde{\mathcal{O}}\Paren{\frac{\mathcal D_n}{n}}, 
\]
 for every $p$ and $X^n\sim p$, with high probability. 
Utilizing the adaptiveness of $\mathcal D_n$ to the simplicity of distributions, 
the paper derives excess-loss bounds for several important distribution families, 
and proves the estimator's optimality under various of classical and
modern learning frameworks. 

\paragraph{New results}
While the work of~\cite{hao2019doubly} provides an appealing upper bound 
on the excess loss, 
it is not exactly clear how good this bound is as a matching lower bound is missing. 
In this work, we complete the picture by showing 
that the $\mathcal D_n/n$ bound is essential for competitive 
estimation and optimal up to 
logarithmic factors of $n$. 

\begin{Theorem}[Minimal excess loss] 
For any $n, \mathcal{D}\in \mathbb N$ and 
distribution estimator $\hat p'$, 
there is a distribution $p$ such that
with probability at least $9/10$, we have both  
\[
\mathcal{O}(\log n+\mathcal{D}) \ge  \mathcal D_n  
\] 
and 
\[
\ell_{\text{\tiny KL}}(p,\hat p'_{\!_{X^n}})\!
-\!\min_{\hat p\in \mathcal{N}}\ell_{\text{\tiny KL}}(p,\hat p_{\!_{X^n}})
\ge 
\Omega\Paren{\frac{\mathcal{D}}{n}}. 
\]
\end{Theorem}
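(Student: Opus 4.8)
I would fix the estimator $\hat p'$, build a prior over a family of hard instances, lower bound the \emph{Bayes} excess loss, and then extract a single distribution. Three preliminary reductions set the stage. First, since the comparator $\min_{\hat p\in\mathcal N}\ell_{\text{\tiny KL}}(p,\hat p_{X^n})$ is label invariant --- natural estimators are permutation equivariant, so the best one's loss is unchanged when $p$ is relabeled --- averaging $\hat p'$ over a uniformly random relabeling of the (relabeling-invariant) prior can only help, and, combined with concavity of $\log$ within a multiplicity class, lets me assume $\hat p'$ reads $X^n$ only through the profile $\Phi^n$ and assigns a common value $\hat v_m$ to all symbols of multiplicity $m$ --- the symmetrization underlying Theorem~\ref{thm1}. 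Second, I would Poissonize, replacing $n$ by $N\sim\Poi(n)$ so multiplicities become independent, and de-Poissonize at the end. Third, I restrict to $\mathcal D=\mathcal{O}(\sqrt n)$, the only regime in which $\mathcal D$ can equal a profile dimension; for larger $\mathcal D$ the constraint $\mathcal D_n\le\sqrt{2n}$ is vacuous and one invokes the worst-case $\tilde\Omega(1/\sqrt n)$ instance.

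\emph{The family and an exact identity.} Fix a small absolute constant $c>0$, let $\cX=\{y_1,\dots,y_{\mathcal D},z\}$, and give $y_k$ probability $\rho_k=(k+\theta_k)/n$ with the perturbations $\theta_k$ drawn independently and uniformly from $[-c\sqrt k,\,c\sqrt k]$, putting the remaining mass on $z$; this is feasible because $\sum_k\rho_k\le(1+c)\mathcal D^2/n=\mathcal{O}(1)$. Writing $R_m:=\sum_{x:\mu_x=m}\rho_x$ for the true mass carried by multiplicity class $m$ and $\hat R_m:=\varphi_m\hat v_m$ for the estimator's implied mass, a Lagrange computation identifies the optimal natural values as $v_m^\star=R_m/\varphi_m$ and yields the clean identity $\ell_{\text{\tiny KL}}(p,\hat p'_{X^n})-\min_{\hat p\in\mathcal N}\ell_{\text{\tiny KL}}(p,\hat p_{X^n})=\infdiv{(R_m)_m}{(\hat R_m)_m}$: the competitive excess loss is \emph{exactly} the KL divergence between the true and the estimated ``mass per multiplicity class'' vectors, and in particular the within-class scatter of the probabilities cancels. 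A routine Poisson-tail computation also shows that with probability $1-\mathcal{O}(n^{-1})$ all multiplicities but $\mu_z$ lie in $\{1,\dots,\mathcal D+\mathcal{O}(\sqrt{\mathcal D\log n})\}$, so $\mathcal D_n=\mathcal{O}(\mathcal D)$, giving the first displayed inequality, while $\Theta(\mathcal D)$ of the classes are populated.

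\emph{Lower bounding the reconstruction error, and finishing.} The remaining task is to show $\infdiv{(R_m)_m}{(\hat R_m)_m}=\Omega(\mathcal D/n)$ no matter how $\hat p'$ chooses $\hat R_m$. On a class occupied by a single staircase symbol $y_k$, $R_m=(k+\theta_k)/n$; conditioned on the observed profile, the posterior of $k+\theta_k$ is, up to the essentially flat prior, a $\mathrm{Gamma}(m+1,1)$ law with standard deviation $\Theta(\sqrt m)$, and it is independent of every other symbol's multiplicity, so no estimate of $R_m$ has conditional mean-squared error below $\Omega(m/n^2)$; feeding this into the $\chi^2$ lower bound for KL makes class $m$ contribute $\Omega(1/n)$ in expectation (the $m/n^2$ squared posterior spread divided by the weight $\asymp m/n$), and summing the nonnegative contributions over the $\Theta(\mathcal D)$ populated classes gives Bayes excess loss $\Omega(\mathcal D/n)$. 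A bounded-differences argument over the independent multiplicities then upgrades this to ``excess loss $\ge\Omega(\mathcal D/n)$ with probability $\ge 9/10$'', simultaneously with the dimension bound, and averaging over the prior fixes one realization of $(\theta_k)$ as the sought $p$; de-Poissonization costs only constants.

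\emph{Main obstacle.} The crux is getting both halves of this program to work at scale $\mathcal D\asymp\sqrt n$. The identity itself is where the KL loss, unlike an $\ell_2^2$ surrogate, is delicate: the class values are coupled by the global normalization $\sum_m\varphi_m v_m=1$, so one must verify that the Lagrange form is genuinely optimal for the comparator and that the bound is not weakened when $\hat p'$ distributes mass differently (handled by the within-class Jensen step). The quantitative half needs that $(R_m)_m$ is an essentially $\Theta(\sqrt n)$-dimensional parameter with each coordinate carrying an irreducible $\Omega(1/n)$ of KL-cost, and the coordinates must be aggregated despite being only weakly dependent --- a fully separated staircase would decouple them and trivialize the concentration but fits only $\mathcal{O}((n/\log n)^{1/3})$ steps under unit total mass, so the dense packing above, together with the scatter-cancellation in the identity, is exactly what reaches the full range.
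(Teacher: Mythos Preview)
Your plan is sound and reaches the same conclusion as the paper, but via a genuinely different hard-instance construction. Both proofs share the skeleton you outlined: symmetrize to a natural estimator, use the exact identity $\text{excess}=\infdiv{(R_m)}{(\hat R_m)}$, pass to Hellinger, and lower-bound each populated multiplicity class's contribution by $\Omega(1/n)$ under a Bayes prior. The paper, however, does not use your dense staircase with continuous perturbations; instead it builds $s=\sqrt{\mathcal D/\log n}$ \emph{well-separated blocks}, where block $i\in\{s,\dots,2s\}$ contains $i\log n$ symbols all at one of two levels $q_i=i^2\log^2 n/n$ or $q_i'=q_i+i\log n/n$, chosen by an independent coin. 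Multiplicities from block $i$ fall in a window $U_i$ disjoint from the others, and conditioned on $\varphi_\mu=1$ for $\mu\in U_i$ the posterior over $\{q_i,q_i'\}$ is balanced, yielding a two-point testing lower bound per $\mu$; summing $\sum_i |U_i|\cdot\Omega(1/n)=\Omega(\mathcal D/n)$. Your construction instead packs $\mathcal D$ singletons densely and extracts the per-class bound from the \emph{posterior variance} of the continuous perturbation $\theta_k$---a van-Trees/Bayesian-Cram\'er--Rao flavor rather than a two-hypothesis flavor. Your version has the virtue of reaching $\mathcal D\asymp\sqrt n$ without the extra $\sqrt{\log n}$ loss implicit in the block scaling, and needs no block separation; the paper's version has the virtue that the binary choices across blocks are (essentially) independent, which makes any eventual high-probability upgrade cleaner.

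Two small cautions. First, your description of the posterior as ``Gamma$(m+1,1)$'' is only literal under a flat prior; with your truncated Uniform prior on $[-c\sqrt k,c\sqrt k]$ the posterior is the Gamma restricted to that window, with standard deviation $\Theta(\min(c,1)\sqrt m)$---still what you need, but worth stating correctly. Second, your per-class $\Omega(1/n)$ argument is written for singleton classes, yet you sum over all $\Theta(\mathcal D)$ populated classes; it is easy to extend (a class with $\varphi_m=\ell$ has posterior variance $\asymp\ell m/n^2$ and weight $\asymp\ell m/n$, giving the same $\Omega(1/n)$), but you should say so. The ``bounded differences'' upgrade from expected to high-probability excess is delicate because $\infdiv{R}{\hat R}$ need not be Lipschitz in the sample---the paper's proof is equally terse on this point, so you are not worse off, but be aware the step is not automatic.
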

By Theorem~\ref{thm:entro_eql_dim}, we can replace $\mathcal D_n$ by  
$\tilde\Theta(H(\Phi^n))$ in both the upper and lower bounds.
  
\subsection{Optimal Profile Compression}\label{sec:pro_compression}
While a labeled sample contains all information, 
for many modern applications, 
such as property estimation and differential privacy, 
it is sufficient~\cite{orlitsky2004modeling} or even necessary to provide only the 
profile~\cite{suresh2019differentially}. 
Hence, this section focuses on the lossless compression of profiles.

For any distribution $p$, it is well-known that 
the minimal expected codeword length (MECL) for 
losslessly compressing a sample 
$X^n\sim p$ is approximately $nH(p)$, 
which increases linearly in $n$
as long as $H(p)$ is bounded away from zero.

On the other hand, by the 
Hardy-Ramanujan formula~\cite{hardy1918asymptotic},
the number $\mathbb{P}(n)$ of integer partitions of $n$, 
which happens to equal to the number
of length-$n$ profiles, satisfies
\[
\log \mathbb{P}(n)= 2\pi\sqrt{\frac{n}{6}}(1+o(1)).
\]
Consequently, the MECL for losslessly compressing the sample 
profile $\Phi^n\sim p$ is at most $\mathcal{O}(\sqrt{n})$, 
a number potentially much smaller than $nH(p)$. 

By Shannon's source coding theorem, 
the profile entropy $H(\Phi^n)$ is the \emph{information-theoretic
limit} of MECL for the lossless compression of profile $\Phi^n\sim p$.
Below, we present explicit block and sequential 
profile compression schemes achieving this 
entropy limit, up to logarithmic factors of $n$. 

\paragraph{Block compression}
The block compression algorithm is intuitive and easy to implement.

Recall that the profile of a sequence $x^n$ is the multiset $\varphi(x^n)$
of multiplicities associated with symbols in $x^n$. 
The ordering of elements in a multiset is not informative. 
Hence equivalently, we can compress $\varphi(x^n)$
into the set $\mathcal C(\varphi(x^n))$ of corresponding multiplicity-prevalence pairs,
i.e.,
\[
\mathcal C(\varphi(x^n)):= \{(\mu, \varphi_\mu(x^n)):\mu\in \varphi(x^n)\}.
\]
The number of pairs in $\mathcal C(\varphi(x^n))$ is equal to the profile dimension 
$\mathcal{D}(\varphi(x^n))$. In addition, both a prevalence and its multiplicity
are integers in $[0,n]$, and storing the pair takes $2\log n$ nats. 
Hence, it takes at most $2(\log n)\cdot \mathcal{D}(\varphi(x^n))$ nats 
to store the compressed profile. 
By Theorem~\ref{thm:entro_eql_dim}, for any distribution $p\in \Delta_\cX$
and $\Phi^n\sim p$, 
\[
\E[2(\log n)\cdot \mathcal{D}(\Phi^n)] 
= 
\tilde{\Theta}(\C{H(\Phi^n)}). 
\]
\paragraph{Sequential compression} 
For any sequence $x^n$, 
the setting for sequential profile compression is that at time step $t\in [n]$, 
the compression algorithm knows only $\varphi(x^t)$ 
and sequentially encodes the new information. 
This is equivalent to providing the algorithm $\mu_{x_t}(x^{t-1})$ at time step $t$.

Suppress $x, x^{t}$ in the expressions for the ease of illustration. 
For efficient compression, 
we sequentially encode the profile $\va$ into a 
\emph{self-balancing binary search tree} $\mathcal T$,
with each node storing a 
multiplicity-prevalence pair $(\mu, \va_\mu)$ and $\mu$ being the search key. 
We present the algorithm details as follows. 
\begin{algorithm}
\caption{Sequential Profile Compression}
\begin{algorithmic} 
\INPUT sequence $(\mu_{x_t}(x^{t-1}))_{t=1}^n$, tree $\mathcal T=\varnothing$
\OUTPUT tree $\mathcal T$ that encodes the input sequence
\FOR {t = 1 to n}
\IF{$\mu:= \mu_{x_t}(x^{t-1}) \in \mathcal T$}
\IF{$\mu+1\in \mathcal T$}
\STATE $\va_{\mu+1}:=\mathcal T(\mu+1)\gets \mathcal T(\mu+1)+1$
\ELSE
\STATE add $(\mu+1, 1)$ to $\mathcal T$
\ENDIF
\STATE \textbf{if} $\va_\mu=1$ \textbf{then} delete $(\mu, \varphi_\mu)$ from $\mathcal T$ 
\STATE \textbf{else} $\va_{\mu}:=\mathcal T(\mu)\gets \mathcal T(\mu)-1$ \textbf{endif}
\ELSE
\STATE \textbf{if} $1\not\in \mathcal T$ \textbf{then} add $(1, 1)$ to $\mathcal T$ 
\STATE \textbf{else} $\mathcal T(1)\gets \mathcal T(1)+1$ \textbf{endif}
\ENDIF
\ENDFOR
\end{algorithmic}
\end{algorithm}

The algorithm runs for exactly $n$ iterations, 
with a $\mathcal{O}(\log n)$ per-iteration time complexity.
For an \iid sample $X^n\sim p$, the expected space complexity 
is again $\tilde{\Theta}(\C{H(\Phi^n)})$.

\subsection{Attributes of Profile Entropy and Dimension}\label{sec:att_pro}
To further our understanding of profile entropy and dimension, 
we investigate the analytical and statistical attributes of these characteristics
concerning their concentration, computation and approximation, 
monotonicity, and Lipschitzness.

\subsubsection*{Concentration}
Recall that the multiplicity $\mu_y(x^n)$ denotes the 
number of times symbol $y$ appearing in $x^n$. Denote by $\bigvee$ the 
logical OR operator. For any distribution $p$ and $X^n\sim p$, 
we have
\[
\mathcal D_n
=
\sum_{\mu=1}^n\  \bigvee_{x\in \cX} \indic_{\mu_x(X^n) = \mu}. 
\]
The statistical dependency landscape of terms in the summation is rather complex, 
since $\mu_x(X^n)$ and $\mu_y(X^n)$ are dependent  for every $(x,y)$ pair due to the fixed 
sample size; and so are $\indic_{\mu_x(X^n) = \mu_1}$ and $\indic_{\mu_x(X^n) = \mu_2}$ for 
every pair of distinct $\mu_1$ and $\mu_2$. 
To simplify the derivations, we relate this quantity to its variant under the aforementioned Poisson sampling scheme, i.e., 
making the sample size an independent $N\sim \Poi(n)$. Specifically, define
\[
\tilde{\mathcal D}_N
:=
\tilde{\mathcal D} (X^N) 
:=
\sum_{U=1}^n\  \bigvee_{x\in \cX} \indic_{\mu_x(X^N) = U}. 
\]
Note that this is not the same as $\mathcal D_N$ 
since the summation index goes up only to $n$. 
Denote the expected value of $\tilde{\mathcal D}_N$  by $E_n(p)$. 
Our result shows that the original $\mathcal D_n$ satisfies a 
Chernoff-Hoeffding type bound centered at $E_n(p)$. 

\begin{Theorem}\label{thm:exp_conc}
Under the above conditions and for any $n\in \mathbb Z^+$, $p\in \Delta_\cX$,
and $\gamma>0$,
\[
\Pr\Paren{\frac{\mathcal D_n}{1+\gamma} \ge   E_n(p)}
\le 3\sqrt n e^{-\min\{\gamma^2, \gamma\} E_n(p)/3},
\]
and for any $\gamma\in (0,1)$,  
\[
\Pr\Paren{\frac{\mathcal D_n}{1-\gamma} \le E_n(p)}
\le 3 \sqrt n e^{-\gamma^2 E_n(p)/2}. 
\]
\end{Theorem}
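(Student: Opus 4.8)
The plan is to pass to the Poisson-sampled model, where the dependence among multiplicities disappears, prove the two tail bounds there for the truncated count $\tilde{\mathcal D}_N$, and finally transfer them to $\mathcal D_n$ at the price of a $\mathcal O(\sqrt n)$ factor. For the first step, recall that if $N\sim\Poi(n)$ and $X^N\sim p$ is i.i.d., then the multiplicities $(\mu_x(X^N))_{x\in\cX}$ are \emph{independent}, with $\mu_x(X^N)\sim\Poi(np_x)$. Writing $Z_U:=\bigvee_{x\in\cX}\indic_{\mu_x(X^N)=U}$, we have $\tilde{\mathcal D}_N=\sum_{U=1}^{n}Z_U$, a deterministic function of the independent variables $(\mu_x(X^N))_x$. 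It is convenient to picture each symbol $x$ as a ball landing in bin $\mu_x(X^N)\in\{0,1,2,\dots\}$; then $\tilde{\mathcal D}_N$ is exactly the number of occupied bins among $1,\dots,n$.

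Next I would argue that this count is a \emph{self-bounding} function of the (effectively finitely many; see below) independent ball positions. Removing any single ball can only empty a bin, and at most one, so $\tilde{\mathcal D}_N$ changes by $0$ or $1$; moreover the total change summed over all balls equals the number of bins in $[1,n]$ holding exactly one ball, which is at most $\tilde{\mathcal D}_N$ itself. The self-bounding concentration inequalities then yield, with $E_n(p)=\E[\tilde{\mathcal D}_N]$,
\[
\Pr\Paren{\tilde{\mathcal D}_N\ge(1+\gamma)E_n(p)}\le e^{-\min\{\gamma^2,\gamma\}\,E_n(p)/3}
\quad\text{and}\quad
\Pr\Paren{\tilde{\mathcal D}_N\le(1-\gamma)E_n(p)}\le e^{-\gamma^2 E_n(p)/2}
\]
for $\gamma>0$ and $\gamma\in(0,1)$ respectively; the exponents $1/3$ and $1/2$ are precisely what the self-bounding (Bernstein-type) bounds give. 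Equivalently, the bin-occupancy vector is negatively associated, hence so is $(Z_U)_{U=1}^n$, and one may instead invoke Chernoff bounds for sums of negatively associated indicators. For a countably infinite $\cX$, a routine truncation handles the limit: apply the bound with $\cX$ replaced by a finite $S$, note $\tilde{\mathcal D}_N^{(S)}\uparrow\tilde{\mathcal D}_N$ and $\E[\tilde{\mathcal D}_N^{(S)}]\uparrow E_n(p)$, and pass to the monotone limit in both the event and its bound.

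It remains to transfer these bounds from $\tilde{\mathcal D}_N$ to $\mathcal D_n$. Conditioned on $\{N=n\}$, the Poisson sample is an ordinary length-$n$ i.i.d. sample, and since every multiplicity in a length-$n$ sample lies in $[1,n]$, the truncation at $n$ loses nothing: $\tilde{\mathcal D}_N$ conditioned on $\{N=n\}$ has the same distribution as $\mathcal D_n$. A Stirling estimate gives $\Pr(N=n)=e^{-n}n^n/n!\ge 1/(3\sqrt n)$ for every $n\ge1$. Hence for any event $A$, $\Pr(\tilde{\mathcal D}_N\in A)\ge\Pr(N=n)\,\Pr(\mathcal D_n\in A)$, so $\Pr(\mathcal D_n\in A)\le 3\sqrt n\,\Pr(\tilde{\mathcal D}_N\in A)$; applying this to the two tail events above gives exactly the claimed inequalities.

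The step I expect to be the main obstacle is the middle one: recognizing that, once Poissonized, the intricate dependence flagged in the text collapses to a plain balls-in-bins picture, and verifying carefully that $\tilde{\mathcal D}_N$ is genuinely self-bounding — both the unit-sensitivity condition and the self-bounding summation condition — together with pushing the argument through cleanly for an infinite alphabet. The Poissonization and the $\sqrt n$ transfer are standard and should go through without surprises.
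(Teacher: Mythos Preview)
Your proposal is correct and follows essentially the same three-step skeleton as the paper: Poissonize so that the multiplicities $(\mu_x)_x$ become independent, prove Chernoff-type tails for $\tilde{\mathcal D}_N$, and de-Poissonize via $\Pr(N=n)\ge 1/(3\sqrt n)$. The only difference is in the concentration step: the paper argues that the indicators $G_i=\bigvee_x\indic_{\nu_x=i}$ are negatively associated (via Lehmann's discordant-monotone criterion) and then runs the MGF/Chernoff computation by hand, whereas you reach the same exponents through the self-bounding inequality for the occupied-bins count (and correctly note NA as an equivalent alternative). Both routes yield the constants $1/3$ and $1/2$, so this is a cosmetic rather than a structural divergence.
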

As a corollary, the value of $\mathcal D_n$ is often close to $E_n(p)$. 
\begin{Corollary}
Under the same conditions as above and for any $n\in \mathbb Z^+$ and
distribution $p\in \Delta_\cX$,
with probability at least $1-6/\sqrt n$, 
\[
\frac12 E_n(p)-4\log n \le \mathcal D_n \le 2 E_n(p)+3\log n.
\] 
\end{Corollary}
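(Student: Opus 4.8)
\emph{Proof plan.} The statement is a direct consequence of Theorem~\ref{thm:exp_conc}: the plan is to instantiate the two tail bounds there at carefully chosen values of $\gamma$ so that the exponential factor cancels the $3\sqrt n$ prefactor, and then to combine the two failure events by a union bound. Throughout one uses that $E_n(p)=\E[\tilde{\mathcal D}_N]>0$ for every $n\ge 1$ (with $N\sim\Poi(n)$, the event $1\le N\le n$ has positive probability and on it $\tilde{\mathcal D}_N\ge 1$), so that dividing by $E_n(p)$ is legitimate, and that $\mathcal D_n\ge 1$ deterministically.

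For the upper tail, I would apply the first inequality of Theorem~\ref{thm:exp_conc} with $\gamma=1+\tfrac{3\log n}{E_n(p)}$. Since this $\gamma\ge 1$, we have $\min\{\gamma^2,\gamma\}=\gamma$, so the bound reads $\Pr(\mathcal D_n\ge(1+\gamma)E_n(p))\le 3\sqrt n\,e^{-\gamma E_n(p)/3}$. By the choice of $\gamma$, $(1+\gamma)E_n(p)=2E_n(p)+3\log n$ while $\gamma E_n(p)=E_n(p)+3\log n\ge 3\log n$, so the right-hand side is at most $3\sqrt n\,e^{-\log n}=3/\sqrt n$. This gives $\Pr\big(\mathcal D_n\ge 2E_n(p)+3\log n\big)\le 3/\sqrt n$.

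For the lower tail, I would split on the size of $E_n(p)$. If $E_n(p)\le 8\log n$, then $\tfrac12E_n(p)-4\log n\le 0<\mathcal D_n$ holds deterministically, so nothing is needed. If $E_n(p)>8\log n$, apply the second inequality of Theorem~\ref{thm:exp_conc} with $\gamma=1/2$: $\Pr(\mathcal D_n\le\tfrac12 E_n(p))\le 3\sqrt n\,e^{-E_n(p)/8}<3\sqrt n\,e^{-\log n}=3/\sqrt n$, and since $\tfrac12E_n(p)-4\log n\le\tfrac12E_n(p)$ this implies $\Pr\big(\mathcal D_n<\tfrac12E_n(p)-4\log n\big)\le 3/\sqrt n$. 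Either way the lower bound fails with probability at most $3/\sqrt n$, and a union bound with the previous paragraph yields that both inequalities hold simultaneously with probability at least $1-6/\sqrt n$.

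The computation is routine; the only points requiring care are (i) choosing the upper-tail $\gamma$ large enough that $\gamma E_n(p)/3\ge\log n$, so the $3\sqrt n$ prefactor is absorbed, while keeping $(1+\gamma)E_n(p)$ exactly at the target $2E_n(p)+3\log n$, and (ii) recognizing that no concentration is available when $E_n(p)$ is small, where instead the claimed lower bound is non-positive and the trivial estimate $\mathcal D_n\ge 1$ suffices.
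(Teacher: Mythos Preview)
Your proof is correct and follows essentially the same approach as the paper: instantiate Theorem~\ref{thm:exp_conc} at suitable $\gamma$'s and union-bound. Your upper-tail argument is in fact slightly cleaner than the paper's---by taking $\gamma=1+3(\log n)/E_n(p)$ in one shot you avoid the paper's case split on whether $E_n(p)\gtrless 3\log n$---while your lower-tail argument (split on $E_n(p)\gtrless 8\log n$, use $\gamma=1/2$ in the large case and the trivial bound otherwise) matches the paper exactly.
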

In addition, we establish an Efron-Stein type inequality.
\begin{Theorem}\label{thm:var_conc}
For any distribution $p$ and $\mathcal D_n\sim p$, 
\[
\Var(\mathcal D_n) \le \EE[\mathcal D_n]. 
\]
\end{Theorem}

\subsubsection*{Computation and Approximation}

The above results show that $\mathcal D_n\sim p$ is often close to $E_n(p)$, 
with an exponentially small deviation probability. 
Hence, to approximate $\mathcal D_n$, it suffices to accurately compute 
$E_n(p)$, the expectation of its Poissonized version $\tilde{\mathcal D}_N$. 
By independence and the linearity of expectations,
\[
E_n(p) 
= \sum_{i=1}^n \Paren{1-\prod_{x\in \cX} \Paren{1- e^{-np_x} \frac{(np_x)^i}{i!}}}.
\]
The expression is exact but does not relate to $p$ in a simple manner. 
For an intuitive approximation, we 
partition the unit interval into a sequence of ranges,
\[
I_j:=\left((j-1)^2\frac{\log n}{n},\ j^2 \frac{\log n}{n}\right], 1\le j\le \sqrt{\frac{n}{\log n}}.
\]
denote by $p_{I_j}$ the number of probabilities in $I_j$, 
and relate $E_n(p)$ to a shape-reflecting quantity
\[
H^\mathcal{S}_n(p):= \sum_{j\ge 1} \min\Brace{p_{I_j}, j\cdot \log n},
\]
the sum of the effective number of probabilities lying within each range~\cite{hao2019doubly}. 
To compute ${H^\mathcal{S}_n(p)}$, we simply count the number of probabilities in each $I_j$. 
Our main result shows that $H^\mathcal{S}_n(p)$ 
well approximates $E_n(p)$ over the entire $\Delta_\cX$, 
up to logarithmic factors of $n$. 
\begin{Theorem}\label{thm:hs_app_en}
For any $n\in \mathbb Z^+$ and $p\in \Delta_\cX$, 
\[
\frac{1}{\sqrt{\log n}} \cdot \Omega(H^\mathcal{S}_n(p))
\le 
E_n(p)
\le
\mathcal{O}(H^\mathcal{S}_n(p)).
\]
\end{Theorem}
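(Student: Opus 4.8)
The plan is to prove the two inequalities separately, working directly from the exact expression
\[
E_n(p) = \sum_{i=1}^n \Paren{1-\prod_{x\in \cX} \Paren{1- e^{-np_x} \frac{(np_x)^i}{i!}}},
\]
and comparing, range by range, the contribution of the probabilities in $I_j$ to $E_n(p)$ with the term $\min\{p_{I_j}, j\log n\}$ that they contribute to $H^\mathcal{S}_n(p)$. The key elementary fact I would use throughout is that for a symbol $x$ with $np_x = \lambda$, the Poissonized multiplicity $\mu_x(X^N)$ is $\Poi(\lambda)$, so $e^{-\lambda}\lambda^i/i!$ is exactly $\Pr(\Poi(\lambda)=i)$; hence $\bigvee_{x}\indic_{\mu_x=i}$ has expectation $1-\prod_x(1-\Pr(\Poi(np_x)=i))$, and the whole sum $E_n(p)$ is the expected number of integers $i\in[1,n]$ that are "hit" by at least one symbol's Poisson multiplicity. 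For $\lambda = np_x$ in a range $I_j$, i.e. $\lambda \asymp j^2\log n$, the $\Poi(\lambda)$ distribution concentrates on an interval of width $\Theta(\sqrt{\lambda}) = \Theta(j\sqrt{\log n})$ around $\lambda$, and this interval sits inside roughly the integer band $[(j-1)^2\log n, (j+1)^2\log n]$ — which is where the $j\cdot\log n$ cap comes from: a single range of the $I_j$ partition spans about $2j\log n$ integers.

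For the upper bound $E_n(p)\le \mathcal{O}(H^\mathcal S_n(p))$, I would split the integers $i\in[1,n]$ according to which range $I_j$ contains $i/n$. Each integer $i$ contributes at most $1$ to $E_n(p)$, and it can only be hit by a symbol $x$ whose $\Poi(np_x)$ mass at $i$ is nonnegligible, which by Poisson tail bounds (Chernoff) forces $np_x$ to lie within $O(\sqrt{i\log n})$ of $i$ — so such $x$ lies in $I_j$, $I_{j-1}$, or $I_{j+1}$ for the appropriate $j$. Thus the number of hit integers attributable to band $j$ is at most the number of integers in that band, which is $O(j\log n)$, and is also at most (number of probabilities that can hit them) $\le p_{I_{j-1}}+p_{I_j}+p_{I_{j+1}}$ up to the observation that each hit integer needs at least one such probability; taking the min and summing over $j$ gives $O(H^\mathcal S_n(p))$ after absorbing the $O(1)$ overlap of adjacent bands into the constant. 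Care is needed for the small-$i$ regime (symbols with $np_x = O(\log n)$, i.e. $j=1$), where the Poisson is not concentrated but the total number of distinct multiplicities is trivially $O(\log n)$, matching the $j=1$ cap; and for probabilities with $np_x > n$, which cannot occur, and $np_x$ near $n$, handled by the truncation of the sum at $n$.

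For the lower bound $E_n(p)\ge \Omega(H^\mathcal S_n(p))/\sqrt{\log n}$, the idea is to show each band $j$ contributes $\Omega(\min\{p_{I_j}, j\log n\}/\sqrt{\log n})$ to $E_n(p)$. Fix $j$ and let $m = p_{I_j}$ be the number of probabilities in $I_j$. If $m \le j\log n$ is the active case: the $m$ corresponding Poisson variables have means spread over an interval of length $j^2\log n - (j-1)^2\log n \asymp j\log n$ and each has standard deviation $\asymp j\sqrt{\log n}$, so by an anticoncentration / bucketing argument (group the $m$ means into buckets of width $\asymp j\sqrt{\log n}$; within a bucket the $\Poi$'s overlap, but we can lower-bound the number of distinct hit integers by $\Omega(\sqrt{\log n})$ per well-separated mean, or by $\Omega(m/\sqrt{\log n}\,\cdot\,\text{something})$) one shows the expected number of distinct integers hit is $\Omega(m/\sqrt{\log n})$. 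In the saturated case $m > j\log n$: the means fill the band densely, so essentially every integer in a $\Theta(j\log n)$-length sub-band is hit with constant probability, giving $\Omega(j\log n)$; dividing by $\sqrt{\log n}$ is a harmless slack. Summing over $j$ yields the claim. I expect the \textbf{main obstacle} to be this lower bound, specifically making the anticoncentration argument quantitatively clean: when many Poisson variables with nearby means overlap heavily, lower-bounding the expected size of the union of their supports (equivalently, $\sum_i \Pr(\text{some }\mu_x = i)$ from below) requires either a second-moment / inclusion–exclusion estimate controlling how much the "hit" events for a fixed $i$ pile up across symbols, or a cleverer charging scheme that assigns to each symbol its own "private" integer. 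The $1/\sqrt{\log n}$ factor in the statement is exactly the price paid for not resolving this tightly, and the argument should be designed so that this slack absorbs the overlap losses. The remaining steps — Poisson tail bounds, the $O(1)$-neighbor overlap of the $I_j$'s, and the small-$i$ and near-$n$ edge cases — are routine.
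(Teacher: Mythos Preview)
Your high-level plan matches the paper's: decompose by $I_j$, use Poisson concentration for the upper bound, and split the per-band lower bound into the unsaturated ($p_{I_j}\le j\log n$) and saturated cases. For the step you flag as the main obstacle, the paper does not use a second-moment estimate or a private-integer charging; it runs a pigeonhole over the windows $I_j^i:=[i-j\sqrt{\log n},\,i+j\sqrt{\log n}]$ (in the $\lambda=np_x$ scale), using that any $\lambda$ with $|\lambda-i|\le j\sqrt{\log n}$ gives $\Pr(\Poi(\lambda)=i)\ge 1/(9j\sqrt{\log n})$. Each of the $p_{I_j}$ means lies in at least $j\sqrt{\log n}$ such windows and there are fewer than $2j\log n$ windows in the band, so one window---and hence $\Omega(j\sqrt{\log n})$ of its neighbors---contains $\Omega(p_{I_j}/\sqrt{\log n})$ means; for each corresponding integer $i$ the hit probability is then at least $1-(1-1/(9j\sqrt{\log n}))^{\Omega(p_{I_j}/\sqrt{\log n})}=\Omega(p_{I_j}/(j\log n))$, and multiplying by $\Omega(j\sqrt{\log n})$ integers gives the $\Omega(p_{I_j}/\sqrt{\log n})$ you want.

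One correction to your sketch: in the saturated case the achievable lower bound is only $\Omega(j\sqrt{\log n})$, not the $\Omega(j\log n)$ you state. Your ``means fill the band densely'' is unjustified---if all $p_{I_j}$ means happen to coincide at a single point, only a width-$\Theta(j\sqrt{\log n})$ window of integers can be hit, no matter how many symbols there are. The final bound is unaffected because $j\sqrt{\log n}=(j\log n)/\sqrt{\log n}$, but the $1/\sqrt{\log n}$ loss is genuine in that case rather than ``harmless slack.''
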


\paragraph{Summary} The simple expression shows that 
$H^\mathcal{S}_n(p)$ characterizes the 
variability of ranges the actual probabilities spread over.
As Theorem~\ref{thm:hs_app_en} shows, 
$H^\mathcal{S}_n(p)$ closely approximates $E_n(p)$, 
the value around which $\mathcal D_n\sim p$ concentrates 
(Theorem~\ref{thm:exp_conc}). 
Henceforth, we use $H^\mathcal{S}_n(p)$ as a proxy for both 
$H(\Phi^n)$ and $\mathcal D_n$,  
and study its attributes and values. 

\subsubsection*{Monotonicity}
Among the many attributes that ${H^\mathcal{S}_n(p)}$ possesses,
monotonicity is perhaps most intuitive. 
One may expect a larger value of ${H^\mathcal{S}_n(p)}$ as the sample size $n$ increases, 
since additional observations reveal more information on the variability of probabilities. 
Below we confirm this intuition. 
\begin{Theorem}\label{thm:pro_monot}
For any $n\ge m\gg1$ and $p\in \Delta_\cX$, 
\[
{H^\mathcal{S}_n(p)}\ge H^\mathcal{S}_m(p). 
\]
\end{Theorem}
Besides the above result that lowerly bounds ${H^\mathcal{S}_n(p)}$ with 
$H^\mathcal{S}_m(p)$ for $m\le n$,
a more desirable result is to upperly bound ${H^\mathcal{S}_n(p)}$ 
with a function of ${H^\mathcal{S}_m(p)}$.
 
Such a result will enable us to draw a sample of size $m\le n$, 
obtain an estimate of ${H^\mathcal{S}_m(p)}$ from $\mathcal D_{m}$, 
and use it to bound the value of ${H^\mathcal{S}_n(p)}$ and thus of $\mathcal D_{n}$ 
for a much larger sample size $n$. 
With such an estimate, 
we can perform numerous tasks such as \emph{predicting}
the performance of algorithms in Section~\ref{sec:ada_prop_est} 
and~\ref{sec:comp_est} when more observations are available,
and the space needed for storing a longer sample profile. 
The next theorem provides a simple and tight bound 
on ${H^\mathcal{S}_n(p)}$ in terms of ${H^\mathcal{S}_m(p)}$. 
\begin{Theorem}
For any $n\ge m\gg1$ and $p\in \Delta_\cX$, 
\[
{H^\mathcal{S}_n(p)}\le
\sqrt{\frac{n\log n}{m\log m}}\cdot {H^\mathcal{S}_m(p)}. 
\]
\end{Theorem}
The aforementioned application of this result is closely related to the 
recent works on \emph{learnability estimation}~\cite{kong2018estimating, kong2019sublinear}.

\subsubsection*{Lipschitzness}
Viewing ${H^\mathcal{S}_n(p)}$ as a distribution property, we establish its Lipschitzness 
with respect to a weighted Hamming distance and the $\ell_1$ distance. 
Given two distributions $p, q\in \Delta_\cX$, 
the vanilla \emph{Hamming distance} is denoted by 
\[
h(p,q) := \sum_{x\in \cX} \indic_{p_x\not= q_x}. 
\]
The distance is suitable for being a statistical distance since there may 
be many symbols at which the two distributions differ, yet those symbols
account for only a negligible total probability and has little effects on 
many induced statistics.
To address this, we propose a \emph{weighted Hamming distance} 
\[
h_{_{\mathcal W}}\!(p,q) 
:= 
\sum_{x\in \cX} \max\{p_x, q_x\}\boldsymbol\cdot\indic_{p_x\not= q_x}. 
\]
The next result measures the Lipschitzness of 
$H^\mathcal{S}_n$ under~$h_{_{\mathcal W}}$.
\begin{Theorem}
For any integer $n$, and distributions $p$ and $q$,
if $h_{_{\mathcal W}}\!(p,q)\le \ve$ for some $\ve\ge 1/n$,
\[
\Abs{{H^\mathcal{S}_n(p)}-{H^\mathcal{S}_n(q)}}\le\mathcal{O}(\sqrt{\ve\! n}). 
\]
\end{Theorem}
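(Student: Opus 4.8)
The plan is to compare $H^{\mathcal S}_n(p)$ and $H^{\mathcal S}_n(q)$ range-by-range, exploiting the fact that changing a distribution on a set of total weight at most $\ve$ can only move a limited amount of probability mass between the intervals $I_j$. Recall $H^{\mathcal S}_n(p)=\sum_{j\ge 1}\min\{p_{I_j},\,j\log n\}$, where $p_{I_j}$ counts probabilities of $p$ falling in $I_j=((j-1)^2\tfrac{\log n}{n},\,j^2\tfrac{\log n}{n}]$. First I would classify the symbols: let $S=\{x:p_x\neq q_x\}$ be the symbols on which the two distributions disagree, so $\sum_{x\in S}\max\{p_x,q_x\}\le\ve$. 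Symbols outside $S$ contribute identically to both sums, so the entire difference $|H^{\mathcal S}_n(p)-H^{\mathcal S}_n(q)|$ is controlled by how the symbols in $S$ are distributed across the ranges under $p$ versus under $q$; using $\min\{a,b\}$'s $1$-Lipschitzness in $a$, the difference is at most $\sum_j |p_{I_j}^S-q_{I_j}^S|$ where $p_{I_j}^S$ counts the $S$-symbols landing in $I_j$ under $p$ — and this is bounded by $2|S\cap(\text{symbols with }p_x\text{ or }q_x>0)|$, i.e. roughly twice the number of symbols in $S$ with nonzero mass.

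So the crux reduces to bounding the number of symbols in $S$. A symbol $x\in S$ has $\max\{p_x,q_x\}>0$; the $j$-th range $I_j$ has left endpoint $\sim (j-1)^2\tfrac{\log n}{n}$, so a symbol landing in $I_j$ (under whichever of $p,q$ gives it positive mass) carries weight at least $\gtrsim (j-1)^2\tfrac{\log n}{n}$. Since the total $S$-weight is at most $\ve$, the number of $S$-symbols with weight in $[(j-1)^2\tfrac{\log n}{n},\,j^2\tfrac{\log n}{n}]$ is at most $\min\{\,2j\log n,\ \ve n/((j-1)^2\log n)\,\}$, the first bound coming from the cap in the definition of $H^{\mathcal S}$ and the second from the weight budget. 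Summing $\min\{2j\log n,\ \ve n/((j-1)^2\log n)\}$ over $j\ge 1$: the two terms cross near $j^*$ where $j^*\log n\approx \ve n/((j^*)^2\log n)$, i.e. $j^*\approx (\ve n)^{1/3}(\log n)^{-2/3}$. For $j\lesssim j^*$ the sum of $2j\log n$ is $\lesssim (j^*)^2\log n\approx (\ve n)^{2/3}(\log n)^{-1/3}$, and for $j\gtrsim j^*$ the tail $\sum \ve n/((j-1)^2\log n)\lesssim \ve n/(j^*\log n)\approx (\ve n)^{2/3}(\log n)^{-1/3}$ as well.

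This already yields a bound of order $(\ve n)^{2/3}(\log n)^{-1/3}$, which is weaker than the claimed $\mathcal O(\sqrt{\ve n})$ for large $\ve$ but stronger for small $\ve$. To recover the stated $\sqrt{\ve n}$ I would instead bound the per-range difference more carefully: in range $I_j$ the total weight carried by $S$-symbols is some $w_j$ with $\sum_j w_j\le\ve$, and the number of $S$-symbols there is at most $\min\{w_j n/((j-1)^2\log n),\,2j\log n\}$ but also — and this is the key extra observation — the contribution to $|H^{\mathcal S}_n(p)-H^{\mathcal S}_n(q)|$ from range $I_j$ is at most $\min\{2j\log n, \text{(number of }S\text{-symbols in }I_j\text{ under }p\text{ or }q)\}$, so the mass-budget and the cap interact multiplicatively. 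Bounding $\sum_j \sqrt{(2j\log n)\cdot w_j n/((j-1)^2\log n)}$ by Cauchy–Schwarz against $\sum_j w_j\le\ve$ gives $\sqrt{\ve n}\cdot\sqrt{\sum_j (2j)/((j-1)^2)\cdot\text{(const)}}$ — but $\sum j/(j-1)^2$ diverges, so one must truncate at $j\lesssim\sqrt n$ (the maximum number of ranges), contributing only a $\log$ factor, which is why the honest statement hides logarithmic factors inside $\mathcal O(\cdot)$.

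\textbf{Main obstacle.} The delicate point is the interplay between the two competing bounds on the number of $S$-symbols per range (weight budget $\ve$ versus the structural cap $j\log n$), and checking that their balanced combination genuinely sums to $\mathcal O(\sqrt{\ve n})$ rather than the cruder $(\ve n)^{2/3}$; getting the cap to enter in the right place, and handling the boundary ranges $j=1$ and $j\approx\sqrt n$ (where $(j-1)^2$ vanishes or the range count is exhausted) without losing more than a logarithmic factor, is where the real work lies. The hypothesis $\ve\ge 1/n$ enters precisely to ensure $j^*\ge 1$ so that the crossover analysis is nonvacuous and the $\sqrt{\ve n}$ term dominates the additive $\mathcal O(1)$ slack from the lowest range.
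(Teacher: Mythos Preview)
Your setup matches the paper's exactly: reduce to the set $S=\{x:p_x\neq q_x\}$, use $\sum_{x\in S}p_x\le\ve$ and $\sum_{x\in S}q_x\le\ve$, and bound the change in $H^{\mathcal S}_n$ caused by the $S$-symbols range by range. Where you and the paper diverge is in how the sum is controlled.

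Your first pass allots the full budget $\ve$ to \emph{every} range separately, which is why you land at $(\ve n)^{2/3}$ instead of $\sqrt{\ve n}$; that bound is what the paper proves for the $\ell_1$ Lipschitz theorem, not this one. Your Cauchy--Schwarz refinement does repair this: with $m_j$ the $S$-mass in $I_j$ one has $p_{I_j}^S\le m_j n/((j-1)^2\log n)$, hence $\sum_{j\ge2}\min\{p_{I_j}^S,j\log n\}\le\sum_{j\ge2}\sqrt{m_j n\cdot j/(j-1)^2}\le\sqrt{\ve}\cdot\sqrt{n\sum_{j\ge2}^{\sqrt{n/\log n}}j/(j-1)^2}\lesssim\sqrt{\ve n\log n}$. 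So your route is valid and yields $\tilde{\mathcal O}(\sqrt{\ve n})$.

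The paper's argument is shorter and sharper. It solves the extremal problem directly: to maximize $\sum_{j\ge2}\min\{p_{I_j}^S,\,j\log n\}$ under the mass constraint $\sum_{x\in S}p_x\le\ve$, the worst case is to place exactly $j\log n$ symbols at the left endpoint $(j-1)^2\tfrac{\log n}{n}$ of each $I_j$ for $j=2,\dots,J$, saturating the caps. The mass this consumes is $\sum_{j=2}^{J}(j\log n)\cdot(j-1)^2\tfrac{\log n}{n}\asymp J^4(\log n)^2/n$, so the budget forces $J\lesssim(\ve n)^{1/4}/\sqrt{\log n}$, and the contribution is $\sum_{j=2}^{J}j\log n\asymp J^2\log n\lesssim\sqrt{\ve n}$ with no log loss. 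The ``delicate interplay'' you flag as the main obstacle simply does not arise, because the cap and the budget are handled in one step by this greedy packing. Your Cauchy--Schwarz step is a legitimate alternative, but it trades away the clean $\sqrt{\ve n}$ for an extra $\sqrt{\log n}$ and more bookkeeping.
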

Replacing $\max\{p_x, q_x\}$ with $|p_x-q_x|$ results in a 
common similarity measure -- the $\ell_1$ distance. The next theorem is an analog to the above under this classical distance. 
\begin{Theorem}
For any integer $n$, and distributions $p$ and $q$,
if $\ell_1(p,q)\le \ve$ for some $\ve\ge 0$, 
\[
\Abs{{H^\mathcal{S}_n(p)}-c{H^\mathcal{S}_n(q)}}\le\mathcal{O}((\ve\! n)^{2/3}),
\]
where $c$ is a constant in $[1/3, 3]$. 
Note that the inequality is significant iff $\varepsilon\le \tilde\Theta(1/n^{1/4})$, 
since the value of ${H^\mathcal{S}_n(p)}$ is at most $\mathcal{O}(\sqrt{n\log n})$ for all $p$.
\end{Theorem}

\subsection{Profile Entropy for Structured Families}\label{sec:pro_struct}
Following the study of attributes of profile entropy, 
we derive nearly tight bounds for the ${H^\mathcal{S}_n(p)}$ 
values of three important structured families, 
 log-concave, power-law, and histogram.  
These bounds tighten up and significantly improve those
 in~\cite{hao2019doubly}, and show the ability of profile 
entropy in charactering natural shape constraints.

Below, we follow the convention of specifying 
the structured distributions over $\cX=\mathbb Z$. 
\subsubsection*{Log-Concave Distributions}
We say that $p\in\Delta_{\mathbb{Z}}$ 
is \emph{log-concave} if $p$ has a contiguous support 
and $p_x^2\ge p_{x-1} p_{x+1}$ for all $x\in \mathbb Z$.

The log-concave family encompasses a broad range of discrete distributions,
such as Poisson, hyper-Poisson, Poisson binomial, 
binomial, negative binomial, geometric, and hyper-geometric,
with wide applications 
to numerous research areas, including 
statistics~\cite{saumard2014log}, computer science~\cite{lovasz2007geometry}, 
economics~\cite{an1997log}, algebra, and geometry~\cite{stanley1989log}.

The next result upperly bounds the profile entropy of log-concave families, 
and is \emph{tight} up to logarithmic factors of~$n$.

\begin{Theorem}\label{thm:logconc_bound}
For any $n\in\mathbb Z^+$ and distribution $p\in \Delta_{\mathbb Z}$, 
if $p$ is log-concave and has a variance of~$\sigma^2$, 
\[
{H^\mathcal{S}_n(p)}\le \mathcal{O}(\log n)\Paren{1+\min\Brace{\sigma, \frac{n}{\sigma}}}. 
\]
\end{Theorem}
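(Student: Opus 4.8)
The plan is to bound ${H^\mathcal{S}_n(p)}=\sum_{j\ge1}\min\{p_{I_j},\,j\log n\}$ directly from the definition of $H^\mathcal{S}_n$, feeding in two structural features of a log-concave $p\in\Delta_{\mathbb Z}$ with variance $\sigma^2$: an upper bound on its peak probability $p_{\max}:=\max_x p_x$, and a sub-exponential bound on the sizes of its level sets. First I would peel off the $j=1$ term, which is $\min\{p_{I_1},\log n\}\le\log n$ for every $p$. For the remaining sum $\Sigma:=\sum_{j\ge2}\min\{p_{I_j},j\log n\}$ I would apply the elementary inequality $\sum_j\min\{a_j,b_j\}\le\min\{\sum_j a_j,\sum_j b_j\}$ and keep only the indices with $p_{I_j}>0$, giving
\[
\Sigma\le\min\Brace{\ \sum_{j\ge2}p_{I_j}\ ,\ \ \sum_{j\ge2:\,p_{I_j}>0}\!j\log n\ }.
\]
The first sum telescopes to the size of a single level set, $\sum_{j\ge2}p_{I_j}=\Abs{\{x:p_x>\tfrac{\log n}{n}\}}$; in the second, $p_{I_j}>0$ forces $(j-1)^2\tfrac{\log n}{n}<p_{\max}$, hence $j\le 1+\sqrt{p_{\max} n/\log n}$, and so $\sum_{j\ge2:\,p_{I_j}>0}j\log n\le\mathcal{O}(\log n+p_{\max} n)$.

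The two structural inputs I would invoke for log-concave $p$ — standard facts, see, e.g., \cite{saumard2014log} — are: (K1) $p_{\max}\le\mathcal{O}(\min\{1,1/\sigma\})$, i.e., $p_{\max}\sigma=\mathcal{O}(1)$ once $\sigma\ge1$, reflecting that a log-concave law cannot have both a tall peak and a large spread; and (K2) the level-set bound $\Abs{\{x:p_x\ge t\}}\le\mathcal{O}\Paren{1+\sigma\log^{+}(p_{\max}/t)}$, which encodes that the probabilities decay at least geometrically outside an $\mathcal{O}(\sigma)$-wide window around the mode. Applying (K2) at $t=\log n/n$ with $p_{\max}\le1$ gives $\sum_{j\ge2}p_{I_j}\le\mathcal{O}(1+\sigma\log n)$, and applying (K1) gives $p_{\max} n\le\mathcal{O}(n/\sigma)$, where $p_{\max}\le1$ also covers $\sigma<1$. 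Substituting both into the display and using $\min\{X+a,Y+b\}\le\min\{X,Y\}+\max\{a,b\}$ yields $\Sigma\le\mathcal{O}(\log n)+\mathcal{O}(\min\{\sigma\log n,\,n/\sigma\})$; since $\min\{\sigma\log n,n/\sigma\}\le(\log n)\min\{\sigma,n/\sigma\}$ (verify the cases $\sigma\le\sqrt n$ and $\sigma>\sqrt n$), adding back the $j=1$ term gives ${H^\mathcal{S}_n(p)}\le\mathcal{O}(\log n)\Paren{1+\min\{\sigma,n/\sigma\}}$, and the geometric distribution shows this is tight up to the $\log n$ factor.

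The step I expect to be the main obstacle is establishing (K2) with a \emph{logarithmic} — not polynomial — dependence on $p_{\max}/t$: a pure second-moment argument only gives $\Abs{\{x:p_x\ge t\}}=\mathcal{O}\Paren{\sigma(p_{\max}/t)^{1/3}}$, which at $t=\log n/n$ degrades to $\mathcal{O}\Paren{\sigma(n/\log n)^{1/3}}$, far weaker than the needed $\mathcal{O}(\sigma\log n)$ precisely when $\sigma$ is near $\sqrt n$ — exactly where $\min\{\sigma,n/\sigma\}$ switches branch. To recover the logarithm one splits the support of $p$ into a \emph{bulk} interval, whose length is shown to be $\mathcal{O}(\sigma)$ by combining the variance with (K1), flanked by two tails on which the ratio $p_{x+1}/p_x$ (non-increasing by log-concavity) has already dropped below a fixed constant, so that $p_x$ loses at least a constant factor per step and falls below $\log n/n$ within $\mathcal{O}(\log n)$ steps of the bulk. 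Granting (K1) and (K2), the remaining work is the bookkeeping above; I would also double-check uniformity of the constants in the degenerate regimes $\sigma\lesssim1$ (absorbed by $p_{\max}\le1$ and the additive constants) and $\sigma\gtrsim n/\log n$, where $p_{I_j}=0$ for all $j\ge2$ and ${H^\mathcal{S}_n(p)}\le\log n$ outright.
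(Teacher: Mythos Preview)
Your proposal is correct and follows essentially the same approach as the paper: the $n/\sigma$ branch is obtained identically, by using $p_{\max}\le\mathcal{O}(1/\sigma)$ to truncate the range of $j$ with $p_{I_j}>0$ and then summing $j\log n$; the $\sigma$ branch in the paper is simply cited from~\cite{hao2019doubly} as following from ``concentration attributes'' of log-concave distributions, which is exactly what your level-set bound (K2) makes explicit. The only cosmetic difference is that the paper splits into the regimes $\sigma\lessgtr\sqrt n$ and handles each branch separately, whereas you run both simultaneously via $\sum_j\min\{a_j,b_j\}\le\min\{\sum_j a_j,\sum_j b_j\}$; also note that the paper certifies tightness with the discretized Gaussian rather than the geometric distribution.
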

This upper bound is uniformly better than the 
$\min\{\sigma, (n^2/\sigma)^{1/3}\}$ bound in~\cite{hao2019doubly}. 

A similar bound holds for $t$-mixtures of log-concave distributions. 
More concretely, 
\begin{Theorem}\label{thm:mix_logcon}
For any integer $n$ and distribution $p\in \Delta_{\mathbb Z}$, 
if $p$ is a $t$-mixture of log-concave distributions each has a 
variance of $\sigma^2_i$, where $i=1,\ldots, t$, 
\[
{H^\mathcal{S}_n(p)}
\le \mathcal{O}(\log n)
\Paren{1+\min\Brace{\sum_i\sigma_i, \max_i\Brace{\frac{n}{\sigma_i}}}}. 
\]
\end{Theorem}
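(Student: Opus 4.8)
The plan is to prove the two upper bounds $H^\mathcal{S}_n(p)\le\mathcal{O}(\log n)(1+\sum_i\sigma_i)$ and $H^\mathcal{S}_n(p)\le\mathcal{O}(\log n)(1+\max_i\{n/\sigma_i\})$ separately and then take the smaller, mirroring the $\min\{\sigma,n/\sigma\}$ dichotomy of Theorem~\ref{thm:logconc_bound}. Write $p=\sum_{i=1}^t w_iq_i$ with each $q_i$ log-concave of variance $\sigma_i^2$ and $\sum_i w_i=1$, and work throughout with $H^\mathcal{S}_n(p)=\sum_{j\ge1}\min\{p_{I_j},j\log n\}$. Two quantitative facts about a single log-concave distribution $q$ of variance $\sigma^2$ carry the argument, and both are already implicit in the proof of Theorem~\ref{thm:logconc_bound}: its mode height obeys $q^{\max}=\mathcal{O}(1/(\sigma+1))$, and its sub-exponential decay gives, for every $\tau>0$, a level-set estimate $\#\{x:q(x)>\tau\}=\mathcal{O}\!\big(1+\sigma\log\tfrac{1}{(\sigma+1)\tau}\big)$.

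For the $\sum_i\sigma_i$ bound, I would first note that every symbol $x$ with $p_x\le\log n/n$ lies in the single range $I_1$, so these contribute at most $\min\{p_{I_1},\log n\}\le\log n$ to $H^\mathcal{S}_n(p)$, hence $H^\mathcal{S}_n(p)\le\log n+\#\{x:p_x>\log n/n\}$. Next, $p_x>\log n/n$ forces $\max_i w_iq_i(x)>\log n/(nt)$, so (since $w_i\le1$) some component has $q_i(x)>\log n/(nt)$; a union bound over $i$ together with the level-set estimate at $\tau=\log n/(nt)$ gives $\#\{x:p_x>\log n/n\}\le\sum_i\mathcal{O}\!\big(1+\sigma_i\log(nt)\big)$. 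Finally one may assume each $\sigma_i\ge1$ and $\sum_i\sigma_i\le\sqrt{n/\log n}$ — otherwise the claim is immediate from the universal bound $H^\mathcal{S}_n(p)=\mathcal{O}(\sqrt{n\log n})$ — so that $t\le\sum_i\sigma_i<n$, whence $\log(nt)=\mathcal{O}(\log n)$ and the sum collapses to $\mathcal{O}(\log n)(1+\sum_i\sigma_i)$.

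For the $\max_i\{n/\sigma_i\}$ bound the decisive observation is that mixing cannot inflate probabilities beyond what the worst component allows: $p_{\max}=\max_x p_x\le\sum_i w_iq_i^{\max}\le\sum_i w_i\cdot\mathcal{O}(1/(\sigma_i+1))\le\mathcal{O}(1/(1+\min_i\sigma_i))$. Consequently every symbol of $p$ sits in some range $I_j$ with $j\le j^\star:=\mathcal{O}(\sqrt{np_{\max}/\log n})$, and bounding each surviving term by its cap yields $H^\mathcal{S}_n(p)\le\sum_{j\le j^\star}j\log n=\mathcal{O}\!\big(\log n+np_{\max}\big)=\mathcal{O}\!\big(\log n+n/(1+\min_i\sigma_i)\big)\le\mathcal{O}(\log n)\big(1+\max_i\{n/\sigma_i\}\big)$. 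Taking the minimum of the two displays completes the proof.

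The main obstacle is pinning the two log-concave estimates down in the clean forms above and checking that they hold uniformly over all log-concave distributions, in particular near-degenerate ones. For $t=\mathcal{O}(1)$, the typical regime for mixture models, nothing further is required. For $t$ growing with $n$ one must honestly carry the $t$ and $\log t$ factors, and components with $\sigma_i\ll1$ — which behave like atoms yet barely register in $\sum_i\sigma_i$ — are where the stated $\mathcal{O}(\log n)(1+\cdot)$ form is most delicate; such components should be counted crudely and folded into the additive term, or absorbed through the $\mathcal{O}(\sqrt{n\log n})$ cap when they dominate. Everything past these points is the routine arithmetic of propagating the logarithmic and mixture-size factors into the final bound.
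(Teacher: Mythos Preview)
The paper does not actually supply a proof of this theorem; it merely states the result immediately after Theorem~\ref{thm:logconc_bound} with the remark ``a similar bound holds for $t$-mixtures,'' and moves on. Your proposal fills in precisely what the paper leaves implicit, and along the same lines. Your $\max_i\{n/\sigma_i\}$ argument is verbatim the paper's $n/\sigma$ argument for Theorem~\ref{thm:logconc_bound} --- bound $p_{\max}$ via the mode height of each component, then cap the highest nonempty index $J$ --- lifted to mixtures by the convexity inequality $p_{\max}\le\sum_i w_i q_i^{\max}$. Your $\sum_i\sigma_i$ argument is the natural union-bound extension of the single-component $\sigma$ bound that the paper attributes to~\cite{hao2019doubly}: count symbols above threshold per component via the log-concave level-set estimate, then sum.

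The one substantive caveat you flag --- components with $\sigma_i\ll 1$ --- is a genuine issue with the theorem \emph{as stated}, not with your proof. Taking $t$ point masses at distinct locations gives $\sum_i\sigma_i=0$ while $H^\mathcal{S}_n(p)$ can be as large as $\tilde\Theta((nt^2)^{1/3})$ by Theorem~\ref{thm:histgram}, so the bound is simply false without an implicit non-degeneracy assumption such as $\sigma_i\ge 1$ (which the paper's single-component argument also tacitly uses when invoking $p_{\max}\in[1/(8\sigma),1/\sigma]$). You are right to isolate this as the delicate point; it is a gap in the paper's statement rather than in your argument.
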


The introduction about log-concave families 
covers numerous classical discrete distributions, 
yet leaves many more continuous ones untouched~\cite{bagnoli2005log}. 
Below, we present a discretization procedure that 
preserves distribution shapes such as monotonicity, 
modality, and log-concavity. 
Applying this procedure to the Gaussian distribution $\mathcal{N}(\mu,\sigma^2)$ 
further shows the optimality of Theorem~\ref{thm:logconc_bound}. 
 
\subsubsection*{Discretization}
Let $X$ be a continuous random variable over $\mathbb R$ with density function $f(x)$.
For any $x\in \mathbb R$, denote by $\D{x}$ the closest integer $z$ such that $x\in(z-1/2, z+1/2]$.
The $\D{X}$ has a distribution over $\mathbb Z$:
\[
p(z):=\int_{z-\frac12}^{z+\frac12} f(x) dx,\ \forall z\in\mathbb Z.
\]
We refer to $\D{X}$ as the \emph{discretized version} of $X$. 

\paragraph{Shape  preservation}
By definition, one can readily verify that the above transformation preserves 
several important shape characteristics of distributions,
such as monotonicity, modality, and $k$-modality 
(possibly yields a smaller $k$). 
The following theorem covers log-concavity. 
\begin{Theorem}\label{thm:logconc_pre}
For any continuous random variable $X$ over $\mathbb R$ with
a log-concave density $f$, the distribution $p\in \Delta_{\mathbb Z}$ 
associated with $\D{X}$ is also log-concave. 
\end{Theorem}

\paragraph{Moment  preservation} 
Denote by $p$ the distribution of $\D{X}$ for $X\sim f$. 
Let $\mu$ and $\sigma^2$ be the mean and variance of 
density $f$, given that they exist.  
The theorem below shows that the discrete distribution $p$ has,
within small additive absolute constants, a mean of $\mu$ 
and variance of $\Theta(\sigma^2)$. 

\begin{Theorem}\label{thm:moment_pre}
Under the aforementioned conditions, 
the mean of $\D{X}$ satisfies
\[
\EE\D{X} = \mu\pm \frac12, 
\]
and the variance of $\D{X}$ satisfies 
\[
\sigma^2/2-1\le \EE(\D{X}-\EE \D{X})^2 \le 2\sigma^2+1. 
\]
\end{Theorem}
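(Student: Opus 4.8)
The plan is to write the rounding as an additive, uniformly bounded perturbation of $X$ and then reduce everything to two elementary second-moment inequalities. Set $Y:=\D{X}$ and $R:=Y-X$. By the definition of $\D{\cdot}$, whenever $X\in(z-\tfrac12,z+\tfrac12]$ we have $Y=z$, so $R=z-X\in[-\tfrac12,\tfrac12)$ \emph{deterministically}; in particular $|R|\le\tfrac12$ pointwise, hence $\EE[R^2]\le\tfrac14$ and $\Var(R)\le\tfrac14$. Since $X$ has a finite mean and variance by assumption and $Y=X+R$ differs from $X$ by a bounded quantity, $Y$ also has a finite mean and variance, so every quantity below is well defined.

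For the mean, linearity gives $\EE Y=\EE X+\EE R=\mu+\EE R$, and $|\EE R|\le\EE|R|\le\tfrac12$, which is exactly $\EE\D{X}=\mu\pm\tfrac12$.

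For the variance I would use the single elementary fact that $\Var(A+B)\le 2\Var(A)+2\Var(B)$ for any square-integrable $A,B$ (expand $\Var(A+B)=\Var(A)+2\,\mathrm{Cov}(A,B)+\Var(B)$, bound $|\mathrm{Cov}(A,B)|\le\sqrt{\Var(A)\Var(B)}$ by Cauchy--Schwarz, then $\sqrt{\Var(A)\Var(B)}\le\tfrac12(\Var(A)+\Var(B))$ by AM--GM). Applying it with $A=X$, $B=R$ gives the upper bound
\[
\Var(Y)=\Var(X+R)\le 2\sigma^2+2\Var(R)\le 2\sigma^2+\tfrac12\le 2\sigma^2+1 .
\]
Applying the same fact with $A=Y$, $B=-R$ (so that $X=Y-R$) gives $\sigma^2=\Var(Y-R)\le 2\Var(Y)+2\Var(R)\le 2\Var(Y)+\tfrac12$, i.e.
\[
\Var(Y)\ge\frac{\sigma^2}{2}-\frac14\ge\frac{\sigma^2}{2}-1 ,
\]
which completes the proof. (One could shave the constants a bit with the $L^2$ triangle inequality $\sqrt{\Var(Y)}=\|Y-\EE Y\|_2\le\|X-\EE X\|_2+\|R-\EE R\|_2\le\sigma+\tfrac12$ instead, but the factor-$2$ version already clears the stated bounds with room to spare and needs no case analysis.)

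There is essentially no hard step here; the only points requiring care are (i) recording that $|R|\le\tfrac12$ holds \emph{surely}, not merely in expectation, which is what makes every moment of $R$ trivially controlled and guarantees finiteness of the moments of $Y$; and (ii) resisting the temptation to assert $\Var(Y)\approx\Var(X)$ directly -- the lower bound must come from applying the factor-$2$ inequality in the reverse direction to $X=Y-R$, since $X$ and $R$ need not be uncorrelated.
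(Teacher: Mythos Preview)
Your proof is correct and follows essentially the same route as the paper: write $\D{X}=X+R$ with $|R|\le\tfrac12$ and apply the elementary inequality $(a+b)^2\le 2a^2+2b^2$ in both directions. Your formulation via $\Var(A+B)\le 2\Var(A)+2\Var(B)$ is in fact a touch cleaner, since bounding $\Var(R)\le\tfrac14$ yields the sharper constants $2\sigma^2+\tfrac12$ and $\tfrac{\sigma^2}{2}-\tfrac14$, whereas the paper's pointwise bound $|R-\EE R|\le 1$ gives $2\sigma^2+2$.
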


\subsubsection*{Optimality of Theorem~\ref{thm:logconc_bound}}
By the above formula, the discretized Gaussian $\D{\mathcal{N}}\!(\mu,\sigma^2)$ has a distribution in the form of 
\[
p_{\!_G}(z)\!:=\frac{1}{\sqrt{2\pi}\sigma} \!\int_{z-\frac12}^{z+\frac12}\!\exp\Paren{-\frac{(x-\mu)^2}{2\sigma^2}} dx,\ \forall z\in\mathbb Z.
\]
Consolidating Theorem~\ref{thm:logconc_pre} and~\ref{thm:moment_pre} shows that $p_{\!_G}\!$ 
is a log-concave distribution with a variance of $\Theta(\sigma^2)\pm 1$.
Consequently, Theorem~\ref{thm:logconc_bound} yields the following upper bound:
\[
H^\mathcal{S}_n(p_{\!_G})\le \mathcal{O}(\log n)\Paren{1+\min\Brace{\sigma, \frac{n}{\sigma}}}. 
\]
On the other hand, Section~\ref{sup:2.5} of the appendices shows 
\begin{Theorem}
Under the aforementioned conditions,
\[
H^\mathcal{S}_n(p_{\!_G})\ge \mathcal{O}(\log n)^{-1}\!\Paren{1+\min\Brace{\sigma, \frac{n}{\sigma}}}. 
\]
\end{Theorem}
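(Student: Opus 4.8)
The plan is to work directly with the formula $H^\mathcal{S}_n(p)=\sum_{j\ge1}\min\Brace{p_{I_j},\,j\log n}$, count the atoms of $p_{\!_G}$ falling into each probability range $I_j$, and show the sum is $\Omega(\min\Brace{\sigma,n/\sigma})$ — which is actually somewhat stronger than claimed, the $\log n$ being lost only in two trivial boundary cases. First I would record that $H^\mathcal{S}_n(p)\ge1$ for every $p$, since some atom has probability $\le1$ and hence lies in $I_j$ for some $j\le\sqrt{n/\log n}$. This immediately settles $\sigma\le C_0$ for a fixed constant $C_0$ (then $\min\Brace{\sigma,n/\sigma}=\sigma=\mathcal{O}(1)$) and $\sigma\ge n/(C_1\log n)$ (then $\min\Brace{\sigma,n/\sigma}=n/\sigma=\mathcal{O}(\log n)$), so I may assume $C_0\le\sigma\le n/(C_1\log n)$; for $C_0$ large the discretizing integral over a unit interval changes the Gaussian exponent by only $\mathcal{O}(1/\sigma)$, so $p_{\!_G}(z)=\Theta(1)\cdot\frac{1}{\sqrt{2\pi}\sigma}\exp\Paren{-(z-\mu)^2/(2\sigma^2)}$ uniformly over $\Abs{z-\mu}=\mathcal{O}(\sigma)$.

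Next I would localize to a window around the mode. Put $a^\star:=\max_z p_{\!_G}(z)=\Theta(1/\sigma)$ and let $W:=\Brace{z:\,p_{\!_G}(z)\ge a^\star/2}$; by the Gaussian estimate $W=\Brace{z:\,\Abs{z-\mu}\le c_1\sigma}$ for an absolute constant $c_1$, so $\Abs{W}=\Theta(\sigma)$ and each $z\in W$ satisfies $p_{\!_G}(z)\in[a^\star/2,a^\star]$. For $z\in W$ set $r(z):=\sqrt{p_{\!_G}(z)\,n/\log n}$; then $p_{\!_G}(z)\in I_j$ exactly when $j=\ceil{r(z)}$, and $r(z)$ ranges over $[j_0/\sqrt2,\,j_0]$ with $j_0:=\sqrt{a^\star n/\log n}=\Theta(\sqrt{n/(\sigma\log n)})$, all valid indices since $a^\star\le1$ forces $j_0\le\sqrt{n/\log n}$, and $j_0$ exceeds an absolute constant in our range of $\sigma$. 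The key computation is the resulting change of variables $r(z)=\Theta(1)\cdot j_0\exp\Paren{-(z-\mu)^2/(4\sigma^2)}$: on the outer portion of the window, i.e. for $j$ with $j_0/\sqrt2\le j\le 9j_0/10$ (still $\Theta(j_0)$ indices, corresponding to $\Abs{z-\mu}\in[c_1'\sigma,c_1\sigma]$ for constants $0<c_1'<c_1$), one has $\Abs{r(z)-r(z+1)}=\Theta(j_0/\sigma)$, bounded both above and below by constants times $j_0/\sigma$.

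Now the argument splits into two regimes according to whether this per-step displacement $\Theta(j_0/\sigma)$ is below or above $1$. If $\sigma\ge c\,j_0$ (equivalently $\sigma\ge c'(n/\log n)^{1/3}$), the displacement is $\mathcal{O}(1)$, so traversing each unit interval $(j-1,j]$ swept by $r(\cdot)$ requires $\Omega(\sigma/j_0)$ atoms of $W$; hence $p_{I_j}\ge\Omega(\sigma/j_0)$ for each of the $\Theta(j_0)$ indices $j\asymp j_0$ above, and summing $\min\Brace{\Omega(\sigma/j_0),\,j\log n}$ over them yields $\min\Brace{\Omega(\sigma),\,\Theta(j_0^2\log n)}=\Omega(\min\Brace{\sigma,n/\sigma})$, using $j_0^2\log n=\Theta(n/\sigma)$. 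If instead $\sigma<c\,j_0$ (equivalently $\sigma<c'(n/\log n)^{1/3}$, so in particular $\sigma<\sqrt n$), then on the sub-window $\Abs{z-\mu}\in[2\sigma^2/j_0,\,c_1\sigma]$ — an interval of length $\Theta(\sigma)$ since $\sigma^2/j_0=o(\sigma)$ — consecutive values of $r(z)$ differ by more than $1$, so $\ceil{r(z)}$ is strictly monotone there and these $\Theta(\sigma)$ atoms occupy $\Theta(\sigma)$ distinct indices $j$, each with $p_{I_j}\ge1$ and $j\log n\ge\log n\ge1$, contributing $\ge1$ apiece; thus $H^\mathcal{S}_n(p_{\!_G})\ge\Omega(\sigma)=\Omega(\min\Brace{\sigma,n/\sigma})$. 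Combining the two regimes with $H^\mathcal{S}_n\ge1$ gives $H^\mathcal{S}_n(p_{\!_G})\ge\Omega(1)\cdot\Paren{1+\min\Brace{\sigma,n/\sigma}}$, which implies the stated inequality.

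\textbf{Main obstacle.} The subtle step is the sparse regime: one must convert the smooth estimate for $r(z)$ into a genuine lower bound on the number of \emph{distinct} occupied range indices, since near the mode several atoms share a range and one has to isolate the length-$\Theta(\sigma)$ run on which the per-step displacement provably exceeds $1$. The uniform control of the $\Theta(1)$ discretization factors over the whole window is what forces the preliminary reduction to $\sigma\ge C_0$; everything else is bookkeeping with floors and constants.
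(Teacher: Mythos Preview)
Your proof is correct and takes a genuinely different route from the paper's. The paper works in the \emph{tail} of the discretized Gaussian: for small indices $j$ (up to $\sqrt{\sigma/\log n}$ or $\sqrt{n/(\sigma\log n)}$, depending on whether $\sigma\lessgtr\sqrt n$) it computes the length $L_j$ of the $z$-interval where $p_{\!_G}(z)\in I_j$, shows $L_j=\Omega(\sigma/(j\log n))$ and hence $p_{I_j}\ge\Omega(j)$, then sums $\min\{p_{I_j},j\log n\}\ge\Omega(j)$ over that range of $j$ to obtain $\Omega(\min\{\sigma,n/\sigma\}/\log n)$. You instead work near the \emph{mode}, at indices $j\asymp j_0=\Theta(\sqrt{n/(\sigma\log n)})$, and split according to whether the per-step displacement $|r(z{+}1)-r(z)|=\Theta(j_0/\sigma)$ is below or above $1$; your regime boundary is therefore at $\sigma\asymp (n/\log n)^{1/3}$ rather than at $\sqrt n$.

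What this buys you is a sharper bound: summing $\min\{\Omega(\sigma/j_0),\,\Theta(j_0\log n)\}$ over $\Theta(j_0)$ indices gives $\Omega(\min\{\sigma,n/\sigma\})$ directly, with no $1/\log n$ loss except in the two boundary regimes you dispatch via $H^\mathcal{S}_n\ge1$. The paper's tail argument is slightly shorter because it does not need a separate sparse-regime analysis, but it pays the logarithmic factor because at small $j$ one can only extract $\Omega(j)$ from each term. Both approaches are elementary counts of atoms per probability range; yours is tighter.
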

The optimality of Theorem~\ref{thm:logconc_bound} follows by these inequalities. 

\subsubsection*{Power-Law Distributions}
We say that a discrete distribution $p\in\Delta_{\mathbb{Z}}$ 
is a \emph{power-law with power $\alpha>0$} 
if $p$ has a support 
of $[k]:=\{1,\ldots, k\}$ for some $k\in \mathbb Z^+\cup \{\infty\}$   
and $p_x\propto x^{-\alpha}$ for all $x\in [k]$. 

Power-law is a ubiquitous structure appearing in many situations of scientific interest,
ranging from natural phenomena such as the initial mass function of stars~\cite{kroupa2001variation},
species and genera~\cite{humphries2010environmental}, rainfall~\cite{machado1993structural}, 
population dynamics~\cite{taylor1961aggregation},
and brain surface electric potential~\cite{miller2009power}, 
to man-made circumstances such as 
the word frequencies in a text~\cite{baayen2002word}, income rankings~\cite{druagulescu2001exponential}, company sizes~\cite{axtell2001zipf}, 
and internet topology~\cite{faloutsos1999power}. 

Unlike log-concave distributions that concentrate around their mean values, power-laws are known to possess 
``long-tails'' and always log-convex. Hence, one may expect the profile entropy of power-law 
distributions to behave differently from that of log-concave ones. 
The next theorem justifies this intuition and provides tight upper bounds.

\begin{Theorem}
For a power-law distribution $p\in\Delta_{[k]}$ with power $\alpha$, 
we have 
\[
{H^\mathcal{S}_n(p)}\le 7\log n+e^2\cdot \min\{k, \mathcal U_{n}^k(\alpha)\}, 
\]
where 
\[
\mathcal U_{n}^k(\alpha)
\!:= \!
\begin{cases}
n^{\frac1{1+\alpha}}&\!\!\!\! \text{if } \alpha\ge 1\!+\!\frac{1}{\log k};\\
\Paren{\frac{n}{\log n}}^{\frac1{1+\alpha}} &\!\!\!\! \text{if } 1\le \alpha <1\!+\!\frac{1}{\log k};\\ 
\sqrt n\Paren{\frac{k}{\sqrt n}\wedge \Paren{\frac{\sqrt n}{k}}^{\!\frac{1-\alpha}{1+\alpha}}} &\!\!\!\! \text{if } 0\le\alpha<1.
\end{cases}
\]
\end{Theorem}
The above upper bound fully characterizes the profile entropy of power-laws 
and surpasses the basic $\{k, \sqrt {n\log n}\}$ bound for both 
$k\gg\!\sqrt n$ and $k\ll\!\sqrt n$. 
In comparison, \cite{hao2019doubly} yields a $\mathcal{O}(n^{\min\{1/(1+\alpha), 1/2\}})$ 
upper bound, 
which improves over $\sqrt {n\log n}$ for only $\alpha>1$
and is worse than that above for all $\alpha<1\!+\!{1}/{\log k}$.

\subsubsection*{Histogram Distributions}
A distribution $p\in \Delta_\cX$ is a \emph{$t$-histogram distribution} 
if there is a partition of $\cX$ into $t$ parts such that $p$ has 
the same probability value over all symbols in each part.   

Besides the long line of research on histograms reviewed in~\cite{ioannidis2003history}, 
the importance of histogram distributions rises 
with the rapid growth of data sizes in numerous 
engineering and science 
applications in the modern era. 

For example, 
in scenarios where processing the complete data set is inefficient or even impossible, 
a standard solution is to partition/cluster the data into groups 
according to the task specifications and element similarities, 
and randomly sample from each group to obtain a subset of the data to use. 
This naturally induces a histogram distribution,
with each data point being a symbol in the support. 

The work of~\cite{hao2019doubly} studies the class of $t$-histogram distributions 
and obtains the following upper bound 
\[
{H^\mathcal{S}_n(p)}\le \tilde{\mathcal{O}}\Paren{\min \Brace{(nt^2)^{\frac13}, \sqrt{n}}}.
\]
Our contribution is establishing its optimality. 
\begin{Theorem}\label{thm:histgram}
For any $t, n\in \mathbb Z^+$, there exists 
a $t$-histogram distribution $p$ such that 
\[
{H^\mathcal{S}_n(p)}\ge \tilde{\Omega}\Paren{\min \Brace{(nt^2)^{\frac13}, \sqrt{n}}}. 
\]
\end{Theorem}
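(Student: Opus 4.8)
The plan is to exhibit, for every $t$ and $n$, a specific $t$-histogram distribution whose ranges $I_j$ each contain close to the maximal number $j\log n$ of probabilities that the truncation $\min\{p_{I_j}, j\log n\}$ allows, so that ${H^\mathcal{S}_n(p)}$ is forced to be large. Concretely, I would split into the two regimes dictated by the $\min$. When $t\gtrsim\sqrt n$, I would build a distribution whose support probabilities are spread across the ranges $I_1,\dots,I_J$ for $J\asymp\sqrt{n/\log n}$ with roughly $j\log n$ distinct values placed in $I_j$ (each value repeated enough times among the $t$ parts so the masses sum to one); then $\sum_j \min\{p_{I_j},j\log n\}\asymp \sum_{j\le J} j\log n\asymp J^2\log n\asymp n$, which matches $\sqrt n$ up to the fact that $H^{\mathcal S}_n$ is always $\mathcal O(\sqrt{n\log n})$, giving $\tilde\Omega(\sqrt n)$. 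When $t\ll\sqrt n$, the binding constraint is the number of parts: I would place the $t$ parts so that their probability values occupy ranges $I_1,\dots,I_{J'}$ with $J'$ chosen so that $\sum_{j\le J'} j\log n\asymp t$ forces $J'\asymp\sqrt{t/\log n}$, and distribute mass so that each of those ranges is "full" (i.e.\ $p_{I_j}\ge j\log n$) up to level $J'$; then ${H^\mathcal{S}_n(p)}\gtrsim \min\{t, (J')^2\log n\}\asymp t^{?}$ — one has to be careful here, because simply counting parts gives $t$, not $(nt^2)^{1/3}$, so the construction must also control how the probability magnitudes land relative to the scale $\log n/n$ that defines the $I_j$'s.

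The key technical point, and where the exponent $1/3$ enters, is the interplay between the number of parts $t$, the total probability budget $1$, and the geometric spacing of the $I_j$: range $I_j$ sits at height $\asymp j^2\log n/n$, so a single probability value in $I_j$ already carries mass $\asymp j^2\log n/n$, and if a part in $I_j$ has size $s$ (number of symbols) then the value must be $\asymp j^2\log n/(ns)$... no — rather, to have $p_{I_j}$ distinct values near height $h_j\asymp j^2\log n/n$ one needs those values to fit below $1$, i.e.\ roughly $p_{I_j}\cdot h_j \lesssim 1$, forcing $p_{I_j}\lesssim n/(j^2\log n)$. Combined with the cap $\min\{p_{I_j}, j\log n\}$, the per-range contribution is $\min\{j\log n,\ n/(j^2\log n)\}$, and these two bounds cross at $j\asymp (n/\log^2 n)^{1/3}$; summing $\min\{\cdot,\cdot\}$ over $j$ up to $\min\{J,\ (n/\log^2n)^{1/3}\}$ and also respecting that the total number of distinct values used is at most $t$ yields exactly the $\min\{(nt^2)^{1/3},\sqrt n\}$ shape, up to $\mathrm{polylog}(n)$. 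So the construction should use $\Theta(\min\{(n/\log^2 n)^{1/3}, \sqrt{t/\log n}\,?\})$ ranges, each packed to its cap, with the part sizes chosen to make the total mass exactly $1$ and the total number of parts exactly $t$.

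After fixing the construction, the remaining step is routine verification: check that the proposed probability values indeed lie in the claimed ranges $I_j$ (a direct computation from $I_j:=((j-1)^2\log n/n,\ j^2\log n/n]$), check that the masses sum to $1$ (choosing the multiplicities of each value to absorb the slack, or rescaling by a constant which only affects which $I_j$ a value lands in by an $O(1)$ shift, hence costs only constants), check that the number of parts is $t$ (pad with a single heavy part carrying the leftover mass if needed — one extra part changes ${H^\mathcal{S}_n(p)}$ by at most $O(1)$), and finally evaluate $\sum_j\min\{p_{I_j}, j\log n\}$ against the two target expressions $(nt^2)^{1/3}$ and $\sqrt n$. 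I would also note that the upper bound $\tilde{\mathcal O}(\min\{(nt^2)^{1/3},\sqrt n\})$ from~\cite{hao2019doubly} quoted just before the theorem guarantees the matching direction, so the $\tilde\Theta$ follows.

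The hard part will be the simultaneous bookkeeping: making one family of distributions that is genuinely a $t$-histogram (exactly $t$ parts, probabilities summing to $1$) while packing each of the first $\Theta(\min\{(n/\log^2 n)^{1/3},\sqrt{t/\log n}\})$ ranges to its truncation cap and not wasting probability mass — the constant-factor rescaling trick keeps this from being too delicate, but one must confirm the rescaling doesn't collapse too many ranges, and that the "leftover mass" padding part is negligible. A secondary subtlety is matching the logarithmic factors so that the bound is $\tilde\Omega$ rather than $\Omega$ with an extra unwanted $\log$; this is handled by absorbing all $\log n$ powers into the $\tilde\Omega(\cdot)$ and checking the crossover index calculation carries the right power of $\log n$.
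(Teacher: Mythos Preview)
Your plan has the right shape for the large-$t$ regime, but it contains a genuine gap in the small-$t$ regime ($t\ll n^{1/4}$), which is exactly where the exponent $1/3$ must come from. The per-range mass bound you write, $p_{I_j}\lesssim n/(j^2\log n)$, treats each range as if it had its own unit budget; the actual constraint is the \emph{global} one $\sum_j p_{I_j}\cdot j^2(\log n)/n\le 1$, so ``summing $\min\{j\log n,\ n/(j^2\log n)\}$ over $j$'' is not a valid calculation (indeed it would give $\tilde\Theta(n^{2/3})$, exceeding the universal $\tilde O(\sqrt n)$ cap). More importantly, your proposed construction uses the \emph{bottom} ranges $I_1,\dots,I_{J'}$; with $t$ distinct values, one per range, packing each to its cap $j\log n$ yields only $\sum_{j\le t} j\log n\asymp t^2\log n$, and for $t\ll n^{1/4}$ this is strictly smaller than $(nt^2)^{1/3}$.

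The idea you are missing is to \emph{shift} the window of ranges upward: place one distinct value in each of the $t-1$ consecutive ranges $I_{s+1},\dots,I_{s+t-1}$ (repeating the value $j\log n$ times in range $I_j$ so that the cap is met), and push $s$ as high as the mass budget allows. The mass used is $\sum_{j=s+1}^{s+t-1}(j\log n)\cdot j^2\tfrac{\log n}{n}\asymp t(s+t)^3\tfrac{\log^2 n}{n}$, so one can afford $s\asymp (n/(t\log^2 n))^{1/3}$. The resulting contribution is $\sum_{j=s+1}^{s+t-1} j\log n\asymp t\,s\log n\asymp (nt^2\log n)^{1/3}$, which is the target. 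A single light ``padding'' part absorbs the leftover mass, exactly as you anticipated. Without this shift there is no way to reach $(nt^2)^{1/3}$ from a $t$-histogram, so this step is the crux of the lower bound.
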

Note that uniform distributions correspond to $1$-histograms, 
for which the bounds reduce to $\tilde{\Theta}(n^{1/3})$.

\section{Extension and Conclusion}
\subsection{Multi-Dimensional Profiles}
The notion of profile generalizes 
to the multi-sequence setting. 

Let $\cX$ be a finite or countably infinite alphabet. 
For every $\vec n:=(n_1,\ldots, n_d)\in \mathbb N^d$ 
and tuple $ x^{\vec n}:=(x^{n_1}_1,\ldots, x^{n_d}_d)$ of sequences in $\cX^*$, 
the \emph{multiplicity} 
$\mu_y( x^{\vec n})$ of a symbol $y\in\cX$ 
is the vector of its frequencies in the tuple of sequences. 
The \emph{profile} of $x^{\vec n}$ is the multiset  
$\va(x^{\vec n})$ of multiplicities of the observed 
symbols~\cite{acharya2010classification, das2012competitive, charikar2019efficient}, 
and its \emph{dimension} is the number $\mathcal D(x^{\vec n})$ 
of distinct elements in the multiset. 
Drawing independent samples from $\vec p:=(p_1,\ldots, p_d)\in \Delta_\cX^d$, 
the \emph{profile entropy} is simply the entropy of the joint-sample profile.

Many of the previous results potentially generalize to this multi-dimensional setting. 
For example, the $\sqrt {2n}$ bound on 
$\mathcal D(x^{\vec n})$ in the 1-dimensional case becomes
\begin{Theorem}
For any $\cX$, $\vec n$, and $x^{\vec n}\in \cX^{\vec n}$, 
there exists a positive integer $r$ such that
\begin{align*}
\sum_i n_i
\ge
d\cdot \binom{d+r-1}{d+1},
\end{align*}
and
\begin{align*}
\mathcal D
\le 
\binom{d+r}{d}-1.
\end{align*} 
\end{Theorem}
This essentially recovers the $\sqrt{2n}$ bound for $d=1$. 

\subsection{Concluding Remarks}
The classical view on the entropy of an \iid sample 
corresponds to the equation 
\[
H(X^n\!\sim\! p)=nH(p),
\]
which provides little insight for statistical applications. 

This paper presents a different view by decomposing 
the $nH(p)$ information into three pieces: the labeling of the profile elements, 
ordering of them, and profile entropy.
With no bias towards any symbols and under the \iid assumption,
the profile entropy rises as 
a fundamental measure unifying the concepts of estimation, inference, and compression.

\vfill
\pagebreak

\appendix 
\paragraph{Appendices orgnization}
In the appendices, 
we order the results and proofs according to their logical priority. 
In other words, the proof of a theorem or lemma mainly relies on preceding results. 
For the ease of reference, the numbering of the theorems is 
consistent with that in the main paper.

\section{Dimension and Entropy of Sample Profiles}\label{sec:dim_and_entro}

Consider an arbitrary sequence $x^n$ over 
a finite or countably infinite alphabet $\cX$. 
The \emph{multiplicity} $\mu_y( x^{n})$ of a symbol $y\in\cX$ 
is the frequency of $y$ in $x^n$.
The \emph{prevalence} of an integer $\mu$ 
is the number $\va_\mu(x^n)$ of symbols in $x^n$ with multiplicity $\mu$. 
The \emph{profile} of $x^n$ is the multiset $\va(x^n)$ 
of multiplicities of the symbols in $x^n$, 
which we describe as a profile of \emph{length} $n$.

Let $\Delta_\cX$ be a collection of distributions over $\cX$. 
We say that a distribution collection $\cP\subseteq \Delta_\cX$ 
is \emph{label-invariant} if for any $p\in \cP$, 
the collection $\cP$ contains all its symbol-permuted versions.
A \emph{distribution property} over a distribution collection 
$\cP\subseteq \Delta_\cX$ is a functional $f: \cP \to \mathbb R$
that associates with each distribution in $\cP$ a real value. 
For a label-invariant $\cP\subseteq \Delta_\cX$, 
we say that a property $f$ over $\cP$ is symmetric if 
it takes the same value for distributions sharing 
the same probability multiset.  

\subsection{Profile Dimension and Its Concentration}
A profile $\boldsymbol\phi$ is said to have length $n$ if there exists
$x^n\in \cX^n$ satisfying $\bphi = \va(x^n)$. 
For any multiset $S$, we define its \emph{dimension} as 
the number $\mathcal D(S)$ of distinct elements in $S$. 
Recall that profiles of sequences are multisets. For notational convenience, we write both $\mathcal D(\va(x^n))$ and $\mathcal D(x^n)$ for the dimension of profile $\va(x^n)$.

Viewed as a random variable, 
the profile of an \iid sample $X^n\sim p$ 
has a distribution depending only on $p$ and $n$. 
Hence, we denote by $\Phi^n$ such a profile 
and write $\Phi^n\sim p$. 
Analogously, we denote by $\mathcal D_n:=\mathcal D(\Phi^n)$ 
the \emph{profile dimension} associated with $(p, n)$, 
and write $\mathcal D_n\sim p$.

Recall that the multiplicity $\mu_y(x^n)$ denotes the 
number of times symbol $y$ appearing in $x^n$. Denote by $\bigvee$ the 
logical OR operator. For any distribution $p$ and $X^n\sim p$, 
we have 
\[
\mathcal D_n
=
\sum_{\mu=1}^n\  \bigvee_{x\in \cX} \indic_{\mu_x(X^n) = \mu}.
\]
The statistical dependency landscape of terms in the summation is rather complex, 
since $\mu_x(X^n)$ and $\mu_y(X^n)$ are dependent  for every $(x,y)$ pair due to the fixed sample size; and so are $\indic_{\mu_x(X^n) = \mu_1}$ and $\indic_{\mu_x(X^n) = \mu_2}$ for every pair of distinct $\mu_1$ and $\mu_2$. 
To simplify the derivations, we relate this quantity to its variant under the aforementioned Poisson sampling scheme, i.e., 
making the sample size an independent $N\sim \Poi(n)$. Specifically, define 
\[
\tilde{\mathcal D}_N
:=
\tilde{\mathcal D} (X^N) 
:=
\sum_{U=1}^n\  \bigvee_{x\in \cX} \indic_{\mu_x(X^N) = U}. 
\]
Note that this is not the same as $\mathcal D_N$ 
since the summation index goes up only to $n$. 
Denote the expected value of $\tilde{\mathcal D}_N$  by $E_n(p)$. 
Our result shows that the original $\mathcal D_n$ satisfies a 
Chernoff-Hoeffding type bound centered at $E_n(p)$. 

\setcounter{Theorem}{5}
\begin{Theorem}\label{thm:exp_conc}
Under the above conditions and for any $n\in \mathbb Z^+$, $p\in \Delta_\cX$,
and $\gamma>0$, 
\[
\Pr\Paren{\frac{\mathcal D_n}{1+\gamma} \ge   E_n(p)}
\le 3\sqrt n e^{-\min\{\gamma^2, \gamma\} E_n(p)/3}, 
\]
and for any $\gamma\in (0,1)$, 
\[
\Pr\Paren{\frac{\mathcal D_n}{1-\gamma} \le E_n(p)}
\le 3 \sqrt n e^{-\gamma^2 E_n(p)/2}. 
\]
\end{Theorem}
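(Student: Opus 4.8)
The plan is to transfer a concentration bound from the Poissonized quantity $\tilde{\mathcal D}_N$ to the fixed-sample quantity $\mathcal D_n$ by conditioning on the Poisson sample size. First I would establish concentration of $\tilde{\mathcal D}_N$ around its mean $E_n(p)$. The key structural observation is that under Poisson sampling the multiplicities $\mu_x(X^N)$, $x\in\cX$, are mutually independent, with $\mu_x(X^N)\sim\Poi(np_x)$. Since $\tilde{\mathcal D}_N=\sum_{U=1}^n\bigvee_{x}\indic_{\mu_x(X^N)=U}$ is a function of these independent coordinates, and changing a single coordinate $\mu_x$ can add or remove $x$ from at most one of the $n$ indicator ``buckets,'' the function changes by at most $1$ under any single-coordinate perturbation. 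Moreover it is (coordinate-wise) a read-once/self-bounding type function: each present symbol contributes at most $1$ to the sum, so $\tilde{\mathcal D}_N\le\sum_x\indic_{\mu_x\ge 1}$ and one can verify the self-bounding inequalities. Hence I would invoke the self-bounding-function concentration inequality (equivalently, a bounded-differences/Chernoff argument for sums of this form) to get, for $\gamma>0$,
\[
\Pr\!\Paren{\tilde{\mathcal D}_N\ge (1+\gamma)E_n(p)}\le e^{-\min\{\gamma^2,\gamma\}E_n(p)/3},
\qquad
\Pr\!\Paren{\tilde{\mathcal D}_N\le (1-\gamma)E_n(p)}\le e^{-\gamma^2 E_n(p)/2}.
\]

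Next I would relate $\mathcal D_n$ to $\tilde{\mathcal D}_N$. Note $\tilde{\mathcal D}_N=\tilde{\mathcal D}(X^N)$ uses the \emph{same} upper summation limit $n$ as $\mathcal D_n$, and when $N\le n$ every multiplicity is at most $n$, so on the event $\{N\le n\}$ we actually have $\tilde{\mathcal D}(X^N)=\mathcal D(X^N)=\mathcal D_N$. The crucial point is monotonicity in the sample size: conditioned on $N=m\le n$, the distribution of $\mathcal D_m$ is stochastically dominated by that of $\mathcal D_n$ — observing more samples can only (weakly) increase the number of distinct multiplicities realized, up to the truncation at $n$ which is harmless here — and conversely for $m\ge n$ one argues in the other direction after truncation. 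Thus, conditioning on the value of $N$ and using $\Pr(N=m)$ together with a Poisson tail bound ($\Pr(|N-n|$ large$)$ is exponentially small, and $\Pr(N=m)$ summed over $m$ in a window around $n$ captures all but an $e^{-\Omega(n)}$ mass), I would sandwich $\mathcal D_n$ between conditioned copies of $\tilde{\mathcal D}_N$. The crude $3\sqrt n$ prefactor in the statement is exactly the slack one pays for this conditioning: since $\mathcal D_n$ and $\tilde{\mathcal D}_N$ each lie in $[1,\sqrt{2n}]$ and the relevant Poisson window has width $O(\sqrt n)$, union-bounding over the $O(\sqrt n)$ values $m$ in that window (or bounding $1/\Pr(N=n)=O(\sqrt n)$) converts the clean Poissonized tail into the stated bound with the factor $3\sqrt n$.

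The main obstacle I anticipate is making the monotonicity/domination step between fixed-$n$ and fixed-$m$ sampling fully rigorous. Unlike a sum of independent indicators, $\mathcal D_n$ is not obviously monotone as a pathwise function of adding one more sample, because adding a symbol occurrence changes one multiplicity $\mu\mapsto\mu+1$, which can both create a new bucket at $\mu+1$ and destroy the bucket at $\mu$ (if that symbol was the unique one with multiplicity $\mu$). So the correct statement is a \emph{stochastic} domination, not a pathwise one, and proving it cleanly likely goes through the Poissonization itself: couple $X^n$ as the first $n$ samples of an infinite \iid stream, relate $\mathcal D_n$ to $\tilde{\mathcal D}_N$ for $N$ in a window, and control the discrepancy on the bad event $\{|N-n|>c\sqrt{n\log n}\}$ by its $O(1/\sqrt n)$-or-smaller probability together with the deterministic bound $\mathcal D\le\sqrt{2n}$. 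Everything else — the self-bounding concentration for the Poissonized sum and the Poisson tail bounds — is routine, so I would concentrate the care on this transfer lemma and on bookkeeping the constants so that the exponent constants $1/3$ and $1/2$ and the prefactor $3$ come out as stated.
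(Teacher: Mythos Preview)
Your concentration step for $\tilde{\mathcal D}_N$ is a valid alternative to the paper's argument. The paper does \emph{not} use self-bounding; instead it observes that the Bernoulli variables $G_i:=\bigvee_x\indic_{\nu_x=i}$ are \emph{negatively associated} (as discordant-monotone functions of the independent Poisson multiplicities $\{\nu_x\}$, via Lehmann's results), so $\E[e^{tY}]\le\prod_i\E[e^{tG_i}]$ and the standard Chernoff simplification gives the exponents $1/3$ and $1/2$. Your self-bounding route (setting $f_i$ to the infimum over the $x$-th Poisson coordinate, which is attained at $\mu_x=0$, so $f-f_i=\indic[x\text{ is the unique occupant of its bucket}]$ and $\sum_x(f-f_i)\le f$) is equally clean and yields the same or slightly better constants. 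Either approach works.

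Where you overcomplicate matters is the transfer from $\tilde{\mathcal D}_N$ to $\mathcal D_n$. No monotonicity, stochastic domination, coupling, or windowed union bound is needed, and the ``main obstacle'' you identify simply does not arise. The paper's transfer is one line: conditioned on $N=n$, the Poissonized sample becomes a fixed-$n$ sample and $\tilde{\mathcal D}_N=\mathcal D_n$, hence
\[
\Pr(\tilde{\mathcal D}_N\ge c)\ \ge\ \Pr(\tilde{\mathcal D}_N\ge c\mid N=n)\,\Pr(N=n)\ =\ \Pr(\mathcal D_n\ge c)\,\Pr(N=n),
\]
and $\Pr(N=n)\ge 1/(3\sqrt n)$ gives the $3\sqrt n$ prefactor. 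You do mention ``bounding $1/\Pr(N=n)=O(\sqrt n)$'' in passing; that is the whole argument. The elaborate pathwise/stochastic-domination discussion is unnecessary, so you should drop it and keep only this conditioning trick.
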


\begin{proof}
To simplify our analysis, we first consider an alternative model where the sample size is an independent Poisson random variable $N$ with mean $n$. 
A nice attribute of Poisson sampling is that all the multiplicities 
$\mu_y(X^n)$ are independent of each other. 
Later, we will relate this model to the fixed-sample-size 
model and establish our claim rigorously. 

For simplicity and clarity, 
we suppress $X^n$ in $\mu_y(X^n)$ and write $\nu_y$ instead of $\mu_y$ 
when the multiplicity is obtained through Poisson sampling.
For any $i\in [n]$, denote $G_i(\{\nu_x\}_x):=\bigvee_{x\in \cX} \indic_{\nu_x = i}$. 
Instead of analyzing $\mathcal D_N$, we consider
\[
\tilde {\mathcal D}_N:=
\sum_{i=1}^n \bigvee_{x\in \cX} \indic_{\nu_x = i}
=\sum_{i=1}^n G_i(\{\nu_x\}_x). 
\]
Note that for any disjoint $I,J\subseteq[n]$, 
the functions $\sum_{i\in I}G_i(\{\nu_x\}_x)$ and $\sum_{j\in J}G_j(\{\nu_x\}_x)$ are 
discordant monotone by each argument, namely, 
when we increase the value of each $\nu_x$,
the increase in the value of one function implies the non-increase of the other. Then, by the results in~\cite{lehmann1966some}, the values of the two functions, when viewed as random variables, are negatively associated.  

Next we show that quantity $\tilde {\mathcal D}_N$ 
satisfies a Chernoff-type bound. 

Let $\gamma$ be an arbitrary positive number. Note that $G_i$ is a Bernoulli random variable with parameter
\[
q_i := \E\Br{G_i(\{\nu_x\}_x)}.
\] 
Then for the expected value of $\tilde {\mathcal D}_N$, we have
\[
E_n(p) := \E\Br{\tilde {\mathcal D}_N} 
=\E\Br{\sum_{i=1}^n G_i(\{\nu_x\}_x)}
=\sum_i q_i. 
\]
For simplicity, write $Y := \tilde {\mathcal D}_N$ and $\mu:=E_n(p)$. By Markov's inequality and the monotonicity of function $e^{t y}$ over $t>0$,
\[
\Pr\Paren{Y\ge (1+\gamma)\mu}
=
\Pr\Paren{e^{tY}\ge e^{t(1+\gamma)\mu}}
\le 
\frac{\E[e^{tY}]}{e^{t(1+\gamma)\mu}}.
\]
It suffices to bound $\E[e^{tY}]$ by a function of other parameters. 
\begin{align*}
\E[e^{tY}]
&\overset{(a)}=
\E\Br{\exp\Paren{t\Paren{\sum_{i=1}^n G_i(\{M_x\}_x)}}}\\
&\overset{(b)}=
\E\Br{\exp\Paren{t G_1(\{M_x\}_x)}\cdot \exp\Paren{t\Paren{\sum_{i=2}^n G_i(\{M_x\}_x)}}}\\
&\overset{(c)}\le 
\E\Br{\exp\Paren{t G_1(\{M_x\}_x)}}\cdot \E\Br{\exp\Paren{t\Paren{\sum_{i=2}^n G_i(\{M_x\}_x)}}}\\
&\overset{(d)}\le 
\prod_{i=1}^n\E\Br{\exp\Paren{t G_i(\{M_x\}_x)}}
\overset{(e)}= 
\prod_{i=1}^n \Paren{1+q_i (e^t-1)}\\
&\overset{(f)}\le
\prod_{i=1}^n \Paren{\exp\Paren{q_i (e^t-1)}}
\overset{(g)}=
\exp\Paren{\sum_{i=1}^nq_i (e^t-1)}\\
&\overset{(h)}=
\exp\Paren{(e^t-1)\mu},
\end{align*}
where $(a)$ follows by the definition of $Y$;
$(b)$ follows by $e^{a+b}=e^a\cdot e^b$;
$(c)$ follows by the fact that $G_1$ is negatively associated with $\sum_{i=2}^n G_i$;
$(d)$ follows by an induction argument via negative association;
$(e)$ follows by the fact that $G_i$ is a Bernoulli random variable with mean $q_i$;
$(f)$ follows by the inequality $1+x\le e^x,\forall x\ge 0$;
$(g)$ follows by $e^a\cdot e^b=e^{a+b}$;
and $(h)$ follows by $\mu=\sum_i q_i$.

Applying standard simplifications, we obtain 
\[
\Pr\Paren{ Y\ge (1+\gamma)\mu}
\le
e^{-\min\{\gamma^2, \gamma\} \mu/3},\ \forall \gamma> 0,
\]
and
\[
\Pr\Paren{ Y\le (1-\gamma)\mu}
\le
e^{-\gamma^2\! \mu/2},\ \forall \gamma\in (0,1).
\]
The proof will be complete upon noting that:
1) the probability that $N=n$ is at least $1/(3\sqrt n)$;\linebreak
2) conditioning on $N=n$ transforms 
the sampling model to that with a fixed sample size $n$.
\end{proof}
As a corollary, the value of $\mathcal D_n$ is often close to $E_n(p)$. 
\begin{Corollary}
Under the same conditions as above and for any $n\in \mathbb Z^+$, $p\in \Delta_\cX$,
with probability at least $1-6/\sqrt n$, 
\[
\frac12 E_n(p)-4\log n \le \mathcal D_n \le 2 E_n(p)+3\log n.  
\] 
\end{Corollary}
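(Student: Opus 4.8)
The plan is to derive the Corollary directly from Theorem~\ref{thm:exp_conc} by making appropriate choices of the deviation parameter $\gamma$, splitting into two regimes according to whether $E_n(p)$ is large or small compared to $\log n$. First I would handle the upper tail. Applying the first bound of Theorem~\ref{thm:exp_conc} with $\gamma=1$ gives $\Pr(\mathcal D_n\ge 2E_n(p))\le 3\sqrt n\, e^{-E_n(p)/3}$. This is useful only when $E_n(p)$ is at least a constant multiple of $\log n$; specifically, if $E_n(p)\ge 9\log n$, then $3\sqrt n\, e^{-E_n(p)/3}\le 3\sqrt n\, e^{-3\log n}=3/n^{5/2}\le 3/\sqrt n$, so $\mathcal D_n\le 2E_n(p)\le 2E_n(p)+3\log n$ with the desired probability. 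When $E_n(p)< 9\log n$, I instead need to show $\mathcal D_n\le 2E_n(p)+3\log n$ another way: here I would use Theorem~\ref{thm:exp_conc} with a value of $\gamma$ chosen so that $(1+\gamma)E_n(p)\le 2E_n(p)+3\log n$, i.e. $\gamma E_n(p)\le 3\log n$; taking $\gamma$ so that $\min\{\gamma^2,\gamma\}E_n(p)/3$ is at least, say, $(3/2)\log n$ will force the bound $3\sqrt n\, e^{-\min\{\gamma^2,\gamma\}E_n(p)/3}\le 3/n\le 3/\sqrt n$. The delicate point is that in the small-$E_n(p)$ regime one may need $\gamma>1$, so $\min\{\gamma^2,\gamma\}=\gamma$ and the constraint becomes $\gamma E_n(p)\ge (9/2)\log n$; combined with $\gamma E_n(p)\le 3\log n$ this looks contradictory, which tells me I should not insist on the additive slack being exactly $3\log n$ through a single application but rather absorb constants — i.e.\ pick $\gamma E_n(p)$ on the order of $\log n$ and check the constants work out with the stated $3\log n$ and $1/\sqrt n$.

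Concretely, for the upper bound I would unify the two regimes as follows: set $\gamma=\max\{1,\,(6\log n)/E_n(p)\}$. If $\gamma=1$ we are in the case $E_n(p)\ge 6\log n$ and $3\sqrt n\,e^{-E_n(p)/3}\le 3\sqrt n\,e^{-2\log n}=3/n^{3/2}$; if $\gamma=(6\log n)/E_n(p)>1$ then $\min\{\gamma^2,\gamma\}E_n(p)/3=\gamma E_n(p)/3=2\log n$, again giving $3\sqrt n\,e^{-2\log n}=3/n^{3/2}\le 3/\sqrt n$. In either case $(1+\gamma)E_n(p)=E_n(p)+\max\{E_n(p),6\log n\}\le 2E_n(p)+6\log n$. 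This gives $\mathcal D_n\le 2E_n(p)+6\log n$ with probability $\ge 1-3/\sqrt n$; here I notice the constant is $6\log n$ rather than $3\log n$, so to match the statement exactly I would instead take $\gamma=\max\{1,(3\log n)/E_n(p)\}$ and accept a slightly weaker exponent $3\sqrt n\, e^{-\log n}=3/n^{1/2}$, which still suffices — giving $\mathcal D_n\le 2E_n(p)+3\log n$ with probability $\ge 1-3/\sqrt n$, exactly as claimed.

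For the lower bound I would argue symmetrically using the second inequality of Theorem~\ref{thm:exp_conc}, but now the parameter $\gamma$ is restricted to $(0,1)$, so I cannot make $\gamma$ large; this forces the additive (rather than purely multiplicative) nature of the bound. Take $\gamma=\min\{1/2,\ \text{something}\}$; with $\gamma=1/2$, the bound reads $\Pr(\mathcal D_n\le E_n(p)/2)\le 3\sqrt n\, e^{-E_n(p)/8}$, which is $\le 3/\sqrt n$ only when $E_n(p)\ge 8\log n$. When $E_n(p)<8\log n$ (equivalently $E_n(p)/2<4\log n$), the claimed inequality $\tfrac12 E_n(p)-4\log n\le \mathcal D_n$ has a nonpositive left-hand side and hence holds trivially (since $\mathcal D_n\ge 0$, in fact $\ge 1$ when the sample is nonempty). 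So the lower bound splits cleanly: trivial when $E_n(p)$ is small, and a direct Chernoff application with $\gamma=1/2$ when $E_n(p)\ge 8\log n$, where $3\sqrt n\, e^{-E_n(p)/8}\le 3\sqrt n\, e^{-\log n}=3/n^{1/2}$. Finally I would union-bound the upper-tail failure event ($\le 3/\sqrt n$) and the lower-tail failure event ($\le 3/\sqrt n$) to get total failure probability $\le 6/\sqrt n$, matching the statement.

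The main obstacle I anticipate is purely a matter of bookkeeping the constants so that the additive slacks come out as $3\log n$ and $4\log n$ and the exponents are comfortably large enough to beat $1/\sqrt n$; there is some tension because the upper-tail bound allows $\gamma>1$ (multiplicative-to-additive conversion is easy) while the lower-tail bound restricts $\gamma<1$ (so one must rely on triviality when $E_n(p)$ is small). Once the regime split is set up correctly — large $E_n(p)$: direct Chernoff with a fixed $\gamma$; small $E_n(p)$: either a $\gamma$ scaled like $(\log n)/E_n(p)$ for the upper bound, or triviality for the lower bound — the rest is routine algebra. I would present the two bounds as two short paragraphs inside the proof, each ending with the union bound remark.
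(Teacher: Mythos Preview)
Your proposal is correct and, once you settle on the final choices $\gamma=\max\{1,(3\log n)/E_n(p)\}$ for the upper tail and $\gamma=1/2$ (with triviality when $E_n(p)<8\log n$) for the lower tail, it is exactly the paper's proof: the same regime split at $3\log n$ and $8\log n$, the same applications of Theorem~\ref{thm:exp_conc}, and the same union bound to reach $6/\sqrt n$. The exploratory detours through thresholds $9\log n$ and $6\log n$ in your first paragraph are unnecessary but harmless; your final version matches the paper line for line.
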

\begin{proof}
To establish the lower bound, note that if $E_n(p)\ge 3\log n$,
setting $\gamma = 1$  in Theorem~\ref{thm:exp_conc} yields
\[
\Pr\Paren{ \mathcal D_n\ge 2E_n(p)+3\log n}
\le
\Pr\Paren{ \mathcal D_n\ge 2E_n(p)}
\le
3\sqrt ne^{-E_n(p)/3}
\le
\frac{3}{\sqrt n},
\]
else if $E_n(p)< 3\log n$, setting $\gamma=(3\log n)/E_n(p)$ yields
\[
\Pr\Paren{ \mathcal D_n\ge 2E_n(p)+3\log n}
\le
\Pr\Paren{ \mathcal D_n\ge E_n(p)+3\log n}
\le
3\sqrt n e^{-(3\log n)/3}
=
\frac{3}{\sqrt n}.
\]
As for the upper bound, if $E_n(p)\ge 8\log n$,
\[
\Pr\Paren{  \mathcal D_n+4\log n\le \Paren{1-\frac{1}{2}}E_n(p)}
\le 
\Pr\Paren{  \mathcal D_n \le \Paren{1-\frac{1}{2}}E_n(p)}
\le
3\sqrt n e^{-\mu/8}
\le
\frac{3}{\sqrt n},
\]
and for any $E_n(p)< 8\log n$, 
\[
\Pr\Paren{ \mathcal D_n+4\log n\le \Paren{1-\frac{1}{2}}E_n(p)}
\le 
\Pr\Paren{ \mathcal D_n< 0}
=0
\le
\frac{3}{\sqrt n}.
\]
Combining these tail bounds through the union bound completes the proof.
\end{proof}
In addition to the above, we establish an Efron-Stein type inequality.
\begin{Theorem}\label{thm:var_conc}
For any distribution $p$ and $\mathcal D_n\sim p$, 
\[
\Var(\mathcal D_n) \le \EE[\mathcal D_n]. 
\]
\end{Theorem}
\begin{proof}

First, note that for any $j,t\in [n]$ and $j\not=t$,
\begin{align*}
C_{j, t}
&:=
\text{Cov}\Paren{\indic_{\varphi_j(X^n)>0}, \indic_{\varphi_t(X^n)>0}}\\
&=
\Pr\Paren{\varphi_j(X^n), \varphi_t(X^n)>0}
-\Pr\Paren{\varphi_j(X^n)>0}\cdot \Pr\Paren{\varphi_t(X^n)>0}\\
&=
\Paren{\Pr\Paren{\varphi_j(X^n)>0 \vert \varphi_t(X^n)>0}
-\Pr\Paren{\varphi_j(X^n)>0}}\cdot \Pr\Paren{\varphi_t(X^n)>0}\\
&=
\Paren{\Pr\Paren{\varphi_j(X^n)>0 \vert \varphi_t(X^n)>0}
-\Pr\Paren{\varphi_j(X^n)>0 \vert \varphi_t(X^n)=0}}\\
&\quad \cdot \Pr\Paren{\varphi_t(X^n)=0}\cdot \Pr\Paren{\varphi_t(X^n)>0}\\
&\le
0
\end{align*}

Therefore, the variance of the profile dimension $\mathcal D_n$ satisfies
\begin{align*}
\Var\Paren{\mathcal D_n}
&=
\Var\Paren{\sum_{i=1}^n \indic_{\varphi_i(X^n)>0}}\\
&\le
\sum_{i=1} \Var\Paren{\indic_{\varphi_i(X^n)>0}}
+\sum_{j\not=t} \text{Cov}\Paren{\indic_{\varphi_j(X^n)>0}, \indic_{\varphi_t(X^n)>0}}\\
&\le
\sum_{i=1}\E\Br{\indic_{\varphi_i(X^n)>0}}
+\sum_{j\not=t} C_{j, t}\\
&\le 
\sum_{i=1}\E\Br{\indic_{\varphi_i(X^n)>0}}\\
&=
\E\Br{\mathcal D_n}. \qedhere
\end{align*} 
\end{proof}

\subsection{Profile Entropy and Its Connection to Dimension} 

For a distribution $p\in \Delta_\cX$ and sampling parameter $n$,
the \emph{profile entropy} with respect to 
$(p, n)$ is the entropy $H(\Phi^n)$ of the sample profile $\Phi^n\sim p$.
By Shannon's source coding theorem, 
profile entropy $H(\Phi^n)$ is the \emph{information-theoretic
limit} of the minimal expected codeword length (MECL) for the lossless compression of the sample profile. Hence, characterizing its value is of fundamental importance.
But as one may expect, the distribution of $\Phi^n$ is sophisticated and over a 
large alphabet. 

More concretely, by the
formula of~\cite{hardy1918asymptotic},
the number $\mathbb{P}(n)$ of integer partitions of $n$, 
which happens to equal to the number
of length-$n$ profiles, satisfies the equation
\[
\log \mathbb{P}(n)= 2\pi\sqrt{\frac{n}{6}}(1+o(1)).
\]
Despite the complex statistical dependency landscape
and the exponentially large alphabet size,
below we establish that for any distribution and sample size, 
the profile entropy is often of the same order as the profile size, with high probability. 
Specifically, 
\setcounter{Theorem}{0}
\begin{Theorem}\label{thm:entro_eql_dim}
For any distribution $p\in \Delta_\cX$ and $\Phi^n\sim p$,
with probability at least $1-\mathcal{O}(1/\sqrt n)$, \vspace{-0.35em}
\[
\C{H\!(\Phi^n)} = \tilde{\Theta}(\mathcal D(\Phi^n)), \vspace{-0.35em}
\]
where the notation $\tilde{\Theta}(\boldsymbol \cdot)$ hides logarithmic factors of $n$.
\vspace{-0.5em}
\end{Theorem}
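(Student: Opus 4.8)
The plan is to prove the two inequalities $\C{H(\Phi^n)} = \tilde{O}(\mathcal D(\Phi^n))$ and $\C{H(\Phi^n)} = \tilde{\Omega}(\mathcal D(\Phi^n))$ separately, each holding with probability $1 - O(1/\sqrt n)$.

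For the upper bound $H(\Phi^n) = \tilde{O}(\mathcal D_n)$ in the high-probability sense, the natural route is an encoding argument combined with concentration. First I would recall the block-compression observation from Section~\ref{sec:pro_compression}: any profile of length $n$ can be described by its set of multiplicity-prevalence pairs, so it is determined by at most $2(\log n)\mathcal D(\Phi^n)$ nats. However, to turn a pointwise code-length bound into a bound on the Shannon entropy $H(\Phi^n)$ one cannot simply take expectations, because the event on which $\mathcal D_n$ is small has to be related to the bulk of the entropy. The clean way is to use Theorem~\ref{thm:exp_conc}: $\mathcal D_n$ concentrates around $E_n(p)$ with exponential tails, so $\E[\mathcal D_n] = \tilde\Theta(E_n(p))$ and, more importantly, the contribution to $H(\Phi^n)$ from the tail event $\{\mathcal D_n \gg E_n(p) + C\log n\}$ is negligible (its probability is $\le 3\sqrt n \, e^{-\Omega(E_n(p))}$, and the alphabet of profiles has log-size only $O(\sqrt n)$, so the tail contributes $O(\sqrt n \cdot \sqrt n \, e^{-\Omega(E_n(p))})$, which is small once $E_n(p) \gtrsim \log n$; the case $E_n(p) \lesssim \log n$ is handled directly). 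Hence $H(\Phi^n) \le \tilde O(E_n(p))$, and again by Theorem~\ref{thm:exp_conc} applied in the other direction, $E_n(p) = \tilde O(\mathcal D_n)$ with probability $1 - O(1/\sqrt n)$. Chaining these gives the upper bound.

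For the lower bound $H(\Phi^n) = \tilde\Omega(\mathcal D_n)$ with high probability, the idea is that a profile of dimension $D$ genuinely carries $\Omega(D / \text{polylog})$ bits of entropy on average, because the prevalences are not concentrated on a single configuration. The cleanest handle is again $E_n(p)$: I would show $H(\Phi^n) \ge \tilde\Omega(E_n(p))$ by exhibiting, within the profile, roughly $E_n(p)/\text{polylog}(n)$ (near-)independent binary-ish coordinates. Concretely, using the Poissonized model where multiplicities $\nu_x$ are independent, one can find a collection of disjoint "witness" multiplicity values, each of which is attained-or-not by some symbol with probability bounded away from $0$ and $1$; by negative association and the concentration in Theorem~\ref{thm:exp_conc}, the number of such witnesses is $\tilde\Omega(E_n(p))$, and the joint entropy of these indicators lower-bounds $H(\Phi^n)$ up to the small de-Poissonization loss (the $N = n$ conditioning costs only $O(\log n)$ in entropy). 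Combined with $E_n(p) = \tilde\Omega(\mathcal D_n)$ from the lower tail of Theorem~\ref{thm:exp_conc}, this yields $H(\Phi^n) = \tilde\Omega(\mathcal D_n)$ with probability $1 - O(1/\sqrt n)$; the ceiling operator $\C{\cdot}$ absorbs the degenerate case where $H(\Phi^n)$ is tiny.

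The main obstacle I anticipate is the lower bound, specifically extracting the near-independent indicators cleanly: the events $\{\varphi_\mu(X^n) > 0\}$ over different $\mu$ are negatively associated rather than independent, and one needs that each selected $\mu$ individually has indicator probability bounded away from $0$ and $1$ so that its marginal entropy is $\Omega(1)$ — which requires a careful choice of which multiplicity values $\mu$ to use (those in the "active" range where some $np_x$ sits), and controlling how many there are via $E_n(p)$. Turning negative association plus these marginal entropy lower bounds into a genuine lower bound on the \emph{joint} entropy $H(\Phi^n)$ (as opposed to the sum of marginals, which would be an upper bound) is the delicate point; one approach is to lower-bound $H(\Phi^n)$ by the entropy of a coarsened statistic (e.g. the vector of indicators on well-separated multiplicity values, which are then close to independent) and invoke subadditivity in the favorable direction. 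The de-Poissonization step and the $\C{\cdot}$ bookkeeping are routine by comparison.
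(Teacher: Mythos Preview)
Your upper-bound plan matches the paper's: block-compress to get code-length $O(\log n)\cdot \mathcal D(\Phi^n)$, take expectation via the concentration of $\mathcal D_n$ around $E_n(p)$ (Theorem~\ref{thm:exp_conc}), and invoke Shannon's source coding theorem. That part is fine.

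The lower bound is where your proposal has a real gap. You correctly identify the obstacle---negative association of the indicators $\{\varphi_\mu>0\}$ gives upper bounds on joint moments, not lower bounds on entropy---but the proposed fix (``invoke subadditivity in the favorable direction'' after coarsening to well-separated multiplicity values) does not work: subadditivity $H(X_1,\ldots,X_k)\le \sum_i H(X_i)$ always goes the wrong way for a lower bound. What is actually needed is the chain rule $H(X_1,\ldots,X_k)=\sum_i H(X_i\mid X_{<i})$ together with a lower bound on each \emph{conditional} term, and this is where the work lies. Merely spacing out the multiplicity values $\mu$ does not make the indicators $\{\varphi_\mu>0\}$ close to independent, because many symbols $x$ with different $p_x$ can contribute to the same $\varphi_\mu$, and the same symbol can contribute to many nearby $\varphi_\mu$'s.

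The paper's route is substantially more structured. It passes through the intermediate quantity $H^{\mathcal S}_n(p)=\sum_j\min\{p_{I_j},j\log n\}$ built on the partition $I_j=((j-1)^2,j^2]\cdot(\log n)/n$ of the \emph{probability} axis (not the multiplicity axis). After Poissonization, it conditions on a high-probability event $A$ (each $\mu_x$ lands in the $I_j$-block corresponding to $p_x$) so that the truncated profiles $(\varphi_i)_{i\in nI_j}$ for $j$ in a fixed residue class modulo a large constant $c$ become genuinely independent; this costs only $O(1/\sqrt n)$ in the entropy bookkeeping. Then, within a single block $I_j$, it applies the chain rule over $i$ and lower-bounds each $H(\varphi_i\mid \varphi_{j_s},\ldots,\varphi_{i-1})$ by further conditioning on the indicators $\indic_{\mu_x\in[j_s,i-1]}$, reducing to the entropy of a sum of independent Bernoullis; the key tool there is a monotonicity lemma of Hillion (entropy of $\sum_t \mathrm{Bern}(q_t)$ is monotone in each $q_t\le 1/2$), which lets one compare to $H(\bin(m,q))$ and get an $\Omega(1)$ bound per term over $\Theta(j\sqrt{\log n})$ terms. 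Summing over $j$ recovers $\tilde\Omega(H^{\mathcal S}_n(p))=\tilde\Omega(E_n(p))$. None of these ingredients---the $I_j$ partition, the conditioning to obtain true independence across blocks, the chain rule within a block, or the Bernoulli-sum entropy monotonicity---appear in your sketch, and I do not see how to complete the lower bound without something of comparable strength.
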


We decompose the proof of the theorem into three steps. 
First, we show that $\C{H\!(\Phi^n)} \le \tilde{\Theta}(\mathcal D(\Phi^n))$ with high probability, 
which is a simple consequence of Shannon's source coding theorem and 
Theorem~\ref{thm:exp_conc} (which shows that $\mathcal D(\Phi^n)$ highly 
concentrates around its expectation). 
Then, we introduce a simple quantity ${H^{\mathcal S}_n(p)}$  
 that approximates the expectation of 
 $\mathcal D(\Phi^n)$ to within logarithmic factors of $n$.
Finally, leveraging this approximation guarantee, we establish the other direction of the theorem. This step is more involved due to the aforementioned complications. 

\subsubsection*{A. Bounding Profile Entropy by Its Dimension}
By the tail bounds (Theorem~\ref{thm:exp_conc}) 
and trivial lower bound of $1$ on the profile dimension,  with probability at least $1-\mathcal{O}(1/\sqrt n)$,
the expectation of $\mathcal D(\Phi^n)$ satisfies 
\[
\EE[\mathcal D(\Phi^n)]\le \tilde{\mathcal{O}}(\mathcal D(\Phi^n)).
\]
By the block profile compression algorithm presented in Section~\ref{sec:pro_compression} of the main paper, 
storing profile $\Phi^n\sim p$ losslessly takes \vspace{-0.75em}
\[
\mathcal{O}(\log n)\cdot\EE[\mathcal D(\Phi^n)]
+\mathcal{O}\Paren{\frac1{\sqrt n}}\cdot \log \mathbb{P}(n)
\le
\mathcal{O}(\log n)\cdot\EE[\mathcal D(\Phi^n)]
\]
nats space in expectation. By Shannon's source coding theorem, 
the expected space to losslessly storing a random variable 
is at least its entropy. Hence, with probability at least $1-\mathcal{O}(1/\sqrt n)$,
\[
H(\Phi^n)\le \mathcal{O}(\log n)\cdot \EE[\mathcal D(\Phi^n)]
\le \tilde{\mathcal{O}}(\mathcal D(\Phi^n)).
\]
Again, noting that $\mathcal D(\Phi^n)\ge 1$ completes the proof. 

\subsubsection*{B. Simple Approximation Formula for Profile Dimension}
It remains to show that $\C{H\!(\Phi^n)} \ge \tilde{\Omega}(\mathcal D(\Phi^n))$,
with high probability. 
To proceed further, we note that $\mathcal D(\Phi^n)=\mathcal D_n \sim p$ 
is often close to $E_n(p)$, the expectation of its Poissonized version $\tilde{\mathcal D}_N$, with an exponentially small deviation probability. 
Hence, to approximate $\mathcal D_n$, 
it suffices to accurately compute $E_n(p)$. 
By independence and the linearity of expectations,
\[
E_n(p)
=
\EE[\tilde{\mathcal D}_N] 
= 
\sum_{i=1}^n \Paren{1-\prod_{x\in \cX} \Paren{1- e^{-np_x} \frac{(np_x)^i}{i!}}}.
\]
The expression is exact but does not relate to $p$ in a simple manner. 
For an intuitive approximation, we 
partition the unit interval into a sequence of ranges,
\[
I_j:=\left((j-1)^2\frac{\log n}{n},\ j^2 \frac{\log n}{n}\right], 1\le j\le \sqrt{\frac{n}{\log n}},
\]
denote by $p_{I_j}$ the number of probabilities $p_x$ belonging to $I_j$, 
and relate $E_n(p)$ to an induced shape-reflecting quantity, \vspace{.25em}
\[
H^\mathcal{S}_n(p):= \sum_{j\ge 1} \min\Brace{p_{I_j}, j\cdot \log n}, 
\vspace{0.5em}
\]
the sum of the effective number of probabilities lying within each range~\cite{hao2019doubly}. 
To compute ${H^\mathcal{S}_n(p)}$, we simply count the number of probabilities in each $I_j$. 
Our main result shows that $H^\mathcal{S}_n(p)$ 
well approximates $E_n(p)$ over the entire $\Delta_\cX$, 
up to logarithmic factors of $n$. 
\setcounter{Theorem}{7}
\begin{Theorem}\label{thm:hs_app_en}
For any $n\in \mathbb Z^+$ and $p\in \Delta_\cX$, 
\[
\frac{1}{\sqrt{\log n}} \cdot \Omega(H^\mathcal{S}_n(p))
\le 
E_n(p)
\le
\mathcal{O}(H^\mathcal{S}_n(p)). 
\]
\end{Theorem}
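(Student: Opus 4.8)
The plan is to pass from $E_n(p)$ to the simpler quantity $\sum_{i=1}^n \min\{1,\lambda_i\}$, where $\lambda_i := \E[\varphi_i(X^N)] = \sum_{x\in\cX} e^{-np_x}(np_x)^i/i!$ is the expected prevalence of multiplicity $i$ under Poisson sampling, and then compare this sum directly with $H^{\mathcal{S}}_n(p)$. For the first step, recall that under Poisson sampling the multiplicities $\nu_x$ are independent, so $\varphi_i(X^N)=\sum_x \indic_{\nu_x=i}$ is a sum of independent indicators; hence $\Var(\varphi_i)\le\E[\varphi_i]=\lambda_i$. Writing $q_i:=\Pr(\varphi_i(X^N)>0)$, so that $E_n(p)=\sum_{i=1}^n q_i$, Markov's inequality gives $q_i\le\min\{1,\lambda_i\}$, while the second-moment (Paley--Zygmund) bound gives $q_i\ge \lambda_i^2/(\lambda_i+\lambda_i^2)=\lambda_i/(1+\lambda_i)\ge\tfrac12\min\{1,\lambda_i\}$. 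Thus $\tfrac12\sum_{i=1}^n\min\{1,\lambda_i\}\le E_n(p)\le\sum_{i=1}^n\min\{1,\lambda_i\}$, and it remains to sandwich $\sum_{i=1}^n\min\{1,\lambda_i\}$ between $\Omega(H^{\mathcal{S}}_n(p)/\sqrt{\log n})$ and $\mathcal{O}(H^{\mathcal{S}}_n(p))$. Two facts about $\Poi(\lambda)$ drive everything: (i) \emph{concentration} --- there is an absolute constant $C$ with $\Pr(|\Poi(\lambda)-\lambda|>C\sqrt{\lambda\log n}+C\log n)\le n^{-10}$ for all $\lambda\le n$; and (ii) \emph{anti-concentration / local limit} --- $\max_i\Pr(\Poi(\lambda)=i)=\Theta(1/\sqrt{\max\{1,\lambda\}})$.

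For the upper bound I would partition the multiplicity values $\{1,\dots,n\}$ into the blocks $n\cdot I_j=((j-1)^2\log n,\,j^2\log n]$ and bound the contribution of each block two ways. On one hand $\sum_{i\in n\cdot I_j}\min\{1,\lambda_i\}\le|n\cdot I_j|\le 2j\log n$. On the other hand $\sum_{i\in n\cdot I_j}\lambda_i=\sum_x\Pr(\nu_x\in n\cdot I_j)$, and by concentration (i) a symbol with $p_x\in I_{j'}$ contributes at most $n^{-10}$ to this sum unless $|j'-j|\le c$ for a suitable absolute constant $c$ (chosen so that a single symbol's concentration window spans at most $c$ consecutive blocks); hence $\sum_{i\in n\cdot I_j}\min\{1,\lambda_i\}\le\sum_{|j'-j|\le c}p_{I_{j'}}+n^{-8}$. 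Combining, block $j$ contributes at most $\sum_{|j'-j|\le c}\min\{2j\log n,\,p_{I_{j'}}\}+n^{-8}$; summing over $j$, re-indexing $j'\mapsto j$ (using $j\log n\asymp j'\log n$ when $|j'-j|\le c$), and absorbing the total $n^{-8}\cdot\sqrt{n/\log n}$ residue into $H^{\mathcal{S}}_n(p)\ge 1$, yields $\sum_i\min\{1,\lambda_i\}\le\mathcal{O}(H^{\mathcal{S}}_n(p))$.

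For the lower bound I would again work block by block. Fix $j\ge 2$, let $\lambda_i^{(j)}:=\sum_{x:\,p_x\in I_j}\Pr(\nu_x=i)$, and let $W_j$ be the union of the concentration windows of the band-$j$ symbols, so $|W_j|=\mathcal{O}(j\log n)$. By (i), $\sum_{i\in W_j}\lambda_i^{(j)}\ge(1-n^{-10})\,p_{I_j}\ge p_{I_j}/2$, while by (ii) each summand satisfies $\Pr(\nu_x=i)=\mathcal{O}(1/(j\sqrt{\log n}))$, so $\max_{i}\lambda_i^{(j)}\le\mathcal{O}(p_{I_j}/(j\sqrt{\log n}))$. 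The elementary inequality $\sum_i\min\{1,a_i\}\ge \sum_i a_i/(1+\max_i a_i)$ then gives
\begin{align*}
\sum_{i\in W_j}\min\{1,\lambda_i^{(j)}\}
&\ge \frac{p_{I_j}/2}{1+\mathcal{O}(p_{I_j}/(j\sqrt{\log n}))}
\ge \Omega\!\left(\min\{p_{I_j},\,j\sqrt{\log n}\}\right)\\
&\ge \Omega\!\left(\frac{\min\{p_{I_j},\,j\log n\}}{\sqrt{\log n}}\right),
\end{align*}
where the last step splits on whether $p_{I_j}\le j\sqrt{\log n}$. Finally, since $\lambda_i\ge\lambda_i^{(j)}$ for every $j$ and every $i$ lies in only $\mathcal{O}(1)$ of the sets $W_j$, we obtain $\sum_i\min\{1,\lambda_i\}\ge\Omega(1)\sum_j\sum_{i\in W_j}\min\{1,\lambda_i^{(j)}\}\ge\Omega(H^{\mathcal{S}}_n(p)/\sqrt{\log n})$. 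The degenerate block $j=1$ (tiny probabilities, where $\Poi(\lambda)$ has $\lambda<\log n$ and the $\Theta(1/\sqrt\lambda)$ local bound no longer helps) is handled by a short separate argument, as is the truncation of windows near the endpoint $i=n$.

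The main obstacle is the lower bound: one must exploit simultaneously that a symbol's multiplicity \emph{cannot} concentrate its mass on fewer than $\sim\sqrt{\lambda}$ values (the Poisson local limit theorem) and yet \emph{does not} spread beyond $\sim\sqrt{\lambda\log n}$ values, and keep the per-block accounting honest despite the overlap of the windows $W_j$. The $\sqrt{\log n}$ slack in the statement is precisely this concentration--anti-concentration gap, and it is genuinely attained when a block $I_j$ carries $\asymp j\log n$ nearly equal probabilities, so no essentially better bound is available along this route.
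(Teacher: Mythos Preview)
Your proposal is correct and takes a genuinely different route from the paper. The paper works directly with $q_i = 1-\prod_x(1-\Pr(\nu_x=i))$: for the lower bound it introduces sliding windows $I_j^i = i/n + [-j,j]\sqrt{\log n}/n$, shows via Stirling-type estimates that any $p_x\in I_j^i\cap I_j$ has $\Pr(\nu_x=i)\ge 1/(9j\sqrt{\log n})$, and then runs a pigeonhole/covering argument (each $p_x\in I_j$ lies in at least $j\sqrt{\log n}$ of the windows while there are fewer than $2j\log n$ windows) to exhibit $\gtrsim j\sqrt{\log n}$ values of $i$ with many contributing symbols, bounding $1-(1-\theta)^m$ directly and splitting on $p_{I_j}\gtrless j\log n$. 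Your Paley--Zygmund reduction $q_i\asymp\min\{1,\lambda_i\}$ is cleaner and more modular: it converts the problem into a purely first-moment statement about $\lambda_i=\sum_x\Pr(\nu_x=i)$, after which Poisson concentration (to localise mass in $W_j$) together with the local-limit bound $\max_i\Pr(\nu_x=i)=\Theta(1/\sqrt{np_x})$ and the elementary inequality $\sum\min\{1,a_i\}\ge(\sum a_i)/(1+\max a_i)$ do all the work. Both routes lose exactly the $\sqrt{\log n}$ ratio between the concentration width $\sim j\log n$ and the anti-concentration width $\sim j\sqrt{\log n}$, so neither can be pushed further along these lines. The paper's approach is more hands-on and avoids the second-moment detour; yours separates the probabilistic input (Paley--Zygmund, Poisson tails, local CLT) from the combinatorial bookkeeping more cleanly. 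One small caution: your deferred ``short separate argument'' for $j=1$ is the one place neither approach is fully honest---the paper itself waves here, asserting that the $j=\mathcal{O}(1)$ contribution to $H^{\mathcal{S}}_n$ is $\mathcal{O}(1)$ when it can be $\log n$, and distributions such as $p_x\equiv 1/n^2$ on $n^2$ symbols show an extra $\sqrt{\log n}$ slack is genuinely incurred on that block; this is harmless for the downstream $\tilde\Theta$ applications but should be stated explicitly rather than deferred.
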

\begin{proof}
The fact that $\mathcal{O}({H^{\mathcal S}_n(p)})$ upperly bounds 
$\EE[\tilde{\mathcal D}_N]$ simply follows by the concentration of 
Poisson variables, and is established in~\cite{hao2019doubly}. 
Below we show that the quantity also serves as a lower bound. 
By construction, for any given sampling parameter $n$, index $j$, 
and symbol $x$ with probability $p_x\in I_j$, the corresponding 
symbol multiplicity $\mu_x\sim \Poi(np_x)$. \vspace{-0.15em} 
Hence, we can express the expectation of 
$\tilde{\mathcal D}_N$ as
\begin{align*}
\EE\Br{\tilde{\mathcal D}_N}
&=\EE\Br{\sum_{i=1}^n \bigvee_x \indic_{\mu_x=i}}\\
&=\sum_{i=1}^n \EE\Br{1-\bigwedge_x \indic_{\mu_x\not=i}}\\
&=\sum_{i=1}^n \Paren{1-\EE\Br{\prod_x \indic_{\mu_x\not=i}}}\\
&=\sum_{i=1}^n \Paren{1-\prod_x \EE\Br{\indic_{\mu_x\not=i}}}\\
&=\sum_{i=1}^n \Paren{1-\prod_x \Paren{1- e^{-np_x} \frac{(np_x)^i}{i!}}}.
\end{align*}
This proves the aforementioned formula. Then,
for every sufficiently large index $j$ and $i\in S_j:=[(j-1)^2, j^2]\log n$, define a sequence of intervals,
\[
I_j^i:=\frac{i}{n}+\Br{-j, j}\frac{\sqrt{\log n}}{n}.
\]

Then for any $i\in S_j$ and $p_x\in I_j^i \cap I_j$, the corresponding Poisson probability satisfies
\begin{align*}
e^{-np_x} \frac{(np_x)^i}{i!}
&=
 e^{-i} \frac{i^i}{i!} \cdot \Paren{e^{i-np_x}\cdot \frac{(np_x)^i}{i^i}}\\
&=
 e^{-i} \frac{i^i}{i!} \cdot \Paren{e^{-(np_x-i)}\cdot\Paren{1+\frac{np_x-i}{i}}^i}\\
&=
 e^{-i} \frac{i^i}{i!} \cdot \exp\Paren{{-(np_x-i)}+i\cdot \log\Paren{1+\frac{np_x-i}{i}}}\\
&\ge 
\frac{1}{3\sqrt{i}}\cdot \exp\Paren{-\frac{2i}{3}\cdot \Paren{\frac{np_x-i}{i}}^2}\\ 
&\ge 
\frac{1}{9\sqrt{i}}
\ge \frac{1}{9j\sqrt{\log n}}.
\end{align*}
Now we analyze the contribution of indices $i\in S_j$ 
to the expected value of $\tilde{\mathcal D}_N$. 
For clarity, we divide our analysis into two cases: 
$p_{I_j}\ge j \log n$ and $p_{I_j}< j \log n$. 

Consider the collection $\mathcal P_j$ of probabilities $p_x\in I_j$, 
and the collection $\mathcal I_j$ of intervals $I_j^i, i\in S_j$. 
By construction, each probability in $\mathcal P_j$ is contained in 
at least $j\sqrt{\log n}$ many intervals in $\mathcal I_j$. 
Hence the total number of probabilities (repeatedly counted) included in $\mathcal I_j$ is at least 
$p_{I_j}\cdot j\sqrt{\log n}$. Note that the number of intervals in $\mathcal I_j$
is less than $2j\log n$. We claim that there exists one (or more) interval $I_j^{i'}\in\mathcal I_j$ 
containing at least $p_{I_j}/(2\sqrt{\log n})$ probabilities. 
By construction, there are at least $j\sqrt{\log n}/2$ neighboring intervals of $I_j^{i'}$ that contain 
at least $p_{I_j}/(4\sqrt{\log n})$ probabilities. 
The contribution of these these intervals to the expected
value of $\tilde{\mathcal D}_N$ is at least ${j\sqrt{\log n}}/{2}$ times
\begin{align*}
1-\Paren{1- \frac{1}{9j\sqrt{\log n}}}^{\frac{p_{I_j}}{4\sqrt{\log n}}}
&\ge 
1-\exp\Paren{\frac{p_{I_j}}{4\sqrt{\log n}}\log \Paren{1- \frac{1}{9j\sqrt{\log n}}}}\\
&\ge
1-\exp\Paren{-\frac{p_{I_j}}{40 j\log n}}\\
&\ge 
\Theta\Paren{\frac{p_{I_j}}{j\log n}},
\end{align*}
where the last step holds if $p_{I_j}\le j\log n$. This yields a lower bound of $\Theta(p_{I_j}/\sqrt{\log n})$.

It remains to consider the $p_{I_j}> j\log n$ case. 
Again, the total number of probabilities included in $\mathcal I_j$ is at least 
$p_{I_j}\cdot j\sqrt{\log n}$. Furthermore, 
each interval $I_j^i$ contains at most $p_{I_j}$ probabilities 
and there are less than $2j\log n$ intervals. 
Therefore, the number of intervals that contain at least 
$j\sqrt{\log n}/4$ probabilities is at least $j\sqrt{\log n}/2$. Otherwise, the number of probabilities included in $\mathcal I_j$ is less than
\[
\frac{j\sqrt{\log n}}{4}\cdot 2j \log n + p_{I_j} \cdot \frac{j\sqrt{\log n}}{2}
\le
p_{I_j}\cdot j\sqrt{\log n},
\]
which leads to a contradiction. Analogously, the contribution of these these intervals to the expected
value of $\tilde{\mathcal D}_N$ is at least ${j\sqrt{\log n}}/{2}$ times
\begin{align*}
1-\Paren{1- \frac{1}{9j\sqrt{\log n}}}^{\frac{j\sqrt{\log n}}{4}}
&\ge 
1-\exp\Paren{\frac{j\sqrt{\log n}}{4} \log \Paren{1- \frac{1}{9j\sqrt{\log n}}}}\\
&\ge
1-\exp\Paren{-\frac{1}{40}}\\
&= 
\Theta\Paren{1},
\end{align*}
which yields a lower bound of $\Theta(j\sqrt{\log n})$ 
on the expected value of $\tilde{\mathcal D}_N$. 

Consolidating the previous results shows that
\[
\EE\Br{\tilde{\mathcal D}_N}
\ge
\frac{1}{\sqrt{\log n}}\cdot \Omega(\sum_{j\ge 1}\min\Brace{p_{I_j}, j\cdot \log n}).\qedhere
\]
\end{proof}

\subsubsection*{C. Bounding Profile Dimension by Its Entropy }
Next, we establish that for any distribution $p\in \Delta_\cX$, $\Phi^n\sim p$, with probability at least $1-\mathcal{O}(1/\sqrt n)$,
\[
\C{H\!(\Phi^n)} \ge \tilde{\Theta}(\mathcal D(\Phi^n)). 
\]
Let $p$ be an arbitrary distribution in $\Delta_\cX$. 
Recall that we partition the interval $(0, 1]$ into a
sequence of sub-intervals,
\[
I_j:=\left((j-1)^2\frac{\log n}{n}, j^2\frac{\log n}{n}\right],\quad 1\le j\le \sqrt{\frac{n}{\log n}},
\]
and denote by $p_{I_j}$ the number of probabilities $p_x$ in $I_j$.

Our current objective is to bound $H(\Phi^n\sim p)$ from below 
by a nontrivial multiple of $H^{\mathcal S}_n(p)$. 
For simplicity of derivations, we will adopt the standard Poisson 
sampling scheme and make the sample size an independent 
Poisson variable $N\sim \Poi(n)$.
For notational simplicity, we will suppress $X^N$ in all the expressions
and write the profile as $\varphi:=\Phi^N$ by slightly abusing the notation.  

Note that the profile can be equivalently expressed as a length-$n$ vector
\[
\varphi = (\varphi _1, \ldots, \varphi _n),
\]
where $\varphi_i$ denotes the number of symbols
appearing exactly $i$ times. 

For a sufficiently large absolute constant $c$, 
decompose $\varphi$ into $c$ parts according to $I_j$ 
such that the $t$-th part ($t = 1,\ldots, c$)
consists of $\varphi_i$'s satisfying $i\in n I_j$ with $j\equiv t \mod c$. 
Since by definition, 
\[
{H^{\mathcal S}_n(p)} = \sum_{j\ge 1} \min\{p_{I_j}, j\cdot \log n\}, 
\]
one of the $c$ parts corresponds to a partial sum of at least ${H^{\mathcal S}_n(p)}/c$. 
Without loss of generality, we assume that it is the second part, i.e., \vspace{-0.5em}
\[
\sum_{j\equiv 1\!\!\!\! \mod c} \!\! \min\{p_{I_j}, j\cdot \log n\}
\ge
\frac{{H^{\mathcal S}_n(p)}}{c}. 
\]
Apply standard Poisson tail probability bounds, e.g., 
\begin{Lemma}
Let $Y$ be a Poisson or binomial random variable with mean value $\lambda$.
Then,
\[
\Pr(X\le \lambda(1-\delta))\le \exp\Paren{-\frac{\delta^2\lambda}2 \lambda},\ \ \forall \delta\in [0,1],
\]
and
\[
\Pr(X\ge \lambda(1+\delta))\le \exp\Paren{-\frac{\delta^2\lambda}{2+2\delta/3}},\ \ \forall \delta\ge 0.
\]
\end{Lemma}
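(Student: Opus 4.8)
The plan is to establish both tail inequalities uniformly for Poisson and binomial variables by the textbook exponential-moment (Chernoff) method, relying on the single observation that in either case the moment generating function is dominated by that of a Poisson variable with the same mean.

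First I would record the bound $\E[e^{tY}]\le\exp(\lambda(e^t-1))$ for every $t\in\mathbb R$. When $Y\sim\Poi(\lambda)$ this holds with equality, directly from the power series of $e^{\lambda(e^t-1)}$; when $Y\sim\bin(m,p)$ with $mp=\lambda$, independence of the Bernoulli summands gives $\E[e^{tY}]=(1-p+pe^t)^m=(1+p(e^t-1))^m$, and the inequality $1+x\le e^x$ upgrades this to $(1+p(e^t-1))^m\le\exp(mp(e^t-1))=\exp(\lambda(e^t-1))$.

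Next, for the upper tail I would fix $\delta\ge0$, apply Markov's inequality to $e^{tY}$ at the optimizing choice $t=\log(1+\delta)\ge0$, and collect
\[
\Pr(Y\ge\lambda(1+\delta))\le e^{-t\lambda(1+\delta)}\,\E[e^{tY}]\le\exp\!\Paren{\lambda\Paren{\delta-(1+\delta)\log(1+\delta)}}.
\]
For the lower tail I would fix $\delta\in[0,1]$ and apply Markov to $e^{tY}$ at $t=\log(1-\delta)\le0$, noting that for $t<0$ the event $\{e^{tY}\ge e^{t\lambda(1-\delta)}\}$ coincides with $\{Y\le\lambda(1-\delta)\}$, which yields
\[
\Pr(Y\le\lambda(1-\delta))\le e^{-t\lambda(1-\delta)}\,\E[e^{tY}]\le\exp\!\Paren{\lambda\Paren{-\delta-(1-\delta)\log(1-\delta)}}.
\]

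It then remains to simplify the two exponents with elementary single-variable estimates: $-\delta-(1-\delta)\log(1-\delta)\le-\delta^2/2$ for $\delta\in[0,1]$, and $\delta-(1+\delta)\log(1+\delta)\le-\delta^2/(2+2\delta/3)$ for $\delta\ge0$. The former follows by setting $h(\delta):=-\delta-(1-\delta)\log(1-\delta)+\delta^2/2$ and checking $h(0)=h'(0)=0$ together with $h''(\delta)=-\delta/(1-\delta)\le0$; the latter is the standard Bennett-type bound, proved the same way by differentiating twice. These two scalar inequalities are the only genuine computation in the argument, and they are entirely routine; everything preceding them is the boilerplate Chernoff derivation. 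I therefore do not anticipate any substantive obstacle — the one point that requires a little care is getting the direction of the inequality right in the lower-tail Markov step, since passing through $e^{tY}$ with $t<0$ reverses the relevant event.
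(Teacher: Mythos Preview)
Your argument is the standard Chernoff derivation and is correct; the paper itself does not prove this lemma at all but merely states it as a known tail bound and applies it, so there is nothing to compare against. (As a side remark, the statement as printed in the paper carries what are evidently typos --- $X$ in place of $Y$, and an extra factor of $\lambda$ in the first exponent --- and your proof correctly targets the intended bound $\exp(-\delta^2\lambda/2)$.)
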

For any $j\equiv 1\!\!\mod c$ and with probability at least $1-1/n^4$,
one can express the truncated profile $(\varphi_i)_{i\in n I_j}$ over $I_j$
as a function of $\mu_x$ for $x$ satisfying $np_x\in I_{j'}, j'\in (j-c/2, j+c/2)$. 

Basically, this says that for every $x$, the number of its appearance 
is not too far away from the expected value. 
By the union bound, this is true for all $j\equiv 1\!\!\mod c$ 
with probability at least $1-1/n^3$, as $j$ can take only $n$ possible values.
Denote the last event by $A$. 

To proceed, we recall the
formula of~\cite{hardy1918asymptotic} on
the number $\mathbb{P}(n)$ of integer partitions of $n$, 
which happens to equal to the number
of length-$n$ profiles: 
\[
\log \mathbb{P}(n)= 2\pi\sqrt{\frac{n}{6}}(1+o(1)).
\]
Below, we will use a weaker version that works for any $n$:
\[
\log \mathbb{P}(n)\le \sqrt{3{n}}.
\]

Then, conditioning on $A$, the truncated profiles $(\varphi_i)_{i\in n I_j}$ for 
$j\equiv 1\!\!\mod c$ are independent. Since conditioning reduces entropy,
\begin{align*}
H(\varphi)
&\ge 
H((\varphi_i)_{i\in n I_j, j \equiv 1\!\!\!\!\!\mod\! c})\\
&\ge 
H((\varphi_i)_{i\in n I_j, j \equiv 1\!\!\!\!\!\mod\! c}|\indic_{A})\\
&\ge 
H((\varphi_i)_{i\in n I_j, j \equiv 1\!\!\!\!\!\mod\! c}|\indic_{A}=1)\cdot \Pr(A)\\
&=
\sum_{j \equiv 1\!\!\!\!\!\mod\! c}H((\varphi_i)_{i\in n I_j}|\indic_{A}=1)\cdot \Pr(A)\\
&=
\sum_{j \equiv 1\!\!\!\!\!\mod\! c}H((\varphi_i)_{i\in n I_j}|\indic_{A})
-\sum_{j\equiv 1\!\!\!\!\!\mod\! c}H((\varphi_i)_{i\in n I_j}|\indic_{A}=0)\cdot (1-\Pr(A))\\
&\ge
\sum_{j \equiv 1\!\!\!\!\!\mod\! c}(H((\varphi_i)_{i\in n I_j})-H(\indic_{A}))
-\frac{1}{n^3}\sum_{j\equiv 1\!\!\!\!\!\mod\! c}H((\varphi_i)_{i\in n I_j}|\indic_{A}=0)\\
&\ge
-n H(\indic_{A})
+\sum_{j \equiv 1\!\!\!\!\!\mod\! c}H((\varphi_i)_{i\in n I_j})
-\frac{1}{n^3}\cdot n \cdot \log(\exp(\Theta(\sqrt n)))\\
&=
-\mathcal{O}\Paren{\frac{1}{\sqrt n}}+\sum_{j \equiv 1\!\!\!\!\!\mod\! c}H((\varphi_i)_{i\in n I_j}),
\end{align*}
where the third last step follows by 
\[
H(X|Y)=H(X)-I(X,Y)=H(X)-H(Y)+H(Y|X)\ge H(X)-H(Y);
\] 
the second last follows by $H(X)\le \log k$ for any $X$ with a support size of $k$, and
the fact that there are at most $\exp(3\sqrt m)$ many profiles of length $m$, as we explained above;
and the last step follows by the elementary inequality 
\[
H(\Bern(\theta))\le 2(\log 2)\sqrt{\theta(1-\theta)},\ \forall \theta\in[0,1].
\] 
Our new objective is to bound $H((\varphi_i)_{i\in n I_j})$ from below. We will find a 
sub-interval $I_j^s$ of $I_j$ and bound $H((\varphi_i)_{i\in n I_j^s})$ in the rest of the section, since \vspace{-0.25em}
\[
H((\varphi_i)_{i\in n I_j})\ge H((\varphi_i)_{i\in n I_j^s}).
\]
For all $j\equiv 1\!\!\mod c$, our lower bound is simply 
\[
H((\varphi_i)_{i\in n I_j^s})\ge  
\Omega\Paren{\frac{1}{\sqrt{\log n}}\min\Brace{p_{I_j}, j\cdot \log n}},
\]
which, together with 
$\sum_{j\equiv 1\!\!\! \mod c}  \min\{p_{I_j}, j\cdot \log n\}
\ge {H^{\mathcal S}_n(p)}/c$, implies that
\begin{align*}
H(\varphi)
&\ge 
-\mathcal{O}\Paren{\frac{1}{\sqrt n}}+\sum_{j \equiv 1\!\!\!\!\!\mod\! c}H((\varphi_i)_{i\in n I_j})
\ge 
\Omega\Paren{\frac{1}{\sqrt{\log n}}}\cdot T_n.
\end{align*} 
Henceforth, we assume that $j$ is sufficiently large and denote $L_j:=j\sqrt{\log n}$. 

For any $j$ and every integer $i\in S_j:=[(j-1)^2, j^2]\log n$, define a sequence of intervals,
\[
I_j^i:=\frac{i}{n}+\frac{L_j}{n}\Br{-1, 1}.
\]
Then for any $i\in S_j$ and $p_x\in I_j^i \cap I_j$, the corresponding Poisson probability satisfies
\begin{align*}
e^{-np_x} \frac{(np_x)^i}{i!}
&=
 e^{-i} \frac{i^i}{i!} \cdot \exp\Paren{{-(np_x-i)}+i\cdot \log\Paren{1+\frac{np_x-i}{i}}}\\
&\ge 
\frac{1}{3\sqrt{i}}\cdot \exp\Paren{-\frac{2i}{3}\cdot \Paren{\frac{np_x-i}{i}}^2}\\
&\ge 
\frac{1}{9\sqrt{i}}
\ge \frac{1}{9L_j}.
\end{align*}
On the other hand, the following upper bound holds. 
\begin{align*}
e^{-np_x} \frac{(np_x)^i}{i!}
&=
 e^{-i} \frac{i^i}{i!} \cdot \exp\Paren{{-(np_x-i)}+i\cdot \log\Paren{1+\frac{np_x-i}{i}}}\\
&\le 
e^{-i} \frac{i^i}{i!}
\le 
\frac{1}{\sqrt {2\pi i} }
\le
\frac1{2L_j}.
\end{align*}
In other words, for any $p_x, i/n\in I_j$ that differ by at most $L_j/n$,
\[
\Pr(\Poi(np_x)=i)\in\frac{1}{L_j}\Br{\frac{1}{9}, \frac{1}{2}}.
\]
Partition $I_j$ into sub-intervals of equal length $L_j/n$. 
The partition has a size of at most $2\sqrt{\log n}$. 
Assign each probability $p_x\in I_j$ a length-$L_j/n$ interval $I_{p_x}$ centered at $p_x$.
Then, each interval $I_{p_x}$ covers at least one of the sub-intervals in the partition. 
Since there are exactly $p_{I_j}$ intervals $I_{p_x}$, 
one can find a partition sub-interval $I_j^s$ contained in 
at least $p_{I_j}/(2\sqrt{\log n})$ of them.
Denote by $\cX_s$ the collection of symbols corresponding to these intervals. 

Next, we bound from below the entropy of 
the truncated profile $(\varphi_i)_{i\in n I_j^s}$ over $n I_j^s$. 
Denote by $j_s$ the left end point of $n I_j^s$. 
By the chain rule of entropy for multiple random variables,
\[
H((\varphi_i)_{i\in n I_j^s}) 
= 
\sum_{i = j_s}^{j_s+L_j-1} H(\varphi_i|\varphi_{j_s},\ldots, \varphi_{i-1}).
\]
Consider a particular term on the right-hand side with $i\in [j_s, j_s+L_j-1]$. 
By the conditional independence and fact that conditioning reduces entropy,
\begin{align*}
H(\varphi_i|\varphi_{j_s},\ldots, \varphi_{i-1})
&\ge 
H(\varphi_i| \varphi_{j_s},\ldots, \varphi_{i-1}; \indic_{j_s\le \mu_x \le i-1}, x\in \cX)
\\
&=
H(\varphi_i| \indic_{j_s\le \mu_x \le i-1}, x\in \cX)
\\
&=
H(\varphi_i| \indic_{j_s\le \mu_x \le i-1}, x\in \cX_s; 
\indic_{j_s\le \mu_x \le i-1}, x\not \in  \cX_s)
\end{align*} 
To characterize the condition, 
we define a random variable
\[
K_i^s:= \sum_{x\in \cX_s}  \indic_{j_s\le \mu_x \le i-1}.
\]
Note that 
$\EE[\indic_{j_s\le \mu_x \le i-1}] 
= \sum_{t = j_s}^{i-1} \Pr(\Poi(np_x)=t)
\le (i-j_s)/(2L_j)$,
which is at most $1/10$ for $i\le j_s+L_j/5$. 
The following lemma transforms this into a high-probability statement. 
\begin{Lemma}
Let $Y_i, i\in [1,m]$ be independent indicator random variables. 
Let $Y:=\sum_i Y_i$ denote their sum and 
$\lambda:= \EE[Y]$ denote the expected sum. 
Then for $c>0$, we have
\[
\Pr(Y\ge \lambda(1+c)) \le \exp(-\lambda c^2/(2+2c/3)). 
\]
\end{Lemma}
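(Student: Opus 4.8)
The plan is to run the standard Chernoff bounding argument, which is essentially the special case of the computation already carried out in the proof of Theorem~\ref{thm:exp_conc}: there the summands $G_i$ were only negatively associated, whereas here the $Y_i$ are genuinely independent, so the same chain of inequalities goes through verbatim and in fact more easily. Write $Y_i\sim\Bern(p_i)$, so that $\lambda=\sum_{i=1}^m p_i$, and fix an arbitrary $t>0$. By Markov's inequality applied to the nonnegative random variable $e^{tY}$,
\[
\Pr\Paren{Y\ge \lambda(1+c)}=\Pr\Paren{e^{tY}\ge e^{t(1+c)\lambda}}\le \frac{\EE[e^{tY}]}{e^{t(1+c)\lambda}}.
\]

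First I would bound the moment generating function: by independence, followed by the identity $\EE[e^{tY_i}]=1+p_i(e^t-1)$ and the inequality $1+x\le e^x$,
\[
\EE[e^{tY}]=\prod_{i=1}^m\EE\Br{e^{tY_i}}=\prod_{i=1}^m\Paren{1+p_i(e^t-1)}\le \prod_{i=1}^m \exp\Paren{p_i(e^t-1)}=\exp\Paren{(e^t-1)\lambda},
\]
which yields $\Pr\Paren{Y\ge \lambda(1+c)}\le \exp\Paren{(e^t-1)\lambda-t(1+c)\lambda}$ for every $t>0$. Next I would optimize over the free parameter: the exponent $\lambda\Paren{(e^t-1)-t(1+c)}$ is minimized at $t=\log(1+c)$, which is legitimate since $c>0$ forces $t>0$, giving
\[
\Pr\Paren{Y\ge \lambda(1+c)}\le \exp\Paren{-\lambda\Paren{(1+c)\log(1+c)-c}}.
\]
Finally I would invoke the elementary scalar inequality $(1+c)\log(1+c)-c\ge c^2/(2+2c/3)$, valid for all $c\ge 0$; substituting it into the last display gives exactly the claimed bound. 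This is precisely the ``standard simplification'' already referenced in the proof of Theorem~\ref{thm:exp_conc}.

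The only part that is not purely mechanical is this last scalar inequality, and it is the step I would treat most carefully. A clean way to dispatch it is to set $g(c):=(1+c)\log(1+c)-c-c^2/(2+2c/3)$, observe $g(0)=0$, and show $g'(c)\ge 0$ for all $c\ge 0$; after clearing the denominator this reduces to an inequality comparing $\log(1+c)$ with a rational function of $c$, which can be settled by one further differentiation (or a short Taylor-remainder estimate). Since this is a textbook fact underlying the multiplicative-to-additive conversion of Chernoff bounds, I anticipate no genuine obstacle: the lemma is standard and the remaining work is bookkeeping.
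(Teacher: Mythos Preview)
Your proposal is correct and complete: this is the textbook Chernoff--Hoeffding derivation, and indeed the paper does not supply its own proof of this lemma---it is stated as a standard result and used directly. Your observation that the argument is the independent-summand specialization of the negative-association computation in the proof of Theorem~\ref{thm:exp_conc} is apt, and the final scalar inequality $(1+c)\log(1+c)-c\ge c^2/(2+2c/3)$ is exactly the standard simplification the paper alludes to there.
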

Below we consider only $i\le j_s+L_j/5$.
Note that $c/(2+2c/3)$ is increasing for $c>0$.

Since $\EE[K_i^s]
=
\sum_{x\in \cX_s} 
\EE[\indic_{j_s\le \mu_x \le i-1}]
\le |\cX_s|/10$, 
\[
\Pr(K_i^s\ge |\cX_s|/2) \le \exp(-36/35)<1/2. 
\] 
where we set $c=4$ in the above lemma and assume that $|\cX_s|\ge 3$ 
(assuming only $|\cX_s|\ge 1$, the upper bound becomes $3/4$). 
Recall that
\begin{align*}
H(\varphi_i|\varphi_{j_s},\ldots, \varphi_{i-1})
&\ge 
H(\varphi_i| \indic_{j_s\le \mu_x \le i-1}, x\in \cX_s; 
\indic_{j_s\le \mu_x \le i-1}, x\not \in  \cX_s)\\
&=
\sum_{(c_x)_{x\in \cX}\in\{0,1\}^\cX}H(\varphi_i| \indic_{j_s\le \mu_x \le i-1}=c_x, x\in \cX_s
)\\
&\hspace{9em}\times \Pr(\indic_{j_s\le \mu_x \le i-1}=c_x, x\in \cX_s).
\end{align*}
Denote by $V_s\subseteq \{0, 1\}^{\cX}$ the collection of $(c_x)_{x\in \cX}$ satisfying $\sum_{x\in \cX_s} c_x < |\cX_s|/2$. 
The above derivation shows that 
\[
\sum_{(c_x)_{x\in \cX}\in V_s} 
\Pr(\indic_{j_s\le \mu_x \le i-1}=c_x, x\in \cX_s)
\ge 
\frac12.
\]
By independence, for any $(c_x)_{x\in \cX}\in V_s$, we have
\begin{align*}
(\varphi_i| \indic_{j_s\le \mu_x \le i-1}=c_x, x\in \cX_s)
&= 
\sum_{x\in \cX: c_x=0} (\indic_{\mu_x = i}| \indic_{j_s\le \mu_x \le i-1}=0)
\\
&=
\sum_{x\in \cX_s: c_x=0} (\indic_{\mu_x = i}| \indic_{j_s\le \mu_x \le i-1}=0)
\\&
\hspace{8em}
+\!\!\!\!
\sum_{x\not \in \cX_s: c_x=0} (\indic_{\mu_x = i}| \indic_{j_s\le \mu_x \le i-1}=0).
\end{align*}
For any $x\in \cX_s$ with $c_x=0$, the corresponding indicator variable satisfies
\begin{align*}
\EE[\indic_{\mu_x = i}| \indic_{j_s\le \mu_x \le i-1}=0]
&=
\frac{\Pr(\indic_{\mu_x = i}\text{ and } \mu_x\not \in [j_s, i-1])}{\Pr(\mu_x\not \in [j_s, i-1])}
\\
&=
\frac{\Pr(\indic_{\mu_x = i})}{1-\Pr(\mu_x \in [j_s, i-1])}
\\
&=
\frac{\frac{1}{L_j}\Br{\frac{1}{9}, \frac{1}{2}}}{1-\Br{0, \frac{L_j}{5}}\cdot \frac{1}{L_j}\Br{\frac{1}{9}, \frac{1}{2}}}
\\
&=
\frac{1}{L_j}\Br{\frac{1}{9}, \frac{5}{9}}.
\end{align*}
On the other hand, for any $x\not\in \cX_s$, 
\begin{align*}
e^{-np_x} \frac{(np_x)^i}{i!}
&\le 
e^{-i} \frac{i^i}{i!}
\le 
\frac{1}{\sqrt {2\pi i} }
\le
\frac1{2L_j}.
\end{align*}
Therefore, the corresponding indicator variable satisfies
\begin{align*}
\EE[\indic_{\mu_x = i}| \indic_{j_s\le \mu_x \le i-1}=0]
&=
\frac{\Pr(\indic_{\mu_x = i})}{1-\Pr(\mu_x \in [j_s, i-1])}
\le
\frac{\frac{1}{L_j}\Br{0, \frac{1}{2}}}{1-\Br{0, \frac{L_j}{5}}\cdot \frac{1}{L_j}\Br{0, \frac{1}{2}}}
\le
\frac{5}{9}\cdot \frac{1}{L_j}.
\end{align*}
To summarize, we have shown that $(\varphi_i| \indic_{j_s\le \mu_x \le i-1}=c_x, x\in \cX_s)$
is the sum of $|\cX|$ independent Bernoulli random variables. 
Among these Bernoulli variables, at least $|\cX_s|/2\ge p_{I_j}/(2\sqrt{\log n})$ have a bias of $\frac{1}{L_j}\Br{\frac{1}{9}, \frac{5}{9}}$, 
while others have a bias of at most $\frac{5}{9}\cdot \frac{1}{L_j}$. 

The following lemma, recently established by~\cite{hillion2019proof}, shows the relation among the entropy values of sums of independent Bernoulli random variables with different bias parameters. 
\begin{Lemma}
Let $X_t, Y_t, t\in [m]$ be independent indicator random variables.
Denote by $X$ and $Y$ the sums of $X_t$'s and $Y_t$'s, respectively. 
If $\EE[X_t]\le \EE[Y_t]\le 1/2, \forall t\in [m]$, 
\[
H(\sum_t X_t)\le H(\sum_t Y_t).  
\]
\end{Lemma}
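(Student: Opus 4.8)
The plan is to change the parameters one coordinate at a time. For $k=0,1,\dots,m$ set $Z^{(k)}:=\sum_{t\le k}Y_t+\sum_{t>k}X_t$, so that $Z^{(0)}$ and $Z^{(m)}$ are the two sums in the statement, and it suffices to show $H(Z^{(k-1)})\le H(Z^{(k)})$ for every $k$. These two random variables differ only in their $k$-th summand, whose mean moves from $a:=\EE[X_k]$ up to $b:=\EE[Y_k]$ with $0\le a\le b\le 1/2$, while all remaining summands keep means at most $1/2$. Thus the lemma reduces to the one-parameter claim: \emph{if $W$ is a sum of independent indicators with means all at most $1/2$ and $\Bern(p)$ is independent of $W$, then $g(p):=H\bigl(W+\Bern(p)\bigr)$ is nondecreasing on $[0,1/2]$.} This is precisely the coordinatewise Shepp--Olkin monotonicity statement established in~\cite{hillion2019proof}, which one may invoke directly; the remaining paragraphs sketch how it is obtained and where the difficulty lies.

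Let $\mu=(\mu_k)_k$ be the p.m.f.\ of $W$. Since $W+\Bern(p)$ has p.m.f.\ $s_k(p)=(1-p)\mu_k+p\mu_{k-1}$, which is affine in $p$, differentiating the entropy and using $\sum_k(\mu_k-\mu_{k-1})=0$ gives $g'(p)=\sum_k(\mu_k-\mu_{k-1})\log s_k(p)$, and then $g''(p)=-\sum_k(\mu_k-\mu_{k-1})^2/s_k(p)\le 0$. Hence $g$ is concave on $(0,1)$ and continuous on $[0,1]$, so it is nondecreasing on $[0,1/2]$ if and only if $g'(1/2)\ge 0$, i.e.\ if and only if
\[
\Delta(\mu):=\sum_k(\mu_k-\mu_{k-1})\log(\mu_k+\mu_{k-1})\ \ge\ 0.
\]
Everything therefore comes down to proving $\Delta(\mu)\ge 0$ when $\mu$ is the p.m.f.\ of a Poisson binomial all of whose success probabilities lie in $[0,1/2]$.

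The bound $\Delta(\mu)\ge 0$ is the heart of the matter and the step I expect to be the main obstacle, because it must exploit the fine structure of $\mu$: such a $\mu$ is log-concave --- indeed ultra-log-concave --- hence unimodal, and the hypothesis that every parameter is $\le 1/2$ forces the mean $\sum_i q_i$, and with it the mode, into the lower half of the support. The intended route is to split the sum at the mode into an ascending and a descending part, pair each level attained on the way up with the equal level attained on the way down, use log-concavity to show that the ascending mass increment at a given level dominates the descending one (the factor $\log(\mu_k+\mu_{k-1})$ being evaluated at the mean of two consecutive masses makes this a midpoint-type comparison for the concave map $\log$), and treat the unmatched masses near the mode separately. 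The inequality is tight --- $\Delta(\mu)=0$ whenever $\mu$ is symmetric, e.g.\ when every parameter equals $1/2$ --- so the estimate admits essentially no slack, which is why~\cite{hillion2019proof} organize the bookkeeping through the elementary symmetric (``Shepp--Olkin'') polynomials of the $q_i$'s together with an induction on the number of Bernoullis, invoking the companion entropy-concavity result as an ingredient.
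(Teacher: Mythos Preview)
Your proposal is correct and matches the paper's approach: the paper does not prove this lemma at all but simply attributes it to~\cite{hillion2019proof}, and you likewise reduce (via a correct one-coordinate-at-a-time path argument and concavity computation) to the Shepp--Olkin monotonicity statement and invoke that same reference for the key inequality $\Delta(\mu)\ge 0$. Your sketch of how~\cite{hillion2019proof} establishes $\Delta(\mu)\ge 0$ is accurate in spirit, but since the paper treats the lemma as a black-box citation, there is nothing further to compare.
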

This lemma, together with the previous results, shows that
\begin{align*}
H(\varphi_i| \indic_{j_s\le \mu_x \le i-1}=c_x, x\in \cX_s)
&\ge 
H(\bin(p_{I_j}/(2\sqrt{\log n}), 1/(9L_j)).
\end{align*}
The next lemma further bounds the entropy of a binomial random variable.
\begin{Lemma}
For any $m>1$ and $q\in [1/m, 1-1/m]$,
\[
H(\bin(m, q))\ge \frac{1}{2}\log\Paren{(2\pi)^{1-(1-q)^m-q^m} m q(1-q)}-\frac{1}{12m}.
\vspace{-0.5em}
\]
\end{Lemma}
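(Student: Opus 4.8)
\emph{Proof proposal.} The plan is to work directly from the exact identity $H(\bin(m,q)) = \EE[-\log p_X]$, where $p_k = \binom{m}{k}q^k(1-q)^{m-k}$ and $X\sim\bin(m,q)$, and to control $-\log p_k$ pointwise. For $1\le k\le m-1$, where all three factorials in $\binom{m}{k}$ have positive argument, I would plug in the two-sided Stirling bound $\sqrt{2\pi n}\,(n/e)^n\le n!\le\sqrt{2\pi n}\,(n/e)^n e^{1/(12n)}$ (lower bound on $k!$ and $(m-k)!$, upper bound on $m!$). Collecting $-k\log q-(m-k)\log(1-q)$ with the $k\log k+(m-k)\log(m-k)-m\log m$ that comes out of the factorials turns the polynomial part into exactly $m\,D(k/m\,\|\,q)$, where $D(t\,\|\,q):=t\log\frac{t}{q}+(1-t)\log\frac{1-t}{1-q}$, and the $\tfrac12\log(2\pi\,\cdot)$ terms combine to $\tfrac12\log\frac{2\pi k(m-k)}{m}$; the $e^{1/(12m)}$ slack in $m!$ becomes the additive $-\frac1{12m}$. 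Thus $-\log p_k\ge m\,D(k/m\,\|\,q)+\tfrac12\log\frac{2\pi k(m-k)}{m}-\frac1{12m}$ for $1\le k\le m-1$, while the two boundary masses are handled exactly: $-\log p_0=-m\log(1-q)=m\,D(0\,\|\,q)$ and $-\log p_m=-m\log q=m\,D(1\,\|\,q)$. Summing over $k$ with $P:=\Pr(1\le X\le m-1)=1-(1-q)^m-q^m$ gives
\[
H(\bin(m,q))\ \ge\ m\,\EE\!\big[D(X/m\,\|\,q)\big]\ +\ \tfrac{P}{2}\log(2\pi)\ +\ \tfrac12\,\EE\!\big[\log\tfrac{X(m-X)}{m}\cdot\indic_{1\le X\le m-1}\big]\ -\ \tfrac{P}{12m}.
\]
Since $\tfrac{P}{2}\log(2\pi)=\tfrac12\log(2\pi)^{1-(1-q)^m-q^m}$ and $\tfrac{P}{12m}\le\tfrac1{12m}$, it remains to prove the clean inequality $(\star)$: $\EE[\tilde\phi(X)]\ge\tfrac12\log(mq(1-q))$, where $\tilde\phi(k):=m\,D(k/m\,\|\,q)+\tfrac12\log\frac{k(m-k)}{m}$ for $1\le k\le m-1$ and $\tilde\phi(0):=m\,D(0\,\|\,q)$, $\tilde\phi(m):=m\,D(1\,\|\,q)$.

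For $(\star)$ the point is that Jensen alone is insufficient: $\log\frac{k(m-k)}{m}$ is concave in $k$, so on its own its expectation falls below $\log(mq(1-q))$, and the convexity of the relative-entropy term must be used to compensate. I would show $\tilde\phi$ is convex on the real interval $[1,m-1]$: a direct computation gives $\tilde\phi''(k)=\frac{m}{k(m-k)}-\frac12\big(\frac1{k^2}+\frac1{(m-k)^2}\big)$, and $\tilde\phi''(k)\ge0$ is equivalent, after clearing denominators, to $(m+1)\,k(m-k)\ge m^2/2$; on $[1,m-1]$ one has $k(m-k)\ge m-1$, so $(m+1)k(m-k)\ge m^2-1\ge m^2/2$ for every integer $m\ge2$. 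The hypothesis $q\in[1/m,1-1/m]$ forces the mean $\EE[X]=mq$ into $[1,m-1]$, so the tangent line $L$ of $\tilde\phi$ at $k=mq$ obeys $L(mq)=\tilde\phi(mq)=\tfrac12\log(mq(1-q))$ (since $D(q\,\|\,q)=0$) and lies below $\tilde\phi$ throughout $[1,m-1]$, hence at every integer $k\in\{1,\dots,m-1\}$. Checking the two boundary points $\tilde\phi(0)\ge L(0)$ and $\tilde\phi(m)\ge L(m)$ separately reduces to elementary estimates, e.g.\ $\tilde\phi(0)=-m\log(1-q)\ge mq\ge1$ compared against $L(0)=\tfrac12\log(mq(1-q))-\tfrac{1-2q}{2(1-q)}$, using $x\ge\tfrac12\log x$ for $x\ge1$ and the symmetry $q\leftrightarrow1-q$, $k\leftrightarrow m-k$ to transfer the $k=m$ case to the $k=0$ case. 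With $\tilde\phi(k)\ge L(k)$ for all $k\in\{0,\dots,m\}$ and $L$ affine, $\EE[\tilde\phi(X)]\ge\EE[L(X)]=L(\EE[X])=L(mq)=\tfrac12\log(mq(1-q))$, which is $(\star)$.

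The main obstacle is $(\star)$, and concretely the observation that the convex term $m\,D(X/m\,\|\,q)$ and the concave term $\tfrac12\log\frac{X(m-X)}{m}$ must be combined \emph{before} any convexity/tangent argument is applied: neither term taken alone delivers the bound, and a naive Jensen on either piece points the wrong way. Once that is in place, the technical core is the curvature inequality $(m+1)k(m-k)\ge m^2/2$, which holds exactly on the range $[1,m-1]$ that the hypothesis on $q$ guarantees contains $mq$; the remaining friction is purely in the two endpoint cases $k\in\{0,m\}$, which need their own estimates because $\tilde\phi$ is no longer convex once $k$ leaves $[1,m-1]$.
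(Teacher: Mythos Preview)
Your argument is correct, and shares the paper's core toolkit---pointwise Stirling bounds on the factorials followed by a Jensen/convexity step---but organizes the Jensen step differently. The paper does not combine the KL and $\tfrac12\log\frac{k(m-k)}{m}$ pieces into a single $\tilde\phi$; instead it writes
\[
H(\bin(m,q))=mH(\Bern(q))-\log m!+\EE[\log X!]+\EE[\log(m-X)!],
\]
applies the Stirling lower bound $\log t!\ge (t+\tfrac12)\log t+\tfrac12\log(2\pi)-t$ for $t\ge1$, and then invokes Jensen separately on $\EE[g(X)]$ and $\EE[g(m-X)]$ with the single scalar function $g(x)=(x+\tfrac12)\log x$ for $x\ge1$ and $g\equiv0$ on $[0,1)$. (The paper calls $g$ ``concave,'' but a glance at $g''(x)=\tfrac1x-\tfrac1{2x^2}>0$ shows it is convex; the stated inequality $\EE[g(X)]\ge g(\EE X)$ is the convex Jensen direction and is what the proof actually uses.) The payoff of this decomposition is that the extension $g(0)=0$ swallows the boundary points $k\in\{0,m\}$ automatically: there is no separate endpoint verification, no curvature inequality like $(m+1)k(m-k)\ge m^2/2$, and the hypothesis $q\in[1/m,1-1/m]$ is invoked only once, to ensure $mq\ge1$ so that $g(mq)=(mq+\tfrac12)\log(mq)$.

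Your route is a valid alternative and arguably makes the role of the relative-entropy term more transparent, at the cost of the extra convexity computation and the endpoint checks. One small remark on those checks: your sketch for $\tilde\phi(0)\ge L(0)$ (via $-m\log(1-q)\ge mq$ and $x\ge\tfrac12\log x$) handles $q\le\tfrac12$ cleanly, but for $q>\tfrac12$ the slope term $-\tfrac{1-2q}{2(1-q)}=\tfrac{2q-1}{2(1-q)}$ is positive and must be controlled; a quick fix is to use $m(1-q)\ge1$ to bound $\tfrac{2q-1}{2(1-q)}\le mq-\tfrac{m}{2}$, after which the comparison $mq\ge\tfrac12\log(mq(1-q))+mq-\tfrac m2$ reduces to $m\ge\log(m/4)$. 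With that addition your endpoint argument closes.
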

\begin{proof}
By definition, the left-hand side satisfies
\begin{align*}
H(\bin(m, q))
&=
-\sum_{t=0}^m \binom{m}{t}q^t(1-q)^{m-t} \log\Paren{\binom{m}{t}q^t(1-q)^{m-t}}
\\
&=
-\sum_{t=0}^m \binom{m}{t}q^t(1-q)^{m-t} 
(t\log q+(m-t)\log (1-q)\\
&
\hspace{12em} +\log m!-\log t!-\log (m-t)!)\\
&=
m H(\Bern(q))
-\log m! +\sum_{t=0}^m \binom{m}{t}q^t(1-q)^{m-t}(\log t!+\log (m-t)!).
\end{align*}
By Stirling's formula, for any $t\ge 1$,
\[
\log t! \ge  \Paren{t+\frac12}\log t+\frac12 \log (2\pi) -t.
\]
Substituting the right-hand side into the above equation yields
\begin{align*}
S_{m}(q)
:=\sum_{t=0}^m \binom{m}{t}q^t(1-q)^{m-t}\log t!
&\ge 
\frac12(1-(1-q)^m) \log (2\pi)
-m q\\
&\hspace{5em}
+\sum_{t=1}^m \binom{m}{t}q^t(1-q)^{m-t}\Paren{t+\frac12}\log t.
\end{align*}
Let $g(x):=0$ for $x\in [0,1)$ and $g(x):=(x+1/2)\log x$ for $x\ge 1$. 
Simple calculus shows that the function is concave.  
Applying the concavity of $g$ to the last sum yields
\begin{align*}
\sum_{t=1}^m \binom{m}{t}q^t(1-q)^{m-t}\Paren{t+\frac12}\log t
&\ge
g\Paren{\sum_{t=0}^m \binom{m}{t}q^t(1-q)^{m-t}\cdot t}
=\Paren{mq+\frac12}\log(mq),
\end{align*}
where the last step follows by the fact that $mq\ge 1$. 
A similar inequality holds for the weighted sum of  $\log (m-t)!$. 
Consolidating these inequalities, we obtain
\begin{align*}
S_{m}(q)+S_{m}(1-q)
&\ge
\Paren{mq+\frac12}\log(mq)+\Paren{m(1-q)+\frac12}\log(m(1-q))\\
&
+\frac12(1-(1-q)^m) \log (2\pi)
-m q
+\frac12(1-q^m) \log (2\pi)
-m (1-q)\\
&=
(m+1)\log m - m H(\Bern(q))
+\frac12\log(q(1-q))\\
&\hspace{9em}
+\frac12(2-(1-q)^m-q^m) \log (2\pi)-m.
\end{align*}
On the other hand, for the $\log m!$ term,
\[
\log m! \le  \Paren{m+\frac12}\log m+\frac12 \log (2\pi) -m+\frac{1}{12m}.
\]
Substituting the previous term bounds into the $H(\bin(m,q))$ expression yields
\begin{align*}
H(\bin(m, q))
&=
m H(\Bern(q))
-\log m! +S_{m}(q)+S_{m}(1-q)\\
&\ge 
\frac{1}{2}\log\Paren{(2\pi)^{1-(1-q)^m-q^m} m q(1-q)}-\frac{1}{12m}. \qedhere
\end{align*}
\end{proof}
Before continuing, we remark that the bound 
in the above lemma has the right dependence on $mq(1-q)$
 in the sense that if we fix $q$
 and increase $m$, the lower bound converges to $\frac12\log(\Theta( mq(1-q)))$. 
Another point to mention is that the above bound covers $q\in [1/m, 1-1/m]$,
while Lemma~\ref{other_case} appearing later in this section covers 
$q\not\in [1/m, 1-1/m]$.
Note that the dependence on $mq(1-q)$ changes from logarithmic to 
linear, showing an ``elbow effect'' around $1/m$. 

Assume that $p_{I_j}/(2\sqrt{\log n})\ge 9L_j$, then for any $(c_x)_{x\in \cX}\in V_s$,
\[
H(\varphi_i| \indic_{j_s\le \mu_x \le i-1}=c_x, x\in \cX_s)
\ge
H(\bin(p_{I_j}/(2\sqrt{\log n}), 1/(9L_j))
\ge
\frac{1}{2}.
\]
Consolidating this with the previous results yields that
\begin{align*}
H(\varphi_i|\varphi_{j_s},\ldots, \varphi_{i-1})
&\ge 
\sum_{(c_x)_{x\in \cX}\in V_s}\frac{1}{2}\cdot \Pr(\indic_{j_s\le \mu_x \le i-1}=c_x, x\in \cX_s)
\ge \frac12\cdot \frac12
=\frac14,
\end{align*}
where we utilize $p_{I_j}/(2\sqrt{\log n})\ge 9L_j\ge 9$ and $(1-q)^m+q^m<1/e$ for $\forall m\ge 3, q\in [1/m, 1/2]$. 
We can then bound the quantity of interest as follows.
\begin{align*}
H((\varphi_i)_{i\in n I_j^s}) 
&= 
\sum_{i = j_s}^{j_s+L_j-1} H(\varphi_i|\varphi_{j_s},\ldots, \varphi_{i-1})\\
&\ge
 \sum_{i = j_s}^{j_s+L_j/5} H(\varphi_i|\varphi_{j_s},\ldots, \varphi_{i-1})\\
&\ge 
\frac{L_j}{5}\cdot \frac14 = \frac{L_j}{20}\\
&=
\frac{1}{20\sqrt{\log n}}\min\Brace{p_{I_j}, j\cdot \log n}.
\end{align*}
On the other hand, if $9L_j\ge p_{I_j}/(2\sqrt{\log n})\gg 1$, 
we can further ``compress'' the truncated profile $(\varphi_i)_{i\in n I_j^s}$ over $n I_j^s$
to reduce the effective value of $L_j$. 
Specifically, for any integer $t<L_j$, 
we define the $t$-compressed version of $(\varphi_i)_{i\in n I_j^s}$ as \vspace{-0.5em}
\[
(\varphi_i)_{i\in n I_j^s}^t
:=
\Paren{\sum_{i = j_s+(\ell-1)t}^{j_s+\ell t-1} \varphi_i}_{\ell\in [L_j/t]}.
\]
Note that for each $t$, the length of $(\varphi_i)_{i\in n I_j^s}^t$ is $L_j^t:= L_j/t$.
For each entry in the compressed version, 
we can again express the entry as the sum of independent indicator random variables. 
Specifically, 
\[
\sum_{i = j_s+(\ell-1)t}^{j_s+\ell t-1} \varphi_i
=
\sum_{x\in \cX} \indic_{\mu_x\in [j_s+(\ell-1)t, j_s+\ell t-1]}.
\]
Furthermore, for any $x\in \cX_s$, the expectation of each indicator variable satisfies 
\begin{align*}
\EE[\indic_{\mu_x\in [j_s+(\ell-1)t, j_s+\ell t-1]}]
&=
\sum_{i=j_s+(\ell-1)t}^{j_s+\ell t-1} e^{-np_x}\frac{(np_x)^i}{i!}\\
&=
\frac{t}{L_j}\Br{\frac{1}{9}, \frac{1}{2}}
=
\frac{1}{L_j^t}\Br{\frac{1}{9}, \frac{1}{2}}.
\end{align*}
Similarly, for any $x\in \cX$, 
we have $\EE[\indic_{\mu_x\in [j_s+(\ell-1)t, j_s+\ell t-1]}]\le 1/(2L_j^t)$.

Now, choose $t$ large enough so that 
$18L_j^t\ge p_{I_j}/(2\sqrt{\log n})\ge 9L_j^t$.
Following the reasoning in the previous case shows that \vspace{-0.5em}
\[
H((\varphi_i)_{i\in n I_j^s}) \ge H((\varphi_i)_{i\in n I_j^s}^t)
\ge \Omega\Paren{\frac{1}{\sqrt{\log n}}\min\Brace{p_{I_j}, j\cdot \log n}}.
\]
It remains to consider the case of $\mathcal{O}(\sqrt{\log n})\ge p_{I_j}\ge 1$, 
for which we adopt our previous analysis. 

Again, partition $I_j$ into sub-intervals of equal length $L_j/n$. 
Then, assign each probability $p_x\in I_j$ a length-$L_j/n$ 
interval $I_{p_x}$ centered at $p_x$.
By construction, each interval $I_{p_x}$ covers 
at least one of the sub-intervals in the partition. 
Redefine any of these covered sub-intervals as $I_j^s$. 
Denote by $\cX_s$ the collection of symbols corresponding to the covering intervals. 

Note that $\mathcal{O}(\sqrt{\log n})\ge p_{I_j}\ge |\cX_s|\ge 1$.  
For any $i\in [j_s, j_s\!+\!L_j/5]$, the previous analysis shows that
\begin{align*}
H(\varphi_i|\varphi_{j_s},\ldots, \varphi_{i-1})
\ge H(\bin(|\cX_s|, 1/(9L_j))
\cdot \Paren{1-3/4}. 
\end{align*}
We bound the right-hand side with the following lemma.
\begin{Lemma}\label{other_case}
For any $m\ge 1$, and $q\le \min\{1/2, 1/m\}$ or $q\ge \max\{1/2, 1-1/m\}$,
\[
H(\bin(m, q))\ge \frac{m}{4}\min\{q,1-q\}\ge 
\frac{1}{4} mq(1-q).  \vspace{-0.5em}
\]
\end{Lemma}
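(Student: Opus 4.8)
The plan is to reduce to the one-sided case and then lower-bound $H(\bin(m,q))$ by the entropy of a single, carefully chosen coarsening of the binomial. First, relabelling successes as failures turns $\bin(m,q)$ into $\bin(m,1-q)$ and leaves the entropy unchanged, so it suffices to treat $q\le\min\{1/2,1/m\}$, i.e.\ $q\le 1/m$ (which already forces $q\le 1/2$). The second inequality in the statement, $\frac{m}{4}\min\{q,1-q\}\ge\frac14 mq(1-q)$, is then immediate from $\min\{q,1-q\}\ge q(1-q)$, so the whole task is to prove $H(\bin(m,q))\ge mq/4$.

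Let $S\sim\bin(m,q)$ and pass to the indicator $\indic_{S=1}$. Since $\indic_{S=1}$ is a deterministic function of $S$, $H(S)\ge H(\indic_{S=1})=h(\Pr(S=1))$, where $h(x):=-x\log x-(1-x)\log(1-x)=H(\Bern(x))$ (natural logs). The crux is to pin $\Pr(S=1)=mq(1-q)^{m-1}$ into the interval $[\,mq/e,\ 1/2\,]$. For the lower endpoint I use $q\le 1/m$ together with the standard inequality $(1-1/m)^{m-1}\ge e^{-1}$, which gives $(1-q)^{m-1}\ge e^{-1}$ and hence $\Pr(S=1)\ge mq/e$. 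For the upper endpoint I note that $x\mapsto mx(1-x)^{m-1}$ is nondecreasing on $[0,1/m]$ (its derivative equals $m(1-x)^{m-2}(1-mx)\ge 0$ there), so $\Pr(S=1)\le m\cdot\frac1m\cdot(1-1/m)^{m-1}=(1-1/m)^{m-1}\le 1/2$ for $m\ge 2$; the degenerate case $m=1$, where $\Pr(S=1)=q\le 1/2$ directly, is handled separately.

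Finally I exploit that $h$ is increasing on $[0,1/2]$ and that $mq/e\le 1/e$ (because $mq\le 1$). Since $mq/e\le\Pr(S=1)\le 1/2$, monotonicity gives $H(S)\ge h(\Pr(S=1))\ge h(mq/e)$. Using the elementary bound $h(x)\ge -x\log x$ valid for all $x\in[0,1]$, together with $\log(mq/e)=\log(mq)-1\le -1$ (again from $mq\le 1$), we get $h(mq/e)\ge -(mq/e)\log(mq/e)\ge mq/e$. Because $e<4$, this yields $H(\bin(m,q))\ge mq/e\ge mq/4=\frac{m}{4}\min\{q,1-q\}$, and the proof is complete.

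There is no deep difficulty here: the argument is essentially careful bookkeeping. The one place that demands attention is the choice of coarsening and the two-sided estimate $mq/e\le\Pr(S=1)\le 1/2$. Coarsening to $\indic_{S=1}$ (rather than to $\indic_{S\ge 1}$ or keeping $S$ itself, whose entropy has no clean closed form) is what makes the bound come out uniformly over the whole range $mq\in(0,1]$; the sandwich on $\Pr(S=1)$ is exactly what licenses the use of monotonicity of $h$ on $[0,1/2]$ while still landing above $mq/4$. Everything else reduces to the one-line facts $h(x)\ge -x\log x$ and $\log(e/(mq))\ge 1$.
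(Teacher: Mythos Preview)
Your proof is correct, but the route differs from the paper's and is slightly more elaborate than necessary. The paper coarsens to $\indic_{S\ge 1}$ rather than $\indic_{S=1}$: with $\Pr(S=0)=(1-q)^m$ it writes
\[
H(\bin(m,q))\ \ge\ H(\Bern((1-q)^m))\ \ge\ -(1-q)^m\cdot m\log(1-q)\ \ge\ -\tfrac{m}{4}\log(1-q)\ \ge\ \tfrac{m}{4}q,
\]
using only that $(1-q)^m\ge (1-1/m)^m\ge 1/4$ for $m\ge 2$ (and $1-q\ge 1/2$ for $m=1$) and $-\log(1-q)\ge q$. So your parenthetical remark that $\indic_{S\ge 1}$ would not yield a clean bound is mistaken; it is exactly what the paper uses, and it bypasses the sandwich $mq/e\le \Pr(S=1)\le 1/2$ and the monotonicity appeal. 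On the other hand, your argument delivers the slightly sharper constant $mq/e$ before relaxing to $mq/4$, and the derivative check that $x\mapsto mx(1-x)^{m-1}$ is nondecreasing on $[0,1/m]$ is a nice touch. One cosmetic point: the clause ``$q\le 1/m$, which already forces $q\le 1/2$'' is literally false at $m=1$; the hypothesis $q\le\min\{1/2,1/m\}$ supplies $q\le 1/2$ there directly.
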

\begin{proof}
By symmetry, we need to consider only the case of $q\in [0,1/m]$. 
\begin{align*}
H(\bin(m, q))
&\ge 
H(\indic_{\bin(m, q)\ge 1})\\
&=
H(((1-q)^m, 1-(1-q)^m))\\
&\ge
-(1-q)^m (m\log(1-q))\\
&\ge 
-\frac{m}{4} \log(1-q)\\
&\ge
\frac{m}{4}\cdot q.\qedhere
\end{align*}
\end{proof}
Consolidating the lemma and the chain rule of entropy yields, 
\begin{align*}
H((\varphi_i)_{i\in n I_j^s}) 
&= 
\sum_{i = j_s}^{j_s+L_j-1} H(\varphi_i|\varphi_{j_s},\ldots, \varphi_{i-1})\\
&\ge
 \sum_{i = j_s}^{j_s+L_j/5} H(\varphi_i|\varphi_{j_s},\ldots, \varphi_{i-1})\\
&\ge 
\frac{L_j}{5}\cdot \frac{|\cX_s|}{4\cdot 9\cdot L_j}\cdot \Paren{1-\frac34} 
= \frac{|\cX_s|}{720}\\
&=
 \Omega\Paren{\frac{1}{\sqrt{\log n}}\min\Brace{p_{I_j}, j\cdot \log n}}.
\end{align*}

Alternatively, we can use the fact that adding 
independent random variables does not decrease entropy, 
i.e., $H(Y+Z)\ge H(Y)$ for any independent variables $Y$ and $Z$. 
Note that 
\[
(\varphi_i)_{i\in n I_j^s}^t = \sum_{x\in\cX} (\indic_{\mu_x=i})_{i\in I_j^s}. 
\vspace{-0.25em}
\]
Let $y$ be an arbitrary symbol that belongs to $\cX_s$. Then, 
\[
H((\varphi_i)_{i\in n I_j^s}) 
\ge H((\varphi_i)_{i\in n I_j^s}^t)
\ge H((\indic_{\mu_y=i})_{i\in I_j^s})
\ge H((\indic_{\mu_y=j_s}, \indic_{\mu_y=j_s+1})).
\]
By the previous derivations, 
both $\Pr(\mu_y=j_s)$ and $\Pr(\mu_y=j_s+1)$ 
belong to $\frac{1}{L_j}[1/9,1/2]$.  
Hence,
\[
H((\varphi_i)_{i\in n I_j^s}) 
\ge H\Paren{\Bern\Paren{\frac2{11}}}
\ge \frac25
=
\Omega\Paren{\frac{1}{\sqrt{\log n}}\min\Brace{p_{I_j}, j\cdot \log n}}. 
\]
Note that this argument does not apply to other cases, since 
\[
H((\indic_{\mu_y=i})_{i\in I_j^s})=\mathcal{O}(\log L_j)=\mathcal{O}(\log n), \vspace{-0.25em}
\] 
while $\min\Brace{p_{I_j}, j\cdot \log n}$ can be as large as 
$\tilde{\Theta}(n^{1/3})$ in general.

The proof is complete upon noting that indices with $j=\mathcal{O}(1)$ 
corresponds to a total contribution of at most $\mathcal O(1)$ to $H^{\mathcal S}_n(p)$
and $H^{\mathcal S}_n(p) = \tilde\Theta(\EE[\mathcal D(\varphi)])=\tilde\Theta(D(\varphi))$,
with probability at least $1-\mathcal{O}(1/\sqrt n)$. 

\paragraph{Summary} The simple expression shows that 
$H^\mathcal{S}_n(p)$ characterizes the 
variability of ranges the actual probabilities spread over.
As Theorem~\ref{thm:hs_app_en} shows, 
$H^\mathcal{S}_n(p)$ closely approximates $E_n(p)$, 
the value around which $\mathcal D_n\sim p$ concentrates 
(Theorem~\ref{thm:exp_conc}) and $H(\Phi^n)$ lies 
(Thoerem~\ref{thm:entro_eql_dim}). 
Henceforth, we use $H^\mathcal{S}_n(p)$ as a proxy for both 
$H(\Phi^n)$ and $\mathcal D_n$,  
and study its attributes and values. 

\vfill
\pagebreak
\subsection{Symmetric Property Estimation and Sufficiency of Profiles}

The rest of Section~\ref{sec:dim_and_entro} 
shows that the PML plug-in estimator possesses the 
amazing ability of adapting to the simplicity of data 
distributions in inferring all symmetric properties,
over any label-invariant classes. 
For clarity, we divide the full proof into three parts:
a) the sufficiency of profiles for estimating symmetric properties;
b) the standard ``median trick'' often used to boost the confidence of learning algorithms;
c) the PML method and its competitiveness to the min-max estimators. 
The proof utilizes several previously established results.

\paragraph{Sufficiency of profiles} We first show that profile-based estimators are sufficient for estimating symmetric properties. Recall that a distribution collection $\cP\subseteq \Delta_\cX$ is \emph{label-invariant} if for any $p\in \cP$, 
the collection $\cP$ contains all its symbol-permuted versions.
Then,
\setcounter{Theorem}{1}
\begin{Theorem}\label{thm1}
Let $f$ be a symmetric functional over a label-invariant distribution collection $\cP\subseteq \Delta_\cX$. For any accuracy $\ve>0$ 
and tolerance $\delta\in(0,1)$, if there exists an estimator $\hat{f}$ such that
\[
\Pr_{X^n\sim p}
\Paren{\Abs{\hat f(X^n)-f(p)}> \ve}< \delta, \ \forall p\in \cP,
\]
there is an estimator $\hat f_\varphi$ over $\Phi$ satisfying 
\[
\Pr_{X^n\sim p}
\Paren{\Abs{\hat f_\varphi(\varphi(X^n))-f(p)}> \ve}< \delta, \ \forall p\in \cP.
\]\vspace{-1.5em}
\end{Theorem}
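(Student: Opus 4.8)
The plan is a symmetrization (random-relabeling) argument. Given the sequence estimator $\hat f$, I would take $\hat f_\varphi$ to be the \emph{randomized} estimator that, on input a length-$n$ profile $\bphi$, draws a sequence $Z^n$ uniformly at random from the set $\{y^n\in\cX^n:\varphi(y^n)=\bphi\}$ of all length-$n$ sequences with profile $\bphi$, and outputs $\hat f(Z^n)$. This is a legitimate estimator over profiles, since both the set and the sampling depend only on $\bphi$ and on internal coins. (When $\cX$ is countably infinite this set is infinite; I address that point at the end.)

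First I would do the symmetry bookkeeping. Because $\cP$ is label invariant, $p_\sigma\in\cP$ for every $\sigma\in\Sigma_\cX$, and because $f$ is symmetric, $f(p_\sigma)=f(p)$; hence the hypothesis on $\hat f$, applied to each $p_\sigma$, gives
\[
\Pr_{Y^n\sim p_\sigma}\Paren{\Abs{\hat f(Y^n)-f(p)}>\ve}<\delta,\qquad\forall\,\sigma\in\Sigma_\cX .
\]
Next I would note two harmless randomizations of an i.i.d.\ sample $X^n\sim p$: permuting its $n$ positions by a uniform $\Pi\in S_n$ leaves its law unchanged (exchangeability), while relabeling its symbols by $\sigma^{-1}$ turns its law into $p_\sigma$. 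So, writing $W^n:=\sigma^{-1}\bigl(X^n_{\Pi(1)},\dots,X^n_{\Pi(n)}\bigr)$, for any fixed $\sigma$ we have $W^n\sim p_\sigma$, and by the display above $\Pr\Paren{\Abs{\hat f(W^n)-f(p)}>\ve\mid\sigma}<\delta$.

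The crux is an orbit identity. When $\Pi$ is uniform on $S_n$ and $\sigma$ is uniform on $\Sigma_\cX$, both independent of $X^n$, the action $(\pi,\sigma)\mapsto\sigma^{-1}(x^n_{\pi(1)},\dots,x^n_{\pi(n)})$ of $S_n\times\Sigma_\cX$ on $\cX^n$ has orbits exactly the profile classes, so by the orbit--stabilizer count the conditional law of $W^n$ given $X^n$ is \emph{uniform on} $\{y^n:\varphi(y^n)=\varphi(X^n)\}$, depending on $X^n$ only through $\varphi(X^n)$; that is, $W^n\perp X^n\mid\varphi(X^n)$. Therefore $\hat f_\varphi(\varphi(X^n))$ and $\hat f(W^n)$ have the same law, and by the tower rule
\[
\Pr_{X^n\sim p}\Paren{\Abs{\hat f_\varphi(\varphi(X^n))-f(p)}>\ve}
=\E_{\sigma}\Pr\Paren{\Abs{\hat f(W^n)-f(p)}>\ve\mid\sigma}<\delta ,
\]
which is the claim. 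I would also record that $\hat f_\varphi$ is measurable, being a finite mixture (indexed by $\bphi$) of evaluations of the measurable $\hat f$, and that its coins are exactly the ``independent randomness'' the statement permits.

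The step I expect to be the main obstacle is the passage to a countably infinite $\cX$, where $\Sigma_\cX$ carries no uniform measure and the profile class is infinite, so ``sampling $Z^n$ uniformly from it'' is not literally meaningful. I would handle it by reduction to the finite-alphabet case: since both the hypothesis and the conclusion constrain only $\Pr_{X^n\sim p}(\cdot)$ and a length-$n$ sample almost surely occupies at most $n$ distinct symbols, one may replace $\cX$ by a finite subset large enough to contain the support of $X^n$; when $p$ has infinite support, run the construction on an increasing sequence $\cX'\uparrow\cX$ of finite sub-alphabets and pass to the limit, using $\Pr_{X^n\sim p}(X^n\not\subseteq\cX')\to 0$ together with the strict inequality in the hypothesis to absorb the residual event. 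The remaining details are routine.
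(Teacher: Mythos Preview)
Your proof is correct and follows essentially the same symmetrization-by-random-relabeling idea as the paper. The paper packages it in two stages---first composing $\hat f$ with a uniform random symbol permutation to get a symmetric estimator, then factoring through the sequence \emph{type} and mapping a profile to a uniformly random type---whereas you fuse both stages into a single ``sample $Z^n$ uniformly from the profile class'' step via the $S_n\times\Sigma_{\cX}$ orbit argument; the two constructions produce the same randomized profile estimator. As a bonus, you confront the countably-infinite-alphabet issue (no uniform measure on $\Sigma_{\cX}$) that the paper's proof leaves implicit.
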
 
Note that both estimators can have independent randomness. 

\begin{proof}
First we show that given estimator $\hat f$, 
 there is an estimator $\hat f_{s}$ which is symmetric, i.e., invariant with respect to domain-symbol permutations, and achieves the same guarantee. 
To see this, consider a random permutation $\tilde \sigma$ chosen uniformly randomly 
from the collection of permutations over the underlying alphabet. 
Let $\hat f_s:=\hat f\circ \tilde \sigma$. Then for any $p\in \cP$, 
\begin{align*}
\Pr_{X^n\sim p}
\Paren{\Abs{\hat f_s(X^n)-f(p)}> \ve}
&\overset{(a)}=
\Pr_{X^n\sim p}
\Paren{\Abs{\hat f\circ \tilde \sigma(X^n)-f(p)}> \ve}\\
&\overset{(b)}=
\sum_\sigma\Pr_{X^n\sim p}
\Paren{\Abs{\hat f\circ \sigma(X^n)-f(p)}> \ve \big\vert\ \tilde{\sigma} = \sigma}
\cdot \Pr\Paren{\tilde \sigma = \sigma}\\
&\overset{(c)}=
\sum_\sigma\Pr_{X^n\sim p}
\Paren{\Abs{\hat f\circ \sigma(X^n)-f(p)}> \ve }
\cdot \Pr\Paren{\tilde \sigma = \sigma}\\
&\overset{(d)}=
\sum_\sigma\Pr_{X^n\sim \sigma(p)}
\Paren{\Abs{\hat f(X^n)-f(\sigma(p))}> \ve }
\cdot \Pr\Paren{\tilde \sigma = \sigma}\\
&\overset{(e)}<
\sum_\sigma
\delta \cdot \Pr\Paren{\tilde \sigma = \sigma}\\
&\overset{(f)}=
\delta,
\end{align*} 
where $(a)$ follows by the definition of $\hat f_s$;
$(b)$ follows by the law of total probability;
$(c)$ follows by the independence between $\tilde \sigma$ and $X^n$;
$(d)$ follows by the symmetry of $f$ and the equivalence of applying $\sigma$ to $X^n$ and to $p$; 
$(e)$ follows by the fact that $\sigma (p)\in\cP$ and the guarantee satisfied by the estimator $\hat f$;
and $(f)$ follows by the law of total probability.  

Before we proceed further, we introduce the following definitions. 
For any sequence $x^n$, the \emph{sketch} of a symbol $x$ in $x^n$ is the set of 
indices $i\in [n]$ for which $x_i=x$. 
The \emph{type} of a sequence $x^n$ is the 
set $\tau(x^n)$ of sketches of symbols appearing in $x^n$. 

Since $\hat f_s$ is symmetric, there exists a mapping $\hat f_\tau$ over types satisfying
$\hat f_s = \hat f_\tau \circ \tau$. 
Due to the \iid assumption on the sample generation process, 
given the profile of a sample sequence, all the different types corresponding to this 
profile are equally likely. 
Let $\Lambda$ be a mapping that recovers this relation, i.e., $\Lambda$ maps each profile 
uniformly randomly to a type having this profile.

Then, for any $p\in \cP$ and $X^n\sim p$,
\[
\hat f_s (X^n) 
= \hat f_\tau \circ \tau (X^n) 
=  \hat f_\tau \circ \Lambda \circ \varphi (X^n).
\]
Consequently, the mapping $\hat f_\varphi:=\hat f_\tau \circ \Lambda$ is a profile-based estimator that satisfies
\[
\Pr_{X^n\sim p}
\Paren{\Abs{\hat f_\varphi(\varphi(X^n))-f(p)}> \ve}
=
\Pr_{X^n\sim p}
\Paren{\Abs{\hat f_s(X^n)-f(p)}> \ve}
<
\delta,\
\forall p\in\cP. \qedhere
\]
\end{proof}

\subsection{Median Trick}
The following argument is standard and often used to boost 
the confidence of learning algorithms. 
\begin{Lemma}[Median trick]\label{lem:median_trick}
Let $\alpha,\beta\in (0,1)$ be real parameters satisfying $1/10\ge \alpha>\beta$. 
For an accuracy $\ve>0$ and a distribution set $\cP\subseteq \Delta_\cX$,
if there exists an estimator $\hat{f}_A$ such that
\[
\Pr_{X^n\sim p}
\Paren{\Abs{\hat f_A(X^n)-f(p)}> \ve}< \alpha, \ \forall p\in \cP,
\]
we can construct another estimator $\hat f_B$ that takes a sample of size 
$m: = \left\lceil\frac{4n}{\log \frac{1}{2\alpha}} \log \frac1{\beta}\right\rceil$ and achieves 
\[
\Pr_{Y^m\sim p}
\Paren{\Abs{\hat f_B(Y^{m})-f(p)}> \ve}< \beta, \ \forall p\in \cP.
\]
\end{Lemma}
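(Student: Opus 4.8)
The plan is to prove the median trick by a standard repetition-and-majority-vote argument, quantifying the failure probability with a Chernoff bound. First I would construct $\hat f_B$ explicitly: given a sample $Y^m$ of size $m := \lceil \frac{4n}{\log\frac{1}{2\alpha}}\log\frac1\beta \rceil$, I partition it into $k := \lfloor m/n \rfloor$ disjoint blocks $Y^n_{(1)},\dots,Y^n_{(k)}$, each of length $n$, run the given estimator $\hat f_A$ (with fresh independent internal randomness) on each block to obtain estimates $\hat f_A(Y^n_{(1)}),\dots,\hat f_A(Y^n_{(k)})$, and let $\hat f_B(Y^m)$ be the median of these $k$ values. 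The choice of $m$ ensures $k \ge \frac{4}{\log\frac{1}{2\alpha}}\log\frac1\beta$, which is the number of repetitions we will need.

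Next I would carry out the accuracy analysis. Call the $i$-th block estimate "good" if $|\hat f_A(Y^n_{(i)}) - f(p)| \le \ve$; by the hypothesis on $\hat f_A$ and the fact that each block is itself an \iid sample of size $n$ from $p$, each block is good independently with probability $> 1-\alpha > 9/10$. The key deterministic observation is that if strictly more than half of the $k$ estimates are good, then the median itself is good: the median cannot exceed the largest good value nor fall below the smallest good value when good values form a majority, so $|\hat f_B(Y^m) - f(p)| \le \ve$. Hence $\hat f_B$ fails only if at least $k/2$ of the blocks are bad. Letting $B$ denote the number of bad blocks, $\EE[B] \le \alpha k$, and since $\alpha \le 1/10$ we have $k/2 \ge (1 + c)\alpha k$ for a constant $c$; a multiplicative Chernoff bound then gives $\Pr(B \ge k/2) \le \exp(-k \cdot D(1/2 \,\|\, \alpha))$ where $D$ is the binary KL divergence, and a clean way to bound this is $\Pr(B \ge k/2) \le (2\sqrt{\alpha(1-\alpha)})^{k} \le (2\alpha)^{k/2} \cdot$ (something $\le 1$), so that $\Pr(\text{fail}) \le \exp(-\tfrac{k}{2}\log\tfrac{1}{2\alpha})$. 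Plugging in $k \ge \frac{4}{\log\frac{1}{2\alpha}}\log\frac1\beta$ yields $\Pr(\text{fail}) \le \exp(-2\log\frac1\beta) = \beta^2 < \beta$, as required, uniformly over all $p \in \cP$.

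The main obstacle — really the only place requiring care — is getting the constant in the definition of $m$ to line up with whichever exact form of the Chernoff/KL bound one uses for the tail of $B$; the factor $4$ (rather than $2$) provides comfortable slack, and one should double-check the inequality $\Pr(\mathrm{Binomial}(k,\alpha) \ge k/2) \le \exp(-\tfrac k2 \log\tfrac{1}{2\alpha})$ holds for all $\alpha \le 1/10$, which follows from the standard bound $\Pr(\mathrm{Bin}(k,\alpha)\ge k/2) \le \binom{k}{k/2}\alpha^{k/2} \le 2^k \alpha^{k/2} = (2\sqrt\alpha)^k \le \exp(-\tfrac k2\log\tfrac1{4\alpha})$, and absorbing the difference between $\log\frac{1}{2\alpha}$ and $\log\frac{1}{4\alpha}$ into the slack. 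Two minor bookkeeping points: one must note that discarding the remainder $m - kn < n$ samples is harmless, and that the internal randomness of the $k$ invocations of $\hat f_A$ is taken mutually independent so the block-goodness events are genuinely independent. The rest is routine.
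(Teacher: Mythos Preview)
Your proposal is correct and follows essentially the same approach as the paper: partition the sample into $t=\lfloor m/n\rfloor$ blocks, apply $\hat f_A$ to each, take the median, and bound the failure probability $\Pr(\bin(t,\alpha)\ge t/2)$ via a Chernoff-type inequality. The only cosmetic difference is that the paper uses the KL-form of the Chernoff bound and the numerical fact $c-1-\tfrac{c}{2}\log c>0$ for $c\ge 5$ to obtain $\exp(-\tfrac{t}{4}\log\tfrac{1}{2\alpha})$ directly, whereas you use the cruder $(2\sqrt{\alpha})^{k}$ bound and then check the resulting constant still suffices; both routes land on the same conclusion with the stated factor~$4$.
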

\begin{proof}
Given $t\in\mathbb{N}$ \iid copies of $\hat{f}_A(X^n)$, the probability that less than 
half of them satisfy the inequality in the parentheses is at least
\[
\Pr\Paren{ \sum_{i=1}^t \indic_{A_i} < \frac{t }{2} 
\text{ for $A_i$'s} \text{ satisfying } \Pr(A_i)< \alpha}
\ge
\Pr\Paren{
\bin\Paren{t, \alpha}
<
\frac t2
}.
\]
By the law of total probability, the right-hand side equals to 
\begin{align*}
1-\Pr\Paren{
\bin\Paren{t, \alpha}
\ge
\frac t2}
&\ge 
1-\exp\Paren{\Paren{\Paren{\frac{1}{2\alpha}-1}-\frac{1}{2\alpha}\log \frac{1}{2\alpha}}\cdot \alpha t}\\
&\ge
1-\exp\Paren{-\frac{t}{4}\log \frac{1}{2\alpha}}
,
\end{align*}
where the first step follows by the Chernoff bound of binomial random variables, 
and the second step follows by 
$\alpha \le1/10$ and the inequality $c-1-\frac c2 \log c>0, \forall c\ge5$.

Set $t:=\left\lceil\frac{4}{\log \frac{1}{2\alpha}} \log \frac1{\beta}\right\rceil$, 
the right-hand side is at least $1-\beta$. 

Therefore, given a sample of size $m = t\cdot n$, we can partition it into $t$ sub-samples of equal size, apply the estimator $\hat f_A$ to each subsample, and define the median of the corresponding estimates~as~$\hat f_B$. 

By the previous reasoning, this estimator satisfies
\[
\Pr_{Y^m\sim p}
\Paren{\Abs{\hat f_B(Y^{m})-f(p)}> \ve}< \beta, \ \forall p\in \cP. \qedhere
\]
\end{proof}

\subsection{Profile Maximum Likelihood and Its Adaptiveness}\label{sup:1.5}

For every profile $\bphi$ of length $n$ and distribution collection $\cP\subseteq\Delta_\cX$, 
the \emph{profile maximum likelihood} (PML) estimator~\cite{orlitsky2004modeling} over $\cP$ maps $\bphi$ to a distribution 
\[
\cP_\bphi:=\argmin_{p\in \cP} \Pr_{X^n\sim p}\!\Paren{\va(X^n)=\bphi}, 
\]
that maximizes the probability of observing the profile~$\bphi$.

For any property $f$, let $\Ve_f(n, \delta, \cP)$ denote 
the smallest error that can be achieved by any estimator 
with a sample size $n$ and tolerance $\delta$ on the error probability.
This definition is equivalent to that of the sample complexity. 
In the following, we show that the PML estimator is adaptive to the 
simplicity of underlying distributions in inferring all symmetric properties,
over any label-invariant $\cP$.  
 
For brevity, set $\delta=1/10$ and 
suppress both $\delta$ and $\cP$ in $\Ve_f(n, \delta, \cP)$.

\begin{Theorem}[Adaptiveness of PML]\label{thm:adp_pml}
Let $f$ be a symmetric property and $\cP\subseteq\Delta_\cX$
be a label-invariant distribution collection. 
For any $p\in \cP$ and $\Phi^n\sim p$, 
with probability at least $1-\mathcal{O}(1/\sqrt n)$, \vspace{-0.5em}
\[
\Abs{f(p)-f(\cP_{\Phi^n}\!)}\le 
2\Ve\!_f\!\Paren{\!
\frac{\tilde\Omega\!\Paren{n}}{\C{H\!(\Phi^n)}} 
\!}. 
\]
\end{Theorem}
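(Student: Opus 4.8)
The plan is to combine three ingredients already available in the excerpt: (i) the sufficiency of profiles for symmetric-property estimation (Theorem~\ref{thm1}); (ii) the median trick (Lemma~\ref{lem:median_trick}) to boost confidence; and (iii) the standard PML competitiveness argument relating the error of the PML plug-in estimate to the worst-case optimal error, combined with the concentration results for $\mathcal D_n$ and the equivalence $\C{H(\Phi^n)}=\tilde\Theta(\mathcal D(\Phi^n))$ (Theorems~\ref{thm:exp_conc} and~\ref{thm:entro_eql_dim}). The key conceptual point is that PML ``adapts'' because the number of distinct profiles of the relevant shape is controlled by $\mathcal D(\Phi^n)$, not by the trivial bound $\sqrt n$, so the union bound over competing profiles costs only $\exp(\tilde O(\mathcal D(\Phi^n)))$ rather than $\exp(O(\sqrt n))$.

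First I would fix a symmetric property $f$, a label-invariant $\cP$, an accuracy level $\ve$, and set $m := \tilde\Omega(n)/\C{H(\Phi^n)}$ to be the sample size at which the minimax error $\Ve_f(m)$ is the target; here $\Ve_f$ is evaluated at error tolerance, say, $\delta_0$ chosen as a small absolute constant. By Theorem~\ref{thm1} there is a profile-based estimator $\hat f_\va$ achieving error $\ve = \Ve_f(m)$ with probability $1-\delta_0$ on length-$m$ samples; apply the median trick (Lemma~\ref{lem:median_trick}) with $t = \tilde\Theta(\mathcal D(\Phi^n))$ repetitions to get a profile estimator $\hat g$ on length-$n$ samples (so that $n = tm$, i.e. $m = n/t = \tilde\Omega(n)/\C{H(\Phi^n)}$ after invoking $t = \tilde\Theta(\C{H(\Phi^n)})$ via Theorem~\ref{thm:entro_eql_dim}) whose failure probability is $\beta := \exp(-c\,t) = \exp(-\tilde\Theta(\mathcal D(\Phi^n)))$. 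The point of pushing $\beta$ this small is that it will survive a union bound over all profiles that are ``likely'' under some $q\in\cP$ with the observed profile $\Phi^n$ as PML maximizer.

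Next comes the core PML step, following the template of~\cite{acharya2017unified}. Conditioned on the high-probability event from Theorem~\ref{thm:exp_conc} that $\mathcal D(\Phi^n)$ is within a constant factor of $E_n(p)$, we know $\mathcal D(\Phi^n) = \tilde\Theta(\mathcal D_n)$; I would argue that the PML distribution $\cP_{\Phi^n}$ has, up to a polynomial-in-$n$ factor that is absorbed by $\tilde\Omega$, at least the probability that the true $p$ assigns to $\Phi^n$, hence $\Pr_{Y^n\sim \cP_{\Phi^n}}(\va(Y^n)=\Phi^n) \ge \Pr_{X^n\sim p}(\va(X^n)=\Phi^n) \ge \exp(-\tilde O(\mathcal D(\Phi^n)))$, where the last inequality uses that $\Phi^n$ is a ``typical'' profile whose probability is at least the reciprocal of the number of profiles of comparable dimension, which is $\exp(\tilde O(\mathcal D(\Phi^n)))$ rather than $\exp(O(\sqrt n))$ — this is exactly where the improvement over the $\sqrt n$ bound of~\cite{acharya2017unified} comes from. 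If $\hat g$ (applied to a fresh length-$n$ sample from $\cP_{\Phi^n}$) failed with probability $\ge \beta$ per profile, then by a union bound over the $\le \exp(\tilde O(\mathcal D(\Phi^n)))$ profiles of bounded dimension, $\cP_{\Phi^n}$ would generate with constant probability a profile $\bphi$ on which $|\hat g(\bphi) - f(\cP_{\Phi^n})| > \ve$; but since $\cP_{\Phi^n}$ assigns probability $\ge \exp(-\tilde O(\mathcal D(\Phi^n)))$ to $\Phi^n$ itself, this forces $|\hat g(\Phi^n) - f(\cP_{\Phi^n})|\le \ve$; and $\hat g(\Phi^n)$ is also within $\ve$ of $f(p)$ since $\Phi^n\sim p$ and $\hat g$ is correct on $p$ with high probability. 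The triangle inequality then gives $|f(p) - f(\cP_{\Phi^n})|\le 2\ve = 2\,\Ve_f(m) = 2\,\Ve_f(\tilde\Omega(n)/\C{H(\Phi^n)})$.

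The main obstacle I anticipate is making the dimension-counting precise: I need a clean bound showing that the number of length-$n$ profiles $\bphi$ for which $\Pr_{Y^n\sim \cP_{\Phi^n}}(\va(Y^n)=\bphi)$ is non-negligible is at most $\exp(\tilde O(\mathcal D(\Phi^n)))$, and correspondingly that the PML maximizing probability at $\Phi^n$ is at least $\exp(-\tilde O(\mathcal D(\Phi^n)))$. This requires relating the dimension of $\Phi^n$ to the ``effective support'' of $\cP_{\Phi^n}$ and invoking the concentration of prevalences — essentially an entropy/counting bound on profiles stratified by dimension, which should follow from the block-compression bound $2(\log n)\mathcal D(\cdot)$ already used for Theorem~\ref{thm:entro_eql_dim}, but the conditioning on the high-probability events of Theorems~\ref{thm:exp_conc} and~\ref{thm:entro_eql_dim} has to be threaded carefully so that the $1-\mathcal O(1/\sqrt n)$ success probability is preserved through the chain of union bounds. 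A secondary technical point is choosing the median-trick repetition count $t$ consistently with the claimed $\tilde\Omega(n)$ in the numerator, i.e. verifying that the logarithmic losses from Lemma~\ref{lem:median_trick} and from $\C{H(\Phi^n)}=\tilde\Theta(\mathcal D(\Phi^n))$ only contribute to the hidden polylog factors.
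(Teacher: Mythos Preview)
Your high-level plan matches the paper's: sufficiency of profiles, median trick to drive the failure probability down to $\exp(-\tilde\Theta(\mathcal D(\Phi^n)))$, and then the PML competitiveness step combined with a counting bound on ``typical'' profiles of size $\exp(\tilde O(\mathcal D(\Phi^n)))$. But the PML step as you describe it is tangled, and the ``main obstacle'' you anticipate is a red herring that could send you down the wrong path.

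The counting/typical-set bound is needed for $p$, not for $\cP_{\Phi^n}$. You never need to control the profiles that $\cP_{\Phi^n}$ is likely to generate, nor its ``effective support.'' The clean argument is: (a) by the concentration of $\mathcal D_n$ (Theorem~\ref{thm:exp_conc}), with probability $1-O(1/\sqrt n)$ the observed profile $\Phi^n$ lies in a set of cardinality at most $n^{O(\mathcal D(\Phi^n)+\log n)}$ that carries $p$-mass $\ge 1-O(1/\sqrt n)$; hence $\Pr_p(\Phi^n=\bphi)\ge 2\delta'$ for the realized $\bphi$ once $\delta'$ is chosen below the reciprocal of that cardinality. (b) For any such $\bphi$, the PML definition immediately gives $\Pr_{\cP_{\bphi}}(\Phi^n=\bphi)\ge \Pr_p(\Phi^n=\bphi)\ge 2\delta'$; no separate analysis of $\cP_{\bphi}$ is required. (c) Since the boosted profile estimator $\hat g$ has failure probability $<\delta'$ uniformly over $\cP$, on the fixed profile $\bphi$ one gets $\Pr(|\hat g(\bphi)-f(p)|>\ve)<1/2$ and $\Pr(|\hat g(\bphi)-f(\cP_{\bphi})|>\ve)<1/2$, so by the union bound the event $|f(p)-f(\cP_{\bphi})|\le 2\ve$ has positive probability and is therefore sure. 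Your contrapositive phrasing (``if $\hat g$ failed with probability $\ge\beta$ per profile, then by a union bound\ldots'') inverts the logic and introduces an unnecessary union bound over $\cP_{\Phi^n}$'s outputs. Drop that and the argument goes through without the obstacle you flagged.
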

\begin{proof}
For any tolerance $\delta\in (0,1)$ and 
distribution $p\in \Delta_\cX$, define the 
\emph{$(\delta, n)$-typical cardinality of profiles 
with respect to $p$}
 as the smallest cardinality $C_{\delta, n}(p)$ of a set of length-$n$ profiles such 
that the probability of observing a sample from $p$ with a profile in this set is at least $1-\delta$. The following lemma 
provides a tight characterization of $C_{\delta, n}(p)$ in terms 
of the dimension of $\Phi^n\sim p$. 
\begin{Lemma}\label{typical_profile}
For any $p\in\Delta_\cX$ and $\Phi^n\sim p$,  
with probability at least $1-6/\sqrt{n}$,
\[
C_{\frac{6}{\sqrt{n}}, n}(p)
\le 
n^{8\Paren{\mathcal D(\Phi^n) + 20\log n}}.
\]
\end{Lemma}
The proof of the lemma follows by recursively applying Theorem~\ref{thm:exp_conc}.
Specifically, let $d:=2E_n(p)+3\log n$, which is at least $\mathcal D_n\sim p$,
with probability at least $1-6/\sqrt n$. Then, 
\[
C_{\frac{6}{\sqrt{n}}, n}(p) 
\le \binom{n}{d} \binom{n+d-1}{d-1}
\le n^{2d-1}
\le n^{2\Paren{2E_n(p)+3\log n}} 
\le n^{8{\mathcal D}(\Phi^n) + 20\log n},
\]
where the last inequality holds with with probability at least $1-6/\sqrt n$. 

Now let $f$ be a symmetric functional over $\cP$. According to Theorem~\ref{thm1}, for any parameters $\ve>0$ and $\delta\in(0,1)$, if there exists an estimator $\hat{f}$ such that 
\[
\Pr_{X^n\sim p}
\Paren{\Abs{\hat f(X^n)-f(p)}> \ve}< \delta, \ \forall p\in \cP,
\]
there is an estimator $\hat f_\varphi$ over $\Phi$ satisfying 
\[
\Pr_{X^n\sim p}
\Paren{\Abs{\hat f_\varphi(\varphi(X^n))-f(p)}> \ve}< \delta, \ \forall p\in \cP.
\]
For an arbitrary length-$n$ profile $\boldsymbol\phi$ that satisfies $\Pr_{\Phi^n\sim p}(\Phi^n= \boldsymbol\phi)\ge 2\delta$, these error bounds yield
\[
\Pr
\Paren{\Abs{\hat f_\varphi(\boldsymbol\phi)-f(p)}> \ve}< \frac{1}{2}.
\]
and since 
$\Pr_{\Phi^n\sim \cP_{\boldsymbol\phi}}\!\Paren{\Phi^n=\boldsymbol\phi}\ge 
\Pr_{\Phi^n\sim p}(\Phi^n= \boldsymbol \phi)\ge 2\delta$ by the definition of PML, 
\[
\Pr\Paren{\Abs{\hat f_\varphi(\boldsymbol\phi)-f(\cP_{\boldsymbol\phi})}> \ve} 
< \frac{1}{2}.
\]
By the union bound and triangle inequality, 
\[
\Pr\Paren{\Abs{f(p)-f(\cP_{\boldsymbol\phi})}> 2\ve} < 1
\iff
\Abs{f(p)-f(\cP_{\boldsymbol\phi})}\le 2\ve\ \text{ surely}.
\]
Furthermore, 
by Lemma~\ref{typical_profile},
with probability at least $1-6/\sqrt{n}$,
the total probability of length-$n$ profiles $\boldsymbol\phi$ 
satisfying $\Pr_{\Phi^n\sim p}(\Phi^n = \boldsymbol\phi)< 2\delta$  is at most 
\[
2\delta\cdot C_{\frac{6}{\sqrt{n}}, n}(p)
+\frac{6}{\sqrt{n}}
\le 
2\delta\cdot n^{8\mathcal D(\Phi^n) + 20\log n}+\frac{6}{\sqrt{n}},
\]
which basically upperly bounds the probability that 
\[
\Abs{f(p)-f(\cP_{\Phi^n})}> 2\ve.
\]
Next we will assume that
there exists an estimator $\hat{f}$ satisfying
\[
\Pr_{X^m\sim p}
\Paren{\Abs{\hat f(X^m)-f(p)}> \ve}< \delta, \ \forall p\in \cP.
\]
By Lemma~\ref{lem:median_trick}, if $\delta\le 1/10$, we can construct another 
estimator $\hat f'$ that takes a sample of size 
$n = \frac{4m}{\log \frac{1}{2\delta}} \log \frac1{\delta'}$ 
($n$ is assumed to be an integer here) 
and achieves a higher-confidence guarantee 
\[
\Pr_{X^n\sim p}
\Paren{\Abs{\hat f'(X^{n})-f(p)}> \ve}< \delta', \ \forall p\in \cP.
\]
Then by the above reasoning, with probability at least $1-6/\sqrt n$, 
\begin{align*}
\Pr_{\Phi^n\sim p}\Paren{\Abs{f(p)-f(\cP_{\Phi^n})}>2\ve}
&\le 
2\delta'\cdot n^{8\mathcal D(\Phi^n) + 20\log n}+\frac{6}{\sqrt{n}}\\
&= 
2\exp\Paren{-\frac{n}{4m}\log\frac{1}{2\delta}
+\Paren{8\mathcal D(\Phi^n) + 20\log n}\log n
}+\frac{6}{\sqrt{n}}.
\end{align*}
For the first term on the right hand side to 
vanish as quickly as $1/\sqrt{n}$, it suffices to have 
\begin{align*}
\frac{n}{4m}\log\frac{1}{2\delta}\ge 20\cdot \mathcal D(\Phi^n)\log n
&\iff
\frac{n}{\mathcal D(\Phi^n)\log n} \ge 80\cdot \frac{m}{\log \frac{1}{2\delta}},
\end{align*}
and simultaneously have 
\begin{align*}
\frac{n}{4m}\log\frac{1}{2\delta}\ge 40\cdot \log^2 n
&\iff
\frac{n}{\log^2 n} \ge 160\cdot\frac{m}{\log \frac{1}{2\delta}}.
\end{align*}
Simplify the expressions and apply the union bound. It suffices to have both
\[
\frac{\tilde \Theta (n)}{\mathcal D(\Phi^n)} \ge  \frac{m}{\log \frac{1}{\delta}}
\text{ and }
n\ge 8m.
\]
If $n$ satisfies these conditions, with probability at least $1-\Theta(1/\sqrt n)$,
\[
\Abs{f(p)-f(\cP_{\Phi^n})}\le 2\ve.
\]

\paragraph{Summary} 
We have shown the following result,
which is a strengthened version of Theorem~\ref{thm:adp_pml}.  
The result shows that the PML plug-in estimator possesses the 
amazing ability of adapting to the simplicity of data 
distributions in inferring all symmetric properties,
over any label-invariant classes.

\emph{ If there exists an estimator $\hat{f}$ such that
\[
\Pr_{X^m\sim p}
\Paren{\Abs{\hat f(X^m)-f(p)}> \ve}< \delta, \ \forall p\in \cP,
\]
for any $p\in \cP$ and $\Phi^n\sim p$ where the sample size $n$ satisfies both
\[
\frac{\tilde \Theta (n)}{\mathcal D(\Phi^n)} \ge  \frac{m}{\log \frac{1}{\delta}}
\text{ and }
n\ge 8m,
\]
with probability at least $1-\Theta(1/\sqrt n)$,
\[
\Abs{f(p)-f(\cP_{\Phi^n})}\le 2\ve. \vspace{-1.5em}
\]}
\subsection*{Alternative statement}
Fix $\cP$ and assume that $\delta\le1/10$. 
Recall that $\ve\!_f(n, \delta)$ denotes the smallest error 
that can be achieved by the best estimator using 
a size-$n$ sample with a $(1-\delta)$-confidence guarantee. 
Below we provide an alternative statement of the above result,
which is more compact in its form. 

Then, draw a profile $\Phi^n\sim p$. 
With probability at least $1-\Theta(1/\sqrt n)$, 
\[
\Abs{f(p)-f(\cP_{\Phi^n})}\le 
2\text{\large$\ve$}\!_f\!\Paren{
\frac{\tilde\Theta\!\Paren{n} \log \frac{1}{\delta}}{\mathcal{D}(\Phi^n)}
\Lambda\
\frac{n}{8} 
,\delta
}. \vspace{-0.5em}
\]
Consolidating this with Theorem~\ref{thm:entro_eql_dim} yields that 
\[
\Abs{f(p)-f(\cP_{\Phi^n})}\le 
2\text{\large$\ve$}\!_f\!\Paren{
\frac{\tilde\Theta\!\Paren{n} \log \frac{1}{\delta}}{\C{H\!(\Phi^n)}}
\Lambda\
\frac{n}{8}
,\delta
}. 
\]
Setting $\delta=1/10$ and suppressing it in expressions,
we establish the desired guarantee:  
For any $p\in \cP$ and $\Phi^n\sim p$, 
with probability at least $1-\mathcal{O}(1/\sqrt n)$, \vspace{-0.5em}
\[
\Abs{f(p)-f(\cP_{\Phi^n}\!)}\le 
2\Ve\!_f\!\Paren{\!
\frac{\tilde\Omega\!\Paren{n}}{\C{H\!(\Phi^n)}} 
\!}. \qedhere
\]
Below are some comments in order.
\begin{enumerate}
\item The theorem holds for any symmetric properties, 
while nearly all previous works require the property to possess certain forms and be smooth.
\item The theorem trivially implies a weaker result in~\cite{acharya2017unified} 
where $\C{H\!(\Phi^n)}$ is replaced by $\mathcal{O}(\sqrt n)$, an upper bound due to the 
formula of~\cite{hardy1918asymptotic}.
\item There is a polynomial-time approximation algorithm~\cite{charikar2019bethe} 
achieving the same guarantee as that stated in the theorem.
\end{enumerate}

\subsection{PML and Sorted Distribution Estimation}\label{sec:pml_sort}
The arguments below basically follow by~\cite{hao2019broad} and~\cite{hao2019unified}, 
in which we assume that $|\cX|=\mathcal{O}(n\log n)$.

Let $f$ be a function in $\mathcal{L}_1$, the collection of Lipschitz functions over $[0,1]$. Without loss of generality, we also assume that $f(0)=0$.  Let $\eta\in(0,1)$ be a threshold parameter to be determined later. An \emph{$\eta$-truncation} of $f$ is a function
\[
f_\eta(z):= f(z)\indic_{z\leq \eta}+f(\eta) \indic_{z> \eta}. 
\]
One can easily verify that $f_\eta\in\mathcal{L}_1$. 
Next, we find a finite subset of $\mathcal{L}_1$ so that the $\eta$-truncation of any $f\in\mathcal{L}_1$ is close to at least one of the functions in this subset. 

For an integer parameter $s>3$ to be chosen later. Partition the interval $[0,\eta]$ into $s$ disjoint sub-intervals of equal length, and define the sequence of end points as $z_j:=\eta\cdot j/s, j\in \lceil s\rfloor$ where $\lceil s\rfloor:=\{0,1,\ldots, s\}$. Then, for each $j\in\lceil s\rfloor$, 
we find the integer $j'$ such that $|f_\eta(z_j)-z_{j'}|$ is minimized and denote it by $j^*$. \vspace{-0.1em} Since $f_\eta$ is 1-Lipschitz, we must have $|j^*|\in\lceil j\rfloor$. Finally, we connect the points $Z_j:=(z_j, z_{j^*})$ sequentially. This curve is continuous and corresponds to a particular $\eta$-truncation $\tilde{f}_\eta\in \mathcal{L}_1$, which we refer to as the \emph{discretized $\eta$-truncation} of $f$. Intuitively, we have constructed an $(s+1)\times(s+1)$ grid and ``discretized'' function $f$ by finding its closest approximation in $\mathcal{L}_1$ whose curve only consists of edges and diagonals of the grid cells. By construction, 
\[
\max_{z\in[0,1]}|f_\eta(z)-\tilde{f}_\eta(z)|\leq \frac{\eta}{s}.
\]
Therefore, for any $p\in\cP:=\Delta_\cX$, the corresponding properties of $f_\eta$ and $\tilde{f}_\eta$ satisfy
\[
|f_\eta(p)-\tilde{f}_\eta(p)|\leq |\cX|\cdot \frac{\eta}{s},
\]
where we slightly abuse the notation 
and write $\tilde{f}_\eta(p):=\sum_x \tilde{f}_\eta(p_x)$. 
Note that $|j^*|\in\lceil j\rfloor$ for all $j\in\lceil s\rfloor$, and $\tilde{f}_\eta(z)=z_{s^*}$ for $z\geq \eta$. While there are infinitely many $\eta$-truncations, the cardinality of the discretized $\eta$-truncations of functions in $\mathcal{L}_1$ is at most 
\[
\prod_{j=0}^{s}(2j+1)=(s+1) \prod_{j=0}^{s-1}(2j+1)(2s-2j+1)\leq {(s+1)}^{2s+1}=e^{(2s+1)\log (s+1)}\leq e^{3s\log s}.
\]
Now we consider the task of estimating $\tilde{f}_\eta(p)$ from $X^n\sim p$.
By construction, the real function $\tilde{f}_\eta(z)$ is a constant for $z\ge \eta$. 
In addition, the function is Lipschitz and hence for an absolute constant $C$ and an
arbitrary interval $I:=[a, b]\subseteq [0,1]$, 
one can construct an explicit polynomial $g(z)$ of degree at most $d\in\mathbb N$,
satisfying \vspace{-0.5em}
\[
|g(z)-\tilde{f}_\eta(z)|\le C\frac{\sqrt{|b-a|(x-a)}}{d},\forall x\in I.
\]
Combining these facts with Theorem 5 in~\cite{hao2019unified} shows that
there exists an estimator $\hat f_\eta(X^n)$ that for all 
$p\in \Delta_\cX$ and $\varepsilon$ satisfying 
$n=\Omega(|\cX|/(\varepsilon^2\log |\cX|))$ and $\varepsilon >1/n$,
\[
\Pr_{X^n\sim p}\Paren{\Abs{\hat f_\eta(X^n)-\tilde f_\eta(p)}>\varepsilon}
\le
\exp(-\Theta(\varepsilon^2/(n^\lambda \eta) )),
\]
where $\lambda \in (0,1)$ is an 
absolute constant bounded away from $0$ 
whose value can be arbitrarily small. 

Then, by setting $\eta \le \mathcal{O}(\varepsilon^2/n^{1/2+\lambda})$,
where the asymptotic notation hides a sufficiently small absolute constant,
the right-hand side is at most $\exp(-4\sqrt n)$. Then, Theorem~\ref{thm1} in this paper and Theorem 3 in~\cite{acharya2017unified} imply that the 
PML distribution $\cP_{\varphi(X^n)}$ satisfies 
\[
\Pr_{X^n\sim p}\Paren{\Abs{\tilde f_\eta(\cP_{\varphi(X^n)})-\tilde f_\eta(p)}>2\varepsilon}
\le
\exp(-\sqrt n), \forall p\in \Delta_\cX. 
\]
Consider any $p\in\Delta_\cX$ and $X^{n}\sim p$ 
with a profile $\varphi:=\varphi(X^n)$. Consolidate 
the previous results, and apply the union bound 
and triangle inequality. With probability at least $1-\exp\Paren{3s\log s-\sqrt n}$, the PML plug-in estimator will satisfy
\begin{align*}
|f_\eta(p)-f_\eta(p_{\varphi})|
&\leq  |f_\eta(p)-\tilde{f}_\eta(p)|+|\tilde{f}_\eta(p)-\tilde{f}_\eta(p_{\varphi})|+ |\tilde{f}_\eta(p_{\varphi})-f_\eta(p_{\varphi})|
\leq 2|\cX|\cdot\frac{\eta}{s}+2\varepsilon,
\end{align*}
for \emph{all} functions $f$ in $\mathcal{L}_1$.

Next we consider the ``second part'' of a function $f\in\mathcal{L}_1$, namely,
\[
\bar{f}_{\eta}(z):=f(z)-f_{\eta}(z)=(f(z)-f(\eta)) \indic_{z> \eta}.
\]
Again, we can verify that $\bar{f}_{\gamma}\in \mathcal{L}_1$. To establish the corresponding guarantees, we make use of the following result. Since the profile probability is invariant to symbol permutation, for our purpose, we can assume that $p(y)\leq p(z)$ iff $\cP_{\varphi}(x)\leq \cP_{\varphi}(y)$, for all $x, y\in \cX$. Under this assumption, the following lemma relates~$\cP_{\varphi}$~to~$p$. 
\begin{Lemma}\label{lem:pmlcon}
For any $\eta\in(0,\mathcal{O}(1/\sqrt n))$, distribution $p\in\Delta_\cX$ and sample $X^n\sim p$ with profile $\varphi$,
\[
\Pr\Paren{\sum_x|\max\{\cP_{\varphi}(x), \eta\}-\max\{p(x), \eta\}|>
\Theta\Paren{\sqrt{\frac{1}{\eta n}}}}
\le \exp(-\sqrt n).
\]
\end{Lemma}
The proof of this lemma follows from 
1) the fact that empirical distribution satisfies such a guarantee with the probability bound being $\exp(-4\sqrt n)$, where we ignore labelings
and sort the empirical probabilities according to $p$; 
2) the Cauchy-Schwarz inequality applied to bound the expected error
 of the empirical estimator and McDiarmid's inequality 
 used to bound the error probability; 
3) a variant of Theorem~3 in~\cite{acharya2017unified} that addresses distribution estimation~\cite{das2012competitive}.

Hence, with probability at least $1-\exp(-\sqrt n)$,
\begin{align*}
|\bar{f}_\eta(p)-\bar{f}_\eta(\cP_{\varphi})|
&=|\sum_{x}\bar{f}_\eta(p(x))-\bar{f}_\eta(\cP_{\varphi}(x))|\\
&\leq 
\sum_x |\bar{f}_\eta(\max\{p(x), \eta\})- \bar{f}_\eta(\max\{\cP_{\varphi}(x), \eta\})|\\
&\le
\sum_x|\max\{\cP_{\varphi}(x), \eta\}-\max\{p(x), \eta\}|\\
&\le 
\Theta\Paren{\sqrt{\frac{1}{\eta n}}},
\end{align*}
for \emph{all} functions $f$ in $\mathcal{L}_1$.

Consolidate the previous results. By the triangle inequality and the union bound, with probability at least $1-\exp\Paren{3s\log s-\sqrt n}-\exp(-\sqrt n)$,
\begin{align*}
|f(p)-f(p_{\varphi})|
&\leq |f_\eta(p)-f_\eta(\cP_{\varphi})|+|\bar{f}_\eta(p)-\bar{f}_\eta(\cP_{\varphi})|\\
&\leq 2|\cX|\cdot\frac{\eta}{s}+2\varepsilon+\Theta\Paren{\sqrt{\frac{1}{\eta n}}}
,
\end{align*}
for \emph{all} functions $f$ in $\mathcal{L}_1$. Now we can conclude that $\ell_1^{\text{\tiny{<}}}\Paren{p,p_{\varphi}}$ is also at most the error bound on the right-hand side. The reason is straightforward: Since with high probability, the above guarantee holds for all functions in $\mathcal{L}_1$, it must also hold for the function that achieves the supremum in
\[
 \sup_{f\in\mathcal{L}_1} \Abs{f(p)- f(\cP_{\varphi})}=\ell_1^{\text{\tiny{<}}}\Paren{p,\cP_{\varphi}}.
\]
It remains to balance the error bounds on the estimation and deviation probability.
Recall that we assume $|\cX|\le \mathcal{O}(n\log n)$ since otherwise the theorem is trivial to prove. Set $s=\tilde\Theta(\sqrt n)$ such that $3s\log s<\sqrt n/2$.
Then, the confidence lower bound becomes $1-\exp(\sqrt n/2)-\exp(\sqrt n)$,
and the deviation bound reduces to $\tilde{\mathcal{O}}(\sqrt n\eta)+\Theta(\sqrt{1/(\eta n)})+2\varepsilon$. The previous derivations also require  
that $\eta \le \mathcal{O}(\varepsilon^2/n^{1/2+\lambda})$ and 
$\eta\in(0,\mathcal{O}(1/\sqrt n))$. Setting $\eta=\Theta(1/n^{3/4})$ 
yields the desired result. 

\end{proof}
\vfill
\pagebreak

\subsection{PML and Uniformity Testing}\label{sec:pml_uniform}
For a finite domain $\cX$, denote by $u_\cX$ the uniform distribution over $\cX$.  Given an error parameter $\varepsilon>0$ and a sample $X^n$ from an unknown distribution $p\in \Delta_\cX$, \emph{uniformity testing}~\cite{goldreich2011testing} aims to distinguish between the null hypothesis
\[
p=u_{\cX},
\]
and the alternative hypothesis
\[
\norm{p-u_\cX}_1>\varepsilon. 
\]
In the work of~\cite{hao2019broad}, 
it is shown that the following simple PML-based algorithm achieves the 
optimal $\Theta(\sqrt {|\cX|}/\varepsilon^2)$ 
sample complexity~\cite{paninski2008coincidence} for uniformity testing,
up to logarithmic factors of 
the alphabet size $|\cX|$.
Note that we instantiate the distribution collection $\cP$ as $\Delta_\cX$,
and use $0$ and $1$ to indicate whether $H_0$
or $H_1$ is accepted. 

\begin{figure}[ht]
\begin{center}
\boxed{
\begin{aligned}
&\text{\bf Input:}\ \ \text{parameters }|\cX|, \varepsilon, \text{and a sample }X^n\sim p \text{ with profile } \varphi.\\
&\ \text{\bf if } {\max}_x \mu_x(X^n)\geq 3\max\{1, n/|\cX|\}\log |\cX| \text{ \bf  then return } 1\text{;}\\
&\ \text{\bf elif } \norm{\cP_{\varphi}-u_\cX}_2\geq  3\varepsilon/ (4\sqrt{|\cX|}) \text{ \bf  then return }1\text{;}\\ 
&\ \text{\bf else}\text{ \bf return }0
\end{aligned}
}
\end{center}
\caption{Uniformity tester $T_{\text{\tiny PML}}$}
\label{fig:1}
\end{figure}

In this section, we present another intriguing connection between the PML estimator and the uniformity testing problem. For any profile $\varphi$ of length $n$, denote
\[
T(\varphi) =\frac{|\cX|(\sum_{\mu=1}^n \varphi_\mu \cdot \mu^2-n)}{n^2-n}.
\]
Then for any accuracy $\varepsilon>0$, the following uniformity tester~\cite{diakonikolas2016collision}
is sample optimal up to logarithmic factors. 
\begin{itemize}
\item If $T(\varphi(X^n))\ge 1+3\varepsilon^2/4$, return 1;
\item Else, return 0.
\end{itemize}
The following lemma connects the above algorithm to the PML. 

\begin{Lemma}~\cite{chan2015phase}
For any profile $\varphi:=\varphi(x^n)$ 
that corresponds to a non-constant sequence $x^n\in \cX^*$, 
\begin{itemize}
\item If $T(\varphi)> 1$, then $u_\cX$ is a local minimum of the PML optimization problem
\[
\cP_\cX 
=\max_{p\in \cP=\Delta_\cX} 
\Pr_{Y^n\sim p}(\varphi(Y^n) = \varphi);
\]
\item Else, $u_\cX$ is a local maximum.
\end{itemize}
\end{Lemma}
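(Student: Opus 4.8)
The plan is to treat this as a finite-dimensional calculus statement about the profile--probability function $p\mapsto\mathbb{P}_p(\varphi):=\Pr_{Y^n\sim p}(\varphi(Y^n)=\varphi)$ near $u_\cX$ (here $k:=|\cX|$ is finite and $\mathbb{P}_p(\varphi)$ is exactly the PML objective): I would expand $\mathbb{P}_p(\varphi)$ to second order around $u_\cX$ and show that its Hessian, restricted to the hyperplane tangent to $\Delta_\cX$, is a scalar multiple of the identity whose sign is precisely that of $T(\varphi)-1$. Since every sequence with profile $\varphi$ corresponds to a multiplicity function $m:\cX\to\{0,1,\dots\}$ whose multiset of positive values is $\varphi$, one has
\[
\mathbb{P}_p(\varphi)=\frac{n!}{\prod_{\mu\ge1}(\mu!)^{\varphi_\mu}}\sum_{m}\prod_{x\in\cX}p_x^{m_x},
\]
the sum over the $N:=k!/(\varphi_0!\prod_{\mu\ge1}\varphi_\mu!)$ such functions. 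Evaluating at $u_\cX$ gives $\mathbb{P}_{u_\cX}(\varphi)=N\,n!\,k^{-n}/\prod_{\mu\ge1}(\mu!)^{\varphi_\mu}>0$, so it is equivalent to study
\[
R(p):=\frac{\mathbb{P}_p(\varphi)}{\mathbb{P}_{u_\cX}(\varphi)}=\EE_m\Big[\prod_{x\in\cX}(kp_x)^{m_x}\Big],
\]
the expectation being over a uniformly random labeling $m$ consistent with $\varphi$. Parametrizing the simplex near $u_\cX$ by $p_x=\tfrac1k(1+\epsilon_x)$ with $\sum_x\epsilon_x=0$ turns $R$ into an honest polynomial $\EE_m[\prod_x(1+\epsilon_x)^{m_x}]$ in the $\epsilon_x$, so there are no convergence subtleties and I only need its terms of degree $\le2$.

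From $\prod_x(1+\epsilon_x)^{m_x}=1+\sum_x m_x\epsilon_x+\tfrac12\big[(\sum_x m_x\epsilon_x)^2-\sum_x m_x\epsilon_x^2\big]+O(\|\epsilon\|_2^3)$ the whole computation reduces to the first two moments of $m$ under a uniform labeling. For fixed $x$ one has $m_x=\mu$ with probability $\varphi_\mu/k$ (including $\mu=0$), hence $\EE[m_x]=n/k$ from $\sum_\mu\mu\varphi_\mu=n$ and $\EE[m_x^2]=M_2/k$ with $M_2:=\sum_\mu\mu^2\varphi_\mu$; a short conditional count over the joint law of $(m_x,m_y)$ gives $\EE[m_xm_y]=(n^2-M_2)/(k(k-1))$ for $x\ne y$. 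The linear term then vanishes since $\tfrac nk\sum_x\epsilon_x=0$, and using $\sum_x\epsilon_x=0$ — so that $\sum_{x\ne y}\epsilon_x\epsilon_y=-\|\epsilon\|_2^2$ — the quadratic term collapses to
\[
\tfrac12\Big(\tfrac{M_2k-n^2}{k(k-1)}-\tfrac nk\Big)\|\epsilon\|_2^2=\frac{k(M_2-n)-n(n-1)}{2k(k-1)}\,\|\epsilon\|_2^2=\frac{(T(\varphi)-1)\,n(n-1)}{2k(k-1)}\,\|\epsilon\|_2^2,
\]
the last equality being precisely the definition $T(\varphi)=k(M_2-n)/(n^2-n)$. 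Hence $R(p)=1+\tfrac{(T(\varphi)-1)n(n-1)}{2k(k-1)}\|\epsilon\|_2^2+O(\|\epsilon\|_2^3)$. Because $x^n$ is non-constant we have $n\ge2$ and at least two symbols occur, so $k\ge2$ and both $n(n-1)$ and $k(k-1)$ are positive; therefore the quadratic coefficient has the sign of $T(\varphi)-1$. If $T(\varphi)>1$, then $R(p)>1$ for every small nonzero $\epsilon$ with $\sum_x\epsilon_x=0$, i.e. $u_\cX$ is a strict local minimum of the profile probability; if $T(\varphi)<1$ the inequality reverses and $u_\cX$ is a strict local maximum. This is exactly the claimed dichotomy.

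The main obstacle is the borderline case $T(\varphi)=1$: there the quadratic form vanishes identically on $\{\sum_x\epsilon_x=0\}$, the second-order test is silent, and one must push the expansion to third (and if necessary fourth) order, computing the mixed third moments $\EE[m_xm_ym_z]$ of a uniform labeling and exploiting $\sum_x\epsilon_x=0$ to control the resulting cubic/quartic form — or else invoke whatever genericity hypothesis on $\varphi$ the source \cite{chan2015phase} imposes (unconditionally, ``local maximum'' is really guaranteed only when $T(\varphi)<1$). Apart from this degenerate case, the argument is just the moment bookkeeping above, once the labeling expansion of $\mathbb{P}_p(\varphi)$ is written down.
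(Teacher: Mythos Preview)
The paper does not actually supply a proof of this lemma: it is stated with a citation to \cite{chan2015phase} and then used as a black box to connect PML to uniformity testing. So there is no ``paper's own proof'' to compare against.

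That said, your argument is correct and is essentially the natural one. The representation of $\mathbb{P}_p(\varphi)$ as an average of monomials over uniformly random labelings, the second-order expansion in $\epsilon$ on the hyperplane $\sum_x\epsilon_x=0$, and the moment calculations $\EE[m_x]=n/k$, $\EE[m_x^2]=M_2/k$, $\EE[m_xm_y]=(n^2-M_2)/(k(k-1))$ are all right, and the identification of the quadratic coefficient with $(T(\varphi)-1)n(n-1)/(2k(k-1))$ is exactly the point. Your observation that non-constancy forces $n\ge 2$ and $k\ge 2$ (so the denominator is positive) is the right way to use the hypothesis.

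You are also right to flag the boundary $T(\varphi)=1$: the lemma's ``else'' clause nominally includes it, but your second-order test is silent there, and strictly speaking one only gets ``local maximum'' from the quadratic term when $T(\varphi)<1$. This is a genuine gap relative to the lemma \emph{as stated}, though it is a measure-zero degeneracy and does not affect the downstream application (the uniformity tester thresholds at $1+3\varepsilon^2/4$, well away from $1$). If you want to close it, the cleanest route is probably not higher-order expansion but rather to note that at $T(\varphi)=1$ the uniform is still a critical point and to argue non-strict local maximality directly, or simply to restate the dichotomy as $T(\varphi)>1$ versus $T(\varphi)<1$.
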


\vfill
\pagebreak

\section{Attributes of Profile Entropy and Dimension}
\vspace{0.25em}
Let $p\in \Delta_\cX$ be an arbitrary discrete distribution. 
Recall that in Section~\ref{sec:dim_and_entro}, 
we partition the unit interval into a sequence of ranges,
\[
I_j:=\left((j-1)^2\frac{\log n}{n},\ j^2 \frac{\log n}{n}\right], 1\le j\le \sqrt{\frac{n}{\log n}},
\]
denote by $p_{I_j}$ the number of probabilities $p_x$ belonging to $I_j$, 
and relate $E_n(p)$ to an induced shape-reflecting quantity,
\[
H^\mathcal{S}_n(p):= \sum_{j\ge 1} \min\Brace{p_{I_j}, j\cdot \log n},
\]
the sum of the effective number of probabilities lying within each range. 

The simple expression of $H^\mathcal{S}_n(p)$ shows that 
it characterizes the 
variability of ranges the actual probabilities spread over.
As Theorem~\ref{thm:hs_app_en} shows, 
$H^\mathcal{S}_n(p)$ closely approximates $E_n(p)$, 
the value around which $\mathcal D_n\sim p$ concentrates 
(Theorem~\ref{thm:exp_conc}) and $H(\Phi^n)$ lies 
(Thoerem~\ref{thm:entro_eql_dim}). 
In this section, we use $H^\mathcal{S}_n(p)$ as a proxy for both 
$H(\Phi^n)$ and $\mathcal D_n$,  
and study its attributes and values. 

To further our understanding of profile entropy and dimension, 
we investigate the analytical attributes of $H^\mathcal{S}_n(p)$
concerning monotonicity and Lipschitzness. 
Then, we present tight upper and lower bounds on the value of 
$H^\mathcal{S}_n(p)$ for 
a variety of distribution families. 
\vspace{0.5em}
\subsection{Monotonicity}
Among the many attributes that ${H^\mathcal{S}_n(p)}$ possesses,
monotonicity is perhaps most intuitive. 
One may expect a larger value of ${H^\mathcal{S}_n(p)}$ as the sample size $n$ increases, 
since additional observations reveal more information on the variability of probabilities. 
Below we confirm this intuition.
\setcounter{Theorem}{8} 
\begin{Theorem}\label{thm:pro_monot}
For any $n\ge m\gg1$ and $p\in \Delta_\cX$, 
\[
{H^\mathcal{S}_n(p)}\ge H^\mathcal{S}_m(p). 
\]
\end{Theorem}
Besides the above result that lowerly bounds ${H^\mathcal{S}_n(p)}$ with 
$H^\mathcal{S}_m(p)$ for $m\le n$,
a more desirable result is to upperly bound ${H^\mathcal{S}_n(p)}$ 
with a function of ${H^\mathcal{S}_m(p)}$.
 Such a result will enable us to draw a sample of size $m\le n$, 
obtain an estimate of ${H^\mathcal{S}_m(p)}$ from $\mathcal D_{m}$, 
and use it to bound the value of ${H^\mathcal{S}_n(p)}$ and thus of $\mathcal D_{n}$ 
for a much larger sample size $n$. 

With such an estimate, 
we can perform numerous tasks such as \emph{predicting}
the performance of PML in estimating symmetric properties 
when more observations are available,
and the space needed for storing a longer sample profile. 
These applications are closely related to the 
recent works on \emph{learnability estimation}~\cite{kong2018estimating, kong2019sublinear}.

The next theorem provides a simple and tight upper bound 
on ${H^\mathcal{S}_n(p)}$ in terms of ${H^\mathcal{S}_m(p)}$. 
\begin{Theorem}
For any $n\ge m\gg1$ and $p\in \Delta_\cX$, 
\[
{H^\mathcal{S}_n(p)}\le
\sqrt{\frac{n\log n}{m\log m}}\cdot {H^\mathcal{S}_m(p)}. 
\]
\end{Theorem}
\paragraph{Implications} 
Before proceeding to the proof, we first present two simple implications.
\begin{enumerate}
\item  If for $m=\Omega(n^{0.01})$, we have ${H^{\mathcal S}_m(p)}\ll \sqrt{m}$, then ${H^{\mathcal S}_n(p)}\ll \sqrt{n}$.

\item For any two integers $m\le n$ and distribution $p$, 
\[
\frac{{H^{\mathcal S}_m(p)}}{\sqrt{m\log m}} \ge \frac{{H^{\mathcal S}_n(p)}}{\sqrt{n\log n}}. 
\]
In other words, the sequence $A_m := {{H^{\mathcal S}_m(p)}}/{\sqrt{m\log m}}$, $m\le n$, is monotonically decreasing and converges to $A_n$. As we increase the value of $m$, $(\sqrt{n\log n}\cdot A_m)$, which can be viewed as our estimate of ${H^{\mathcal S}_n(p)}$, is getting more and more accurate. 
For the purpose of adaptive estimation, if $n = 2^t$, we can choose the sequence $m = 2^0, 2^1, \ldots, 2^t$.
\end{enumerate}
\begin{proof}
For clarity, we denote by $p(m,j)$ the value of $p_{I_j}$ 
corresponding to ${H^{\mathcal S}_m(p)}$, and $p(n, j)$ the value of $p_{I_j}$ 
corresponding to ${H^{\mathcal S}_n(p)}$. Furthermore, denote 
$r:=\sqrt{(n/m)((\log m)/\log n)}$, which we assume is an integer.
Then by the definition of $H^{\mathcal S}_{\boldsymbol\cdot}$,
\begin{align*}
r{H^{\mathcal S}_m(p)}
&= r\sum_{j\ge1}\min\Brace{p(m,j), j\cdot \log m}\\
&= \sum_{j\ge1}\min\Brace{r\cdot\!\!\sum_{i=rj-r+1}^{rj} p(n,i),\ rj\cdot \log m}\\
&\ge \sum_{j\ge1}\sum_{t=0}^{r-1}\min\Brace{\sum_{i=rj-r+1}^{rj} p(n,i),\ (rj-t)\cdot \log m}\\
&\ge \sum_{j\ge1}\sum_{t=0}^{r-1}\min\Brace{p(n,rj-t),\ (rj-t)\cdot \log m}\\
&=\sum_{i\ge1} \min\Brace{p(n,i),\ i\cdot \log m}\\
&\ge \frac{\log m}{\log n}\cdot {H^{\mathcal S}_n(p)}.
\end{align*}
The lower-bound part basically follows by reversing the above inequalities.
\begin{align*}
{H^{\mathcal S}_n(p)}
&=\sum_{i\ge1} \min\Brace{p(n,i),\ i\cdot \log n}\\
&= \sum_{j\ge1}\sum_{t=0}^{r-1}\min\Brace{p(n,rj-t),\ (rj-t)\cdot \log n}\\
&\ge \sum_{j\ge1}\sum_{t=0}^{r-1}\min\Brace{p(n,rj-t),\ (rj-r+1)\cdot \log n}\\
&\ge \sum_{j\ge1}\min\Brace{\sum_{t=0}^{r-1} p(n,rj-t),\ (rj-r+1)\cdot \log n}\\
&= \sum_{j\ge1}\min\Brace{p(m,j),(rj-r+1)\cdot \log m}\\
&\ge {H^{\mathcal S}_m(p)}. 
\end{align*}
This completes the proof of the theorem.
\end{proof}

\subsection{Lipschitzness}

Viewing ${H^\mathcal{S}_n(p)}$ as a distribution property, we establish its Lipschitzness 
with respect to a weighted Hamming distance and the $\ell_1$ distance. 
Given two distributions $p, q\in \Delta_\cX$, 
the vanilla \emph{Hamming distance} is denoted by 
\[
h(p,q) := \sum_{x\in \cX} \indic_{p_x\not= q_x}. 
\]
The distance is suitable for being a statistical distance since there may 
be many symbols at which the two distributions differ, yet those symbols
account for only a negligible total probability and has little effects on 
many induced statistics.
To address this, we propose a \emph{weighted Hamming distance} 
\[
h_{_{\mathcal W}}\!(p,q) 
:= 
\sum_{x\in \cX} \max\{p_x, q_x\}\boldsymbol\cdot\indic_{p_x\not= q_x}. 
\]
The next result measures the Lipschitzness of 
$H^\mathcal{S}_n$ under~$h_{_{\mathcal W}}$.
\begin{Theorem}
For any integer $n$, and distributions $p$ and $q$,
if $h_{_{\mathcal W}}\!(p,q)\le \ve$ for some $\ve\ge 1/n$, 
\[
\Abs{{H^\mathcal{S}_n(p)}-{H^\mathcal{S}_n(q)}}\le\tilde{\mathcal{O}}(\sqrt{\ve\! n}). 
\]
\end{Theorem}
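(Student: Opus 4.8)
The plan is to reduce the bound to a small fractional‑knapsack estimate: a perturbation of total weighted‑Hamming mass $\ve$ can redistribute only $\tilde{\mathcal O}(\sqrt{\ve\! n})$ worth of ``effective probability counts'' across the ranges $I_j$. Let $D:=\{x\in\cX:p_x\ne q_x\}$ be the set of disagreement symbols. Since $p_x=q_x$ for $x\notin D$, such an $x$ lies in the same $I_j$ under both distributions; hence, writing $d_j^p:=\Abs{\{x\in D:p_x\in I_j\}}$ and $d_j^q$ analogously, we get $p_{I_j}-q_{I_j}=d_j^p-d_j^q$. As $t\mapsto\min\{t,c\}$ is $1$‑Lipschitz and maps $[0,\infty)$ into $[0,c]$, together with $\Abs{d_j^p-d_j^q}\le d_j^p+d_j^q$ and the subadditivity $\min\{a+b,c\}\le\min\{a,c\}+\min\{b,c\}$ we obtain
\[
\Abs{{H^\mathcal{S}_n(p)}-{H^\mathcal{S}_n(q)}}\le\sum_j\min\{d_j^p+d_j^q,\,j\log n\}\le\sum_j\min\{d_j^p,j\log n\}+\sum_j\min\{d_j^q,j\log n\},
\]
so it suffices to prove $\sum_j\min\{d_j^p,j\log n\}\le\tilde{\mathcal O}(\sqrt{\ve\! n})$; the bound for $q$ follows identically.

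Next I would extract a single linear budget constraint. The hypothesis gives $\sum_{x\in D}p_x\le\sum_{x\in D}\max\{p_x,q_x\}=h_{\mathcal W}(p,q)\le\ve$, and any $x$ counted in $d_j^p$ with $j\ge2$ satisfies $p_x>(j-1)^2\log n/n$; therefore $\sum_{j\ge2}d_j^p\,(j-1)^2\le \ve\! n/\log n$. The range $j=1$ contributes at most $\log n$ to $\sum_j\min\{d_j^p,j\log n\}$, which is $\tilde{\mathcal O}(\sqrt{\ve\! n})$ because the assumption $\ve\ge1/n$ forces $\sqrt{\ve\! n}\ge1$. So the task reduces to bounding $\sum_{j\ge2}\min\{d_j^p,j\log n\}$ using only $\sum_{j\ge2}d_j^p\,(j-1)^2\le \ve\! n/\log n$.

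For the last step I would split the sum at $J:=\lceil(\ve\! n/(\log n)^2)^{1/4}\rceil$. For $2\le j\le J$, bound $\min\{d_j^p,j\log n\}\le j\log n$, which sums to $\mathcal O(J^2\log n)=\mathcal O(\sqrt{\ve\! n})$; for $j>J$, bound $\min\{d_j^p,j\log n\}\le d_j^p\le d_j^p(j-1)^2/J^2$ (valid since $j-1\ge J$ there), which sums to at most $(\ve\! n/\log n)/J^2=\mathcal O(\sqrt{\ve\! n})$; when $J<2$, i.e.\ $\ve\! n<16(\log n)^2$, simply use $\sum_{j\ge2}\min\{d_j^p,j\log n\}\le\sum_{j\ge2}d_j^p(j-1)^2\le \ve\! n/\log n\le\sqrt{\ve\! n}$. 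I expect the main obstacle to be exactly this step: the per‑range bound $d_j^p\le \ve\! n/((j-1)^2\log n)$ alone only yields $\tilde{\mathcal O}((\ve\! n)^{2/3})$, which is too weak, so one must keep the single global constraint and notice that its worst case---spreading the disagreement symbols over the first $\Theta((\ve\! n)^{1/4}/\sqrt{\log n})$ ranges, each filled to its cap $j\log n$---contributes precisely $\Theta(\sqrt{\ve\! n})$, which is what fixes the threshold $J$.
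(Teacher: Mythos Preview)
Your proof is correct and follows essentially the same route as the paper: isolate the disagreement set, extract the single budget constraint $\sum_{j\ge 2} d_j^p(j-1)^2 \le \ve n/\log n$, handle $j=1$ separately by its cap $\log n$, and then argue that the worst case fills the low-$j$ ranges to their caps $j\log n$, giving a $\tilde{\mathcal O}(\sqrt{\ve n})$ total. Your initial Lipschitz reduction to $\sum_j\min\{d_j^p,j\log n\}+\sum_j\min\{d_j^q,j\log n\}$ and your explicit threshold split at $J=\lceil(\ve n/(\log n)^2)^{1/4}\rceil$ are a bit more formal than the paper's ``to maximize their impact we should set the values to be\ldots'' optimization argument, but the substance is identical.
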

\begin{proof}
Recall that the quantity of interest is
\[
H^\mathcal{S}_n(p):= \sum_{j\ge 1} \min\Brace{p_{I_j}, j\cdot \log n}.
\]\par\vspace{-0.75em}
Given the bound of $h_{_{\mathcal W}}\!(p,q)\le \ve$, 
we denote by $\mathcal Y\subseteq \cX$ the collection of symbols $x$ at which
$p_x\not=q_x$.  
By definition, we have both $\sum_{x\in \mathcal Y} p_x\le \ve$
 and $\sum_{x\in \mathcal Y} q_x\le \ve$. 
 Below, we show that these symbols 
 modify the value of $H^\mathcal{S}_n(p)$ 
 by at most $\tilde{\mathcal{O}}(\sqrt{\ve\! n})$. 
 By symmetry, the same claim also holds for the distribution $q$. 
 Combined, these two claims yields the desired result. 
 
 First, we consider $x\in \mathcal Y$ satisfying 
 $p_x=0$ or $p_x\in I_1=(0, (\log n)/n]$. 
 Such a symbol either does not contribute the value of 
 $H^\mathcal{S}_n(p)$, or affects only the value 
 of the first term $\min\Brace{p_{I_1}, \log n}$, 
 which is at most $\log n$. Hence the claim holds for this case.  
 
 Next, consider symbols $x\in \mathcal Y$ satisfying 
$p_x\in I_j=((j-1)^2\frac{\log n}{n},\ j^2 \frac{\log n}{n}]$ for some $j\ge 2$
and denote the collection of them by $\mathcal Z\subseteq \mathcal Y$. 
By the above assumption, we have $\sum_{x\in \mathcal Z} p_x\le \ve$.
To maximize their impact on $H^\mathcal{S}_n(p)$ under this constraint,
we should set their values to be 
\[
p_j:=(j-1)^2\frac{\log n}{n},\ j=2,\ldots J,
\]
for some $J$ to be determined, where each $p_j$ repeats exactly $j\log n$ times. Then, the symbols in $\mathcal Z$ 
contributes at most $\sum_{j=2}^J j\log n=(\log n)(J-1)(J+2)/2$
to $H^\mathcal{S}_n(p)$, and the above constraint on the total probability mass bounds transforms to 
\[
\ve
\ge 
\sum_{x\in \mathcal Z} p_x
\ge
\sum_{j=2}^J (j\log n)\cdot (j-1)^2\frac{\log n}{n}
\ge
\frac{(\log n)^2}{12n} J(J^2-1)(-2+3J).
\]
Therefore in this case, the contribution 
is again $\tilde{\mathcal{O}}(\sqrt{\ve\!n})$,
which completes the proof. \qedhere
\end{proof}

Replacing $\max\{p_x, q_x\}$ with $|p_x-q_x|$ results in a 
common similarity measure -- the $\ell_1$ distance. The next theorem is an analog to the above under this classical distance. 
\begin{Theorem}
For any integer $n$, and distributions $p$ and $q$,
if $\ell_1(p,q)\le \ve$ for some $\ve\ge 0$, 
\[
\Abs{{H^\mathcal{S}_n(p)}-c{H^\mathcal{S}_n(q)}}\le\mathcal{O}((\ve\! n)^{2/3}),
\]
where $c$ is a constant in $[1/3, 3]$. 
Note that the inequality is significant iff $\varepsilon\le \tilde\Theta(1/n^{1/4})$, 
since the value of ${H^\mathcal{S}_n(p)}$ is at most $\mathcal{O}(\sqrt{n\log n})$ for all $p$.
\end{Theorem}
By symmetry, it suffices to prove the following lemma. 

\begin{Lemma}
For any integer $n$, and distributions $p$ and $q$,
if $\ell_1(p,q)\le \ve$ for some $\ve\ge 0$, 
\[
{H^{\mathcal S}_n(p)}\le 3H^{\mathcal S}_n(q)+\mathcal{O}((\varepsilon n)^{2/3}). 
\]
\end{Lemma}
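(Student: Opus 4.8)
The plan is to work directly with the level-set counts $p_{I_j}=\Abs{\Brace{x\in\cX:p_x\in I_j}}$ and $q_{I_j}$, since $H^{\mathcal S}_n$ sees a distribution only through these counts. Write $g(v):=\ceil{\sqrt{vn/\log n}}$ for $v>0$, so a symbol $x$ with $p_x>0$ is counted exactly in $p_{I_{g(p_x)}}$, and set $g(0):=0$. First I would classify, for each $j\ge 2$, the symbols landing in $I_j$ under $p$ as \emph{stable} if $\Abs{p_x-q_x}<(2j-3)\frac{\log n}{n}$ (they move by less than the width of $I_j$) and as \emph{far jumpers} otherwise. Using $(j-1)^2-(2j-3)=(j-2)^2$ and $j^2+(2j-3)<(j+1)^2$, a stable symbol has $q_x\in I_{j-1}\cup I_j\cup I_{j+1}$, hence $g(q_x)\in\Brace{j-1,j,j+1}$; since every symbol lands in exactly one $p$-interval, the stable symbols landing in $I_j$ number at most $\widetilde q_{I_j}:=q_{I_{j-1}}+q_{I_j}+q_{I_{j+1}}$. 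Writing $b_j$ for the number of far jumpers landing in $I_j$ (and $b_1:=p_{I_1}$, which is harmless since the $j=1$ term is capped by $\log n$), I obtain $p_{I_j}\le\widetilde q_{I_j}+b_j$ for all $j$.

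By the elementary inequality $\min\Brace{a+b,c}\le\min\Brace{a,c}+\min\Brace{b,c}$ for $a,b,c\ge 0$ (used twice for three summands),
\[
H^{\mathcal S}_n(p)=\sum_j\min\Brace{p_{I_j},j\log n}\ \le\ \sum_j\min\Brace{\widetilde q_{I_j},j\log n}\ +\ \sum_j\min\Brace{b_j,j\log n}.
\]
For the first sum I would expand $\widetilde q_{I_j}$, apply sub-additivity of $\min\Brace{\cdot,j\log n}$ once more, and re-index the three resulting pieces: the no-shift piece is exactly $H^{\mathcal S}_n(q)$; the shift-up piece $\sum_j\min\Brace{q_{I_{j+1}},j\log n}$ is at most $H^{\mathcal S}_n(q)$ (raising the cap only helps); and the shift-down piece $\sum_j\min\Brace{q_{I_{j-1}},j\log n}=\sum_{k\ge 1}\min\Brace{q_{I_k},(k{+}1)\log n}$ exceeds $H^{\mathcal S}_n(q)$ by at most $\log n$ on each index $k$ with $q_{I_k}>k\log n$, of which there are only $\mathcal{O}(\sqrt{H^{\mathcal S}_n(q)/\log n})$ (these are distinct indices, each already contributing $k\log n$ to $H^{\mathcal S}_n(q)$), so the excess is $\mathcal{O}(\sqrt{H^{\mathcal S}_n(q)\log n})=\tilde{\mathcal{O}}(n^{1/4})$ by the universal bound $H^{\mathcal S}_n(q)\le\mathcal{O}(\sqrt{n\log n})$. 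Hence $\sum_j\min\Brace{\widetilde q_{I_j},j\log n}\le 3H^{\mathcal S}_n(q)+\tilde{\mathcal{O}}(n^{1/4})$.

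For the far-jumper sum I would charge against the $\ell_1$ budget. A far jumper landing in $I_j$, $j\ge 2$, contributes at least $(2j-3)\frac{\log n}{n}\ge\frac{j}{2}\cdot\frac{\log n}{n}$ to $\ell_1(p,q)\le\varepsilon$, and distinct symbols contribute to disjoint parts of this sum, so grouping by landing interval gives $\sum_{j\ge 2}j\,b_j\le\frac{2\varepsilon n}{\log n}$. Subject to this single linear constraint (and $b_j\ge 0$), the quantity $\sum_j\min\Brace{b_j,j\log n}$ is maximized by saturating the caps $b_j=j\log n$ on an initial segment $j\le J$; this forces $J=\Theta\Paren{(\varepsilon n/(\log n)^2)^{1/3}}$ and makes the sum $\Theta\Paren{J^2\log n}=\Theta\Paren{(\varepsilon n)^{2/3}/(\log n)^{1/3}}=\mathcal{O}((\varepsilon n)^{2/3})$ (the $j=1$ contribution is at most $\log n$, also absorbed). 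Combining the two bounds yields $H^{\mathcal S}_n(p)\le 3H^{\mathcal S}_n(q)+\mathcal{O}((\varepsilon n)^{2/3})$ after folding the lower-order polylogarithmic contributions into the additive term.

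The step I expect to be the main obstacle is the bookkeeping at the two ends of the probability scale: the interval $I_1$, where the ``width $=(2j-3)\frac{\log n}{n}$'' separation degenerates (a brand-new vanishingly small atom can land in $I_1$ at essentially zero $\ell_1$ cost, so $b_1$ is not controlled by $\varepsilon$ and must instead be absorbed through the $\log n$ cap), and the sliver of probabilities above $\floor{\sqrt{n/\log n}}^2\frac{\log n}{n}$ that lies in no $I_j$; together with keeping the multiplicative constant at exactly $3$, which hinges on showing that the shift-down piece above costs only the lower-order additive term rather than a second copy of $H^{\mathcal S}_n(q)$. The far-jumper optimization is a routine one-constraint LP with box caps, but it is where the exponent $2/3$ (rather than, say, $1/2$) is produced, so it should be carried out explicitly.
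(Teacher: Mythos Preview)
Your approach is essentially the paper's: both split symbols into adjacent-interval moves and far jumps, and both solve the same one-constraint LP on the far jumps to produce the $(\varepsilon n)^{2/3}$ term. The far-jumper half of your proposal is correct and matches the paper almost line for line.

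The gap is in the stable (adjacent-interval) half, precisely where you flagged the obstacle. Your shift-down piece gives
\[
\sum_{k\ge 1}\min\{q_{I_k},(k{+}1)\log n\}
\;=\;
H^{\mathcal S}_n(q)+\mathcal{O}\!\left(\sqrt{H^{\mathcal S}_n(q)\log n}\right)
\;=\;
H^{\mathcal S}_n(q)+\tilde{\mathcal O}(n^{1/4}),
\]
and you then assert this excess is ``lower-order polylogarithmic'' and foldable into $\mathcal{O}((\varepsilon n)^{2/3})$. It is neither: $\tilde{\mathcal O}(n^{1/4})$ is not polylogarithmic, and for $\varepsilon\ll n^{-5/8}$ it strictly dominates $(\varepsilon n)^{2/3}$. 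So as written you prove only $H^{\mathcal S}_n(p)\le 3H^{\mathcal S}_n(q)+\tilde{\mathcal O}(n^{1/4})+\mathcal{O}((\varepsilon n)^{2/3})$, which is weaker than the lemma. (Absorbing the excess into the multiplicative term instead gives a constant $4$, not $3$.)

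The paper's accounting for the adjacent part is what rescues the constant $3$ without any leftover. Instead of splitting $\tilde q_{I_j}$ into three shifted copies, it tracks the \emph{net} change: if $q_{I_j}\le j\log n$, moving a symbol to $I_{j\pm 1}$ decreases the $j$-term by $1$ and raises a neighbor by at most $1$, so the net is nonpositive; if $q_{I_j}>j\log n$, the $j$-term is already capped at $j\log n$, and the total spillover into the two neighbors is capped by $(j{-}1)\log n+(j{+}1)\log n=2j\log n=2\min\{q_{I_j},j\log n\}$. Summing over $j$ gives an adjacent-move increase of at most $2H^{\mathcal S}_n(q)$ exactly, hence $H^{\mathcal S}_n(p)\le 3H^{\mathcal S}_n(q)+\mathcal{O}((\varepsilon n)^{2/3})$ with no residual term. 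Replacing your three-way split with this cap-based argument fixes the proof.
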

\begin{proof}
Consider the task of modifying $p$ by at most $\varepsilon$ and maximizing the increase in ${H^{\mathcal S}_n(p)}$. 
For each $j$ and each probability $p_x\in j$, denote by $p_x'$ the modified value. 
Depending on the location of~$p_x'$, there are three types of possible modifications as illustrated below. 
\begin{itemize}
\item
For the first type, we still have $p_x'\in I_j$. This does not change the value of $p_{I_j}$ 
and hence does not increase ${H^{\mathcal S}_n(p)}$. 

\item 
For the second type, we have $p_x'\in I_{j-1}$ or $p_x'\in I_{j+1}$. If $p_{I_j}\le j\cdot \log n$,
this will decrease the value of $\min\{p_{I_j}, j\cdot\log n\}$ by $1$ and increase the value of 
$\min\{p_{I_{j-1}}, (j-1)\cdot\log n\}$ or $\min\{p_{I_{j+1}}, (j+1)\cdot\log n\}$ by at most one. 
Hence in this case, the value of ${H^{\mathcal S}_n(p)}$ can only decrease. 
If $p_{I_j}> j\cdot \log n$, then $\min\{p_{I_j}, j\cdot\log n\} = j\cdot\log n$. For a particular $j$, 
all such modifications can increase the value of ${H^{\mathcal S}_n(p)}$ by at most $(j-1)\log n+(j+1)\log n = 2j\log n$,
which is twice the value of $\min\{p_{I_j}, j\cdot\log n\}$. Hence, all such modifications, when combined, increase the value of 
${H^{\mathcal S}_n(p)}$ by at most $2{H^{\mathcal S}_n(p)}$. 

\item 
For the third type, we have $p_x'\in I_{i}$ and $|i-j|\ge2$. If $i<j$, we require 
a probability mass of at least $((j-1)^2\log n-i^2\log n)/n\ge(i\log n)/n$, where $j\ge 3$. 
If $i>j$, we require a probability mass of at least $((i-1)^2\log n-j^2\log n)/n\ge(i\log n)/n$. 
The number of such modifications that could lead to an increase in the value of ${H^{\mathcal S}_n(p)}$ is at most 
$i\log n$. For each $i$, let $c_i$ denote the number of such modifications that 
will lead to an increase of ${H^{\mathcal S}_n(p)}$. Then, the total increase is $\sum_i c_i$, 
each $c_i$ is at most $i\log n$,  
and the total required probability mass required is at least $\sum_i c_i\cdot (i\log n)/n \le \varepsilon$. 

Let $\{c_i\}$ be the optimal solution that maximizes $\sum_i c_i$.
Assume that there are two indices $i<j$ satisfying $c_i<i\log n$ and $c_j>0$. 
Then, if we replace $c_i$ and $c_j$ by $c_i+1$ and $c_j-1$, respectively, $\sum_{i} c_i$ will not change and $\sum_i c_i\cdot (i\log n)/n$ will decrease. Hence, we can assume that there exists $i'$ satisfying $c_i=i\log n,\forall i< i'$ and $c_i=0,\forall i>i'$. In addition, assuming $\varepsilon n\ge \log n$ implies that $i'\ge 2$. Hence, we have 
$
\sum_i c_i\le (\log n){i'(i'+1)}/{2}
$
and 
\[
\sum_i c_i\le 3.5\cdot \Paren{\frac{n\varepsilon}{\sqrt{\log n}}}^{2/3}. \qedhere
\]
\end{itemize}
\end{proof}

\subsection*{Profile Entropy for Structured Families}\label{sec:pro_struct}
Following the study of attributes of profile entropy, 
we derive below nearly tight bounds on the ${H^\mathcal{S}_n(p)}$ 
values of three important structured families, 
 log-concave, power-law, and histogram.  
These bounds tighten up and significantly improve those
 in~\cite{hao2019doubly}, and show the ability of profile 
entropy in charactering natural shape constraints.

For the remaining sections, 
we follow the convention and specify structured distributions over $\cX=\mathbb Z$. 

\subsection{Log-Concave Distributions}

We say a discrete distribution $p\in\Delta_{\mathbb{Z}}$ 
is \emph{log-concave} if $p$ has a contiguous support over $\mathbb Z$ and 
the inequality $p_x^2\ge p_{x-1} p_{x+1}$ holds for all symbols $x\in \mathbb Z$.

The log-concave family encompasses a broad range of discrete distributions,
such as Poisson, hyper-Poisson, Poisson binomial, 
binomial, negative binomial, geometric, and hyper-geometric,
with wide applications 
to numerous research areas, including 
statistics~\cite{saumard2014log}, computer science~\cite{lovasz2007geometry}, 
economics~\cite{an1997log}, algebra, and geometry~\cite{stanley1989log}.

The next result upperly bounds the profile entropy of log-concave families, 
and is \emph{tight} up to logarithmic factors of~$n$.

\begin{Theorem}\label{thm:logconc_bound}
For any $n\in\mathbb Z$ and distribution $p\in \Delta_{\mathbb Z}$, 
if $p$ is log-concave and has a variance of~$\sigma^2$, \vspace{-0.25em}
\[
{H^\mathcal{S}_n(p)}\le \mathcal{O}(\log n)\Paren{1+\min\Brace{\sigma, \frac{n}{\sigma}}}. 
\]
\end{Theorem}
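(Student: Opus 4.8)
The plan is to bound $H^\mathcal{S}_n(p) = \sum_{j\ge 1}\min\{p_{I_j}, j\log n\}$ for a log-concave $p$ with variance $\sigma^2$. I will split into the two regimes governing the $\min\{\sigma, n/\sigma\}$ form. First, I would use the structural fact that a log-concave distribution on $\mathbb{Z}$ is unimodal and has a tail that decays at least geometrically once one moves past a window of width $\Theta(\sigma)$ around the mode (this is classical for log-concave sequences; the ``effective support'' has size $O(\sigma\log n)$ after discarding mass below $1/\mathrm{poly}(n)$). Concretely, I would show that all but $O(1/n)$ of the probability mass sits on an interval of at most $O(\sigma)$ integers, and on that interval $p$ is within a constant factor of its maximum value $p_{\max}$, where $p_{\max} = \Theta(1/\sigma)$ (also standard: for log-concave $p$, $p_{\max}^{-1} = \Theta(\sigma+1)$).

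Second, for the regime $\sigma \le \sqrt n$ (so $n/\sigma \ge \sqrt n \ge \sigma$, i.e. the bound is $O(\sigma\log n)$): the number of symbols with probability exceeding $(\log n)/n$ — i.e. those that land in some $I_j$ with $j\ge 1$ and can actually contribute — is at most $O(\sigma)$ (up to log factors), since the effective support has size $O(\sigma\log n)$ and each $I_j$ caps its contribution at $j\log n$; carefully summing $\sum_j \min\{p_{I_j}, j\log n\}$ over the $O(\log n)$ relevant ranges $I_j$ (only $j \lesssim \sqrt{n/\log n}$ matter, and only those $I_j$ containing part of the effective support) gives $O(\sigma\log n)$. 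Here the key quantitative input is that consecutive probabilities $p_x/p_{x-1}$ are monotone (log-concavity), so the number of $x$ with $p_x$ in any dyadic-type band is controlled, limiting how many symbols concentrate in a single $I_j$. Third, for the regime $\sigma > \sqrt n$ (bound is $O((n/\sigma)\log n)$): now $p_{\max} = \Theta(1/\sigma) < 1/\sqrt n < (\log n)/n$ fails, but more carefully, the threshold for a symbol to fall in $I_1$ or beyond is probability $> 0$; the real point is that the largest index $j$ with $I_j$ nonempty satisfies $j^2(\log n)/n \ge p_{\max} = \Theta(1/\sigma)$, and the total mass constraint $\sum_x p_x = 1$ together with $p_x = O(1/\sigma)$ forces the support (restricted to probabilities $> (\log n)/n$) to contain at most $O(n/\sigma)$ points; again summing $\min\{p_{I_j}, j\log n\}$ and using that the contributing $j$'s are small yields $O((n/\sigma)\log n)$. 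The constant $1+$ in the statement absorbs the degenerate case $\sigma = O(1)$, where the effective support is $O(\log n)$ symbols and $H^\mathcal{S}_n(p) = O(\log n)$ trivially.

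The main obstacle I anticipate is making the ``effective number of symbols in each $I_j$'' argument tight enough to avoid losing extra logarithmic factors beyond the single $O(\log n)$ allowed: one must combine (a) the geometric tail bound, which controls the total number of symbols with non-negligible mass, with (b) the monotone-ratio property of log-concave sequences, which controls how those symbols distribute across the ranges $I_j$, and (c) the explicit cap $j\log n$ in the definition of $H^\mathcal{S}_n$. The cleanest route is probably to bound $H^\mathcal{S}_n(p) \le \log n \cdot (\#\{j : I_j \text{ meets the effective support}\}) + (\text{mass-limited contributions})$ and check both terms are $O(\log n)(1+\min\{\sigma, n/\sigma\})$; the bookkeeping for which $j$'s are ``active'' in each regime, and the transition point $\sigma \asymp \sqrt n$, is where care is needed. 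Everything else reduces to standard log-concave estimates (mode value, geometric tails, unimodality), which I would cite from the log-concavity literature referenced in the paper, and to Theorem~\ref{thm:hs_app_en}/Theorem~\ref{thm:entro_eql_dim} if one prefers to phrase the conclusion in terms of $\mathcal D_n$ or $H(\Phi^n)$.
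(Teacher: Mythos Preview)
Your high-level split into the $\sigma$ and $n/\sigma$ regimes is exactly what the paper does, and your key structural input $p_{\max}=\Theta(1/\sigma)$ is the right one. However, you are overcomplicating both halves, and one of your side claims is false.

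For the $\sigma$ bound, the paper does not give an argument at all: it simply cites~\cite{hao2019doubly}. Your sketch (geometric tails, effective support $O(\sigma\log n)$, monotone ratios) is a plausible route to reproduce that result, but none of that bookkeeping is needed here.

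For the $n/\sigma$ bound (the only new content in this proof), the paper's argument is two lines. Assume $\sigma\ge\sqrt n$. From $p_{\max}\in[1/(8\sigma),1/\sigma]$ (cited from~\cite{diakonikolas2016efficient}), every probability lies in some $I_j$ with $(j-1)^2(\log n)/n\le 1/\sigma$, i.e.\ $j\le J:=\sqrt{n/(\sigma\log n)}+1$. Then
\[
H^{\mathcal S}_n(p)=\sum_{j\ge1}\min\{p_{I_j},\,j\log n\}\le \sum_{j=1}^{J} j\log n
\le \mathcal{O}(\log n)\Paren{1+\frac{n}{\sigma}}.
\]
That is the entire proof: bound the largest active index $J$ via $p_{\max}$, then crudely replace every term by $j\log n$ and sum. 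No support-counting, no monotone-ratio argument, no worry about extra logarithms.

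Your assertion that ``the support restricted to probabilities $>(\log n)/n$ contains at most $O(n/\sigma)$ points'' is wrong: take $p$ uniform on $\Theta(\sigma)$ symbols (log-concave, variance $\Theta(\sigma^2)$); for $\sigma$ moderately larger than $\sqrt n$ every symbol has probability $\Theta(1/\sigma)>(\log n)/n$, giving $\Theta(\sigma)\gg n/\sigma$ such points. Fortunately this claim is not needed --- bounding the \emph{number of active ranges} $J$ (not the number of active symbols) is what drives the $n/\sigma$ bound.
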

\begin{proof}
The $\mathcal{O}(\log n)(1+\sigma)$ upper bound is established in~\cite{hao2019doubly} using some concentration attributes
 of the log-concave distributions.  

For the other component, we can assume that $\sigma\ge \sqrt n$
and $n$ is larger than some absolute constant. 
Then by~\cite{diakonikolas2016efficient}, the maximum probability 
$p_{\text{max}}$ of $p$ belongs to $[1/(8\sigma), 1/\sigma]$. 
Hence, the last index $J$ for which $p_{I_{J}}\not= 0$ satisfies
\[
(J-1)^2\frac{\log n}{n} \le \frac1\sigma
\iff
J\le \sqrt{\frac{n}{\sigma\log n}}+1. \vspace{-0.25em}
\]
Hence, we have \vspace{-0.25em}
\[
H^\mathcal{S}_n(p)
= 
\sum_{j\ge 1} \min\Brace{p_{I_j}, j\cdot \log n}
\le 
\log n+\sum_{j= 1}^{\sqrt{{n}/{(\sigma\log n)}}+1} j\cdot \log n
\le 
\mathcal{O}(\log n)\Paren{1+\frac{n}{\sigma}}
.\qedhere
\]
\end{proof}\vspace{-0.25em}
This upper bound is uniformly better than the 
$\min\{\sigma, (n^2/\sigma)^{1/3}\}$ bound in~\cite{hao2019doubly}. 
Theorem~\ref{thm:Gaussian} further 
shows that it is optimal up to logarithmic factors of $n$. 

A similar bound holds for $t$-mixtures of log-concave distributions. 
More concretely, 
\begin{Theorem}\label{thm:mix_logcon}
For any integer $n$ and distribution $p\in \Delta_{\mathbb Z}$, 
if $p$ is a $t$-mixture of log-concave distributions each has a 
variance of $\sigma^2_i$, where $i=1,\ldots, t$, \vspace{-0.25em}
\[
{H^\mathcal{S}_n(p)}
\le \mathcal{O}(\log n)
\Paren{1+\min\Brace{\sum_i\sigma_i, \max_i\Brace{\frac{n}{\sigma_i}}}}. 
\]
\end{Theorem}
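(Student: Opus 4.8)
The plan is to treat the two quantities inside the minimum separately, in each case lifting the argument used for a single log-concave law (Theorem~\ref{thm:logconc_bound}) to the mixture $p=\sum_{i=1}^t w_i p^{(i)}$, where $w_i\ge 0$, $\sum_i w_i=1$, and each $p^{(i)}$ is log-concave with variance $\sigma_i^2$.

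For the $\max_i\{n/\sigma_i\}$ bound I would proceed exactly as in the large-$\sigma$ case of Theorem~\ref{thm:logconc_bound}, which used only that the maximum probability is $\mathcal{O}(1/\sigma)$: for the mixture, $p_{\max}=\max_x\sum_i w_i p^{(i)}_x\le\sum_i w_i\,\max_x p^{(i)}_x\le\sum_i w_i\cdot\mathcal{O}(1/\sigma_i)\le\mathcal{O}(1/\min_i\sigma_i)$, invoking the standard anti-concentration fact that a log-concave law of variance $\sigma^2$ has maximum probability $\Theta(1/\sigma)$ (for the occasional component whose $\sigma_i$ is too small for this, the term $n/\sigma_i$ is $\Omega(n)$, already beyond the trivial bound $H^{\mathcal S}_n\le\mathcal{O}(\sqrt{n\log n})$, so there is nothing to check). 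Then the largest $J$ with $p_{I_J}\ne 0$ satisfies $(J-1)^2\log n/n\le p_{\max}$, so $J=\mathcal{O}(\sqrt{n/(\min_i\sigma_i\cdot\log n)})$ and $H^{\mathcal S}_n(p)\le\log n+\sum_{j=2}^{J}j\log n=\mathcal{O}(J^2\log n)=\mathcal{O}(\log n)(1+\max_i\{n/\sigma_i\})$.

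For the $\sum_i\sigma_i$ bound I would drop every cap past the first range, so that $H^{\mathcal S}_n(p)\le\log n+\sum_{j\ge 2}p_{I_j}=\log n+|\{x:p_x>\log n/n\}|$, and then split the heavy symbols over the components: if $p_x>\log n/n$ then, as $p_x=\sum_{i=1}^t w_ip^{(i)}_x$ is a sum of at most $t$ nonnegative terms, some $w_ip^{(i)}_x>\log n/(nt)$, whence $|\{x:p_x>\log n/n\}|\le\sum_{i=1}^t|\{x:p^{(i)}_x>\log n/(ntw_i)\}|$. I would then use the exponential tail decay of a log-concave law to bound $|\{x:p^{(i)}_x>\tau\}|=\mathcal{O}(1+\sigma_i\log(1/(\sigma_i\tau)))$ (and $0$ once $\tau$ exceeds $\mathcal{O}(1/\sigma_i)$) --- this is exactly the counting step underlying the $\mathcal{O}(\log n)(1+\sigma_i)$ estimate of Theorem~\ref{thm:logconc_bound} --- with $\tau=\log n/(ntw_i)$, which keeps the logarithm at $\mathcal{O}(\log(nt))=\mathcal{O}(\log n)$. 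Summing gives $H^{\mathcal S}_n(p)=\mathcal{O}(\log n)(1+\sum_i\sigma_i)$, the per-component $+1$'s being absorbed under the standing normalization $\sigma_i\ge 1$ for non-degenerate log-concave pieces.

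The hard part will not be any single estimate but the bookkeeping around $t$: it appears inside the threshold $\log n/(ntw_i)$, where it must be kept inside a logarithm, and in the additive $+1$ contributed per component, so one has to be careful about the precise normalization of the $\sigma_i$ and the admissible range of $t$ for the $\sum_i\sigma_i$ bound to be the correct one; the two combinatorial inputs about log-concave laws (maximum probability $\Theta(1/\sigma)$ and the tail count $\mathcal{O}(1+\sigma\log(1/(\sigma\tau)))$) are standard and do the rest.
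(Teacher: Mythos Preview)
The paper does not actually supply a proof of this theorem; it only states it right after Theorem~\ref{thm:logconc_bound} with the remark that ``a similar bound holds,'' so the intended argument is exactly the lift of the single-distribution proof that you carry out---controlling $p_{\max}\le\sum_i w_i\cdot\mathcal{O}(1/\sigma_i)\le\mathcal{O}(1/\min_i\sigma_i)$ for the $\max_i\{n/\sigma_i\}$ side and splitting the heavy-symbol count across components for the $\sum_i\sigma_i$ side. Your caveat about needing $\sigma_i\ge 1$ (or some equivalent non-degeneracy) is well placed and in fact necessary for the statement itself: a mixture of $t$ point masses has all $\sigma_i=0$ while $H^{\mathcal S}_n(p)$ can reach $\min\{t,\Theta(\sqrt{n\log n})\}$, so without that normalization the $\sum_i\sigma_i$ branch of the bound is false.
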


\subsection{Discretization of Continuous Distributions}\label{sec:discrete}

The introduction about log-concave families 
covers numerous classical discrete distributions, 
yet leaves many more continuous ones untouched~\cite{bagnoli2005log}. 
Below, we present a discretization procedure that 
preserves distribution shapes such as monotonicity, 
modality, and log-concavity. 
Applying this procedure to the Gaussian distribution $\mathcal{N}(\mu,\sigma^2)$ 
further shows the optimality of Theorem~\ref{thm:logconc_bound}.

Let $X$ be a continuous random variable with density function $f(x)$.
For any $x\in \mathbb R$, denote by $\D{x}$ the closest integer $z$ such that $x\in(z-1/2, z+1/2]$.
The distribution of $\D{X}$ is over $\mathbb Z$ and satisfies
\[
p(z):=\int_{z-\frac12}^{z+\frac12} f(x) dx,\ \forall z\in\mathbb Z.
\]
We refer to this random variable $\D{X}$ as the discretized version of $X$. 

\paragraph{Shape Preservation}
By definition, one can readily verify that the above transformation preserves 
several important shape characteristics of distributions,
such as monotonicity, modality, and $k$-modality 
(possibly yields a smaller $k$). 
The following theorem covers log-concavity. 
\begin{Theorem}\label{thm:logconc_pre}
For any continuous random variable $X$ over $\mathbb R$ with
a log-concave density $f$, the distribution $p\in \Delta_{\mathbb Z}$ 
associated with $\D{X}$ is also log-concave. 
\end{Theorem}

To show this, we need the following basic lemma about concave functions. 
\begin{Lemma}
For real numbers $x_1,x_2,y_1,$ and $y_2$ satisfying $x_1\le x_2$, $y_1\le y_2$, $x_1< y_1$, and $x_2<y_2$,
\[
\frac{f(y_1)-f(x_1)}{y_1-x_1}\ge \frac{f(y_2)-f(x_2)}{y_2-x_2}.
\]
\end{Lemma}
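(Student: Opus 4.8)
The plan is to derive the inequality from the standard ``decreasing chord slope'' property of concave functions (here $f$ is understood to be concave, as the surrounding text indicates). The key building block is the three-point inequality: for any $a<b<c$ in the domain of a concave $f$,
\[
\frac{f(b)-f(a)}{b-a}\ \ge\ \frac{f(c)-f(a)}{c-a}\ \ge\ \frac{f(c)-f(b)}{c-b}.
\]
This follows in one line by writing $b=\lambda a+(1-\lambda)c$ with $\lambda=\frac{c-b}{c-a}\in(0,1)$, invoking concavity $f(b)\ge\lambda f(a)+(1-\lambda)f(c)$, and rearranging each of the two resulting inequalities (using $1-\lambda=\frac{b-a}{c-a}$).

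With this in hand, I would move the two endpoints of the chord one at a time, interpolating through the chord over $[x_1,y_2]$. First, applying the three-point inequality with $(a,b,c)=(x_1,y_1,y_2)$ — legitimate since $x_1<y_1\le y_2$ — gives
\[
\frac{f(y_1)-f(x_1)}{y_1-x_1}\ \ge\ \frac{f(y_2)-f(x_1)}{y_2-x_1}.
\]
Next, applying it with $(a,b,c)=(x_1,x_2,y_2)$ — legitimate since $x_1\le x_2<y_2$ — gives
\[
\frac{f(y_2)-f(x_1)}{y_2-x_1}\ \ge\ \frac{f(y_2)-f(x_2)}{y_2-x_2}.
\]
Chaining the two displays yields the claim. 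The cases where $x_1\le x_2$ or $y_1\le y_2$ holds with equality are handled trivially: if $y_1=y_2$ the first step is an equality, and if $x_1=x_2$ the second step is an equality, so in invoking the three-point lemma we only ever feed it genuinely ordered triples.

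There is essentially no serious obstacle here; the only thing to watch is that the three-point inequality is applied solely on strictly increasing triples, which is exactly why I split the argument into two one-endpoint moves and dispatch the degenerate (equality) cases separately. Note in particular that the argument does not require the chords $[x_1,y_1]$ and $[x_2,y_2]$ to be nested or disjoint — the hypotheses permit $x_2>y_1$ — and the two-step reduction through $[x_1,y_2]$ is insensitive to their relative position.
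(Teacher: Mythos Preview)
Your proof is correct. The paper does not actually prove this lemma; it simply states it as a ``basic lemma about concave functions'' and proceeds to use it, so your two-step chord-slope argument via the three-point inequality is a complete and standard justification that fills in what the paper leaves implicit.
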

Following the lemma, for $x,y\in \mathbb R$ such that $|x-y|\le 1$, 
and any function $f$ that is log-concave,
\[
\log f(x+1)-\log f(x)\le \log f(y)-\log f(y-1) \iff f(x+1)f(y-1)\le f(x)f(y).
\]
\begin{proof}
By definition $p$ is log-concave if $p$ has a consecutive support and $p(z)^2\ge p(z+1)p(z-1), \forall z$. For $\D{X}$, the first condition is satisfied since $X$ is has a continuous support on $\mathbb R$, 
and $p(z)$ is positive as long as $f(x)>0$ for a non-empty sub-interval of $(z-1/2,z+1/2]$.
 
Below we show that $p$ satisfies the second condition. Specifically, for any $z\in\mathbb Z$,
\begin{align*}
p(z-1) p(z+1)
&=
\Paren{\int_{z-\frac32}^{z-\frac12} f(x) dx} \Paren{\int_{z+\frac12}^{z+\frac32} f(x) dx}\\
&=
\Paren{\int_{z-\frac12}^{z+\frac12} f(x-1) dx} \Paren{\int_{z-\frac12}^{z+\frac12} f(x+1) dx}\\
&=
\int_{z-\frac12}^{z+\frac12}\int_{z-\frac12}^{z+\frac12} f(x-1)  f(y+1) dx dy\\
&\le
\int_{z-\frac12}^{z+\frac12}\int_{z-\frac12}^{z+\frac12} f(x)  f(y) dx dy\\
&=
\Paren{\int_{z-\frac12}^{z+\frac12} f(x) dx}^2\\
&=
p(z)^2,
\end{align*}
where the inequality follows by the above lemma and its implication.
\end{proof}

\paragraph{Moment  preservation} 
Denote by $p$ the distribution of $\D{X}$ for $X\sim f$. 
Let $\mu$ and $\sigma^2$ be the mean and variance of 
density $f$, given that they exist.  
The theorem below shows that distribution $p$ has,
within small additive absolute constants, a mean of $\mu$ 
and variance of $\Theta(\sigma^2)$. 

\begin{Theorem}\label{thm:moment_pre}
Under the aforementioned conditions, 
the mean of $\D{X}$ satisfies \vspace{-0.75em}
\[
\EE\D{X} = \mu\pm \frac12, \vspace{-0.5em}
\]
and the variance of $\D{X}$ satisfies 
\[
\sigma^2/2-1\le \EE(\D{X}-\EE \D{X})^2 \le 2\sigma^2+1. \vspace{-0.25em}
\]
\end{Theorem}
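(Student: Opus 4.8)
The plan is to reduce everything to the deterministic pointwise bound $|\D{X}-X|\le 1/2$, which holds because, by construction, $\D{X}$ is the unique integer lying in the length-one interval $(\D{X}-1/2,\,\D{X}+1/2]$ that contains $X$. Set $D:=\D{X}-X$, a random variable supported on $(-1/2,1/2]$; then $D^2\le 1/4$ surely, so $\EE[D^2]\le 1/4$ and $\Var(D)\le 1/4$. Throughout I use that the hypotheses (existence of $\mu$ and $\sigma^2$) make all the expectations below finite.

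For the mean, I would simply write $\EE\D{X}-\mu=\EE[\D{X}-X]=\EE[D]$ and bound $|\EE[D]|\le\EE|D|\le 1/2$ by Jensen's inequality, which yields $\EE\D{X}=\mu\pm 1/2$.

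For the variance, the key tool is the elementary inequality $\Var(A+B)\le 2\Var(A)+2\Var(B)$, which follows from $\mathrm{Cov}(A,B)\le\tfrac12(\Var A+\Var B)$ (Cauchy--Schwarz together with AM--GM). Applying it to the decomposition $\D{X}=X+D$ gives $\Var(\D{X})\le 2\sigma^2+2\Var(D)\le 2\sigma^2+\tfrac12\le 2\sigma^2+1$, the claimed upper bound. Applying the same inequality to $X=\D{X}-D$ gives $\sigma^2=\Var(X)\le 2\Var(\D{X})+2\Var(D)\le 2\Var(\D{X})+\tfrac12$, hence $\Var(\D{X})\ge\sigma^2/2-\tfrac14\ge\sigma^2/2-1$, the claimed lower bound.

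There is essentially no obstacle here: the whole argument is the trivial bound $|\D{X}-X|\le 1/2$ followed by two lines of second-moment algebra, and the additive constants could even be tightened. If one preferred a fully self-contained derivation, one could instead expand $\EE(\D{X}-\mu)^2=\sigma^2+2\EE[(X-\mu)D]+\EE[D^2]$ and control the cross term directly by Cauchy--Schwarz, but the $\Var(A+B)\le 2\Var A+2\Var B$ route is cleaner and gives both variance bounds symmetrically.
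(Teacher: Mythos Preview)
Your proposal is correct and essentially identical to the paper's proof: both arguments rest on the pointwise bound $|\D{X}-X|\le 1/2$ and the inequality $(a+b)^2\le 2a^2+2b^2$ (which you phrase as $\Var(A+B)\le 2\Var A+2\Var B$), applied symmetrically to $\D{X}=X+D$ and $X=\D{X}-D$. Your formulation via $\Var(D)\le 1/4$ is slightly cleaner and in fact yields marginally better additive constants than the paper's, which bounds the centered deviation $|(\D{X}-X)-(\EE\D{X}-\EE X)|$ by $1$ inside the integral.
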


\begin{proof}
First consider the mean value of $\D{X}$ for $X\sim f$. We have 
\[
\EE\D{X} = \EE[\D{X}-X]+\EE[X] = \mu\pm \frac12.
\]
Next consider the variance of $\D{X}$. Applying the inequality $(a+b)^2\le 2(a^2+b^2)$ yields
\begin{align*}
\EE(\D{X}-\EE \D{X})^2
&=
\int_{-\infty}^{\infty} \Paren{\D{x}-\EE \D{X}}^2 \cdot f(x) dx\\
&=
\int_{-\infty}^{\infty} \Paren{\D{x}-x-(\D{X}-\EE X)+ x- \EE X}^2 \cdot f(x) dx\\
&\le 
2\int_{-\infty}^{\infty} \Paren{\Paren{\D{x}-x-(\D{X}-\EE X)}^2+\Paren{x- \EE X}^2} \cdot f(x) dx\\
&\le 
2\int_{-\infty}^{\infty} \Paren{1+\Paren{x- \EE X}^2} \cdot f(x) dx\\
&=
2+2\EE(X-\EE X)^2\\
&=
2(1+\sigma^2).
\end{align*}
By the symmetry in the above reasoning, we also have
\[
\sigma^2= \EE(X-\EE X)^2 \le 2(1+\EE(\D{X}-\EE \D{X})^2).
\]
Consolidating these inequalities shows that 
\[
\sigma^2/2-1\le \EE(\D{X}-\EE \D{X})^2 \le 2\sigma^2+1. \qedhere
\]
\end{proof}

\subsection{Optimality of Theorem~\ref{thm:logconc_bound}}\label{sup:2.5}
By the above formula, the discretized Gaussian $\D{\mathcal{N}}\!(\mu,\sigma^2)$ has a distribution in the form of \vspace{-0.25em}
\[
p_{\!_G}(z)\!:=\frac{1}{\sqrt{2\pi}\sigma} \!\int_{z-\frac12}^{z+\frac12}\!\exp\Paren{-\frac{(x-\mu)^2}{2\sigma^2}} dx,\ \forall z\in\mathbb Z.
\]
Consolidating Theorem~\ref{thm:logconc_pre} and~\ref{thm:moment_pre} shows that $p_{\!_G}\!$ 
is a log-concave distribution with a variance of $\Theta(\sigma^2)\pm 1$.
Consequently, Theorem~\ref{thm:logconc_bound} yields the following upper bound:
\[
H^\mathcal{S}_n(p_{\!_G})\le \mathcal{O}(\log n)\Paren{1+\min\Brace{\sigma, \frac{n}{\sigma}}}. \vspace{-0.25em}
\]
In the following, we show that
\begin{Theorem}\label{thm:Gaussian}
Under the aforementioned conditions,
\[
H^\mathcal{S}_n(p_{\!_G})\ge \mathcal{O}(\log n)^{-1}\!\Paren{1+\min\Brace{\sigma, \frac{n}{\sigma}}}. 
\]
\end{Theorem}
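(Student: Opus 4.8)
The plan is to estimate $H^\mathcal{S}_n(p_{\!_G})=\sum_{j\ge1}\min\{(p_{\!_G})_{I_j},\,j\log n\}$ directly by exhibiting, in each regime of $\sigma$, an explicit block of indices $j$ whose terms already add up to the claimed quantity. Write $p:=p_{\!_G}$; without loss of generality $\mu=0$ (a shift changes neither the probability multiset nor $H^\mathcal{S}_n$), and assume $n$ exceeds a large absolute constant. Since $H^\mathcal{S}_n(p)\ge1$ always and $1+\min\{\sigma,n/\sigma\}\le 2\max\{1,\min\{\sigma,n/\sigma\}\}$, it suffices to prove $H^\mathcal{S}_n(p)=\Omega(1/\log n)\cdot\min\{\sigma,n/\sigma\}$. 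Two extreme cases are immediate: if $\sigma=\mathcal{O}(1)$ the target is $\mathcal{O}(1/\log n)\le1$; and if $\sigma\ge n/(16\log n)$ then $\min\{\sigma,n/\sigma\}=n/\sigma=\mathcal{O}(\log n)$, so the target is $\mathcal{O}(1)\le H^\mathcal{S}_n(p)$ (for the implied constant chosen large enough). So assume $\Omega(1)\le\sigma\le n/(16\log n)$, which in particular forces $J_0:=\sqrt{n/(\sigma\log n)}\ge4$.

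The technical heart is a lower bound on $p_{I_j}$, the number of integers $z$ with $p(z)\in I_j$. For $\sigma$ above a constant, $p(z)=\int_{z-1/2}^{z+1/2}\tfrac1{\sqrt{2\pi}\sigma}e^{-x^2/2\sigma^2}dx=\Theta\big(\tfrac1\sigma e^{-z^2/2\sigma^2}\big)$, so $p(z)\in I_j$ exactly when $z^2\in[A_j,A_{j-1})$ with $A_j=\Theta\big(\sigma^2\log(J_0/j)\big)$, and the largest $j$ with $p_{I_j}>0$ is $\Theta(J_0)$. Counting integers in $[\sqrt{A_j},\sqrt{A_{j-1}})$ then gives, for $2\le j\le J_0/2$, the estimate $p_{I_j}=\Omega\big(\sigma/(j\sqrt{\log n})\big)$, to be invoked only in index ranges where its right side exceeds a small constant (so rounding is harmless). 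Letting $j_\star:=\Theta\big(\sqrt\sigma/(\log n)^{3/4}\big)$ be the level at which $\sigma/(j\sqrt{\log n})$ meets the cap $j\log n$, every interval $I_j$ with $2\le j\le\min\{j_\star,J_0/2\}$ is \emph{saturated}, i.e.\ $\min\{p_{I_j},j\log n\}=j\log n$.

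Now split on $\sigma$ at the threshold $\sigma=K\sqrt n(\log n)^{1/4}$ for a suitable absolute constant $K$ (this is exactly the value where $j_\star$ and $J_0/2$ cross). If $\sigma\ge K\sqrt n(\log n)^{1/4}$, then $j_\star\ge J_0/2\ge2$, so $I_2,\dots,I_{\lfloor J_0/2\rfloor}$ are all saturated and
\[
H^\mathcal{S}_n(p)\ \ge\ \sum_{j=2}^{\lfloor J_0/2\rfloor} j\log n\ =\ \Theta\big(J_0^2\log n\big)\ =\ \Theta(n/\sigma)\ \ge\ \Omega\big(\min\{\sigma,n/\sigma\}/\log n\big).
\]
If instead $\sigma< K\sqrt n(\log n)^{1/4}$, then $\min\{\sigma,n/\sigma\}\le\sigma$ and $j_\star\le J_0/2$, so it suffices to obtain $H^\mathcal{S}_n(p)=\Omega(\sigma/\log n)$. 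When moreover $\sigma=\Omega\big((\log n)^{3/2}\big)$ (so $j_\star\ge2$), the intervals $I_2,\dots,I_{\lfloor j_\star\rfloor}$ are saturated and
\[
H^\mathcal{S}_n(p)\ \ge\ \sum_{j=2}^{\lfloor j_\star\rfloor} j\log n\ =\ \Theta\big(j_\star^2\log n\big)\ =\ \Theta\big(\sigma/\sqrt{\log n}\big)\ \ge\ \Omega(\sigma/\log n).
\]
Finally, in the remaining band $\Omega(1)\le\sigma\le\mathcal{O}\big((\log n)^{3/2}\big)$, the distribution is so concentrated that the smallest gap between consecutive bulk probabilities, $p(0)-p(1)=\Theta(1/\sigma^3)$, still exceeds the width $\Theta\big(\sqrt{\log n/(n\sigma)}\big)$ of every interval $I_j$ met by the $\Theta(\sigma)$ symbols within distance $\sigma$ of $0$ (using $\sigma^5=o(n/\log n)$ here); hence these $\Theta(\sigma)$ symbols occupy $\Theta(\sigma)$ distinct nonempty intervals $I_j$ with $j\ge2$, each contributing at least $1$, so $H^\mathcal{S}_n(p)=\Omega(\sigma)$.

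The main obstacle is getting the $p_{I_j}$ estimate right and then bookkeeping the transition near $\sigma\approx\sqrt n$: the parameters $J_0$, $j_\star$ and the split constant $K$ must be tuned so that the chosen blocks of indices are nonempty and the saturation inequalities hold simultaneously across the polylog-thick window around $\sqrt n$ where $\min\{\sigma,n/\sigma\}$ switches arms. The underlying Gaussian integral and tail estimates, the integer counts in intervals, and the arithmetic-progression sums are otherwise routine.
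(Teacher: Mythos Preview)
Your approach is essentially the paper's: invert the Gaussian to estimate $p_{I_j}$ as $\Theta(\sigma/(j\cdot\mathrm{polylog}\,n))$, then sum $\min\{p_{I_j},j\log n\}$ over a suitable range of $j$, splitting into cases according to whether $\sigma$ sits below or above (roughly) $\sqrt n$. The argument is correct.

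One organizational difference worth noting: the paper avoids your third regime entirely. Rather than tracking the saturation threshold $j_\star$ and then handling $\sigma=\mathcal O((\log n)^{3/2})$ separately via the gap argument, the paper uses the slightly weaker estimate $L_j=\Omega(\sigma/(j\log n))$ and simply bounds each term by $\min\{p_{I_j},j\log n\}\ge\Omega(j)$ over $j\le\tfrac12\sqrt{\sigma/\log n}$ (case $\sigma\le\sqrt n$) or $j\le\tfrac12\sqrt{n/(\sigma\log n)}$ (case $\sigma>\sqrt n$). Summing $\Omega(j)$ directly gives $\Omega(\sigma/\log n)$ and $\Omega(n/(\sigma\log n))$ respectively, and this works uniformly for all $\sigma\gg\log n$ without any separate small-$\sigma$ analysis. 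Your sharper $\sqrt{\log n}$ denominator and saturation bookkeeping buy a $\sqrt{\log n}$ improvement in the final bound, at the cost of the extra case; the paper trades that factor for a cleaner two-case split.
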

The optimality of Theorem~\ref{thm:logconc_bound} follows by these inequalities. 

\begin{proof}
At it is clear from the context, we will write $p$ instead of $p_{\!_G}$. Recall that
 \[
 {H^{\mathcal S}_n(p)}= \sum_{j\ge 1}\min\Brace{p_{I_j}, j\cdot \log n},
 \]
where $p_{I_j}$ denotes the number of probabilities 
belonging to $I_j=((j-1)^2, j^2]\cdot (\log n)/n$.
Considering part of the distribution can only reduce the value of ${H^{\mathcal S}_n(p)}$.
Hence, we focus on the symbols in the range $(\mu+1, \infty)\cap \mathbb Z$, over which the probability mass function $p(z)$ is monotone. 

We will further assume that $n/\log n\gg \sigma\gg \log n$, 
since otherwise the right-hand side of the inequality reduces 
to $\mathcal{O}(1)$, and the result follows by the fact that 
$ {H^{\mathcal S}_n(p)}\ge 1$ for all $n$ and $p$.  

In addition, we focus on $j\gg 1$ in the following argument, as the 
contribution to ${H^{\mathcal S}_n(p)}$ from these indices is 
no more than the total ${H^{\mathcal S}_n(p)}$.

Given these assumptions, we have 
\begin{align*}
p(z)\in I_j
&\iff 
\frac{1}{\sqrt{2\pi}\sigma} \exp\Paren{-\frac{(z \pm 1/2-\mu)^2}{2\sigma^2}}
\in \left((j-1)^2\frac{\log n}{n}, j^2\frac{\log n}{n}\right]\\
&\iff
z \pm 1/2-\mu\in \sqrt{2}\sigma \left[ \sqrt{c(\sigma, n)-2\log j}, \sqrt{c(\sigma, n)-2\log (j-1)}\right),
\end{align*}
where $c(\sigma, n):=\log\Paren{n/(\sqrt{2\pi}\sigma\log n)}$
and the interval is well-defined iff
\begin{align*}
c(\sigma, n)
\ge 2\log j
&\iff
{\frac{n}{\sqrt{2\pi}\sigma\log n}}
\ge j^2\\
&\iff
\sqrt{\frac{n}{\sqrt{2\pi}\sigma\log n}}
\ge j\\
&\ \Longleftarrow
\sqrt{\frac{n}{\sigma\log n}}
\ge 2 j.
\end{align*}

For clarity, we divide our analysis into two cases:
 $\sqrt n\ge\sigma\gg \log n$ and $n/\log n\gg\sigma> \sqrt{n}$. 
 
For the first case and $j\le \sqrt{\sigma/\log n}/2\le \sqrt{{n}/{(\sigma\log n)}}/2$, 
the length $L_j$ of the above interval, 
which equals to $p_{I_j}$ up to an additive slack of $2$, satisfies
\begin{align*}
\frac {L_j}{\sqrt{2}\sigma}
&=
\sqrt{c(\sigma, n)-2\log (j-1)}-\sqrt{c(\sigma, n)-2\log j}\\
&=
\frac{2\log (j/(j-1))}{(c(\sigma, n)-2\log (j-1))+(c(\sigma, n)-2\log j)}\\
&= 
\frac{\log (j/(j-1))}{\log\Paren{n/(\sqrt{2\pi}j(j-1)\sigma\log n)}}\\
&=
\Omega\Paren{\frac{1}{\log n}\log \Paren{1+\frac{1}{j-1}}}\\
&=
\Omega\Paren{\frac{1}{j\log n}}.
\end{align*}
Therefore, we have $L_j=\Omega(\sigma/(j\log n))$.
Since $\sigma\gg \log n$ ensures $L_j\ge 3$ and $j\le \sqrt{\sigma/\log n}/2$ is equivalent to $\sigma\ge 4j^2\log n$, 
the lower bound on $L_j$ transforms into $p_{I_j}\ge \Omega(j)$. 
Hence in this case, ${H^{\mathcal S}_n(p)}$ admits the following bound \vspace{-0.5em}
\[
{H^{\mathcal S}_n(p)}
= \sum_{j\ge 1}\min\Brace{p_{I_j}, j\cdot \log n}
\ge \sum_{j=\mathcal{O}(1)}^{\sqrt{\sigma/\log n}/2}\Omega(j)
=\Omega\Paren{\frac{\sigma}{\log n}}. 
\]
In the $n/\log n\gg \sigma> \sqrt{n}$ case, we have 
$\sqrt{\sigma/\log n}>\sqrt{n/(\sigma\log n)}$. 
Repeating the previous reasoning for $j\le \sqrt{{n}/{(\sigma\log n)}}/2$, 
we again obtain $L_j=\Omega\Paren{{\sigma}/{(j\log n)}}$ 
and $p_{I_j}\ge \Omega(j)$. 

Therefore, \vspace{-0.5em}
\[
{H^{\mathcal S}_n(p)}
= \sum_{j\ge 1}\min\Brace{p_{I_j}, j\cdot \log n}
\ge \sum_{j=\mathcal{O}(1)}^{\sqrt{n/(\sigma\log n)}/2}\Omega(j)
=\Omega\Paren{\frac{n}{\sigma\log n}}. 
\]
Finally, note that in the first case, $\min\{\sigma, n/\sigma\}=\sigma$,
while in the second, $\min\{\sigma, n/\sigma\}=n/\sigma$. 

Consolidating these results yields the desired lower bound
\[
{H^{\mathcal S}_n(p)}\cdot \mathcal{O}(\log n)\ge 1+\min\Brace{\sigma, \frac{n}{\sigma}}. 
\qedhere
\]
\end{proof}

\subsection{Power-Law Distributions}
We say that a discrete distribution $p\in\Delta_{\mathbb{Z}}$ 
is a \emph{power-law with power $\alpha>0$} 
if $p$ has a support 
of $[k]:=\{1,\ldots, k\}$ for some $k\in \mathbb Z^+\cup \{\infty\}$ 
and $p_x\propto x^{-\alpha}$ for all $x\in [k]$.

Power-law is a ubiquitous structure appearing in many situations of scientific interest,
ranging from natural phenomena such as the initial mass function of stars~\cite{kroupa2001variation},
species and genera~\cite{humphries2010environmental}, rainfall~\cite{machado1993structural}, 
population dynamics~\cite{taylor1961aggregation},
and brain surface electric potential~\cite{miller2009power}, 
to man-made circumstances such as 
the word frequencies in a text~\cite{baayen2002word}, income rankings~\cite{druagulescu2001exponential}, company sizes~\cite{axtell2001zipf}, 
and internet topology~\cite{faloutsos1999power}. 

Unlike log-concave distributions that concentrate around their mean values, power-laws are known to possess ``long-tails'' and always log-convex. Hence, one may expect the profile entropy of power-law distributions to behave differently from that of log-concave ones. 
The next theorem justifies this intuition and provides tight upper bounds.

\begin{Theorem}
For a power-law distribution $p\in\Delta_{[k]}$ with power $\alpha$, 
we have 
\[
{H^\mathcal{S}_n(p)}\le 7\log n+e^2\cdot \min\{k, \mathcal U_{n}^k(\alpha)\}, 
\]
where 
\[
\mathcal U_{n}^k(\alpha)
\!:= \!
\begin{cases}
n^{\frac1{1+\alpha}}&\!\!\!\! \text{if } \alpha\ge 1\!+\!\frac{1}{\log k};\\
\Paren{\frac{n}{\log n}}^{\frac1{1+\alpha}} &\!\!\!\! \text{if } 1\le \alpha <1\!+\!\frac{1}{\log k};\\ 
\sqrt n\Paren{\frac{k}{\sqrt n}\wedge \Paren{\frac{\sqrt n}{k}}^{\!\frac{1-\alpha}{1+\alpha}}} &\!\!\!\! \text{if } 0\le\alpha<1.
\end{cases}
\]
\end{Theorem}
The above upper bound fully characterizes the profile entropy of power-laws 
and surpasses the basic $\{k, \sqrt {n\log n}\}$ bound for both 
$k\gg\!\sqrt n$ and $k\ll\!\sqrt n$. 
In comparison, \cite{hao2019doubly} yields a $\mathcal{O}(n^{\min\{1/(1+\alpha), 1/2\}})$ 
upper bound, 
which improves over $\sqrt {n\log n}$ only for $\alpha>1$
and is worse than that above for all $\alpha<1\!+\!{1}/{\log k}$.

\begin{proof}
For the ease of exposition, write the probability of symbol $i$
assigned by distribution $p$ as $p_i:=c_\alpha^{-1}\cdot i^{-\alpha}$, 
where $c_\alpha$ is a normalizing constant (implicitly depends on $k$) and $k$ can be infinite. 
Recall that the quantity of interest is
 \[
 {H^{\mathcal S}_n(p)}= \sum_{j\ge 1}\min\Brace{p_{I_j}, j\cdot \log n}.
 \]
First consider $p_{I_j}$ for a sufficiently large $j$ and note that
\begin{align*}
p_i\in I_j
&\iff 
\frac{1}{c_\alpha i^\alpha}
\in \left((j-1)^2\frac{\log n}{n}, j^2\frac{\log n}{n}\right]\\
&\iff
i
\in 
I_j':= \left[ \Paren{j^2c(\alpha, n)}^{-\frac{1}{\alpha}},
\Paren{(j-1)^2c(\alpha, n)}^{-\frac{1}{\alpha}}
\right),
\end{align*}
where $c(\alpha, n):= (c_\alpha\log n)/{n}$. 
\vspace{-0.15em} Observe that the length $L_j$ of interval $I_j'$, which differs from the 
value of $p_{I_j}$ by at most $2$, 
is proportional to $(j-1)^{-2/\alpha}-j^{-2/\alpha}$, 
and hence is a decreasing function of $j$. 
Furthermore, each term $\min\{p_{I_j}, j\cdot \log n\}\approx \min\{L_j, j\cdot \log n\}$ is basically the minimum between 
this decreasing function and $j\log n$, an increasing function of $j$. 
This naturally calls for determining the value of $j$ at which the two functions are equal. 
Concretely,
\begin{align*}
\Paren{(j-1)^2c(\alpha, n)}^{-\frac{1}{\alpha}}-
\Paren{j^2c(\alpha, n)}^{-\frac{1}{\alpha}}
=j\log n
&\Longrightarrow
j\ge\Paren{c(\alpha, n)\cdot (\log n)^{\alpha}}^{\frac{-1}{2+2\alpha}}\ge j-1.
\end{align*}
Let $J$ denote the middle quantity on the right-hand side (implicitly depends on $\alpha$ and $n$).
We can decompose the summation 
${H^{\mathcal S}_n(p)}$ into two parts. The first part consists of indices $j\le J$, \vspace{-0.3em}
\[
{H^{\mathcal S}_{n, 1}(p)}:=\sum_{j=1}^{J-1} \min\Brace{p_{I_j}, j\cdot \log n}
\le (\log n)\sum_{j=1}^{J} j =\frac{\log n}{2}(J+1)J
\le \frac12\Paren{\frac{c_\alpha}{n}}^{\frac{-1}{1+\alpha}}.
\]
Correspondingly, the second part consists of indices $j\ge J$. 
For these indices $j$, we have $L_j\le j\cdot \log n$. 
Recall that $I_j'$ specifies the range of $i$ satisfying $p_i\in I_j$. 
Then the second part satisfies
\[
{H^{\mathcal S}_{n, 2}(p)}:=\sum_{j=J}^{n} \min\Brace{p_{I_j}, j\cdot \log n}
\le 7\log n+\sum_{j=J\lor4}^{n} L_j
\le
7\log n+\Paren{1-\frac1{J\lor4}}^{-\frac2\alpha} \Paren{\frac{c_\alpha}{n}}^{\frac{-1}{1+\alpha}}
,
\]
\vspace{-0.15em}where the first inequality follows by the fact that the intervals $I_j'$ are consecutive. 
Also note that the boundary case $j=J$ is covered in both $H^{\mathcal S}_{n,1}$ and $H^{\mathcal S}_{n,2}$ under different conditions. 
Depending on the value of the normalizing constant $c_\alpha$, 
the following implications are immediate and apply to all power parameters $\alpha>0$.
If $c_\alpha\le \tilde{\mathcal{O}}(n)\alpha^{{1}/{(1+\alpha)}}$, \vspace{-0.5em}
\[
{H^{\mathcal S}_n(p)}\le H^{\mathcal S}_{n,1}(p)+H^{\mathcal S}_{n,2}(p) 
\le 7\log n+e\Paren{\frac{n}{c_\alpha}}^{\frac{1}{1+\alpha}}.
\]
If $c_\alpha\ge \tilde{\Omega}(n)$, then ${H^{\mathcal S}_n(p)}$ is 
bounded by $\log n$ since the probabilities $p_i$ are at most $1/c_\alpha$.

These bounds can be refined given the knowledge of $k$. 
Note that for $\alpha\le1$, the support size $k$ must be finite for the normalizing constant to be well-defined. 
On the other hand, for $\alpha>1$, it is common to assume $k=\infty$, which we adopt here.

Then, for $\alpha> 1$, the above bound derivations yield
\[
{H^{\mathcal S}_n(p)}= H^{\mathcal S}_{n,1}(p)+H^{\mathcal S}_{n,2}(p)
\le 7\log n+2\Paren{\frac{n}{c_\alpha}}^{\frac{1}{1+\alpha}}
\le 7\log n+2\Paren{n(\alpha-1)}^{\frac{1}{1+\alpha}},
\]
where we lower bound $c_\alpha$ by 
$\max\{1, 1/(\alpha-1)\}$.

Next, we improve the upper bound for $\alpha<1$.
Note that the normalizing constant admits
\[
\frac{k^{1-\alpha}}{1-\alpha} +\frac{\alpha}{1-\alpha}\ge 1+ \int_{1}^{k} x^{-\alpha} dx \ge c_\alpha 
= \sum_{i=1}^k i^{-\alpha} \ge \int_{1}^{k+1} x^{-\alpha} dx 
= \frac{(k+1)^{1-\alpha}}{1-\alpha}-\frac{1}{1-\alpha}.
\] 
Then for $k\ge \sqrt{n}$, the previous upper bound yields
\[
{H^{\mathcal S}_n(p)}-7\log n
\le e\Paren{\frac{n}{c_\alpha}}^{\frac{1}{1+\alpha}}
\le e\Paren{\frac{n(1-\alpha)}{(k+1)^{1-\alpha}-1}}^{\frac{1}{1+\alpha}}
\le e\Paren{\frac{en}{n^{\frac{1-\alpha}{2}}}}^{\frac{1}{1+\alpha}}
\le e^2\sqrt{n}
,
\]
where we utilize the inequality $((k+1)^{1-\alpha}-1)/(1-\alpha)\ge (k+1)^{1-\alpha}/e$. 
Furthermore, one can bound ${H^{\mathcal S}_n(p)}$ by $k$ since it is at most the sum of $p_{I_j}$. 
Combined, these two results yield 
\[
{H^{\mathcal S}_n(p)}
\le 7\log n+e^2\sqrt n\Paren{\min\Brace{\frac{k}{\sqrt n}, \Paren{\frac{\sqrt n}{k}}^{\frac{1-\alpha}{1+\alpha}}}}.
\]
To complete the picture, we consider the case of $\alpha=1$. 
Note that $c_\alpha=\sum_{i=1}^k i^{-1}>\log k$. 
Hence for $\alpha=1$, the above reasoning implies that \vspace{-0.3em}
\[
{H^{\mathcal S}_n(p)}\le \min\Brace{k, 2\sqrt{\frac{n}{\log k}}}. 
\]
Finally, we also analyze the case of $\alpha>1$ with truncation at $k$. 
Our analysis mainly relies on the following inequality that provides a tight lower bound on $c_\alpha$. 
\[
c_\alpha = \sum_{i=1}^k i^{-\alpha} \ge \int_{1}^{k} x^{-\alpha} dx 
=\frac{1-k^{-(\alpha-1)}}{\alpha-1}\ge
\begin{cases}
\frac{\log k}{e} & \text{if } \alpha-1<\frac{1}{\log k};\\
\frac{1-1/e}{\alpha-1} & \text{if } \alpha-1\ge\frac{1}{\log k}.
\end{cases}
\]

To summarize, for different values of $\alpha$, the quantity ${H^{\mathcal S}_n(p)}$ admits
\[
{H^{\mathcal S}_n(p)}-  
7\log n
\le
k\wedge
e^2 \Paren{n\cdot U(\alpha, k)}^{\frac{1}{1+\alpha}},
\]
where 
\[
U(\alpha, k):=
\begin{cases}
\alpha-1& \text{if } \alpha\ge 1+\frac{1}{\log k};\\
\vspace{0.1em}
\frac{1}{\log k} & \text{if } 1\le \alpha <1+\frac{1}{\log k};\\ 
k^{\alpha-1} & \text{if }  0\le\alpha<1.
\end{cases}
\]
Note that unless $\alpha<1$ and $k\approx\sqrt n$, 
all the bounds are better than $\Theta(\sqrt n)$. 
In addition, the derivation above already shows 
that the bounds are tight up to logarithmic factors. 
Reorganizing the terms yields the desired result:
\emph{
For any power-law distribution $p$ with power $\alpha\ge 0$, 
\[
\frac{{H^{\mathcal S}_n(p)}-  
7\log n}{e^2}
\le 
\begin{cases}
n^{\frac1{1+\alpha}}& \text{if } \alpha\ge 1+\frac{1}{\log k};\\
\Paren{\frac{n}{\log n}}^{\frac1{1+\alpha}} & \text{if } 1\le \alpha <1+\frac{1}{\log k};\\ 
\sqrt n\Paren{\frac{k}{\sqrt n}\wedge \Paren{\frac{\sqrt n}{k}}^{\!\frac{1-\alpha}{1+\alpha}}} & \text{if } 0\le\alpha<1.
\end{cases}
\]}
\!As a remark, for $\alpha=0$, the distribution becomes a uniform distribution with support size $k$. 
The above result also covers this case, for which the upper bound simplifies to  $k\land (n/k)$.
\end{proof}

\subsection{Histogram Distributions}
A distribution $p\in \Delta_\cX$ is a \emph{$t$-histogram distribution} 
if there is a partition of $\cX$ into $t$ parts such that $p$ has 
the same probability value over all symbols in each part.   

Besides the long line of research on histograms reviewed in~\cite{ioannidis2003history}, 
the importance of histogram distributions rises 
with the rapid growth of data sizes in numerous 
engineering and science 
applications in the modern era. 

For example, 
in scenarios where processing the complete data set is inefficient or even impossible, 
a standard solution is to partition/cluster the data into groups 
according to the task specifications and element similarities, 
and randomly sample from each group to obtain a subset of the data to use. 
This naturally induces a histogram distribution,
with each data point being a symbol in the support. 

The work of~\cite{hao2019doubly} studies the class of $t$-histogram distributions 
and obtains the following upper bound \vspace{-0.25em}
\[
{H^\mathcal{S}_n(p)}\le \tilde{\mathcal{O}}\Paren{\min \Brace{(nt^2)^{\frac13}, \sqrt{n}}}. \vspace{-0.25em}
\]
Our contribution is establishing its optimality. 
\begin{Theorem}\label{thm:histgram}
For any $t, n\in \mathbb Z^+$, there exists 
a $t$-histogram distribution $p$ such that \vspace{-0.25em}
\[
{H^\mathcal{S}_n(p)}\ge \tilde{\Omega}\Paren{\min \Brace{(nt^2)^{\frac13}, \sqrt{n}}}. 
\]\par\vspace{-1em}
\end{Theorem}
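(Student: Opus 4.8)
The plan is to exhibit, for each $t$ and $n$, a $t$-histogram distribution whose probability values populate many of the ranges $I_j$ up to their ``saturation'' level $j\log n$, since each such range then contributes $\min\{p_{I_j},\,j\log n\}=j\log n$ to ${H^\mathcal{S}_n(p)}$. Recall that a $t$-histogram distribution has at most $t$ distinct probability values, so at most $t$ ranges can be saturated, and the only real constraint is that the total mass $\sum_\ell m_\ell v_\ell$, where part $\ell$ consists of $m_\ell$ symbols of common value $v_\ell$, equals $1$. If part $\ell$ is placed inside $I_{j_\ell}$ with $m_\ell=\lceil j_\ell\log n\rceil$ symbols, then $v_\ell=\Theta(j_\ell^2\log n/n)$ and the mass it consumes is $\Theta(j_\ell^3(\log n)^2/n)$; hence the construction is feasible precisely when $\sum_\ell j_\ell^3\le\Theta(n/(\log n)^2)$, and it yields ${H^\mathcal{S}_n(p)}\ge(\log n)\sum_\ell j_\ell$. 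So the problem reduces to: choose distinct positive integers $j_1<\cdots<j_T$ with $T\le t$ and $\sum_\ell j_\ell^3\le\Theta(n/(\log n)^2)$ maximizing $\sum_\ell j_\ell$.

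First I would solve this optimization. For a fixed number $T$ of terms and a fixed cube-budget $B$, the sum $\sum_\ell j_\ell$ is largest when the $j_\ell$ are as equal as possible; since they must be distinct integers I would take $j_\ell=J+\ell$, $\ell\in[T]$, with $J\ge T$ so that $\sum_\ell(J+\ell)^3\le 8TJ^3$. Setting $T_{\max}:=\Theta\big((n/(\log n)^2)^{1/4}\big)$ there are two regimes. If $t\le T_{\max}$, take $T=t$ and $J=\Theta\big((n/(t(\log n)^2))^{1/3}\big)$, which is $\ge t$ in this regime so $8tJ^3$ meets the budget; this gives $\sum_\ell j_\ell\ge tJ=\Theta\big((t^2n/(\log n)^2)^{1/3}\big)$ and hence ${H^\mathcal{S}_n(p)}\ge\tilde\Omega\big((nt^2)^{1/3}\big)$, and since $t\le T_{\max}\le n^{1/4}$ forces $(nt^2)^{1/3}\le\sqrt n$, this matches $\min\{(nt^2)^{1/3},\sqrt n\}$ up to logs. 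If instead $t>T_{\max}$, take $T=\lfloor T_{\max}\rfloor\le t$ and $j_\ell=T+\ell$, so $\sum_\ell j_\ell^3\le 8T^4\le B$ and $\sum_\ell j_\ell\ge T^2=\Theta(\sqrt n/\log n)$, whence ${H^\mathcal{S}_n(p)}\ge\Omega(\sqrt n)$; since $t>T_{\max}$ also gives $(nt^2)^{1/3}\ge\tilde\Omega(\sqrt n)$, this again equals $\tilde\Omega(\min\{(nt^2)^{1/3},\sqrt n\})$.

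Then I would pin down the construction and the feasibility bookkeeping. For each $\ell\in[T]$ let part $\ell$ be $\lceil j_\ell\log n\rceil$ symbols of value $v_\ell$ slightly above the left endpoint $(j_\ell-1)^2\log n/n$ of $I_{j_\ell}$, so $v_\ell\in I_{j_\ell}$; one checks $j_\ell\le\sqrt{n/\log n}$ in both regimes so these are legitimate range indices, and that the mass used so far is at most $\tfrac{2(\log n)^2}{n}\sum_\ell j_\ell^3\le 1$ by the budget. The residual mass $1-\sum_\ell m_\ell v_\ell$ is absorbed by $O(1)$ additional symbols (e.g.\ extra copies of value $v_1$ until within $v_1$ of full mass, then one symbol carrying the remainder), costing at most one extra distinct value; taking $T=t-1$ in the first regime (a harmless constant change) and noting $T+1\le t$ in the second, the object is a bona fide $t$-histogram distribution. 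Since by construction $p_{I_{j_\ell}}\ge\lceil j_\ell\log n\rceil\ge j_\ell\log n$ for every $\ell\in[T]$ and the $j_\ell$ are distinct, ${H^\mathcal{S}_n(p)}\ge\sum_{\ell=1}^T\min\{p_{I_{j_\ell}},j_\ell\log n\}=\sum_{\ell=1}^T j_\ell\log n$, which by the regime computations is $\tilde\Omega(\min\{(nt^2)^{1/3},\sqrt n\})$; the degenerate case where $n$ is below an absolute constant (so $T_{\max}<2$) is trivial because ${H^\mathcal{S}_n(p)}\ge1$ already dominates $\tilde\Omega(\sqrt n)$ there.

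The main obstacle I anticipate is not a single step but the interplay of three mild nuisances: the $j_\ell$ must be distinct integers while the extremal continuous solution wants them equal (handled by the $[J,J+T]$ spacing, which only works because $J\gtrsim T$ — exactly the dichotomy defining the two regimes); the total probability must be exactly $1$ without moving any value out of its range (handled by the $O(1)$-symbol residual trick); and the constants must be chosen so that the mass budget and the bound $j_\ell\le\sqrt{n/\log n}$ hold simultaneously. Each is routine, but aligning the constants across both regimes — and confirming the resulting lower bound truly equals $\tilde\Theta$ of $\min\{(nt^2)^{1/3},\sqrt n\}$ rather than something strictly weaker — is where the care lies, and it is precisely what certifies optimality against the $\tilde{\mathcal{O}}(\min\{(nt^2)^{1/3},\sqrt n\})$ upper bound of~\cite{hao2019doubly}.
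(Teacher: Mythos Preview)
Your proposal is correct and follows essentially the same approach as the paper: both constructions saturate $T\le t$ consecutive ranges $I_{j}$ (the paper uses $j=s+1,\ldots,s+t-1$ with $s\approx(n/(t\log^2 n))^{1/3}-t$, matching your $j_\ell=J+\ell$), split into the same two regimes at the threshold $t\sim n^{1/4}/\sqrt{\log n}$, and absorb the residual mass with one extra distinct value. One small slip: the residual absorption need not be ``$O(1)$ additional symbols'' (padding with copies of $v_1$ may take many symbols), but since only the count of distinct values matters for the $t$-histogram property, your ``at most one extra distinct value'' claim is what is needed and is correct.
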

Note that uniform distributions correspond to $1$-histograms, 
for which the bounds reduce to $\tilde{\Theta}(n^{1/3})$.
\begin{proof}
Again, recall that the quantity of interest is
 \[
 {H^{\mathcal S}_n(p)}= \sum_{j\ge 1}\min\Brace{p_{I_j}, j\cdot \log n}.
 \]
Our construction depends on the value of $t$ as follows.
Let $A\cdot\{B\}$ denote the length-$A$ constant sequence of value $B$. 
If $t=1$, then the distribution $p$ has the following form 
\[
p := \tilde{\Theta}(n^{1/3})\cdot \{p_0\in I_{n^{1/3}}\},
\]
where $p_0$ is a properly chosen probability in $I_{n^{1/3}}$ so that $p$ is well-defined, and the range of support of distribution $p$ is irrelevant for our purpose and hence unspecified.
If $2 \le t<n^{1/4}/(2\sqrt{\log n})$, then for some parameter $s\ge 0$ 
to be determined,
the distribution $p$ has the following form 
\[
p := L\cdot\Brace{\frac{1}{n^{2}}}\bigcup
\Paren{\bigcup_{j=s+1}^{s+t-1} 
\Paren{(j\log n)\cdot \Brace{j^2\frac{\log n}{n}}}},
\]
where the probability values are sorted according to the ordering they appear above, and $L$ is a properly chosen to make the probabilities sum to $1$. 
For the distribution to be well-defined, we require
\begin{align*}
\sum_{j=s+1}^{s+t-1} 
(j\log n)\cdot \Paren{j^2\frac{\log n}{n}}
\le 1
&\ \Longleftarrow\ 
 t (s + t)^3\le \frac{n}{\log^2 n}
\ \Longleftarrow\ 
s\le \Paren{\frac{n}{t\log^2 n}}^{1/3}\!\!-t,
\end{align*}
where the last inequality is valid given that $t<n^{1/4}/(2\sqrt{\log n})$. 
Let $s$ be the maximum integer satisfying the inequality above. 
Then, the quantity ${H^{\mathcal S}_n(p)}$ admits the lower bound
\[
{H^{\mathcal S}_n(p)}
\ge \sum_{j=s+1}^{s+t-1} (j\log n)
\ge \frac{(2s+t)(t-1)}{2}\log n
\ge \frac{1}{4}\Paren{\frac{n}{t\log^2 n}}^{1/3}\!\!t\log n
=\Omega((nt^2\log n)^{1/3}). 
\]
Finally, if $t\ge n_0:=n^{1/4}/(2\sqrt{\log n})$, 
then the distribution $p$ has the following form 
\[
p := (t-n_0+1)\cdot\Brace{p_0}\bigcup
\Paren{\bigcup_{j=1}^{n_0-1} 
\Paren{(j\log n)\cdot \Brace{j^2\frac{\log n}{n}}}},
\]
where $p_0$ is a properly chosen to make the probabilities sum to $1$. 
By the reasoning for the last case, the distribution $p$ is well-defined. 
In addition, the quantity ${H^{\mathcal S}_n(p)}$ satisfies
\[
{H^{\mathcal S}_n(p)}
\ge \sum_{j=1}^{n_0-1} (j\log n)
\ge \frac{n_0(n_0-1)}{2}\log n
\ge \Omega(\sqrt n). 
\]
Consolidating these results yields the desired lower bound. 
\end{proof}

\section{Competitive Estimation of Distributions and Their Entropy}

\subsection{Competitive Distribution Estimation} 

Competitive estimation calls for an estimator that competes 
with the instance-by-instance performance
of a genie knowing more information, but reasonably restricted. 
Denote by $\ell_{\text{\tiny KL}}(p,q)$ the KL divergence.
Introduced in~\cite{orlitsky2015competitive}, 
the formulation considers the collection $\mathcal{N}$ 
of all natural estimators, and 
shows that a simple variant $\hat p^{\text{\tiny GT}}$ of the Good-Turing 
estimator achieves 
\[
\ell_{\text{\tiny KL}}(p,\hat p^{\text{\tiny GT}}_{\!_{X^n}})\!
-\!\min_{\hat p\in \mathcal{N}}\ell_{\text{\tiny KL}}(p,\hat p_{\!_{X^n}})
\le \mathcal{O}\Paren{\frac{1}{n^{1/3}}}
\]
for every distribution $p$ and with high probability. 
We refer to the left-hand side as the \emph{excess loss} of estimator 
$\hat p_{\text{\tiny GT}}$ with respect to the best natural estimator, 
and note that it vanishes at a rate independent of $p$. 
For a more involved estimator in~\cite{acharya2013optimal}, the excess loss vanishes 
at a faster rate of $\tilde{\mathcal{O}}(\min\{1/\sqrt n, |\cX|/n\})$, which 
is optimal up to logarithmic factors for every estimator and 
the respective worst-case distribution. 
For the $\ell_1$ distance, the work of~\cite{valiant2016instance} derives a similar result.

These estimators track the loss of the best natural estimator for each distribution.
Yet an equally important component, the excess loss bound, is still of the worst-case nature. 
For a fully adaptive guarantee, \cite{hao2019doubly} design an estimator $\hat p^\star$ 
that achieves a $\mathcal D_n/n$ excess loss, i.e., 
\[
\ell_{\text{\tiny KL}}(p,\hat p^\star_{\!_{X^n}})\!
-\!\min_{\hat p\in \mathcal{N}}\ell_{\text{\tiny KL}}(p,\hat p_{\!_{X^n}})
\le\tilde{\mathcal{O}}\Paren{\frac{\mathcal D_n}{n}}, 
\]
 for every $p$ and $X^n\sim p$, with high probability. 
 Utilizing the adaptiveness of $\mathcal D_n$ to the simplicity of distributions, 
the paper derives excess-loss bounds for several important distribution families, 
and proves the estimator's optimality under various of classical and
modern learning frameworks. 

\paragraph{New results}
While the work of~\cite{hao2019doubly} provides an appealing upper bound 
on the excess loss, 
it is not exactly clear how good this bound is as a matching lower bound is missing. 
In this work, we complete the picture by showing 
that the $\mathcal D_n/n$ bound is essential for competitive 
estimation and optimal up to 
logarithmic factors of $n$. 
\setcounter{Theorem}{4}
\begin{Theorem}[Minimal excess loss] 
For any $n, \mathcal{D}\in \mathbb N$ and 
distribution estimator $\hat p'$, 
there is a distribution $p$ such that
with probability at least $9/10$, we have both  \vspace{-0.25em}
\[
\mathcal{O}(\log n+\mathcal{D}) \ge  \mathcal D_n   \vspace{-0.25em}
\] 
and  \vspace{-0.25em}
\[
\ell_{\text{\tiny KL}}(p,\hat p'_{\!_{X^n}})\!
-\!\min_{\hat p\in \mathcal{N}}\ell_{\text{\tiny KL}}(p,\hat p_{\!_{X^n}})
\ge 
\Omega\Paren{\frac{\mathcal{D}}{n}}. 
\]\par \vspace{-0.5em}
\end{Theorem}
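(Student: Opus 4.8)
The plan is to prove this matching minimax lower bound by reducing the excess loss to a \emph{fingerprint-mass estimation} problem and then exhibiting a single hard distribution whose profile dimension is $\mathcal{O}(\log n+\mathcal{D})$ but on which that problem is $\Omega(\mathcal{D}/n)$-hard in KL. We may assume $\mathcal D\le\sqrt n$ (the range in which the bound is nontrivial and consistent with the $\tilde{\mathcal O}(\mathcal D_n/n)$ upper bound) and that $\cX$ is finite. Symmetrizing $\hat p'$ over the relabelings of the hard instance -- which only lowers its loss, by convexity of $\ell_{\text{\tiny KL}}$, and leaves the genie's loss unchanged since it is a symmetric functional -- lets us assume $\hat p'$ is permutation-equivariant, so on a sample $X^n$ with profile $\Phi^n$ it outputs a natural distribution assigning a common value $\hat a_\mu=\hat a_\mu(\Phi^n)$ to every symbol of multiplicity $\mu$, with $\sum_\mu\varphi_\mu\hat a_\mu=1$. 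Writing $P_\mu:=\sum_{x:\,\mu_x(X^n)=\mu}p_x$ for the true mass carried by the multiplicity-$\mu$ symbols, the loss-minimizing natural estimator assigns $a^\star_\mu=P_\mu/\varphi_\mu$, and a short computation shows the excess loss is a KL divergence between two probability vectors indexed by multiplicity:
\[
\ell_{\text{\tiny KL}}(p,\hat{p}'_{X^n})-\min_{\hat p\in\mathcal N}\ell_{\text{\tiny KL}}(p,\hat{p}_{X^n})
=\sum_{\mu}P_\mu\log\frac{P_\mu}{\varphi_\mu\hat a_\mu}
=\infdiv{(P_\mu)_{\mu}}{(\varphi_\mu\hat a_\mu)_{\mu}}.
\]
It thus suffices to lower bound, for some $p$, the KL error of the $\Phi^n$-measurable estimate $(\varphi_\mu\hat a_\mu)_\mu$ of $(P_\mu)_\mu$.

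For the hard instance I would take $\cX=\{1,\dots,\mathcal D,\star\}$ with $p(i)=i/n$ for $1\le i\le\mathcal D$ and the remaining mass on $\star$, which is positive since $\mathcal D\le\sqrt n$. Under Poisson sampling the multiplicity of $i$ is $\Poi(i)$, so $\E[\varphi_\mu]=\sum_{i=1}^{\mathcal D}\Pr(\Poi(i)=\mu)$, which is $\Theta(1)$ for $1\le\mu\le\mathcal D/2$ while $\sum_\mu\E[\varphi_\mu]=\mathcal O(\mathcal D)$; hence $E_n(p)=\sum_{\mu\ge1}\Pr(\varphi_\mu\ge1)=\Theta(\mathcal D)$, and by the Corollary to Theorem~\ref{thm:exp_conc}, $\mathcal D_n\le 2E_n(p)+3\log n=\mathcal O(\log n+\mathcal D)$ with probability $1-\mathcal O(1/\sqrt n)$, which gives the first conclusion. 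The symbol $\star$ has probability near $1$ and hence a unique, very large multiplicity, estimated essentially perfectly by both the genie and $\hat p'$, so it is negligible for the excess loss. A Poissonized Chernoff estimate (as in the proof of Theorem~\ref{thm:exp_conc}) also shows that with probability $\ge 0.99$ at least $\Omega(\mathcal D)$ of the levels $\mu\in[C,\mathcal D/2]$ are simultaneously \emph{good}, in the sense that $\varphi_\mu=\Theta(1)$ and $P_\mu=\Theta(\mu/n)$.

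The heart of the argument -- and the step I expect to be the main obstacle -- is an indistinguishability (conditional-variance) estimate: a symbol appearing exactly $\mu$ times contributes only the value ``$\mu$'' to the profile, so conditioned on $\Phi^n$ the identity of the symbol realizing multiplicity $\mu$ is genuinely random, with posterior weight on $i\in[\mathcal D]$ essentially proportional to $\Pr(\Poi(i)=\mu)$, a law spread over $i\in\mu\pm\Theta(\sqrt\mu)$. Making this precise requires controlling the conditional law of the labelled matching ``symbol $\mapsto$ multiplicity'' given the unordered profile, which is a weighted-permanent-type distribution; I would do this by a swapping argument showing that exchanging the symbol at level $\mu$ with any symbol whose natural position lies within $\Theta(\sqrt\mu)$ of $\mu$ changes the likelihood only by a constant factor (the ratio is $\approx e^{-(\text{displacement})^2/\mu}$). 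This yields $\Var(P_\mu\mid\Phi^n)=\Omega(\mu/n^2)$ on each good level, and since $\hat a_\mu$ is $\Phi^n$-measurable while $a^\star_\mu=P_\mu/\varphi_\mu$ is not, an elementary quadratic lower bound on $x\log(x/y)-x+y$ together with $\sum_\mu\varphi_\mu a^\star_\mu=\sum_\mu\varphi_\mu\hat a_\mu=1$ and $a^\star_\mu\asymp\mu/n$ gives
\[
\E\Br{\varphi_\mu\, a^\star_\mu\log\frac{a^\star_\mu}{\hat a_\mu}\mid\Phi^n}\ \ge\ \Omega\Paren{\frac{\varphi_\mu\,\Var(a^\star_\mu\mid\Phi^n)}{a^\star_\mu}}\ \ge\ \Omega\Paren{\frac1n}
\]
on each good level, with levels where $\hat a_\mu$ is far from $\mu/n$ over-contributing by the same bound.

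Summing over the $\Omega(\mathcal D)$ good levels yields expected excess loss $\Omega(\mathcal D/n)$. To upgrade this to a probability-$9/10$ bound I would lower bound the excess loss by $S:=\sum_\mu Y_\mu$, where $Y_\mu$ is a nonnegative, $\mathcal O(1/n)$-bounded function of the sample (a truncated per-level contribution times the indicator that $\mu$ is good) with $\E[Y_\mu]=\Omega(1/n)$ for $\Omega(\mathcal D)$ values of $\mu$; after Poissonization the $Y_\mu$ are independent, so $S$ concentrates around $\E[S]=\Omega(\mathcal D/n)$ by a Chernoff bound for sums of bounded independent variables (then conditioning on $N=n$, exactly as in the proof of Theorem~\ref{thm:exp_conc}). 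Intersecting the resulting event with the $1-\mathcal O(1/\sqrt n)$ event for $\mathcal D_n$ completes the proof.
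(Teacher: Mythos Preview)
Your reduction of the excess loss to $\infdiv{(P_\mu)_\mu}{(\varphi_\mu\hat a_\mu)_\mu}$ is correct and is exactly the starting point of the paper's proof. The constructions, however, diverge. The paper does \emph{not} use a single distribution: it puts a uniform prior over a family indexed by independent binary choices $\{q_i,q_i'\}_{i\in[s,2s]}$ (with $s=\sqrt{\mathcal D/\log n}$), where level $i$ carries $i\log n$ \emph{identical} symbols of probability either $q_i=i^2(\log n)^2/n$ or $q_i'=q_i+i(\log n)/n$. Because all symbols at level $i$ share the same probability, the mass $M_\mu$ for $\mu\in U_i$ equals $q_i$ or $q_i'$ \emph{regardless of which symbol lands at $\mu$}, so the only uncertainty is the two-point Bayesian posterior on $q_i$ versus $q_i'$---a one-line computation---and the lower bound is finished via squared Hellinger and $\infdiv{P}{Q}\ge 2H^2(P,Q)$. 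Your single-distribution route is more economical in construction but pays for it in the analysis: the swap you describe compares two \emph{specific} assignments, whereas $\Var(P_\mu\mid\Phi^n)$ requires the \emph{marginal} $\Pr(I_\mu=j\mid\Phi^n)$, a sum over all profile-consistent assignments (a permanent). The swap ratio you compute depends on where the swapped partner currently sits, and that location itself varies over the posterior, so the marginals are not immediately comparable. This can probably be pushed through, but it is precisely the complication the paper's randomized construction is engineered to sidestep.

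There is also a concrete error in your concentration step: Poissonization makes the symbol multiplicities $(\mu_x)_{x\in\cX}$ independent across \emph{symbols}, not the level-indexed contributions $Y_\mu$. Each $Y_\mu$ depends on which symbols land at level $\mu$ (and on $\hat a_\mu$, which is a function of the entire profile); for distinct $\mu$ these involve the same symbol set and are negatively associated, not independent, so the Chernoff step as written does not apply. (For what it is worth, the paper's own proof stops at an expected-loss bound and does not spell out the passage to a probability-$9/10$ statement either.)
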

According to Theorem~\ref{thm:entro_eql_dim}, we can replace $\mathcal D_n$ by multiples 
$\tilde\Theta(H(\Phi^n))$ of the profile entropy 
 in both the upper and lower bounds. \vspace{-0.5em}
\begin{proof}
Denote $s:=(D/\log n)^{1/2}$, $I:=\{s, s+1,\ldots, 2s\}$, 
and $P:=\cup_{i\in I} P_i:=\cup_{i\in I} U_i/n$ where
\[
U:=\bigcup_{i\in I} U_i:=\bigcup_{i\in I} \{i^2 \log^2 n, i^2 \log^2 n+1, \ldots, i^2\log^2 n+i\log n\},
\]
where $D\lesssim\sqrt{n/\log n}$ for the total to be at most $n$. 
Let $A\cdot\{B\}$ denote the length-$A$ constant sequence of value $B$. 
Let $C$ be the set of distributions in the form of 
\[
p:=L\cdot\Brace{\frac{1}{n^{2}}}\bigcup
\Paren{\bigcup_i (i\log n)\cdot \Brace{q_i\text{ or }q_i': nq_i=i^2 \log^2 n, nq_i'= i^2 \log^2 n+i\log n}}.
\] 
where the probability values are sorted according to the ordering they appear above, $L$ is a proper variable that makes the probabilities sum to 1,
and the range of support of distribution $p$ is irrelevant for our purpose and hence unspecified. 
Equip a uniform prior over $C$ (equivalently, construct a random distribution).
We have several claims in order:
\begin{itemize}
\item 
For any $i\in I$ and $\mu\in U_i$, by the construction and independence, 
\begin{align*}
\Pr(\varphi_\mu =1| q_i \text{ is chosen})
&\approx
(i\log n)\cdot\Paren{
\Pr(\Poi(nq_i)=\mu)\cdot \Paren{\Pr(\Poi(nq_i)\not=\mu)}^{i\log n-1}}\\
&\approx
(i\log n)\cdot\Paren{
\frac{1}{\sqrt{nq_i}}\cdot \Paren{1-\frac{1}{\sqrt{nq_i}}}^{i\log n-1}}\\
&\ge 
\Omega(1). 
\end{align*}
Similarly, we have $\Pr(\varphi_\mu =1| q_i' \text{ is chosen})\ge \Omega(1)$. Hence,
\[
\Pr(\varphi_\mu =1)\ge \Omega(1).
\]
\item 
For any $i\in I$ and $\mu\in U_i$, by Bayes' rule,
\begin{align*}
\Pr( q_i \text{ is chosen}|\varphi_\mu =1)
=
\frac{\Pr(\varphi_\mu =1| q_i \text{ is chosen})\cdot 0.5}{\Pr(\varphi_\mu =1)}
\ge
\Omega(1). 
\end{align*}
Similarly, we have $\Pr( q_i' \text{ is chosen}|\varphi_\mu =1)\ge \Omega(1)$. 
\item For any $i\in I$ and $\mu\in U_i$, the value of $M_\mu$, 
the total probability of symbols appearing $\mu$ times, is $q_i$
if $\varphi_\mu=1$ and $q_i$ is chosen; and is $q_i'$
if $\varphi_\mu=1$ and $q_i$ is chosen. 
Any estimator $E_\mu$ will incur an expected absolute error of $\Omega(i(\log n)/n)$ 
in estimating $M_\mu$ given $\varphi_\mu=1$.
\item Note that for any $\alpha\in[0,1]$ and $x,y>0$,
\[
\alpha(y-z)^2+(1-\alpha)(z-x)^2
\ge
\alpha(1-\alpha)(x-y)^2.
\]
 
\item Therefore, the expected squared Hellinger distance $H^2(\cdot, \cdot)$ of any estimator $E_\mu$ 
in estimating $(M_\mu)_{\mu\ge 0}$ satisfies
\begin{align*}
\frac{1}{2}\sum_{\mu\ge 0} \EE\Paren{\sqrt{E_\mu}-\sqrt{M_\mu}}^2
&\ge 
\frac{1}{2}\sum_{i\in I}\sum_{\mu\in U_i} \EE\Br{\Paren{\sqrt{E_\mu}-\sqrt{M_\mu}}^2 \big\vert \varphi_\mu=1} \Pr(\varphi_\mu=1)\\
&=
\frac{1}{2}\sum_{i\in I}\sum_{\mu\in U_i}
 \EE\Br{\Paren{\frac{E_\mu-M_\mu}{\sqrt{E_\mu}+\sqrt{M_\mu}}}^2 \bigg\vert \varphi_\mu=1} \Pr(\varphi_\mu=1)\\
&\ge
\sum_{i\in I}(i\log n)\cdot \Omega\Paren{\frac{(i\log n)/n}{\sqrt{i^2(\log^2 n)/n}}}^2\\
&\ge
s\cdot \Omega\Paren{\frac{s\log n}{n}}\\
&= 
\Omega\Paren{\frac{D}{n}}.
\end{align*}
\item Consequently, by the inequality $\infdiv{P}{Q}\ge 2H^2(P, Q)$, 
\[
\EE\Br{\infdiv{E}{M}}\ge \EE\Br{2H^2(E,M)}\ge \Omega\Paren{\frac{D}{n}}.
\]
\item Finally, the value of $\EE[D(X^n)]$ is 
at most $\mathcal{O}(\log n+s(s\log n))=\mathcal{O}(\log n+D)$. \qedhere
\end{itemize}
\end{proof}
\vspace{-0.75em}

\subsection{Competitive Entropy Estimation}

The next theorm shows that for \emph{every} distribution and among all plug-in entropy estimators,
the distribution estimator in~\cite{hao2019doubly}
is as good as the one that performs best in estimating the actual distribution.

Denote by $\mathcal{N}$ the collection of all natural estimators. 
Write $|H(p)-H(q)|$ as $\ell_H(p,q)$ for compactness 
and the KL-divergence between $p,q\in\Delta_\cX$ 
as $\ell_{\text{\tiny KL}}(p,q)$.
\setcounter{Theorem}{3}
\begin{Theorem}[Competitive entropy estimation]\label{thm:comp_entro}
For any distribution $p$, sample $X^n\sim p$ with profile $\Phi^n\!:=\va(X^n)$,
and $\hat p_{\!_{X^n}}^{_{\mathcal{N}}}:=\argmin_{\hat p\in \mathcal{N}}\ell_{\text{\tiny KL}}(p,\hat p_{\!_{X^n}})$, 
we have \vspace{-0.5em}
\[
\ell_H(p,\hat p^\star_{\!_{X^n}})\!-\!\ell_{H}(p,\hat p_{\!_{X^n}}^{_{\mathcal{N}}})
\le\tilde{\mathcal{O}}\Paren{\!\sqrt{\frac{\C{H\!(\Phi^n)}}{n}}}\!. 
\]\par\vspace{-0.75em}
with probability at least $1-\mathcal{O}(1/n)$.
\end{Theorem}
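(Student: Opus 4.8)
The plan is to bound the excess entropy‑estimation loss by the distance between the \emph{entropy values} of the two plug‑in distributions, and then to exploit the competitive KL guarantee of $\hat p^\star$ from~\cite{hao2019doubly}. By the triangle inequality for $|\cdot|$, $\ell_H(p,\hat p^\star_{X^n})-\ell_H(p,\hat p^{\mathcal N}_{X^n})\le \ell_H(\hat p^{\mathcal N}_{X^n},\hat p^\star_{X^n})=|H(\hat p^{\mathcal N}_{X^n})-H(\hat p^\star_{X^n})|$, so it suffices to bound this by $\tilde{\mathcal{O}}(\sqrt{\C{H(\Phi^n)}/n})$ with high probability. Both estimators are natural, so each assigns a common value to all symbols of a given multiplicity; write $v_\mu$ for that value under $\hat p^\star$. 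A Lagrange computation shows that the KL‑minimizing natural estimator $\hat p^{\mathcal N}$ assigns to each multiplicity‑$\mu$ symbol the class average $\bar p_\mu:=P_\mu/\varphi_\mu$, where $P_\mu$ is the total true probability of the multiplicity‑$\mu$ symbols; equivalently $\hat p^{\mathcal N}$ minimizes $D(p\|\cdot)$ over the ``natural simplex'' $\{q: q\text{ constant on each multiplicity class}\}$, and hence satisfies the Pythagorean identity $\ell_{\mathrm{KL}}(p,q)=\ell_{\mathrm{KL}}(p,\hat p^{\mathcal N})+D(\hat p^{\mathcal N}\|q)$ for every natural $q$. Combined with $\ell_{\mathrm{KL}}(p,\hat p^\star_{X^n})-\ell_{\mathrm{KL}}(p,\hat p^{\mathcal N}_{X^n})\le\tilde{\mathcal{O}}(\mathcal D_n/n)$, this yields $D(\hat p^{\mathcal N}_{X^n}\|\hat p^\star_{X^n})\le\tilde{\mathcal{O}}(\mathcal D_n/n)$.

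The identity I would build the bound on is the exact decomposition (a one‑line Taylor expansion of $t\mapsto -t\log t$ around each $v_\mu$, using $\sum_\mu\varphi_\mu v_\mu=\sum_\mu\varphi_\mu\bar p_\mu=1$ to kill the constant in the first‑order coefficient)
\[
H(\hat p^\star)-H(\hat p^{\mathcal N}) \;=\; \sum_\mu\varphi_\mu(\log v_\mu)(\bar p_\mu-v_\mu)\;+\;D(\hat p^{\mathcal N}\|\hat p^\star).
\]
The divergence term is $\tilde{\mathcal{O}}(\mathcal D_n/n)$ by the previous paragraph. For the linear term I would apply Cauchy--Schwarz,
\[
\Bigl|\sum_\mu\varphi_\mu(\log v_\mu)(\bar p_\mu-v_\mu)\Bigr| \le \sqrt{\sum_\mu\varphi_\mu\bar p_\mu(\log v_\mu)^2}\;\cdot\;\sqrt{\sum_\mu\varphi_\mu\frac{(\bar p_\mu-v_\mu)^2}{\bar p_\mu}}.
\]
The second factor is the square root of a $\chi^2$‑divergence, which I would dominate by $D(\hat p^{\mathcal N}\|\hat p^\star)$ using the logarithmic‑mean inequality $\bar p_\mu\log(\bar p_\mu/v_\mu)-\bar p_\mu+v_\mu\ge(\bar p_\mu-v_\mu)^2/(\bar p_\mu+v_\mu)$, provided $v_\mu=\tilde{\mathcal{O}}(\bar p_\mu)$ on the classes carrying non‑negligible mass (and the remaining classes contribute negligibly); this gives a bound $\tilde{\mathcal{O}}(\sqrt{\mathcal D_n/n})$. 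The first factor is a ``varentropy''‑type quantity: with $v_\mu=\tilde\Theta(\bar p_\mu)$ it is $\tilde{\mathcal{O}}\bigl(1+\sum_\mu\varphi_\mu\bar p_\mu(\log\bar p_\mu)^2\bigr)=\tilde{\mathcal{O}}\bigl(1+\sum_{x:\mu_x\ge1}p_x(\log p_x)^2\bigr)=\tilde{\mathcal{O}}(1)$ with high probability. Putting the pieces together, $|H(\hat p^\star)-H(\hat p^{\mathcal N})|\le\tilde{\mathcal{O}}(\sqrt{\mathcal D_n/n})+\tilde{\mathcal{O}}(\mathcal D_n/n)=\tilde{\mathcal{O}}(\sqrt{\mathcal D_n/n})$ since $\mathcal D_n\le\sqrt{2n}$, and Theorem~\ref{thm:entro_eql_dim} replaces $\mathcal D_n$ by $\tilde\Theta(\C{H(\Phi^n)})$.

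I expect the technical heart — and the main obstacle — to be the two high‑probability concentration claims. First, $\sum_{x:\mu_x\ge1}p_x(\log p_x)^2=\tilde{\mathcal{O}}(1)$: its expectation equals $\sum_x p_x(\log p_x)^2\bigl(1-(1-p_x)^n\bigr)$, which is $\mathcal{O}((\log n)^2)$ after splitting the symbols into dyadic probability scales and using $1-(1-p_x)^n\le\min\{1,np_x\}$ on the tail, and a Bernstein bound (each summand lies in $[0,4e^{-2}]$) upgrades this to a high‑probability statement; one also needs that within each multiplicity class the true probabilities concentrate around $\mu/n$ (a union bound over classes via Poisson/Chernoff tails), so that $\sum_\mu\varphi_\mu\bar p_\mu(\log\bar p_\mu)^2$ and $\sum_{x:\mu_x\ge1}p_x(\log p_x)^2$ agree up to constants. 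Second, and most delicate, is the treatment of the unseen‑symbol class $\mu=0$ (where $\bar p_0$ is the missing mass divided by the number of unseen symbols) together with the property $v_\mu=\tilde{\mathcal{O}}(\bar p_\mu)$ of $\hat p^\star$: these require invoking missing‑mass concentration and the explicit structure of the estimator of~\cite{hao2019doubly,orlitsky2015competitive}, and verifying that the $\mu=0$ term is absorbed into the same $\tilde{\mathcal{O}}(\sqrt{\mathcal D_n/n})$ bound. The remaining steps — intersecting the high‑probability events and tracking the hidden $\mathrm{polylog}(n)$ factors — are routine bookkeeping.
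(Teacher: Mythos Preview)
Your overall scaffolding matches the paper's: the Pythagorean identity $\ell_{\text{KL}}(p,q)=\ell_{\text{KL}}(p,\hat p^{\mathcal N})+D(\hat p^{\mathcal N}\Vert q)$ for natural $q$, the identification $\hat p^{\mathcal N}_\mu=P_\mu/\varphi_\mu$, and the decomposition of the entropy difference into a KL term plus a linear term $\sum_\mu\varphi_\mu(\log v_\mu)(\bar p_\mu-v_\mu)$ are exactly what the paper uses. The divergence term is handled identically.

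Where you diverge, and where the gap lies, is in bounding the linear term. You apply Cauchy--Schwarz with weights $\bar p_\mu$, which forces you to (i) control a varentropy factor $\sum_\mu\varphi_\mu\bar p_\mu(\log v_\mu)^2$ and (ii) dominate a $\chi^2$-type quantity by $D(\hat p^{\mathcal N}\Vert\hat p^\star)$. Step (ii) needs $v_\mu=\tilde{\mathcal O}(\bar p_\mu)$ classwise, and this is \emph{not} a property you can extract from the competitive guarantee alone: $\bar p_\mu=P_\mu/\varphi_\mu$ depends on the unknown $p$, and for a multiplicity class that by chance captured a symbol of atypically small probability, $\bar p_\mu$ can be arbitrarily smaller than the data-driven $v_\mu$. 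Your varentropy bound also silently needs $v_\mu=\tilde\Theta(\bar p_\mu)$ to pass from $(\log v_\mu)^2$ to $(\log\bar p_\mu)^2$. You flag this as ``delicate'' but do not resolve it, and the $\mu=0$ class compounds the difficulty.

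The paper sidesteps all of this with one structural fact about $\hat p^\star$ that you never invoke: by construction, $\hat p^\star_{\min}\ge 1/n^4$, hence $|\log v_\mu|\le 4\log n$ uniformly. The linear term is then bounded by the trivial $L^\infty$--$L^1$ H\"older plus Pinsker:
\[
\Bigl|\sum_\mu(P_\mu-\hat P^\star_\mu)\log v_\mu\Bigr|\le 4\log n\cdot\|P-\hat P^\star\|_1\le 4\log n\cdot\sqrt{2\,\ell_{\text{KL}}(P,\hat P^\star)}\le \tilde{\mathcal O}\Bigl(\sqrt{\mathcal D_n/n}\Bigr),
\]
where $\ell_{\text{KL}}(P,\hat P^\star)=D(\hat p^{\mathcal N}\Vert\hat p^\star)\le\tilde{\mathcal O}(\mathcal D_n/n)$ is exactly the quantity you already controlled. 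No varentropy, no $\chi^2$--KL comparison, no ratio assumption, no separate treatment of $\mu=0$. Replace your Cauchy--Schwarz step with this one line and the proof closes.
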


\vspace{-0.5em}
\begin{proof}
Given any natural estimator and a sample $X^n\sim p$, we denote by $q$ the distribution estimate. The entropy of $q$ differs from the true entropy by
\begin{align*}
H(q)-H(p)
&=
-\sum_x q_x\log q_x + \sum_x p_x\log p_x\\
&=
\sum_x p_x\log p_x -\sum_x p_x\log q_x + \sum_x p_x\log q_x - \sum_x q_x\log q_x\\
&=
\sum_x p_x\log\frac{p_x}{q_x} + \sum_x (p_x-q_x) \log q_x\\
&=
\ell_{\text{\tiny KL}}(p,q) + \sum_x (p_x-q_x) \log q_x.
\end{align*}
Denote by $P_\mu(X^n)$ and $Q_\mu(X^n)$ the total probability
that distributions $p$ and $q$ assign to symbols with multiplicity $\mu$. 
Since $q$ is induced by a natural estimator, 
we also write $q_\mu(X^n)$ for the probability that $q$
assigns to \emph{each} symbol with multiplicity $\mu$ in $X^n$.
Recall that prevalence $\varphi_\mu(X^n)$ denotes the number
of symbols with multiplicity $\mu$ in $X^n$.
Therefore, $Q_\mu(X^n)=\varphi_\mu(X^n)\cdot q_\mu(X^n)$. 

Henceforth, whenever it is clear from the context, we suppress $X^n$ in related expressions. 
Then, the second term on the right-hand side satisfies
\begin{align*}
\sum_x (p_x-q_x) \log q_x
&=
\sum_x (\sum_\mu \indic_{\mu_x=\mu}\cdot p_x-\sum_\mu \indic_{\mu_x=\mu}\cdot q_\mu) \log (\sum_\mu \indic_{N_x=\mu}\cdot q_\mu)\\
&=
\sum_x \sum_\mu \indic_{\mu_x=\mu}\cdot (p_x-q_\mu) \log q_\mu\\
&=
\sum_\mu (\sum_x \indic_{\mu_x=\mu}\cdot p_x-\sum_x \indic_{\mu_x=\mu}\cdot q_\mu) \log q_\mu\\
&=
\sum_\mu \Paren{P_\mu-Q_\mu} \log q_\mu.
\end{align*}
Let $q_{\text{min}}$ be the smallest nonzero probability of $q$. By the triangle inequality and Pinsker's inequality,
\begin{align*}
\Abs{\sum_\mu \Paren{P_\mu-Q_\mu} \log q_\mu}
&
\le \sum_\mu \Abs{\Paren{P_\mu-Q_\mu} \log q_\mu}\\
&
\le  |\log q_{\text{min}}|\sum_\mu \Abs{P_\mu-Q_\mu}\\
&
\le  |\log q_{\text{min}}|\sqrt{2\ell_{\text{\tiny KL}}(P,Q)}. 
\end{align*}
By definition, $\hat p_{\!_{X^n}}^{_{\mathcal{N}}}\!=\!\argmin_{\hat p\in \mathcal{N}}\ell_{\text{\tiny KL}}(p,\hat p_{\!_{X^n}})$. \vspace{-0.25em}
Now we show that if a  
symbol $x$ has multiplicity~$\mu$,
the estimator $\hat p^{_{\mathcal{N}}}$ will assign
a probability mass of $P_\mu/\varphi_\mu$. 
In other words, $\hat P^{_{\mathcal{N}}}_\mu = P_\mu$ 
since $p^{_{\mathcal{N}}}\in {\mathcal{N}}$.
Indeed, the corresponding KL-divergence values differ by 
\begin{align*}
\sum_x p_x\log\frac{p_x}{q_x}
-
\sum_x \sum_\mu \indic_{\mu_x=\mu}\cdot p_x\log\frac{p_x}{P_\mu/\varphi_\mu}
&=
\sum_x p_x\log\frac{1}{q_x}
-
\sum_x \sum_\mu \indic_{\mu_x=\mu}\cdot p_x\log\frac{\varphi_\mu}{P_\mu}\\
&=
\sum_x \sum_\mu \indic_{\mu_x=\mu}\cdot p_x\log\frac{P_\mu}{\varphi_\mu q_\mu}\\
&=
\sum_\mu P_\mu\log\frac{P_\mu}{Q_\mu}=
\ell_{\text{\tiny KL}}(P,Q)\ge 0.
\end{align*}
Then, the above equalities yield that,
\begin{align*}
H(\hat p^{_{\mathcal{N}}})-H(p)
=
\ell_{\text{\tiny KL}}(p,\hat p^{_{\mathcal{N}}}) + \sum_\mu \Paren{P_\mu-\hat P^{_{\mathcal{N}}}_\mu} \log p^{_{\mathcal{N}}}_\mu
=
\ell_{\text{\tiny KL}}(p,\hat p^{_{\mathcal{N}}})
=
\min_{\hat p\in \mathcal{N}}\ell_{\text{\tiny KL}}(p,\hat p_{\!_{X^n}}).
\end{align*}
Next consider the other estimator $\hat p^\star$, which is also natural. 
Let $\mathcal D_n=\mathcal D(\Phi^n)$ be the profile dimension of $X^n$.
By the results in \cite{hao2019doubly},  estimator $\hat p^\star$ 
achieves a $\mathcal D_n/n$ excess loss, i.e., \vspace{-0.25em}
\[
\ell_{\text{\tiny KL}}(p,\hat p^\star_{\!_{X^n}})\!
-\!\min_{\hat p\in \mathcal{N}}\ell_{\text{\tiny KL}}(p,\hat p_{\!_{X^n}})
=
\ell_{\text{\tiny KL}}(P(X^n),\hat P^{\star}(X^n))
\le\tilde{\mathcal{O}}\!\Paren{\frac{\mathcal D_n}{n}}, 
\]
 for every $p$ and $X^n\sim p$, with probability at least $1-\mathcal{O}(1/n)$.
 In addition, by its construction, the minimum probability 
 $\hat p_{\text{min}}^\star(X^n)$ is at least $1/n^4$. 
Therefore, with probability at least $1-\mathcal{O}(1/n)$,
\[
\Abs{\sum_x (p_x-\hat p^\star_x) \log \hat p^\star_x}
=
\Abs{\sum_\mu \Paren{P_\mu-\hat P_\mu^\star} \log \hat p_\mu^\star}
\le  
|\log \hat p_{\text{min}}^\star|
\cdot \sqrt{2\ell_{\text{\tiny KL}}(P,\hat P_\mu^\star)}
\le
\tilde{\mathcal{O}}\!\Paren{\!\sqrt{\frac{\mathcal D_n}{n}}}\!.
\]
Finally, the triangle inequality combines the above results and yields
\begin{align*}
\ell_H(p,\hat p^\star)
-
\ell_{H}(p,\hat p^{_{\mathcal{N}}})
&
=|H(p)-H(\hat p^\star)|
-
|H(p)-H(\hat p^{_{\mathcal{N}}})|\\
&
=
\Abs{\ell_{\text{\tiny KL}}(p,\hat p^\star_x)
+\sum_x (p_x-\hat p^\star_x) \log \hat p^\star_x}
-
\Abs{\min_{\hat p\in \mathcal{N}}\ell_{\text{\tiny KL}}(p,\hat p_{\!_{X^n}})}\\
&\le 
\Abs{\ell_{\text{\tiny KL}}(p,\hat p^\star_x)
-
\min_{\hat p\in \mathcal{N}}\ell_{\text{\tiny KL}}(p,\hat p_{\!_{X^n}})}
+
\Abs{\sum_x (p_x-\hat p^\star_x) \log \hat p^\star_x}\\
&=
\ell_{\text{\tiny KL}}(P,\hat P_\mu^\star)
+
\tilde{\mathcal{O}}\!\Paren{\!\sqrt{\frac{\mathcal D_n}{n}}}\\
&\le
\tilde{\mathcal{O}}\!\Paren{\!\sqrt{\frac{\mathcal D_n}{n}}}.
\end{align*}
This together with Theorem~\ref{thm:entro_eql_dim} completes the proof. 
\end{proof}

\section{Optimal Profile Compression}
While a labeled sample contains all information, 
for many modern applications, 
such as property estimation and differential privacy, 
it is sufficient~\cite{orlitsky2004modeling} or even necessary to provide only the 
profile~\cite{suresh2019differentially}. 
Hence, this section focuses on the lossless compression of profiles.

For any distribution $p$, it is well-known that 
the minimal expected codeword length (MECL) for 
losslessly compressing a sample 
$X^n\sim p$ is approximately $nH(p)$, 
which increases linearly in $n$
as long as $H(p)$ is bounded away from zero.

On the other hand, by the 
Hardy-Ramanujan formula~\cite{hardy1918asymptotic},
the number $\mathbb{P}(n)$ of integer partitions of $n$, 
which happens to equal to the number
of length-$n$ profiles, satisfies 
\[
\log \mathbb{P}(n)= 2\pi\sqrt{\frac{n}{6}}(1+o(1)). 
\]
Consequently, the MECL for losslessly compressing the sample 
profile $\Phi^n\sim p$ is at most $\mathcal{O}(\sqrt{n})$, 
a number potentially much smaller than $nH(p)$. 

By Shannon's source coding theorem, 
the profile entropy $H(\Phi^n)$ is the \emph{information-theoretic
limit} of MECL for the lossless compression of profile $\Phi^n\sim p$.
Below, we present explicit block and sequential 
profile compression schemes achieving this 
entropy limit, up to logarithmic factors of $n$. 
\vspace{-0.25em}

\subsection{Block Compression}
The block compression algorithm we propose
is intuitive and easy to implement.

Recall that the profile of a sequence $x^n$ is the multiset $\varphi(x^n)$
of multiplicities associated with symbols in $x^n$. 
The ordering of elements in a multiset is not informative. 
Hence equivalently, we can compress $\varphi(x^n)$
into the set $\mathcal C(\varphi(x^n))$ of corresponding multiplicity-prevalence pairs,
i.e.,
\[
\mathcal C(\varphi(x^n)):= \{(\mu, \varphi_\mu(x^n)):\mu\in \varphi(x^n)\}.
\]
The number of pairs in $\mathcal C(\varphi(x^n))$ is equal to the profile dimension 
$\mathcal{D}(\varphi(x^n))$. In addition, both a prevalence and its multiplicity
are integers in $[0,n]$, and storing the pair takes $2\log n$ nats. 
Hence, it takes at most $2(\log n)\cdot \mathcal{D}(\varphi(x^n))$ nats 
to store the compressed profile. 
By Theorem~\ref{thm:entro_eql_dim}, for any distribution $p\in \Delta_\cX$
and $\Phi^n\sim p$, \vspace{-0.25em}
\[
\E[2(\log n)\cdot \mathcal{D}(\Phi^n)] 
= 
\tilde{\Theta}(\C{H(\Phi^n)}). 
\]

\subsection{Sequential Compression} 
For any sequence $x^n$, 
the setting for sequential profile compression is that at time step $t\in [n]$, 
the compression algorithm knows only $\varphi(x^t)$ 
and sequentially encodes the new information. 
This is equivalent to providing the algorithm $\mu_{x_t}(x^{t-1})$ at time step $t$.

Suppress $x, x^{t}$ in the expressions for the ease of illustration. 
For efficient compression, 
we sequentially encode the profile $\va$ into a 
\emph{self-balancing binary search tree} $\mathcal T$,
with each node storing a 
multiplicity-prevalence pair $(\mu, \va_\mu)$ and $\mu$ being the search key. 
We present the algorithm details as follows. 
\begin{algorithm}
\caption{Sequential Profile Compression}
\begin{algorithmic} 
\INPUT sequence $(\mu_{x_t}(x^{t-1}))_{t=1}^n$, tree $\mathcal T=\varnothing$
\OUTPUT tree $\mathcal T$ that encodes the input sequence
\FOR {t = 1 to n}
\IF{$\mu:= \mu_{x_t}(x^{t-1}) \in \mathcal T$}
\IF{$\mu+1\in \mathcal T$}
\STATE $\va_{\mu+1}:=\mathcal T(\mu+1)\gets \mathcal T(\mu+1)+1$
\ELSE
\STATE add $(\mu+1, 1)$ to $\mathcal T$
\ENDIF
\STATE \textbf{if} $\va_\mu=1$ \textbf{then} delete $(\mu, \varphi_\mu)$ from $\mathcal T$ 
\STATE \textbf{else} $\va_{\mu}:=\mathcal T(\mu)\gets \mathcal T(\mu)-1$ \textbf{endif}
\ELSE
\STATE \textbf{if} $1\not\in \mathcal T$ \textbf{then} add $(1, 1)$ to $\mathcal T$ 
\STATE \textbf{else} $\mathcal T(1)\gets \mathcal T(1)+1$ \textbf{endif}
\ENDIF
\ENDFOR
\end{algorithmic}
\end{algorithm}
\vspace{-1em}

The algorithm runs for exactly $n$ iterations, 
with a $\mathcal{O}(\log n)$ per-iteration time complexity.
For an \iid sample $X^n\sim p$, the expected space complexity 
is again $\tilde{\Theta}(\C{H(\Phi^n)})$.

\section{Extensions and Additional Results}

\subsection{Multi-Dimensional Profiles}
As we elaborate below, the notion of profile generalizes 
to the multi-sequence setting. 

Let $\cX$ be a finite or countably infinite alphabet. 
For every $\vec n:=(n_1,\ldots, n_d)\in \mathbb N^d$ 
and tuple $ x^{\vec n}:=(x^{n_1}_1,\ldots, x^{n_d}_d)$ of sequences in $\cX^*$, 
the \emph{multiplicity} 
$\mu_y( x^{\vec n})$ of a symbol $y\in\cX$ 
is the vector of its frequencies in the tuple of sequences. 
The \emph{profile} of $x^{\vec n}$ is the multiset  
$\va(x^{\vec n})$ of multiplicities of the observed 
symbols~\cite{acharya2010classification, das2012competitive, charikar2019efficient}, 
and its \emph{dimension} is the number $\mathcal D(x^{\vec n})$ 
of distinct elements in the multiset. 
Drawing independent samples from $\vec p:=(p_1,\ldots, p_d)\in \Delta_\cX^d$, 
the \emph{profile entropy} is simply the entropy of the joint-sample profile.

Many of the previous results potentially generalize to this multi-dimensional setting. 
For example, the $\sqrt {2n}$ bound on 
$\mathcal D(x^{\vec n})$ in the 1-dimensional case becomes 
\setcounter{Theorem}{19}
\begin{Theorem}
For any $\cX$, $\vec n$, and $x^{\vec n}\in \cX^{\vec n}$, 
there exists a positive integer $r$ such that
\begin{align*}
\sum_i n_i
\ge
d\cdot \binom{d+r-1}{d+1},
\end{align*}
and
\begin{align*}
\mathcal D
\le 
\binom{d+r}{d}-1.
\end{align*} 
\end{Theorem}
This essentially recovers the $\sqrt{2n}$ bound for $d=1$. 

\begin{proof}
For simplicity, we suppress $x^{\vec n}$ in $\mathcal{D}(x^{\vec n})$. 
Let $\Delta_d$ denote the standard $d$-dimensional simplex.  
As each multiplicity corresponds to a vector in $\mathbb N^d$,
in the ideal case, 
the profile that has the maximum dimension $\mathcal D$
corresponds to the integer vectors in the scaled simplex $(r\cdot\Delta_d)$,
for some properly chosen parameter $r$. 
For the minimum value of such a parameter $r\in \mathbb Z^+$, we have 
\begin{align*}
\sum_i n_i
&\ge \sum_{t=0}^{r-1} \binom{t+d-1}{d-1}\cdot t \\
&=d\cdot \sum_{t=1}^{r-1} \binom{t+d-1}{d} \\
&=d\cdot \sum_{(t-1)=0}^{r-2} \binom{(t-1)+d}{(t-1)}\\
&=d\cdot \binom{d+r-1}{d+1},
\end{align*}
and
\begin{align*}
\mathcal D
\le 
\sum_{t=1}^{r} \binom{t+d-1}{t}
=
\binom{d+r}{d}-1
.
\end{align*}

Consolidating these two inequalities yields the desired result. 
\end{proof}

\subsection{Discrete Multi-Variate Gaussian}
Given a mean vector $\mu\in\mathbb{Z}^d$ and covariance matrix $\Sigma\in\mathbb{R}^{d\times d}$ with \emph{eigenvalues at least $1$},  the 
corresponding \emph{discrete $d$-dimensional Gaussian} is specified by its probability mass function
\[
p(x):=\frac{1}{C} \exp\Paren{-\frac{1}{2}\Paren{x-\mu}^T\Sigma^{-1}\Paren{x-\mu}}, \forall x\in \mathbb{Z}^d.
\]
where $C>0$ is a normalizing constant. 
Note that definition is slightly different from that induced by the 
 discretization procedure presented in Section~\ref{sec:discrete}.
The reason for adopting this definition 
(which is also standard in literature) 
is to simplify the subsequent reasoning. 
Let $\sigma_1^2\le \sigma_2^2\ldots\le \sigma_d^2$ be the $d$ eigenvalues of $\Sigma$, where $\sigma_1^2\ge 1$ by assumption. 
In this section, we show that for $d\ge 9$,
\[
H^\mathcal{S}_n(p)
\le 
\mathcal{O}(\log n)\Paren{1+\min\Brace{\frac{n}{C}, \gamma_d (\alpha_{\Sigma}\cdot \beta_{d,n})^d \cdot C}}
,
\]
where $\alpha_\Sigma:=\exp\Paren{6\sigma_d^2/\sigma_1^2}$ and $\beta_{d,n}:=\sqrt{(2\log n)/d}$, and $\gamma_d$ is a constant that depends only on $d$, which appears in Lemma~\ref{lattice_bound}.
Note that the above bound resembles that for univariate log-concave distributions (Theorem~\ref{thm:logconc_bound}). This result is not included in the main paper due to the different setting. 

\paragraph{Lower bound on $\boldsymbol C$}
First we bound the value of $C$ from below in terms of its eigenvalues and other parameters.
By symmetry, we can decompose the matrix $\Sigma$ as
\[
\Sigma = V\Lambda V^T,
\]
where $\Lambda$ is a diagonal matrix with $\Lambda_{ii}=\sigma_i^2$, and
$V$ is an orthonormal matrix whose $i$-th column is the eigenvector $v_i$ associated with $\sigma_i^2$. 

Partition the real space $\mathbb{R}^d$ into unit cubes whose vertices belong to $\mathbb Z^d$. 
For any two vectors $\tilde a,\tilde b\in \mathbb R^d$ that belong to the same unit cube, 
we want to bound the ratio between $p(\tilde a)$ and $p(\tilde b)$. 
Denote $a:=\tilde a-\mu$ and $b:=\tilde b-\mu$, and express $a$ and $b$ as linear combinations of eigenvectors,
\[
a:=\sum_{i=1}^d x_i\cdot v_i \text{ and } b:=\sum_{i=1}^d y_i\cdot v_i.
\]
The log-ratio between the corresponding probabilities satisfies
\begin{align*}
-2\log \frac{p(\tilde a)}{p(\tilde b)}
&=
a^T\Sigma^{-1}a
-b^T\Sigma^{-1}b\\
&=
(a+b)^T\Sigma^{-1}(a-b)\\
&=
\Paren{\sum_i (x_i+y_i)\cdot v_i^T}V \Lambda^{-1} V^T \Paren{\sum_i (x_i-y_i)\cdot v_i}\\
&=
\Paren{\sum_i (x_i+y_i)\cdot e_i^T}\Lambda^{-1} \Paren{\sum_i (x_i-y_i)\cdot e_i}\\
&=
\sum_i \sigma_i^{-2} (x_i^2-y_i^2).
\end{align*}\par\vspace{-1em}
Note that 
$\sum_i (x_i-y_i)^2 
= \norm{a-b}_2^2 
=\sum_i (\tilde a_i-\tilde b_i)^2\le d$
since  $\tilde a-\tilde b = a-b$ and $\tilde a, \tilde b$ belong to the same unit cube. 
Hence, we bound the absolute value of the ratio by 
\begin{align*}
2\Abs{\log \frac{p(\tilde a)}{p(\tilde b)}}
&=
\Abs{\sum_i \sigma_i^{-2} (x_i^2-y_i^2)}\\
&\le
\sum_i \sigma_i^{-2} \Abs{x_i^2-(x_i-(x_i-y_i))^2}\\
&\le
2\sum_i  \sigma_i^{-2}\Paren{x_i^2+(x_i-y_i)^2}\\
&\le
2\sigma_1^{-2}\Paren{\sum_i {x_i^2}+d}\\
&=
2\sigma_1^{-2}\Paren{\norm{\tilde a-\mu}_2^2+d}.
\end{align*}

Now, consider the hyper-ellipse $E$ induced by 
\[
\Paren{x-\mu}^T\Sigma^{-1}\Paren{x-\mu}
\le 
d.
\]
For any $x\in E$, simple algebra shows that $\norm{x-\mu}_2^2\le d\sigma_d^2$. 
Hence by the previous discussion, 
for any unit cube $U$ with vertices in $\mathbb Z^d$,
there exists a vertex $v_U$ of $U$ such that 
for any $x\in U\cap E$,
\[
\Abs{\log\frac{p(x)}{p(v_U)}}
\le \sigma_1^{-2} \Paren{\norm{x-\mu}_2^2+d}
\le \sigma_1^{-2} \Paren{d\sigma_d^2+d}
\le 2d\Paren{\frac{\sigma_d}{\sigma_1}}^2.
\]
Note that $x\in E$ is equivalent to $p(x)\ge \exp(-d/2)/C$.
The probability mass over $E$ is at least
\[
\int_{x\in E} p(x) dx\ge \int_{x\in E} \frac{\exp(-d/2)}{C}
=\frac{\exp(-d/2)}{C}\cdot \text{Vol}(E)
=\frac{\exp(-d/2)}{C}\cdot \frac{(\pi d)^{d/2}}{\Gamma(d/2+1)}\prod_{i=1}^d \sigma_i.
\]
On the other hand, this probability mass is at most
\begin{align*}
\int_{x\in E} p(x) dx
&= \sum_{U} \int_{x} p(x) \cdot \indic_{x\in E\cap U} dx
&\le \sum_{U} p(v_U)\cdot \exp\Paren{2d\Paren{\frac{\sigma_d}{\sigma_1}}^2}
&\le \exp\Paren{3d\Paren{\frac{\sigma_d}{\sigma_1}}^2}.
\end{align*}
Consolidating the lower and upper bounds and multiplying both sides by $C$ yield
\begin{align*}
&
\quad \quad C
\ge 
\exp\Paren{-3d\Paren{\frac{\sigma_d}{\sigma_1}}^2}\exp\Paren{-\frac d2}\cdot \frac{(\pi d)^{d/2}}{\Gamma(d/2+1)}\prod_{i=1}^d \sigma_i\\
&\Longrightarrow
C
\ge 
\exp\Paren{-3d\Paren{\frac{\sigma_d}{\sigma_1}}^2}\cdot \frac{(\pi d/e)^{d/2}}{\sqrt{e\pi (d/2)}(d/(2e))^{d/2}}\prod_{i=1}^d \sigma_i\\
&\Longrightarrow
C
\ge 
\exp\Paren{-3d\Paren{\frac{\sigma_d}{\sigma_1}}^2}\cdot \frac{(2\pi)^{d/2}}{\sqrt{e\pi (d/2)}}\prod_{i=1}^d \sigma_i\\
&\Longrightarrow
C
\ge 
\exp\Paren{-3d\Paren{\frac{\sigma_d}{\sigma_1}}^2}\prod_{i=1}^d \sigma_i.
\end{align*}
where the first implication follows by the lemma below.
\begin{Lemma}\label{bound_gamma}
For any integer or semi-integer $x\ge 1/2$,
\[
 \sqrt{2\pi x}\Paren{\frac{x}{e}}^x\le \Gamma(x+1)\le \sqrt{e\pi x}\Paren{\frac{x}{e}}^x.
\]
\end{Lemma}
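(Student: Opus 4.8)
The plan is to reduce the two-sided bound to two classical facts about the Binet correction function. First I would write
\[
\Gamma(x+1)=\sqrt{2\pi x}\,(x/e)^x\,e^{\mu(x)},\qquad
\mu(x):=\int_0^\infty\Paren{\frac12-\frac1t+\frac1{e^t-1}}\frac{e^{-xt}}{t}\,dt,
\]
which is Binet's second integral for $\log\Gamma$, obtained from the usual one for $\log\Gamma(x)$ via $\log\Gamma(x+1)=\log x+\log\Gamma(x)$. Everything then follows from two properties of $\mu$: (i) $\mu(x)>0$ for all $x>0$; and (ii) $\mu$ is strictly decreasing on $(0,\infty)$.

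For (i) and (ii): the factor $\frac12-\frac1t+\frac1{e^t-1}=\frac12\coth(t/2)-\frac1t$ is strictly positive for $t>0$, since $\coth u\ge 1/u$ (equivalently $u\cosh u\ge\sinh u$, which holds term by term in the power series). Hence the integrand is positive, so $\mu(x)>0$, which is exactly the lower bound $\sqrt{2\pi x}(x/e)^x\le\Gamma(x+1)$. Moreover, for each fixed $t>0$ the factor $e^{-xt}$ is strictly decreasing in $x$, so $\mu$ is strictly decreasing; therefore $\mu(x)\le\mu(1/2)$ for every $x\ge 1/2$. (As a byproduct the bounds hold for \emph{all} real $x\ge 1/2$, not only for integer and semi-integer $x$.)

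It remains to evaluate $\mu(1/2)$ and check that it produces the constant $\sqrt e$. Using $\Gamma(3/2)=\sqrt\pi/2$ in the representation above at $x=1/2$ gives
\[
\frac{\sqrt\pi}{\sqrt{2e}}\,e^{\mu(1/2)}=\Gamma(3/2)=\frac{\sqrt\pi}{2}
\quad\Longrightarrow\quad
e^{\mu(1/2)}=\sqrt{e/2},\qquad \mu(1/2)=\tfrac12(1-\ln 2).
\]
Substituting $e^{\mu(x)}\le e^{\mu(1/2)}=\sqrt{e/2}$ back into the representation yields $\Gamma(x+1)\le\sqrt{2\pi x}\,(x/e)^x\sqrt{e/2}=\sqrt{e\pi x}\,(x/e)^x$, the desired upper bound; it is an equality at $x=1/2$, so the constant $\sqrt e$ is forced and cannot be lowered.

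\textbf{Main obstacle.} There is no serious obstacle here: the only imported ingredients are Binet's formula and the elementary inequality $\coth u\ge 1/u$. If one prefers to avoid Binet's integral and stay within the Stirling/Robbins toolkit already used in the paper, the integer case $x=n\ge1$ is immediate from $\sqrt{2\pi n}(n/e)^n\le n!\le\sqrt{2\pi n}(n/e)^n e^{1/(12n)}$ together with the numeric check $e^{1/12}\le\sqrt{e/2}$, i.e.\ $\tfrac16+\ln2\le1$; the semi-integer case uses $\Gamma(m+\tfrac32)=\frac{(2m+1)!\,\sqrt\pi}{2^{2m+1}\,m!}$ with Robbins applied to numerator and denominator, after which one sandwiches $e^{-1/2}\,(1+\tfrac1{2m})^{m+1/2}$ between $1$ and $\sqrt{e/2}$ via $u-u^2/2\le\log(1+u)\le u$ and controls the two $O(1/m)$ Robbins corrections, the endpoint $m=0$ (i.e.\ $x=1/2$) being checked by hand. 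In this elementary route the bookkeeping of the $O(1/m)$ terms is the only mildly tedious step.
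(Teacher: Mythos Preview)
Your proof is correct and, in fact, the paper does not supply a proof of this lemma at all---it is stated without argument and used as a standard Stirling-type bound. So there is nothing to compare against on the paper's side.

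A few remarks. Your Binet-function approach is clean and actually establishes the inequality for \emph{all} real $x\ge 1/2$, which is strictly stronger than what the paper states (integers and half-integers only); you might as well advertise this. The computation $e^{\mu(1/2)}=\sqrt{e/2}$ from $\Gamma(3/2)=\sqrt\pi/2$ is right, and the observation that equality holds at $x=1/2$---so the constant $\sqrt e$ in the upper bound is sharp on this range---is a nice touch the paper does not mention. Your alternative Robbins-based route for the integer/half-integer case is also sound (the numeric check $\tfrac16+\ln 2<1$ goes through), though given that the Binet argument already covers everything uniformly, I would simply present that and drop the second route.
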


\paragraph{Upper bound}
We proceed to bound ${H^\mathcal{S}_n(p)}
=\sum_{j\ge 1}\min\Brace{p_{I_j}, j\cdot \log n}$.

Below we assume that $C<n/\log n$, 
since otherwise $p(x)\le (\log n)/n,\forall x$, yielding an $\mathcal{O}(\log n)$
upper bound on $H^\mathcal{S}_n(p)$.
Then by definition, the last index $j$ such that $p_{I_j}>0$ satisfies
\[
(j-1)^2\frac{\log n}{n}\le \frac{1}{C}
\ \ \Longrightarrow\ \
j\le 1+\sqrt{\frac{1}{C}\frac{n}{\log n}}
\le 2\sqrt{\frac{1}{C}\frac{n}{\log n}}
\]
Denote by $J$ the quantity on the right-hand side. 
Then,
\begin{align*}
\sum_{j\ge 1}\min\Brace{p_{I_j}, j\cdot\log n}
\le
\sum_{j=1}^{J} j \log n
\le
J^2\log n
\le
\frac{4n}{C}. 
\end{align*}
Furthermore, by a reasoning
similar to that above, the collection of points $x\in \mathbb Z^d$ satisfying 
$p(x)\le 1/(Cn)=p(\mu)/n\le 1/n$ contributes at most 
$\mathcal{O}(\log n)$ to $H^\mathcal{S}_n(p)$. Hence we 
need to analyze only points $x$ satisfying $p(x)>1/(Cn)$. Equivalently,
points in
\[
E^\star:=\Brace{x\in\mathbb{Z}^d: \Paren{x-\mu}^T\Sigma^{-1}\Paren{x-\mu}\le 2\log n}.
\]
Clearly, these points contribute at most $\Abs{E^\star}$ to the sum. Noting that $E^\star$ is a discrete hyper-ellipse, 
we can bound its cardinality via the following lemma~\cite{bentkus1997lattice}. 
\begin{Lemma}\label{lattice_bound} 
Let $\mu\in \mathbb R^d$ be a mean vector, and $\Sigma\in \mathbb R^{d\times d}$ be a real covariance matrix with nonzero eigenvalues
$\sigma^2_1\le \ldots \sigma^2_d$.  
For any $d\ge 9$ and $t\ge \sigma_d^2$, the discrete ellipsoid
\[
E(t):=\Brace{x\in\mathbb{Z}^d: \Paren{x-\mu}^T\Sigma^{-1}\Paren{x-\mu}\le t}
\]
admits the following inequality on its cardinality,
\[
\Abs{E(t)}
\le 
\Paren{1+\frac{\gamma_d}{t} \frac{1}{\sigma_d^2} \Paren{\frac{\sigma_d}{\sigma_1}}^{2d+4}}
\frac{(\pi t)^{d/2}}{\Gamma(d/2+1)}\prod_{i=1}^d \sigma_i,
\]
where $\gamma_d>1$
 is a constant that depends only on $d$. 
\end{Lemma}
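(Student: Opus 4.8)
The plan is to reduce the count $\Abs{E(t)}$ to a theta‑function / Poisson‑summation estimate in the spirit of \cite{bentkus1997lattice}, and then track the dependence on the spectrum of $\Sigma$ carefully enough to extract the stated prefactor. First I would eliminate the quadratic form by the linear substitution $u=\Sigma^{-1/2}(x-\mu)$: then $\Abs{E(t)}$ is exactly the number of points of the \emph{shifted} lattice $\Lambda:=\Sigma^{-1/2}\mathbb Z^d-\Sigma^{-1/2}\mu$ lying in the Euclidean ball $\bar B(0,\sqrt t)$. The underlying homogeneous lattice $\Lambda_0:=\Sigma^{-1/2}\mathbb Z^d$ has covolume $\det(\Sigma^{-1/2})=\Paren{\prod_i\sigma_i}^{-1}$, and since $\{\Sigma^{-1/2}e_i\}_{i=1}^d$ is a basis of $\Lambda_0$ with all norms in $[\sigma_d^{-1},\sigma_1^{-1}]$, while every nonzero vector of $\Lambda_0$ has norm $\ge\sigma_d^{-1}$, all successive minima of $\Lambda_0$ lie in $[\sigma_d^{-1},\sigma_1^{-1}]$; dually, the dual lattice $\Lambda_0^{\ast}=\Sigma^{1/2}\mathbb Z^d$ has all minima (in particular its shortest vector) in $[\sigma_1,\sigma_d]$. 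This is the only place where the eigenvalues enter, and it is ultimately what produces the factor $(\sigma_d/\sigma_1)^{2d+4}$.

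Next I would smooth the indicator. Fix a nonnegative Schwartz function $\psi$ with $\widehat\psi$ compactly supported, and rescale/translate it to obtain a majorant $\psi_{t,\varepsilon}\ge\mathbf 1_{\bar B(0,\sqrt t)}$ with $\int\psi_{t,\varepsilon}=\mathrm{Vol}\Paren{\bar B(0,\sqrt t)}\Paren{1+O(\varepsilon/\sqrt t)}$ and $\widehat{\psi_{t,\varepsilon}}$ supported in a ball of radius $O(1/\varepsilon)$. Applying Poisson summation over $\Lambda_0$ (the translation by $-\Sigma^{-1/2}\mu$ inserts only unimodular phases, which vanish under the triangle inequality and hence can only help),
\[
\Abs{E(t)}\ \le\ \sum_{v\in\Lambda}\psi_{t,\varepsilon}(v)
\ =\ \Paren{\textstyle\prod_i\sigma_i}\sum_{w\in\Lambda_0^{\ast}}\widehat{\psi_{t,\varepsilon}}(w)\,e^{2\pi i\langle w,\,\Sigma^{-1/2}\mu\rangle}.
\]
The $w=0$ term equals $\Paren{\prod_i\sigma_i}\int\psi_{t,\varepsilon}=\mathrm{Vol}(E(t))\Paren{1+O(\varepsilon/\sqrt t)}$, so choosing $\varepsilon$ small absorbs the main‑term correction into the claimed error. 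The terms with $w\ne0$ are nonzero only for $\sigma_1\le\|w\|\le O(1/\varepsilon)$; bounding them uses the rapid decay/boundedness of $\widehat{\psi_{t,\varepsilon}}$ together with a count of $\{0\ne w\in\Lambda_0^{\ast}:\|w\|\le O(1/\varepsilon)\}$ expressed through the minima of $\Lambda_0^{\ast}$. Optimizing the smoothing scale $\varepsilon$ balances the $O(\varepsilon/\sqrt t)$ main‑term error against this dual‑side tail and yields a common bound of the form $\gamma_d\,t^{-1}\sigma_d^{-2}(\sigma_d/\sigma_1)^{2d+4}\cdot\mathrm{Vol}(E(t))$. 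The hypothesis $d\ge 9$ is the genuine number‑theoretic threshold from which the error can be taken of order $t^{d/2-1}$ (in dimensions $d\le 8$ the error is provably larger, so no such clean estimate holds), and $t\ge\sigma_d^2$ guarantees that $\bar B(0,\sqrt t)$ is wide compared to the coarsest lattice direction $\sigma_d^{-1}$, which is exactly the regime in which ``count $=$ volume $+$ error'' is meaningful.

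The main obstacle — and the technical core of \cite{bentkus1997lattice} — is the dual‑sum estimate with \emph{explicit} eigenvalue dependence: one must show $\sum_{0\ne w\in\Lambda_0^{\ast}}\Abs{\widehat{\psi_{t,\varepsilon}}(w)}\le \gamma_d\,t^{-1}\sigma_d^{-2}(\sigma_d/\sigma_1)^{2d+4}\,\mathrm{Vol}(E(t))$ uniformly over all admissible $\Sigma$ and all $t\ge\sigma_d^2$. This requires (i) a smoothing kernel whose Fourier transform is supported and controlled sharply enough that the tail truncates cleanly; (ii) counting $\{w\in\Lambda_0^{\ast}:\sigma_1\le\|w\|\le R\}$ in terms of the spectrum, which re‑introduces anisotropy and is where the exponent $2d+4$ originates — a crude lattice‑point count in the dual ball already costs a factor $(\sigma_d/\sigma_1)^{d}$, and each of the two ``fattening'' passes (smoothing the ball on the physical side, truncating the frequency side) compounds another power of the aspect ratio, together with the $d$‑dimensional volume normalization; and (iii) verifying that the arbitrary shift $\mu$ never worsens the estimate. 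Everything else — the change of variables, the successive‑minima computation, and the $\varepsilon$‑optimization — is routine, so I would present (i)–(iii) as the substantive steps and, since these are precisely the estimates carried out in \cite{bentkus1997lattice}, cite that work for the value of the constant $\gamma_d$.
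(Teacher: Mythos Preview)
The paper does not prove this lemma at all: it is stated with the citation \cite{bentkus1997lattice} and used as a black box. Your sketch of the Poisson-summation/smoothing argument over the dual lattice is precisely the method of that reference, and you yourself conclude by deferring to \cite{bentkus1997lattice} for the substantive estimates (i)--(iii) and the constant $\gamma_d$; so in the end your proposal and the paper agree---cite the result rather than reprove it.
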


For simplicity, write $\alpha_\Sigma:=\exp\Paren{6\sigma_d^2/\sigma_1^2}$ and $\beta_{d,n}:=\sqrt{(2\log n)/d}$. Applying the above lemma to bound $|E^\star|$ (where $t=2\log n$) and combining the result with our lower bound on $C$ yield
\begin{align*}
\Abs{E(2\log n)}
&\le
\Paren{1+\frac{\gamma_d}{2\log n} \frac{1}{\sigma_d^2} \Paren{\frac{\sigma_d}{\sigma_1}}^{2d+4}}
\frac{(2\pi \log n)^{d/2}}{\Gamma(d/2+1)}
\exp\Paren{3d\Paren{\frac{\sigma_d}{\sigma_1}}^2}C\\
&\le
\Paren{1+\frac{\gamma_d}{2\log n} \frac{1}{\sigma_d^2} \Paren{\frac{\sigma_d}{\sigma_1}}^{2d+4}}
\frac{1}{\sqrt{\pi d}}\Paren{4e\pi \frac{\log n}{d}}^{d/2}\!
e^{3d\Paren{{\sigma_d}/{\sigma_1}}^2}C\\
&\le
\Paren{1+\frac{\gamma_d}{2\log n} \Paren{\frac{\sigma_d}{\sigma_1}}^{3d}}
\Paren{\frac{2\log n}{d}}^{d/2}\! e^{5d\Paren{{\sigma_d}/{\sigma_1}}^2}\!C\\
&\le
\gamma_d \Paren{\frac{\sigma_d}{\sigma_1}}^{3d}
\Paren{\frac{2\log n}{d}}^{d/2}\! e^{5d\Paren{{\sigma_d}/{\sigma_1}}^2}\!C\\
&\le
\gamma_d \Paren{\frac{2\log n}{d}}^{d/2}\! e^{6d\Paren{{\sigma_d}/{\sigma_1}}^2}\!C\\
&=
\gamma_d\ (\alpha_{\Sigma}\cdot \beta_{d,n})^{d} C,
\end{align*}
where the second step follows by Lemma~\ref{bound_gamma}.

To summarize, we have established the desired bound 
\[
H^\mathcal{S}_n(p)
\le 
\mathcal{O}(\log n)\Paren{1+\min\Brace{\frac{n}{C}, \gamma_d (\alpha_{\Sigma}\cdot \beta_{d,n})^d \cdot C}}.
\]

\bibliographystyle{plainnat}
\bibliography{supp}

\begin{thebibliography}{59}
\providecommand{\natexlab}[1]{#1}
\providecommand{\url}[1]{\texttt{#1}}
\expandafter\ifx\csname urlstyle\endcsname\relax
  \providecommand{\doi}[1]{doi: #1}\else
  \providecommand{\doi}{doi: \begingroup \urlstyle{rm}\Url}\fi

\bibitem[Acharya et~al.(2010)Acharya, Das, Orlitsky, Pan, and
  Santhanam]{acharya2010classification}
Jayadev Acharya, Hirakendu Das, Alon Orlitsky, Shengjun Pan, and Narayana~P
  Santhanam.
\newblock Classification using pattern probability estimators.
\newblock In \emph{2010 IEEE International Symposium on Information Theory},
  pages 1493--1497. IEEE, 2010.

\bibitem[Acharya et~al.(2011)Acharya, Das, Jafarpour, Orlitsky, and
  Pan]{acharya2011competitive}
Jayadev Acharya, Hirakendu Das, Ashkan Jafarpour, Alon Orlitsky, and Shengjun
  Pan.
\newblock Competitive closeness testing.
\newblock In \emph{Proceedings of the 24th Annual Conference on Learning
  Theory}, pages 47--68, 2011.

\bibitem[Acharya et~al.(2012)Acharya, Das, Jafarpour, Orlitsky, Pan, and
  Suresh]{acharya2012competitive}
Jayadev Acharya, Hirakendu Das, Ashkan Jafarpour, Alon Orlitsky, Shengjun Pan,
  and Ananda Suresh.
\newblock Competitive classification and closeness testing.
\newblock In \emph{Conference on Learning Theory}, pages 22--1, 2012.

\bibitem[Acharya et~al.(2013)Acharya, Jafarpour, Orlitsky, and
  Suresh]{acharya2013optimal}
Jayadev Acharya, Ashkan Jafarpour, Alon Orlitsky, and Ananda~Theertha Suresh.
\newblock Optimal probability estimation with applications to prediction and
  classification.
\newblock In \emph{Conference on Learning Theory}, pages 764--796, 2013.

\bibitem[Acharya et~al.(2017)Acharya, Das, Orlitsky, and
  Suresh]{acharya2017unified}
Jayadev Acharya, Hirakendu Das, Alon Orlitsky, and Ananda~Theertha Suresh.
\newblock A unified maximum likelihood approach for estimating symmetric
  properties of discrete distributions.
\newblock In \emph{International Conference on Machine Learning}, pages 11--21,
  2017.

\bibitem[An(1997)]{an1997log}
Mark~Yuying An.
\newblock Log-concave probability distributions: Theory and statistical
  testing.
\newblock \emph{Duke University Dept of Economics Working Paper}, 95\penalty0
  (3), 1997.

\bibitem[Arma{\~n}anzas et~al.(2008)Arma{\~n}anzas, Inza, Santana, Saeys,
  Flores, Lozano, Van~de Peer, Blanco, Robles, Bielza,
  et~al.]{armananzas2008review}
Rub{\'e}n Arma{\~n}anzas, I{\~n}aki Inza, Roberto Santana, Yvan Saeys,
  Jose~Luis Flores, Jose~Antonio Lozano, Yves Van~de Peer, Rosa Blanco,
  V{\'\i}ctor Robles, Concha Bielza, et~al.
\newblock A review of estimation of distribution algorithms in bioinformatics.
\newblock \emph{BioData mining}, 1\penalty0 (1):\penalty0 6, 2008.

\bibitem[Axtell(2001)]{axtell2001zipf}
Robert~L Axtell.
\newblock Zipf distribution of {US} firm sizes.
\newblock \emph{science}, 293\penalty0 (5536):\penalty0 1818--1820, 2001.

\bibitem[Baayen(2002)]{baayen2002word}
R~Harald Baayen.
\newblock \emph{Word frequency distributions}, volume~18.
\newblock Springer Science \& Business Media, 2002.

\bibitem[Bagnoli and Bergstrom(2005)]{bagnoli2005log}
Mark Bagnoli and Ted Bergstrom.
\newblock Log-concave probability and its applications.
\newblock \emph{Economic theory}, 26\penalty0 (2):\penalty0 445--469, 2005.

\bibitem[Barvinok(2016)]{barvinok2016computing}
Alexander Barvinok.
\newblock Computing the permanent of (some) complex matrices.
\newblock \emph{Foundations of Computational Mathematics}, 16\penalty0
  (2):\penalty0 329--342, 2016.

\bibitem[Barvinok(1996)]{barvinok1996two}
Alexander~I Barvinok.
\newblock Two algorithmic results for the traveling salesman problem.
\newblock \emph{Mathematics of Operations Research}, 21\penalty0 (1):\penalty0
  65--84, 1996.

\bibitem[Bentkus and G{\"o}tze(1997)]{bentkus1997lattice}
Vidmantas Bentkus and Friedrich G{\"o}tze.
\newblock On the lattice point problem for ellipsoids.
\newblock \emph{Acta Arithmetica}, 80\penalty0 (2):\penalty0 101--125, 1997.

\bibitem[B{\"u}hlmann et~al.(2016)B{\"u}hlmann, Drineas, Kane, and van~der
  Laan]{buhlmann2016learning}
Peter B{\"u}hlmann, Petros Drineas, Michael Kane, and Mark van~der Laan.
\newblock Learning structured distributions {I}lias {D}iakonikolas.
\newblock In \emph{Handbook of Big Data}, pages 283--300. Chapman and Hall/CRC,
  2016.

\bibitem[Chan et~al.(2015)Chan, Fernandes, Kashyap, and
  Krishnapur]{chan2015phase}
Chun~Lam Chan, Winston Fernandes, Navin Kashyap, and Manjunath Krishnapur.
\newblock Phase transitions for the uniform distribution in the {PML} problem
  and its {B}ethe approximation.
\newblock \emph{arXiv preprint arXiv:1506.00753}, 2015.

\bibitem[Charikar et~al.(2019{\natexlab{a}})Charikar, Shiragur, and
  Sidford]{charikar2019bethe}
Moses Charikar, Kirankumar Shiragur, and Aaron Sidford.
\newblock The {B}ethe approximation for structured matrices: an improved
  approximation for the profile maximum likelihood.
\newblock In \emph{NeurIPS 2019 Workshop on Information Theory and Machine
  Learning}, 2019{\natexlab{a}}.

\bibitem[Charikar et~al.(2019{\natexlab{b}})Charikar, Shiragur, and
  Sidford]{charikar2019efficient}
Moses Charikar, Kirankumar Shiragur, and Aaron Sidford.
\newblock Efficient profile maximum likelihood for universal symmetric property
  estimation.
\newblock In \emph{Proceedings of the 51st Annual ACM SIGACT Symposium on
  Theory of Computing}, pages 780--791, 2019{\natexlab{b}}.

\bibitem[Chen and Goodman(1999)]{chen1999empirical}
Stanley~F Chen and Joshua Goodman.
\newblock An empirical study of smoothing techniques for language modeling.
\newblock \emph{Computer Speech \& Language}, 13\penalty0 (4):\penalty0
  359--394, 1999.

\bibitem[Das(2012)]{das2012competitive}
Hirakendu Das.
\newblock \emph{Competitive tests and estimators for properties of
  distributions}.
\newblock PhD thesis, UC San Diego, 2012.

\bibitem[Diakonikolas et~al.(2016{\natexlab{a}})Diakonikolas, Gouleakis,
  Peebles, and Price]{diakonikolas2016collision}
Ilias Diakonikolas, Themis Gouleakis, John Peebles, and Eric Price.
\newblock Collision-based testers are optimal for uniformity and closeness.
\newblock \emph{arXiv preprint arXiv:1611.03579}, 2016{\natexlab{a}}.

\bibitem[Diakonikolas et~al.(2016{\natexlab{b}})Diakonikolas, Kane, and
  Stewart]{diakonikolas2016efficient}
Ilias Diakonikolas, Daniel~M Kane, and Alistair Stewart.
\newblock Efficient robust proper learning of log-concave distributions.
\newblock \emph{arXiv preprint arXiv:1606.03077}, 2016{\natexlab{b}}.

\bibitem[Dr{\u{a}}gulescu and Yakovenko(2001)]{druagulescu2001exponential}
Adrian Dr{\u{a}}gulescu and Victor~M Yakovenko.
\newblock Exponential and power-law probability distributions of wealth and
  income in the {U}nited {K}ingdom and the {U}nited {S}tates.
\newblock \emph{Physica A: Statistical Mechanics and its Applications},
  299\penalty0 (1-2):\penalty0 213--221, 2001.

\bibitem[Faloutsos et~al.(1999)Faloutsos, Faloutsos, and
  Faloutsos]{faloutsos1999power}
Michalis Faloutsos, Petros Faloutsos, and Christos Faloutsos.
\newblock On power-law relationships of the internet topology.
\newblock \emph{ACM SIGCOMM computer communication review}, 29\penalty0
  (4):\penalty0 251--262, 1999.

\bibitem[Goldreich and Ron(2011)]{goldreich2011testing}
Oded Goldreich and Dana Ron.
\newblock On testing expansion in bounded-degree graphs.
\newblock In \emph{Studies in Complexity and Cryptography. Miscellanea on the
  Interplay between Randomness and Computation}, pages 68--75. Springer, 2011.

\bibitem[Han et~al.(2018)Han, Jiao, and Weissman]{han2018local}
Yanjun Han, Jiantao Jiao, and Tsachy Weissman.
\newblock Local moment matching: A unified methodology for symmetric functional
  estimation and distribution estimation under {W}asserstein distance.
\newblock \emph{arXiv preprint arXiv:1802.08405}, 2018.

\bibitem[Hao and Orlitsky(2019{\natexlab{a}})]{hao2019broad}
Yi~Hao and Alon Orlitsky.
\newblock The broad optimality of profile maximum likelihood.
\newblock In \emph{Advances in Neural Information Processing Systems}, pages
  10989--11001, 2019{\natexlab{a}}.

\bibitem[Hao and Orlitsky(2019{\natexlab{b}})]{hao2019data}
Yi~Hao and Alon Orlitsky.
\newblock Data amplification: Instance-optimal property estimation.
\newblock \emph{arXiv preprint arXiv:1903.01432}, 2019{\natexlab{b}}.

\bibitem[Hao and Orlitsky(2019{\natexlab{c}})]{hao2019doubly}
Yi~Hao and Alon Orlitsky.
\newblock Doubly-competitive distribution estimation.
\newblock In \emph{International Conference on Machine Learning}, pages
  2614--2623, 2019{\natexlab{c}}.

\bibitem[Hao and Orlitsky(2019{\natexlab{d}})]{hao2019unified}
Yi~Hao and Alon Orlitsky.
\newblock Unified sample-optimal property estimation in near-linear time.
\newblock In \emph{Advances in Neural Information Processing Systems}, pages
  11104--11114, 2019{\natexlab{d}}.

\bibitem[Hao et~al.(2018)Hao, Orlitsky, Suresh, and Wu]{hao2018data}
Yi~Hao, Alon Orlitsky, Ananda~Theertha Suresh, and Yihong Wu.
\newblock Data amplification: A unified and competitive approach to property
  estimation.
\newblock In \emph{Advances in Neural Information Processing Systems}, pages
  8834--8843, 2018.

\bibitem[Hardy and Ramanujan(1918)]{hardy1918asymptotic}
Godfrey~H Hardy and Srinivasa Ramanujan.
\newblock Asymptotic formula{\ae} in combinatory analysis.
\newblock \emph{Proceedings of the London Mathematical Society}, 2\penalty0
  (1):\penalty0 75--115, 1918.

\bibitem[Hausser and Strimmer(2009)]{hausser2009entropy}
Jean Hausser and Korbinian Strimmer.
\newblock Entropy inference and the {J}ames-{S}tein estimator, with application
  to nonlinear gene association networks.
\newblock \emph{Journal of Machine Learning Research}, 10\penalty0
  (Jul):\penalty0 1469--1484, 2009.

\bibitem[Hillion et~al.(2019)Hillion, Johnson, et~al.]{hillion2019proof}
Erwan Hillion, Oliver Johnson, et~al.
\newblock A proof of the {S}hepp--{O}lkin entropy monotonicity conjecture.
\newblock \emph{Electronic Journal of Probability}, 24, 2019.

\bibitem[Humphries et~al.(2010)Humphries, Queiroz, Dyer, Pade, Musyl, Schaefer,
  Fuller, Brunnschweiler, Doyle, Houghton, et~al.]{humphries2010environmental}
Nicolas~E Humphries, Nuno Queiroz, Jennifer~RM Dyer, Nicolas~G Pade, Michael~K
  Musyl, Kurt~M Schaefer, Daniel~W Fuller, Juerg~M Brunnschweiler, Thomas~K
  Doyle, Jonathan~DR Houghton, et~al.
\newblock Environmental context explains {L}{\'e}vy and {B}rownian movement
  patterns of marine predators.
\newblock \emph{Nature}, 465\penalty0 (7301):\penalty0 1066--1069, 2010.

\bibitem[Ioannidis(2003)]{ioannidis2003history}
Yannis Ioannidis.
\newblock The history of histograms (abridged).
\newblock In \emph{Proceedings 2003 VLDB Conference}, pages 19--30. Elsevier,
  2003.

\bibitem[Koller and Friedman(2009)]{koller2009probabilistic}
Daphne Koller and Nir Friedman.
\newblock \emph{Probabilistic graphical models: principles and techniques}.
\newblock MIT press, 2009.

\bibitem[Kong and Valiant(2018)]{kong2018estimating}
Weihao Kong and Gregory Valiant.
\newblock Estimating learnability in the sublinear data regime.
\newblock In \emph{Advances in Neural Information Processing Systems}, pages
  5455--5464, 2018.

\bibitem[Kong et~al.(2019)Kong, Valiant, and Brunskill]{kong2019sublinear}
Weihao Kong, Gregory Valiant, and Emma Brunskill.
\newblock Sublinear optimal policy value estimation in contextual bandits.
\newblock \emph{arXiv preprint arXiv:1912.06111}, 2019.

\bibitem[Kroupa(2001)]{kroupa2001variation}
Pavel Kroupa.
\newblock On the variation of the initial mass function.
\newblock \emph{Monthly Notices of the Royal Astronomical Society},
  322\penalty0 (2):\penalty0 231--246, 2001.

\bibitem[Lehmann(1966)]{lehmann1966some}
Erich~Leo Lehmann.
\newblock Some concepts of dependence.
\newblock \emph{The Annals of Mathematical Statistics}, pages 1137--1153, 1966.

\bibitem[Lov{\'a}sz and Vempala(2007)]{lovasz2007geometry}
L{\'a}szl{\'o} Lov{\'a}sz and Santosh Vempala.
\newblock The geometry of logconcave functions and sampling algorithms.
\newblock \emph{Random Structures \& Algorithms}, 30\penalty0 (3):\penalty0
  307--358, 2007.

\bibitem[Machado and Rossow(1993)]{machado1993structural}
LAT Machado and WB~Rossow.
\newblock Structural characteristics and radiative properties of tropical cloud
  clusters.
\newblock \emph{Monthly Weather Review}, 121\penalty0 (12):\penalty0
  3234--3260, 1993.

\bibitem[Magurran(2013)]{magurran2013measuring}
Anne~E Magurran.
\newblock \emph{Measuring biological diversity}.
\newblock John Wiley \& Sons, 2013.

\bibitem[Miller et~al.(2009)Miller, Sorensen, Ojemann, and
  Den~Nijs]{miller2009power}
Kai~J Miller, Larry~B Sorensen, Jeffrey~G Ojemann, and Marcel Den~Nijs.
\newblock Power-law scaling in the brain surface electric potential.
\newblock \emph{PLoS computational biology}, 5\penalty0 (12), 2009.

\bibitem[Mitzenmacher(2004)]{mitzenmacher2004brief}
Michael Mitzenmacher.
\newblock A brief history of generative models for power law and lognormal
  distributions.
\newblock \emph{Internet mathematics}, 1\penalty0 (2):\penalty0 226--251, 2004.

\bibitem[Orlitsky and Suresh(2015)]{orlitsky2015competitive}
Alon Orlitsky and Ananda~Theertha Suresh.
\newblock Competitive distribution estimation: Why is {G}ood-{T}uring good.
\newblock In \emph{Advances in Neural Information Processing Systems}, pages
  2143--2151, 2015.

\bibitem[Orlitsky et~al.(2004)Orlitsky, Santhanam, Viswanathan, and
  Zhang]{orlitsky2004modeling}
Alon Orlitsky, Narayana~P Santhanam, Krishnamurthy Viswanathan, and Junan
  Zhang.
\newblock On modeling profiles instead of values.
\newblock In \emph{Proceedings of the 20th conference on Uncertainty in
  artificial intelligence}, pages 426--435. AUAI Press, 2004.

\bibitem[Paninski(2008)]{paninski2008coincidence}
Liam Paninski.
\newblock A coincidence-based test for uniformity given very sparsely sampled
  discrete data.
\newblock \emph{IEEE Transactions on Information Theory}, 54\penalty0
  (10):\penalty0 4750--4755, 2008.

\bibitem[Saumard and Wellner(2014)]{saumard2014log}
Adrien Saumard and Jon~A Wellner.
\newblock Log-concavity and strong log-concavity: a review.
\newblock \emph{Statistics surveys}, 8:\penalty0 45, 2014.

\bibitem[Sch{\"u}rmann and Grassberger(1996)]{schurmann1996entropy}
Thomas Sch{\"u}rmann and Peter Grassberger.
\newblock Entropy estimation of symbol sequences.
\newblock \emph{Chaos: An Interdisciplinary Journal of Nonlinear Science},
  6\penalty0 (3):\penalty0 414--427, 1996.

\bibitem[Stanley(1989)]{stanley1989log}
Richard~P Stanley.
\newblock Log-concave and unimodal sequences in algebra, combinatorics, and
  geometry.
\newblock \emph{Ann. New York Acad. Sci}, 576\penalty0 (1):\penalty0 500--535,
  1989.

\bibitem[Suresh(2019)]{suresh2019differentially}
Ananda~Theertha Suresh.
\newblock Differentially private anonymized histograms.
\newblock In \emph{Advances in Neural Information Processing Systems}, pages
  7969--7979, 2019.

\bibitem[Taylor(1961)]{taylor1961aggregation}
Lionel~Roy Taylor.
\newblock Aggregation, variance and the mean.
\newblock \emph{Nature}, 189\penalty0 (4766):\penalty0 732--735, 1961.

\bibitem[Valiant and Valiant(2011)]{valiant2011estimating}
Gregory Valiant and Paul Valiant.
\newblock Estimating the unseen: an n/log (n)-sample estimator for entropy and
  support size, shown optimal via new {CLT}s.
\newblock In \emph{Proceedings of the forty-third annual ACM symposium on
  Theory of computing}, pages 685--694, 2011.

\bibitem[Valiant and Valiant(2016)]{valiant2016instance}
Gregory Valiant and Paul Valiant.
\newblock Instance optimal learning of discrete distributions.
\newblock In \emph{Proceedings of the forty-eighth annual ACM symposium on
  Theory of Computing}, pages 142--155, 2016.

\bibitem[Valiant and Valiant(2013)]{valiant2013estimating}
Paul Valiant and Gregory Valiant.
\newblock Estimating the unseen: improved estimators for entropy and other
  properties.
\newblock In \emph{Advances in Neural Information Processing Systems}, pages
  2157--2165, 2013.

\bibitem[Verd{\'u}(2019)]{verdu2019empirical}
Sergio Verd{\'u}.
\newblock Empirical estimation of information measures: A literature guide.
\newblock \emph{Entropy}, 21\penalty0 (8):\penalty0 720, 2019.

\bibitem[Vontobel(2012)]{vontobel2012bethe}
Pascal~O Vontobel.
\newblock The {B}ethe approximation of the pattern maximum likelihood
  distribution.
\newblock In \emph{2012 IEEE International Symposium on Information Theory
  Proceedings}. IEEE, 2012.

\bibitem[Vontobel(2014)]{vontobel2014bethe}
Pascal~O Vontobel.
\newblock The {B}ethe and {S}inkhorn approximations of the pattern maximum
  likelihood estimate and their connections to the {V}aliant-{V}aliant
  estimate.
\newblock In \emph{2014 Information Theory and Applications Workshop (ITA)},
  pages 1--10. IEEE, 2014.

\end{thebibliography}
\end{document}